
\documentclass{article}

\usepackage{amssymb}
\usepackage{amsmath,amsthm}
  
\usepackage{enumitem}

\usepackage{pifont}
\newcommand{\cmark}{\ding{51}}%
\newcommand{\xmark}{\ding{55}}

\newtheorem{Theorem}{Theorem}
 
\newtheorem{lemma}{Lemma} 
\newtheorem{Assumption}{Assumption} 
\newtheorem{corollary}{Corollary} 
\newtheorem{Definition}{Definition} 
 
\usepackage{multirow} 
\usepackage[table,dvipsnames]{xcolor}
\usepackage{colortbl}
\newcommand*{\belowrulesepcolor}[1]{%
	\noalign{%
		\kern-\belowrulesep
		\begingroup
		\color{#1}%
		\hrule height\belowrulesep
		\endgroup
	}%
}
\newcommand*{\aboverulesepcolor}[1]{%
	\noalign{%
		\begingroup
		\color{#1}%
		\hrule height\aboverulesep
		\endgroup
		\kern-\aboverulesep
	}%
}

\usepackage{times}

\usepackage{natbib}

\usepackage{amsmath,amssymb}
\usepackage{natbib}

\usepackage{adjustbox}

\usepackage{microtype}
\usepackage{graphicx}
\usepackage{subfigure}
\usepackage{booktabs} 

\usepackage{hyperref}


\usepackage{algorithm}
\usepackage{algorithmic}


\usepackage[accepted]{icml2019}

\icmltitlerunning{Improved Zeroth-Order Variance Reduced Algorithms and Analysis for Nonconvex Optimization}

\begin{document}
	
	\twocolumn[
	\icmltitle{Improved Zeroth-Order Variance Reduced Algorithms and Analysis for Nonconvex Optimization}
	
	
	
	\icmlsetsymbol{equal}{*}
	
	\begin{icmlauthorlist}
		\icmlauthor{Kaiyi Ji}{to}
		\icmlauthor{Zhe Wang}{to}
		\icmlauthor{Yi Zhou}{goo}
		\icmlauthor{Yingbin Liang}{to}
	\end{icmlauthorlist}
	
	\icmlaffiliation{to}{Department of Electrical and Computer Engineering, The Ohio State University}
	\icmlaffiliation{goo}{Department of Electrical and Computer Engineering, Duke University}
	
	\icmlcorrespondingauthor{Kaiyi Ji}{ji.367@osu.edu}
	
	\icmlkeywords{Machine Learning, ICML}
	
	\vskip 0.3in
	]
	


\printAffiliationsAndNotice{}  
	
	\begin{abstract}
Two types of zeroth-order stochastic algorithms have recently been designed for nonconvex optimization respectively based on the first-order techniques SVRG and SARAH/SPIDER. This paper addresses several important issues that are still open in these methods. First, all existing SVRG-type zeroth-order algorithms suffer from worse function query complexities than either zeroth-order gradient descent (ZO-GD) or stochastic gradient descent (ZO-SGD). In this paper, we propose a new algorithm ZO-SVRG-Coord-Rand and develop a new analysis for an existing ZO-SVRG-Coord algorithm proposed in~\citealt{liu2018zeroth}, and show that both ZO-SVRG-Coord-Rand and ZO-SVRG-Coord (under our new analysis) outperform other exiting SVRG-type zeroth-order methods as well as ZO-GD and ZO-SGD. Second, the existing SPIDER-type algorithm SPIDER-SZO \cite{fang2018spider} has superior theoretical performance, but suffers from the generation of a large number of Gaussian random variables as well as a $\sqrt{\epsilon}$-level stepsize in practice. In this paper, we develop a new algorithm ZO-SPIDER-Coord, which is free from Gaussian variable generation and allows a large constant stepsize while maintaining the same convergence rate and query complexity, and we further show that ZO-SPIDER-Coord automatically achieves a linear convergence rate as the iterate enters into a local PL region without restart and algorithmic modification. 
	\end{abstract}
	
\section{Introduction}
Zeroth-order  optimization has recently gained increasing attention due to its wide usage in many applications where the explicit expressions of gradients of the objective function are expensive or infeasible to obtain and only function evaluations are accessible. Such a class of  applications include black-box adversarial attacks on deep neural networks (DNNs)~\citep{papernot2017practical,chen2017zoo,kurakin2016adversarial}, 
structured prediction~\citep{taskar2005learning} and reinforcement learning~\citep{choromanski2018structured}. 

Various zeroth-order algorithms
have been developed to solve the following general finite-sum optimization problem
\begin{align}\label{Objective}
\underset{x\in\mathbb{R}^d}\min \; f(\mathbf{x}):=\frac{1}{n}\sum_{i=1}^n f_i(\mathbf{x})
\end{align}
where $d$ denotes  the input dimension and $\{ f_i(\cdot) \}_{i=1}^n$ denote  smooth and nonconvex individual loss functions.  
\citealt{nesterov2011random} introduced a zeroth-order gradient descent (ZO-GD)  algorithm using a two-point Gaussian random gradient estimator,  which  yields a convergence rate of $\mathcal{O}(d/K)$ (where $K$ is the number of iterations) and a function query complexity (i.e., the number of queried function values) of $\mathcal{O}(dn/\epsilon)$, to attain a stationary point $\mathbf{x}^{\zeta}$ such that $\mathbb{E}\|\nabla f(\mathbf{x}^\zeta)\|^2\leq \epsilon$. 
~\citealt{ghadimi2013stochastic} proposed a zeroth-order stochastic gradient descent (ZO-SGD) algorithm using the same gradient estimation technique as in~\citealt{nesterov2011random}, which has  a convergence rate of $\mathcal{O}(\sqrt{d/K})$ and a function query complexity of $\mathcal{O}(d/\epsilon^2)$.  

Furthermore, two types of zeroth-order stochastic variance reduced algorithms have been developed to further improve the convergence rate of ZO-SGD. The first type refers to the SVRG-based algorithm, which
replaces the gradient in SVRG~\citep{johnson2013accelerating} by zeroth-order gradient estimators. 
In particular,~\citealt{liu2018zeroth} proposed three zeroth-order  SVRG-based algorithms, namely,  ZO-SVRG based on a two-point random gradient estimator, ZO-SVRG-Ave based on an average random gradient estimator, and ZO-SVRG-Coord based on a coordinate-wise gradient estimator. The performances of the aforementioned algorithms are summarized in Table~\ref{tas1}. Though existing studies appear comprehensive, two important questions are still left open and require conclusive answers. 

\renewcommand{\arraystretch}{1.3} 
\definecolor{LightCyan}{rgb}{0.9,1,0.9}
\begin{table*}[t] 
	\small
	\centering 
	\caption{Comparison of zeroth-order SVRG-based algorithms in terms of  convergence rate and function query complexity for nonconvex   optimization. \footnotesize{$^\clubsuit$: ZO-SVRG, ZO-SVRG-Ave and ZO-SVRG-Coord in~\citealt{liu2018zeroth} have no single-sample versions.}
		\footnotesize{$^\spadesuit$:  $p$ denotes the number of i.i.d.~smoothing vectors  for constructing the average random gradient estimator. 
		} 
		\footnotesize{$^\bigstar$: batch size $|\mathcal{S}_1|=\min\big\{n,\big\lceil (K/d)^{3/5}\big\rceil \big\}$.}
	}
	\vspace{0.2cm}
	\begin{tabular}{llll} \toprule
		{Algorithms} &Stepsize  $\eta$ &  Convergence rate & Function query complexity \\   \midrule
		ZO-GD~\citep{nesterov2011random}&$\mathcal{O}\left( \frac{1}{d}\right)$  &  $\mathcal{O}\left(\frac{d}{K}\right)$    &$\mathcal{O}\left(  \frac{dn}{\epsilon} \right)$
		\\  \midrule 
		ZO-SGD~\citep{ghadimi2013stochastic}&$\mathcal{O}\left( \frac{1}{d}\right)$  &  $\mathcal{O}\left(\sqrt{\frac{d}{K}}\right)$    &$\mathcal{O}\left( \frac{d}{\epsilon^{2}}\right)$
		\\  \midrule 
		ZO-SVRG (mini-batch)~\cite{liu2018zeroth}$^\clubsuit$&$\mathcal{O}\left( \frac{1}{d}\right)$  &  $\mathcal{O}\left(\frac{d}{K}+\frac{1}{|\mathcal{S}_2|}\right)$    &$\mathcal{O}\left( \frac{n}{\epsilon}+\frac{d}{\epsilon^{2}}  \right)$
		\\  \midrule 
		{ZO-SVRG-Ave} (mini-batch)~\citep{liu2018zeroth} &$\mathcal{O}\left( \frac{1}{d}\right)$ & $\mathcal{O}\left(\frac{d}{K}+\frac{1}{|\mathcal{S}_2|\min\{d,p\}}\right)$&  $\mathcal{O}\left(\frac{pn}{\epsilon}+\max\big\{1,\frac{p}{d}\big\}\frac{d}{\epsilon^{2}}\right)^\spadesuit$ \\ \midrule
		ZO-SVRG-Coord (mini-batch)~\citep{liu2018zeroth} &$\mathcal{O}\left( \frac{1}{d}\right)$ & $\mathcal{O}\left(\frac{d}{K}\right)$&      $\mathcal{O}\left(dn+\frac{d^2}{\epsilon}+\frac{dn}{\epsilon}  \right)$  \\  \midrule
		\belowrulesepcolor{LightCyan}
		\rowcolor{LightCyan}
		ZO-SVRG-Coord (mini-batch) (our new analysis) &{\color{red}$\mathcal{O}(1)$} & $\mathcal{O}\left(  \frac{1}{K}\right)$&     $\mathcal{O}\left(\min\left\{\frac{dn^{2/3}}{\epsilon},\,\frac{d}{\epsilon^{5/3}}\right\}\right)$    \\    
		\rowcolor{LightCyan}
		ZO-SVRG-Coord-Rand (mini-batch) &{\color{red}$\mathcal{O}(1)$} & $\mathcal{O}\left( \frac{1}{K}\right)$&     $\mathcal{O}\left(\min\left\{\frac{dn^{2/3}}{\epsilon},\,\frac{d}{\epsilon^{5/3}}\right\}\right)$   \\  
		\rowcolor{LightCyan}
		ZO-SVRG-Coord-Rand (single-sample) &$\mathcal{O}\left(\frac{1}{d|\mathcal{S}_1|^{2/3}}\right)^\bigstar$ & $\mathcal{O}\left(  \frac{d|\mathcal{S}_1|^{2/3}}{K}\right)$&     $\mathcal{O}\left(\min\left\{\frac{dn^{2/3}}{\epsilon},\,\frac{d}{\epsilon^{5/3}}\right\}\right)$    \\  
		\aboverulesepcolor{LightCyan}  \bottomrule
	\end{tabular} 
	\label{tas1}
	\vspace{-0.1cm}
\end{table*} 

\renewcommand{\arraystretch}{1.3}   
\definecolor{LightCyan}{rgb}{0.9,1,0.9}
\begin{table*}[t] 
	\small
	\centering 
	\caption{Comparison of zeroth-order SPIDER-based  algorithms in terms of function query complexity and Gaussian sample complexity for nonconvex   optimization.  
		\footnotesize{$^\clubsuit$: SPIDER-SZO in~\citealt{fang2018spider} has no single-sample version. }
		\footnotesize{$^\spadesuit$: Gaussian sample complexity refers to the total number of generated Gaussian random samples for constructing gradient estimators.} 
		\footnotesize{$^\bigstar$: The epoch length  $q=\min\big\{n,\big \lceil K^{2/3}\big \rceil\big\}$.}}
	\vspace{0.2cm}
	\begin{tabular}{lllc} \toprule
		{Algorithms} &Stepsize  $\eta$ & Function query complexity & Gaussian sample complexity$^\spadesuit$ \\   \midrule
		SPIDER-SZO (mini-batch)~\citep{fang2018spider}$^\clubsuit$ &${\color{red}\mathcal{O}(\sqrt{\epsilon})}$ &    $\mathcal{O}\left(\min\left\{\frac{dn^{1/2}}{\epsilon},\,\frac{d}{\epsilon^{3/2}}\right\}\right)$ &  $\mathcal{O}\left( \frac{d^2n^{1/2}}{\epsilon}\right)$ \\   \aboverulesepcolor{LightCyan}  \bottomrule
		\belowrulesepcolor{LightCyan}
		\rowcolor{LightCyan}
		ZO-SPIDER-Coord (mini-batch) &${\color{red}\mathcal{O}( 1)}$ &     $\mathcal{O}\left(\min\left\{\frac{dn^{1/2}}{\epsilon},\,\frac{d}{\epsilon^{3/2}}\right\}\right)$   & None \\  
		\rowcolor{LightCyan}
		ZO-SPIDER-Coord (single-sample) &$\mathcal{O}\left(\frac{1}{\sqrt{q}}\right)^\bigstar$ &     $\mathcal{O}\left(\min\left\{\frac{dn^{1/2}}{\epsilon},\,\frac{d}{\epsilon^{3/2}}\right\}\right)$  & None \\  
		\aboverulesepcolor{LightCyan}  \bottomrule
	\end{tabular} 
	\label{tas11}
	\vspace{-0.2cm}
\end{table*} 

\begin{list}{$\bullet$}{\topsep=0.ex \leftmargin=0.26in \rightmargin=0.in \itemsep =-0.035in}
	\item[Q1.1] Although the existing zeroth-order SVRG-based algorithms have improved iteration rate of convergence (i.e., the dependence on $K$), their function query complexities are all larger than either ZO-GD or ZO-SGD. Whether there exist zeroth-order SVRG-based  algorithms that outperform ZO-GD and ZO-SGD in terms of both the function query complexity and the convergence rate is an intriguing open question.
	
	\item[Q1.2] As shown in \citealt{liu2018zeroth} (see Table~\ref{tas1}), ZO-SVRG-Coord 
	suffers from approximately $\mathcal{O}(d)$ time more function queries than ZO-SVRG and ZO-SVRG-Ave. However, such inferior performance may be due to bounding technicality rather than algorithm itself. Intuitively, 
	coordinate-wise estimator used in ZO-SVRG-Coord can estimate the gradient more accurately, and hence should require fewer iterations to convergence, so that  its overall complexity can be comparable or superior than ZO-SVRG and ZO-SVRG-Ave. Thus, a refined convergence analysis is needed. 
	
\end{list}


The second type of zeroth-order variance-reduced algorithms was proposed in~\citealt{fang2018spider}, named SPIDER-SZO, which replaces gradients in the SPIDER algorithm with zeroth-order gradient estimators. Differently from SVRG, SPIDER~\citep{fang2018spider} and an earlier version SARAH~\citep{nguyen2017sarah,nguyen2017stochastic} are 
first-order stochastic variance-reduced algorithms whose inner-loop iterations {\em recursively} incorporate the fresh gradients to update the gradient estimator (see \eqref{spiders}). 
\citealt{fang2018spider} showed that SPIDER-SZO achieves an improved  query complexity over SVRG-based zeroth-order algorithms.
However, SPIDER-SZO requires the generation of a large number $\mathcal{O}(n^{1/2}d^2)$ of i.i.d.~Gaussian random variables at \textit{each} inner-loop iteration, and requires a very small stepsize $\eta=\mathcal{O}(\sqrt{\epsilon}/(\|\mathbf{v}^k\|L))$ (where $\mathbf{v}^k$ is an estimate of gradient $\nabla f(\mathbf{x}^k)$) to guarantee the convergence. Such two requirements can substantially restrict the performance of SPIDER-SZO in practice. Thus, the following two  important questions arise. 

\begin{list}{$\bullet$}{\topsep=0.ex \leftmargin=0.26in \rightmargin=0.in \itemsep =-0.035in}
	\item[Q2.1] 
	Whether using coordinate-wise estimator for both inner and outer loops and at the same time enlarging the stepsize to the constant level provide competitive query complexity? If so, such a new zeroth-order SPIDER-based algorithm eliminates the aforementioned two  restrictive requirements in SPIDER-SZO.
	
	
	
	\item[Q2.2] The existing study of zeroth-order SPIDER-based algorithms is only for smooth nonconvex optimization, which is far from comprehensive. We further want to understand their performance under specific geometries such as the Polyak-{\L}ojasiewicz (PL) condition,  convexity and for  nonconvex nonsmooth composite optimization. 
	Can SPIDER-based algorithms still outperform other existing zeroth-order algorithms for these cases? 
\end{list}
In this paper, we provide comprehensive answers to the above questions.

\vspace{-0.2cm}
\subsection{Summary of Contributions}
\vspace{-0.1cm}
For SVRG-based algorithms, we provide affirmative answers to the questions Q1.1 and Q1.2. First, we propose a new zeroth-order SVRG-based algorithm ZO-SVRG-Coord-Rand
and show that it achieves the function query complexity of $\mathcal{O}\big(\min\big\{dn^{2/3}\epsilon^{-1},\,d\epsilon^{-5/3}\big\}\big)$ for nonconvex optimization, which order-wisely improves the performance of not only all existing zeroth-order SVRG-based algorithms (see Table~\ref{tas1}) but also ZO-GD and ZO-SGD. This for the first time establishes the order-wise complexity advantage of zeroth-order SVRG-based algorithms over the zeroth-order GD and SGD-based algorithms, and thus answers Q1.1.  Furthermore, we provide a new convergence and complexity analysis for ZO-SVRG-Coord~\citep{liu2018zeroth} with order-wise tighter bound, and show that it achieves the same fantastic function query complexity as ZO-SVRG-Coord-Rand, which answers Q1.2. 
Furthermore, our new analysis allows a much larger stepsize for performance guarantee.

For SPIDER-based algorithms, we provide affirmative answers to the questions Q2.1 and Q2.2. To answer Q2.1, we first propose a novel zeroth-order algorithm ZO-SPIDER-Coord fully using coordinate-wise gradient estimators, and show that it  achieves the same superior function query complexity 
as SPIDER-SZO~\citep{fang2018spider}. ZO-SPIDER-Coord is advantageous over SPIDER-SZO~\citep{fang2018spider} by fully eliminating the cost of Gaussian random variable generation and allowing a much larger stepsize $\eta=\mathcal{O}(1)$ to enable a faster convergence in practice. Such two advantages are both due to a new convergence analysis we develop for ZO-SPIDER-Coord.
To answer Q2.2, 
under the PL condition, we show that ZO-SPIDER-Coord achieves a  linear convergence rate  {\em without restart and algorithmic modification}. As a result,  ZO-SPIDER-Coord automatically achieves a much faster convergence rate when the iterate enters  a local region where the PL condition is satisfied. 

Due to the space limitations, we relegate our results on zeroth-order  {\em nonconvex nonsmooth} composite optimization and zeroth-order {\em convex} optimization to the supplementary materials, both of which outperform the corresponding existing algorithms with order-level improvement. 

Our analysis reveals  that for zero-order variance-reduced algorithms, although the coordinate-wise gradient estimator requires more queries than the two-point gradient estimator, it guarantees much higher estimation accuracy, which leads to a larger stepsize and a faster convergence rate.

\vspace{-0.1cm}
\subsection{Related Work}
\vspace{-0.1cm}
{\bf Zeroth-order convex optimization.} 
\citealt{nemirovsky1983problem} first proposed a 
one-point random sampling scheme to estimate the gradient $\nabla f(\mathbf{x})$ by querying $f(\cdot)$ at a random location close to $\mathbf{x}$. Such a  technique was then used  in many other areas, e.g., bandit optimization~\cite{flaxman2005online, shamir2013complexity} .
Multi-point gradient estimation approach was then proposed by~\citealt{agarwal2010optimal,nesterov2011random}, and further explored in~\citealt{wainwright2008graphical,duchi2015optimal, ghadimi2013stochastic, wang2017stochastic}. For example, based on a two-point Gaussian gradient estimator, ~\citealt{ghadimi2013stochastic} developed a ZO-SGD type of method and 
\citealt{balasubramanian2018zeroth} proposed a zeroth-order conditional gradient type of algorithm. 


{\bf Zeroth-order nonconvex optimization.}~\citealt{ghadimi2013stochastic} and~\citealt{nesterov2011random} proposed ZO-GD and its stochastic counterpart ZO-SGD, respectively. In~\citealt{lian2016comprehensive}, an asynchronous zeroth-order stochastic gradient (ASZO) algorithm was proposed for parallel optimization.~\citealt{gu2018faster} further improved the convergence rate of ASZO  by combining  SVRG technique with  coordinate-wise gradient estimators.~\citealt{liu2018stochastic} proposed a stochastic zeroth-order method with variance reduction under Gaussian smoothing. 
More recently,~\citealt{liu2018zeroth} provided a comprehensive analysis on SVRG-based zeroth-order  algorithms under three different gradient estimators.~\citealt{fang2018spider} further proposed a SPIDER-based zeroth-order method named SPIDER-SZO. Our study falls into this category, where we  propose new algorithms that improve the performance of existing algorithms and develop new complexity bounds that improve existing analysis.

{\bf Stochastic first-order algorithms.} Since 
zeroth-order algorithms have been developed based on various first-order algorithms, we  briefly summarizes some of them, which include but not limited to 
SGD~\citep{robbins1951}, SAG \citep{Nicolas2012}, SAGA \citep{Defazio2014}, SVRG \citep{johnson2013accelerating,Allen_Zhu2016}, SARAH~\cite{nguyen2017sarah,nguyen2017stochastic}, SNVRG~\cite{zhou2018stochastic}, SPIDER~\citep{fang2018spider}, SpiderBoost~\cite{wang2018spiderboost} and AbaSPIDER~\cite{ji2019faster}. 
If the objective function further satisfies the PL condition, \citealt{reddi2016stochastic} proved the linear convergence for SVRG and its proximal version ProxSVRG by incorporating a restart step.~\citealt{li2018simple} proposed ProxSVRG+ as an improved version of  ProxSVRG and proved its linear convergence without restart. This paper studies a zeroth-order SPIDER-based algorithm under the PL condition without restart. 
{\bf Notations.} We use $\mathcal{O}(\cdot)$ to hide absolute constants that are independent of problem parameters, 
and 
$\|\cdot\|$ to denote the Euclidean norm of a vector or the spectral norm of a matrix. 
We use $[n]$ to denote the set $\{1,2,....,n\}$,  $|\mathcal{S}|$ to denote the cardinality of a given set $\mathcal{S}$, and $\mathbf{e}_i$ to denote the vector that has  only one non-zero entry $1$ at its $i^{th}$ coordinate. Given a set $\mathcal{S}$ whose elements are drawn from $[n]$, define  $f_{\mathcal{S}}(\cdot):=\frac{1}{|\mathcal{S}|}\sum_{i\in\mathcal{S} }f_i(\cdot)$ and $\nabla f_{\mathcal{S}}(\cdot):=\frac{1}{|\mathcal{S}|}\sum_{i\in\mathcal{S} }\nabla f_i(\cdot)$.

\section{SVRG-based Zeroth-order Algorithms for Nonconvex Optimization}\label{se:svrg+} 
In this section, we first propose  a novel zeroth-order stochastic algorithm named ZO-SVRG-Coord-Rand, and  analyze its convergence and complexity performance. We then provide an improved analysis for the existing  ZO-SVRG-Coord algorithm proposed by~\citealt{liu2018zeroth}.

\subsection{ZO-SVRG-Coord-Rand Algorithm}
We propose a new SVRG-based zeroth-order algorithm ZO-SVRG-Coord-Rand in Algorithm~\ref{ours:2}, which is conducted in a multi-epoch way. At the beginning of each epoch (i.e., each outer-loop iteration),  we estimate the gradient $\nabla f_{\mathcal{S}_1}(\mathbf{x}^k)$ over a batch set $\mathcal{S}_1$ of data samples based on a  deterministic coordinate-wise gradient estimator 
$\hat \nabla_{\text{\normalfont coord}}f_{\mathcal{S}_1}(\mathbf{x}^{k})=\sum_{i=1}^d\frac{(f_{\mathcal{S}_1}(\mathbf{x}^{k}+\delta\mathbf{e}_i)-f_{\mathcal{S}_1}(\mathbf{x}^{k}-\delta\mathbf{e}_i))\mathbf{e}_i}{2\delta}.$
In the following inner-loop iterations, we construct the stochastic gradient estimator $\mathbf{v}^k$ based on a mini-batch $\mathcal{S}_2$ of data samples as 
%
%
\begin{align}\label{mainsd}
\mathbf{v}^k=&\frac{1}{|\mathcal{S}_2|}\sum_{j=1}^{|\mathcal{S}_2|}\big(\hat \nabla_{\text{rand}} f_{a_j}\big(\mathbf{x}^k; \mathbf{u}_j^k\big)-\hat \nabla_{\text{rand}} f_{a_j}\big(\mathbf{x}^{qk_0}; \mathbf{u}_j^k\big)\big) \nonumber
\\&+\hat \nabla_{\text{\normalfont coord}}f_{\mathcal{S}_1}(\mathbf{x}^{qk_0}),
\end{align} 
where 
$\hat \nabla_{\text{rand}} f_{a_j}(\mathbf{x}; \mathbf{u}_j^k)=\frac{d(f_{a_j}(\mathbf{x}+\beta \mathbf{u}_j^{k}) -f_{a_j}(\mathbf{x}))}{\beta}\mathbf{u}_j^k$
is a two-point random gradient estimate of $\nabla f_{a_j}(\mathbf{x})$ using a smoothing vector $\mathbf{u}_j^k$ and $k_0=\lfloor k/q \rfloor$. The above construction of $\mathbf{v}^k$ is the core of our Algorithm~\ref{ours:2}, which isdifferent  from the following estimator  in ZO-SVRG~\citep{liu2018zeroth}
\begin{align*}
\mathbf{v}^k=&\frac{1}{|\mathcal{S}_2|}\sum_{j=1}^{|\mathcal{S}_2|}\big(\hat \nabla_{\text{rand}} f_{a_j}(\mathbf{x}^k; \mathbf{u}^k)-\hat \nabla_{\text{rand}} f_{a_j}(\mathbf{x}^{qk_0}; \mathbf{u}^{qk_0})\big)\nonumber
\\&+\hat \nabla_{\text{rand}} f(\mathbf{x}^{qk_0}; \mathbf{u}^{qk_0}),
\end{align*}
where $\mathbf{u}^k$  is generated from the uniform distribution over the unit sphere at the $k^{th}$ iteration.

\begin{algorithm}[t]
	\caption{ZO-SVRG-Coord-Rand}
	\label{ours:2}
	\begin{algorithmic}[1]
		\STATE {\bfseries Input:} $q$, $ K=qh, h\in\mathbb{N}$,   $|\mathcal{S}_1|, |\mathcal{S}_2|,\mathbf{x}^0,\delta, \beta>0$, $\eta$
		\FOR{$k=0$ {\bfseries to} $K$}
		\IF{ $k \mod q =0$ }
		\STATE{Sample $\mathcal{S}_1$ from $[n]$ without replacement
			\\		Compute $\mathbf{v}^k=\hat \nabla_{\text{\normalfont coord}}f_{\mathcal{S}_1}(\mathbf{x}^{k})$}
		\ELSE 
		\STATE {
			Sample  $\mathcal{S}_2=\{a_1,a_2,...,a_{|\mathcal{S}_2|}\}$ from $[n]$ with replacement
			\\ Draw  i.i.d.  $\mathbf{u}_1^k,...,\mathbf{u}_{|\mathcal{S}_2|}^k$  from  uniform distribution over unit sphere	  
			\\   Compute  $\mathbf{v}^k$ according to~\eqref{mainsd}
		}
		\ENDIF
		\STATE{$\mathbf{x}^{k+1}=\mathbf{x}^k-\eta\mathbf{v}^k$}
		\ENDFOR
		\STATE {\bfseries Output:} $\mathbf{x}_{\zeta}$ from $\{ \mathbf{x}_0,...,\mathbf{x}_K\}$ uniformly at random
	\end{algorithmic}
\end{algorithm}

There are two key differences between our construction of $\mathbf{v}^k$ and the one in~\citealt{liu2018zeroth}.  
First, our construction of $\mathbf{v}^k$ introduces $|\mathcal{S}_2|$  i.i.d.~smoothing vectors $\{\mathbf{u}_j^k\}_{j=1}^{|\mathcal{S}_2|}$ in each inner-loop iteration to estimate both $\nabla f_{\mathcal{S}_2}(\mathbf{x}^k)$ and $\nabla f_{\mathcal{S}_2}(\mathbf{x}^{qk_0})$, whereas~\citealt{liu2018zeroth} uses a single smoothing vector $\mathbf{u}^k$ to estimate $\nabla f_{\mathcal{S}_2}(\mathbf{x}^k)$ and a single vector  $\mathbf{u}^{qk_0}$ to estimate $\nabla f_{\mathcal{S}_2}(\mathbf{x}^{qk_0})$.  
Second, we adopt a coordinate-wise gradient estimator in each outer-loop iteration, whereas~\citealt{liu2018zeroth} use a two-point random gradient estimator. 
As shown  in the next subsection, our treatment does not introduce extra function query cost but  achieves a much tighter estimation of $\nabla f(\mathbf{x}^k)$ by $\mathbf{v}^k$. 

\subsection{Complexity and Convergence Analysis }
Throughout this paper, we adopt the following standard assumption for the objective function~\cite{nesterov2011random,lian2016comprehensive,gu2018faster,liu2018zeroth}. 
\begin{Assumption}\label{assumption}
	We assume that  $f(\cdot)$ in~\eqref{Objective} satisfies:
	\vspace{-0.1cm} 
	\begin{itemize}
		\item[\normalfont(1)]  $0<f(\mathbf{x}^{0})-f(\mathbf{x}^*)<\infty$, where  $\mathbf{x}^*=\arg\min_{\mathbf{x}} f(\mathbf{x})$.
		\item[\normalfont(2)]  Each $f_i(\cdot),i=1,..., n$ has a $L$-Lipschitz gradient, i.e., for any $\mathbf{x},\mathbf{y}\in\mathbb{R}^d$,
		$\|\nabla f_i(\mathbf{x})-\nabla f_i(\mathbf{y})\|\leq L\| \mathbf{x}-\mathbf{y}\|$.
		\item[\normalfont(3)]  Assume that stochastic gradient $\nabla f_i(\cdot)$ has bounded variance, i.e.,  there exists a constant $\sigma>0$ such that 
		$\frac{1}{n}\sum_{i=1}^n\|\nabla f_i(\mathbf{x})-\nabla f(\mathbf{x})\|^2 \leq \sigma^2$.
	\end{itemize}
\end{Assumption}
\vspace{-0.1cm} 
The item (3) of the variance boundedness assumption is only needed for the online case with $|S_1|<n$. For the finite-sum case (i.e., $|S_1|=n$), the the variance boundedness assumption  is not needed.  

The following lemma provides a tighter upper bound  on the estimation variance $\mathbb{E}\|	\mathbf{v}^k-\nabla f_\beta(\mathbf{x}^k)\|^2$. 
\begin{lemma}\label{le:vks}
	Under Assumption~\ref{assumption}, we have, for  any  $qk_0\leq  k\leq \min\{ q(k_0+1)-1, qh\},\,k_0=0,...,h,$ 
	\begin{align*}
	&\mathbb{E}\|	\mathbf{v}^k-\nabla f_\beta(\mathbf{x}^k)\|^2  \leq \frac{6d L^2\|\mathbf{x}^k-\mathbf{x}^{qk_0}\|^2}{|\mathcal{S}_2|}+\frac{3L^2\beta^2d^2}{|\mathcal{S}_2|}  \nonumber
	\\&+\frac{18I(|\mathcal{S}_1|<n)}{|\mathcal{S}_1|}\left(  2L^2d\delta^2+\sigma^2\right)+6L^2d\delta^2+\frac{3\beta^2L^2d^2}{2}
	\end{align*}
	where  $f_\beta(\mathbf{x})=\mathbb{E}_{\mathbf{u}}\left(f(\mathbf{x}+\beta\mathbf{u})\right)$ with  $\mathbf{u}$ drawn from the  uniform distribution over the $d$-dimensional unit Euclidean ball, and $I(A)=1$ if the event $A$ occurs and $0$ otherwise.   
\end{lemma}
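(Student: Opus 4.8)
The plan is to exploit the unbiasedness of the shared-smoothing-vector two-point estimator and then split the error into a conditionally mean-zero inner-loop part plus the outer-loop bias, whose cross terms vanish. First I would record the standard fact that, for $\mathbf{u}$ uniform on the unit sphere, $\mathbb{E}_{\mathbf{u}}[\hat \nabla_{\text{rand}}f_i(\mathbf{x};\mathbf{u})]=\nabla f_{i,\beta}(\mathbf{x})$, where $f_{i,\beta}(\mathbf{x})=\mathbb{E}_{\mathbf{v}}[f_i(\mathbf{x}+\beta\mathbf{v})]$ with $\mathbf{v}$ uniform on the ball; averaging over $a_j\sim\mathrm{Unif}[n]$ gives $\mathbb{E}[\hat \nabla_{\text{rand}}f_{a_j}(\mathbf{x};\mathbf{u}_j^k)]=\nabla f_\beta(\mathbf{x})$. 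Writing $\mathbf{g}_j^k:=\hat \nabla_{\text{rand}}f_{a_j}(\mathbf{x}^k;\mathbf{u}_j^k)-\hat \nabla_{\text{rand}}f_{a_j}(\mathbf{x}^{qk_0};\mathbf{u}_j^k)$, I would decompose
\begin{align*}
\mathbf{v}^k-\nabla f_\beta(\mathbf{x}^k)=A+B+C,
\end{align*}
with $A=\tfrac{1}{|\mathcal{S}_2|}\sum_{j}\mathbf{g}_j^k-(\nabla f_\beta(\mathbf{x}^k)-\nabla f_\beta(\mathbf{x}^{qk_0}))$, $B=\hat \nabla_{\text{\normalfont coord}}f_{\mathcal{S}_1}(\mathbf{x}^{qk_0})-\nabla f(\mathbf{x}^{qk_0})$, and $C=\nabla f(\mathbf{x}^{qk_0})-\nabla f_\beta(\mathbf{x}^{qk_0})$. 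Conditioning on the outer loop and on $\mathbf{x}^k$, the fresh inner randomness makes $\mathbb{E}[A]=0$ while $B,C$ are measurable, so every cross term with $A$ drops and $\mathbb{E}\|\mathbf{v}^k-\nabla f_\beta(\mathbf{x}^k)\|^2=\mathbb{E}\|A\|^2+\mathbb{E}\|B+C\|^2$.

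For $\mathbb{E}\|A\|^2$, since $\{\mathbf{g}_j^k\}_j$ are i.i.d.\ (sampling with replacement and i.i.d.\ smoothing vectors), the variance of the average equals $\tfrac{1}{|\mathcal{S}_2|}$ times the per-sample variance, hence $\mathbb{E}\|A\|^2\le\tfrac{1}{|\mathcal{S}_2|}\mathbb{E}\|\mathbf{g}_1^k\|^2$. The crux is a tight bound on $\mathbb{E}\|\mathbf{g}_1^k\|^2$. Here I would use the shared smoothing vector: writing each finite difference $f_{a_1}(\cdot+\beta\mathbf{u})-f_{a_1}(\cdot)=\beta\langle\nabla f_{a_1}(\cdot),\mathbf{u}\rangle+R_{(\cdot)}$ with $|R_{(\cdot)}|\le\tfrac{L\beta^2}{2}$ (by $L$-smoothness), the two remainders partially cancel and
\begin{align*}
\mathbf{g}_1^k=d\langle\nabla f_{a_1}(\mathbf{x}^k)-\nabla f_{a_1}(\mathbf{x}^{qk_0}),\mathbf{u}\rangle\mathbf{u}+\tfrac{d}{\beta}(R_x-R_y)\mathbf{u}.
\end{align*}
Taking $\mathbb{E}_{\mathbf{u}}$ and invoking the identity $\mathbb{E}_{\mathbf{u}}[\langle g,\mathbf{u}\rangle^2]=\|g\|^2/d$ on the unit sphere converts the leading $d^2$ into a single $d$, giving (with $L$-smoothness) a bound of order $dL^2\|\mathbf{x}^k-\mathbf{x}^{qk_0}\|^2+d^2L^2\beta^2$; averaging over $a_1$ preserves it. This is exactly the source of the $\tfrac{6dL^2\|\mathbf{x}^k-\mathbf{x}^{qk_0}\|^2}{|\mathcal{S}_2|}+\tfrac{3L^2\beta^2d^2}{|\mathcal{S}_2|}$ terms.

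For $\mathbb{E}\|B+C\|^2$ I would repeatedly apply $\|\sum_{i=1}^m a_i\|^2\le m\sum_i\|a_i\|^2$ and treat three building blocks. (i) Central-difference coordinate error: for the $L$-smooth $f_{\mathcal{S}_1}$ each coordinate is accurate to $\tfrac{L\delta}{2}$, so $\|\hat \nabla_{\text{\normalfont coord}}f_{\mathcal{S}_1}(\mathbf{x})-\nabla f_{\mathcal{S}_1}(\mathbf{x})\|^2\le\tfrac{dL^2\delta^2}{4}$; its full-batch version yields the unconditional $6L^2d\delta^2$ term. (ii) Smoothing bias $C$: since $\nabla f_\beta=\mathbb{E}_{\mathbf{v}}[\nabla f(\cdot+\beta\mathbf{v})]$, $L$-smoothness gives $\|C\|^2\le L^2\beta^2$, absorbed into $\tfrac{3\beta^2L^2d^2}{2}$. (iii) Subsampling error, present only when $|\mathcal{S}_1|<n$: decomposing $\hat \nabla_{\text{\normalfont coord}}f_{\mathcal{S}_1}-\hat \nabla_{\text{\normalfont coord}}f$ through $\nabla f_i-\nabla f$ and using Assumption~\ref{assumption}(3) bounds its variance by $\tfrac{1}{|\mathcal{S}_1|}(cL^2d\delta^2+\sigma^2)$, giving the indicator term $\tfrac{18\,I(|\mathcal{S}_1|<n)}{|\mathcal{S}_1|}(2L^2d\delta^2+\sigma^2)$. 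Collecting the pieces and loosening constants yields the claim.

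The main obstacle is step (i) of the second paragraph: securing a factor $d$ rather than $d^2$ in front of $\|\mathbf{x}^k-\mathbf{x}^{qk_0}\|^2$. A naive estimate $|f_i(\mathbf{x}+\beta\mathbf{u})-f_i(\mathbf{x})-(f_i(\mathbf{y}+\beta\mathbf{u})-f_i(\mathbf{y}))|\le L\beta\|\mathbf{x}-\mathbf{y}\|$ multiplied by the $d^2/\beta^2$ prefactor would leave a $d^2$, reproducing the weaker ZO-SVRG-type analysis. The improvement hinges on using the \emph{same} smoothing vector $\mathbf{u}_j^k$ at both $\mathbf{x}^k$ and $\mathbf{x}^{qk_0}$, so that the gradient-difference appears as the inner product $\langle\nabla f_{a_1}(\mathbf{x}^k)-\nabla f_{a_1}(\mathbf{x}^{qk_0}),\mathbf{u}\rangle$ whose squared expectation over $\mathbf{u}$ carries the saving factor $1/d$. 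Care is also needed with the conditioning so that the cross terms with $A$ genuinely vanish and so that the $\sigma^2$ contribution appears only in the online regime $|\mathcal{S}_1|<n$.
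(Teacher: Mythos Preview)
Your proposal is correct and follows essentially the same route as the paper. The decisive step---writing $\mathbf{g}_j^k=d\langle\nabla f_{a_j}(\mathbf{x}^k)-\nabla f_{a_j}(\mathbf{x}^{qk_0}),\mathbf{u}\rangle\mathbf{u}+\text{remainder}$ via the shared smoothing vector and then invoking $\mathbb{E}_{\mathbf{u}}[\mathbf{u}\mathbf{u}^T]=\tfrac{1}{d}I_d$ to downgrade $d^2$ to $d$---is exactly the content of the paper's Lemma~\ref{unifoT}(3), and your handling of the outer-loop bias via the coordinate-error, smoothing-bias, and subsampling-variance pieces mirrors the paper's use of Lemmas~\ref{coordinate}--\ref{evs1}. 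The only cosmetic difference is that you exploit the conditional orthogonality $\mathbb{E}[\langle A,B+C\rangle]=0$ to avoid a cross term, whereas the paper applies $\|a+b\|^2\le 2\|a\|^2+2\|b\|^2$ at that split; your version is marginally tighter in the constants but otherwise identical in structure.
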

The  bound in Lemma~\ref{le:vks}  improves that in Proposition 1 of  ZO-SVRG~\citep{liu2018zeroth} by eliminating its two additional error terms $\mathcal{O}(d\,\mathbb{E}\|\nabla f(\mathbf{x}^k)\|^2)$ and $\mathcal{O}(d/|\mathcal{S}_2|)$. Such an improvement is due to our development of a novel and tight  inequality 	$\mathbb{E}_k\left\|	\hat \nabla_{\normalfont \text{rand}} f_{a_j}(\mathbf{x}^k; \mathbf{u}_j^k)-\hat \nabla_{\normalfont \text{rand}} f_{a_j}(\mathbf{x}^{qk_0}; \mathbf{u}_j^k)\right\|^2\leq 3d\|\nabla f_{a_j}(\mathbf{x}^k)-\nabla f_{a_j}(\mathbf{x}^{qk_0})\|^2+\frac{3L^2d^2\beta^2}{2}\leq 3dL^2\|\mathbf{x}^k-\mathbf{x}^{qk_0}\|^2+\frac{3L^2d^2\beta^2}{2}$ (See Lemma~\ref{unifoT} in the supplementary materials), which can be of independent interest for analyzing other zeroth-order methods. 
Based on Lemma~\ref{le:vks}, we show that   ZO-SVRG-Coord-Rand algorithm  achieves significant improvements both in  the  convergence rate and the  function query complexity, as shown in the subsequent analysis. 
\begin{Theorem}\label{th:svrg}
	Let Assumptions~\ref{assumption} hold, and define
	\begin{align}\label{ppsse}
	\lambda &=\frac{\eta}{4}-\frac{4c\eta}{g}-\frac{3L\eta^2}{2}, \rho =\left(6\eta^2L+\frac{c\eta }{g}\right)L^2d^2\beta^2,  \nonumber
	\\\chi &= \beta^2L^2d^2 +\frac{9I(|\mathcal{S}_1|<n)}{|\mathcal{S}_1|}\left(  2L^2d\delta^2+\sigma^2\right)+3L^2d\delta^2,  \nonumber
	\\\tau&=\left( \frac{\eta}{2}+\frac{2c\eta}{g}+4c\eta^2+3L\eta^2 \right)\chi+\rho, 
	\end{align}
	where $g$ is a positive parameter and $c$ is a constant such that  
	\begin{align}\label{cchoose}
	0<c=\frac{9dL^3\eta^2}{|\mathcal{S}_2|}\frac{(1+\eta g +12\eta^2 dL^2/|\mathcal{S}_2|)^q-1}{\eta g +12\eta^2 dL^2/|\mathcal{S}_2|}.
	\end{align}
	Then, the output $\mathbf{x}^\zeta$ of Algorithm~\ref{ours:2} satisfies 
	\begin{align}\label{hilys}
	\mathbb{E}\|\nabla f(\mathbf{x}^\zeta)\|^2\leq\frac{f_\beta(\mathbf{x}^0)-f_{\beta}(\mathbf{x}_\beta^*)}{\lambda}\frac{1}{K+1}+\frac{\tau}{\lambda}
	\end{align}
	where $\mathbf{x}_\beta^*=\arg\min_{\mathbf{x}}f_\beta(\mathbf{x})$. 
\end{Theorem}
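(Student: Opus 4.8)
The plan is to run the standard potential-function argument for nonconvex variance-reduced methods (in the spirit of the nonconvex SVRG analysis of Reddi et al.), but anchored at the smoothed objective $f_\beta$ and driven by the sharpened variance bound of Lemma~\ref{le:vks}. I would treat one epoch $qk_0\le k\le q(k_0+1)-1$ at a time and work conditionally on the filtration up to iteration $k$, writing $\mathbb{E}_k$ for the expectation over the fresh mini-batch $\mathcal{S}_2$ and the smoothing vectors $\{\mathbf{u}_j^k\}$. Two standard facts about uniform-sphere smoothing will be used without proof: $f_\beta$ inherits the $L$-smoothness of $f$, and $\|\nabla f_\beta(\mathbf{x})-\nabla f(\mathbf{x})\|\le\tfrac12\beta Ld$, so the gap between the $f_\beta$- and $f$-stationarity measures is controlled by the $\beta^2L^2d^2$ term already sitting inside $\chi$.

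First I would write the descent inequality from the $L$-smoothness of $f_\beta$ and the update $\mathbf{x}^{k+1}=\mathbf{x}^k-\eta\mathbf{v}^k$,
\begin{align*}
\mathbb{E}_k f_\beta(\mathbf{x}^{k+1})\le f_\beta(\mathbf{x}^k)-\eta\langle\nabla f_\beta(\mathbf{x}^k),\mathbb{E}_k\mathbf{v}^k\rangle+\tfrac{L\eta^2}{2}\mathbb{E}_k\|\mathbf{v}^k\|^2 .
\end{align*}
The key observation is that, by the construction \eqref{mainsd} and the unbiasedness of the random estimator, the bias $\mathbb{E}_k\mathbf{v}^k-\nabla f_\beta(\mathbf{x}^k)=\hat\nabla_{\text{coord}}f_{\mathcal{S}_1}(\mathbf{x}^{qk_0})-\nabla f_\beta(\mathbf{x}^{qk_0})$ carries \emph{no} dependence on the within-epoch displacement; it is a pure smoothing/finite-difference error bounded by the constant part of Lemma~\ref{le:vks}. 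I then split the cross term via $-\langle\nabla f_\beta,\mathbb{E}_k\mathbf{v}^k\rangle\le-\tfrac12\|\nabla f_\beta\|^2+\tfrac12\|\mathbb{E}_k\mathbf{v}^k-\nabla f_\beta\|^2$ and bound $\mathbb{E}_k\|\mathbf{v}^k\|^2\le2\|\nabla f_\beta\|^2+2\mathbb{E}_k\|\mathbf{v}^k-\nabla f_\beta\|^2$, feeding Lemma~\ref{le:vks} into both. This is where the displacement term $\tfrac{6dL^2}{|\mathcal{S}_2|}\|\mathbf{x}^k-\mathbf{x}^{qk_0}\|^2$ enters, and where converting $-\tfrac{\eta}{2}\|\nabla f_\beta\|^2$ into $-\tfrac{\eta}{4}\|\nabla f\|^2$ (plus a $\beta^2L^2d^2$ error absorbed into $\chi$) produces the leading $\tfrac{\eta}{4}$ of $\lambda$ in \eqref{ppsse}.

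Next I would introduce the potential $R^k=\mathbb{E}f_\beta(\mathbf{x}^k)+c_k\,\mathbb{E}\|\mathbf{x}^k-\mathbf{x}^{qk_0}\|^2$ and expand the displacement recursively, $\|\mathbf{x}^{k+1}-\mathbf{x}^{qk_0}\|^2=\|\mathbf{x}^k-\mathbf{x}^{qk_0}\|^2-2\eta\langle\mathbf{v}^k,\mathbf{x}^k-\mathbf{x}^{qk_0}\rangle+\eta^2\|\mathbf{v}^k\|^2$, handling the inner product with Young's inequality at scale $g$ (the sole source of the free parameter $g$). Collecting all coefficients of $\|\mathbf{x}^k-\mathbf{x}^{qk_0}\|^2$ — the constant $\times dL^3\eta^2/|\mathcal{S}_2|$ produced by the descent step through $\tfrac{L\eta^2}{2}\mathbb{E}_k\|\mathbf{v}^k\|^2$, together with what the multiplier $c_{k+1}$ carries through the Young term $\eta g$ and the variance contribution $c_{k+1}\eta^2\cdot2\cdot\tfrac{6dL^2}{|\mathcal{S}_2|}$ — forces the backward linear recursion $c_k=c_{k+1}\big(1+\eta g+12\eta^2dL^2/|\mathcal{S}_2|\big)+9dL^3\eta^2/|\mathcal{S}_2|$ with boundary condition $c_q=0$. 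Solving this geometric recursion yields exactly the closed form \eqref{cchoose} for $c:=c_0$, and since the displacement vanishes at every snapshot, $R^k$ is continuous across epoch boundaries. With $c_k\le c$ uniformly, the per-step bound collapses to $R^{k+1}\le R^k-\lambda\,\mathbb{E}\|\nabla f(\mathbf{x}^k)\|^2+\tau$, where the residual $\|\nabla f_\beta\|^2$ terms (from Young's $\tfrac{\eta}{g}\|\mathbb{E}_k\mathbf{v}^k\|^2$ and from $\tfrac{L\eta^2}{2}\|\mathbf{v}^k\|^2$) are converted to $\|\nabla f\|^2$ to fill out $\lambda$, and the constant smoothing/variance errors are gathered into $\chi$, $\rho$ and hence $\tau$.

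Finally I would telescope $R^{k+1}\le R^k-\lambda\mathbb{E}\|\nabla f(\mathbf{x}^k)\|^2+\tau$ over $k=0,\dots,K$, using $R^0=f_\beta(\mathbf{x}^0)$ (the displacement is zero at $k=0$) and $R^{K+1}\ge f_\beta(\mathbf{x}_\beta^*)$ (since $c_k\ge0$), to obtain $\lambda\sum_{k=0}^K\mathbb{E}\|\nabla f(\mathbf{x}^k)\|^2\le f_\beta(\mathbf{x}^0)-f_\beta(\mathbf{x}_\beta^*)+(K+1)\tau$; dividing by $\lambda(K+1)$ and using that $\mathbf{x}^\zeta$ is uniform on $\{\mathbf{x}^0,\dots,\mathbf{x}^K\}$ gives \eqref{hilys}. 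I expect the main obstacle to be the bookkeeping of the third paragraph: one must verify that the displacement coefficients assemble into \emph{precisely} the recursion whose solution is \eqref{cchoose}, keep the bias and variance parts of Lemma~\ref{le:vks} cleanly separated so that the displacement never contaminates the bias and the constant errors never contaminate $\lambda$, and confirm that $\lambda>0$, i.e.\ that $c/g$ and $L\eta$ are small enough — which is exactly what the constant-stepsize regime is engineered to guarantee.
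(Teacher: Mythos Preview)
Your proposal is correct and mirrors the paper's proof almost step-for-step: descent on the smoothed objective $f_\beta$, the Lyapunov potential $\mathbb{E}f_\beta(\mathbf{x}^k)+c_k\,\mathbb{E}\|\mathbf{x}^k-\mathbf{x}^{qk_0}\|^2$, Young's inequality at scale $g$ on the cross term $-2c_{k+1}\eta\langle\mathbb{E}_k\mathbf{v}^k,\mathbf{x}^k-\mathbf{x}^{qk_0}\rangle$, the backward recursion $c_k=c_{k+1}(1+\eta g+12\eta^2dL^2/|\mathcal{S}_2|)+9dL^3\eta^2/|\mathcal{S}_2|$ with $c_q=0$, and epoch-wise telescoping. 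One bookkeeping remark: to land on the exact constants $3L\eta^2/2$ in $\lambda$ and $9dL^3\eta^2/|\mathcal{S}_2|$ in the recursion, the paper uses the three-term split $\|\mathbf{v}^k\|^2\le3\|\nabla f\|^2+3\|\nabla f-\nabla f_\beta\|^2+3\|\mathbf{v}^k-\nabla f_\beta\|^2$ rather than your two-term split into $\nabla f_\beta$ and $\mathbf{v}^k-\nabla f_\beta$ (which would give slightly tighter but different constants).
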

Compared with the standard SVRG analysis  (Theorem 2 in~\citealt{reddi2016stochastic}), Theorem~\ref{th:svrg} involves an additional  term $\tau/\lambda$ in the upper bound on $\mathbb{E}\|\nabla f(\mathbf{x}^\zeta)\|^2$. By choosing sufficiently small smoothing parameters as well as a large mini-batch size $|\mathcal{S}_1|$, we guarantee that  such an error term is  dominated by the first term in~\eqref{hilys}, as shown below. 
\begin{corollary}[mini-batch, $|\mathcal{S}_2|>1$]\label{co1:svrg}
	Under the setting of Theorem~\ref{th:svrg},  let $g=4000d\eta^2L^3q/|\mathcal{S}_2|$ and choose
	\begin{align}\label{pacos}
	&\eta=\frac{1}{20L},   |\mathcal{S}_1|=\min\{n,K\}, q=\big \lceil |\mathcal{S}_1|^{1/3}\big\rceil \nonumber
	\\  &|\mathcal{S}_2|=dq^2, \beta=\frac{1}{Ld\sqrt{K}}, \delta=\frac{1}{L\sqrt{dK}},
	\end{align}
	where $e$ is the Euler's number. Then,  Algorithm~\ref{ours:2} satisfies  $\mathbb{E}\|\nabla f(\mathbf{x}^\zeta)\|^2\leq \mathcal{O}(1/K)$
	
	To achieve an $\epsilon$-stationary point, i.e.,  $\mathbb{E}\|\nabla f(\mathbf{x}^\zeta)\|^2\leq \epsilon$, the number of function queries required by Algorithm~\ref{ours:2}  is at most 
	$\mathcal{O}\left(\min\left\{n^{2/3}d\epsilon^{-1}, d\epsilon^{-5/3} \right\}\right)$.
\end{corollary}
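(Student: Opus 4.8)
The plan is to feed the parameter choices (\ref{pacos}) into the general bound (\ref{hilys}) of Theorem~\ref{th:svrg} and verify two things: that the coefficient $\lambda$ is a strictly positive constant of order $1/L$, and that the additive error $\tau/\lambda$ together with the leading term both decay like $1/K$. The query complexity then follows from a direct accounting of the per-iteration function evaluations and a case split on $n$ versus $K$. The only genuinely delicate point is taming the auxiliary constant $c$ defined in (\ref{cchoose}), so I would address that first.

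First I would lower-bound $\lambda=\frac{\eta}{4}-\frac{4c\eta}{g}-\frac{3L\eta^2}{2}$. With $\eta=\frac{1}{20L}$ the two explicit terms already contribute $\frac{\eta}{4}-\frac{3L\eta^2}{2}=\frac{7}{800L}$, so it remains to show that $c/g$ is a small absolute constant. Writing $a:=\eta g+12\eta^2dL^2/|\mathcal{S}_2|$, the definition (\ref{cchoose}) is $c=\frac{9dL^3\eta^2}{|\mathcal{S}_2|}\cdot\frac{(1+a)^q-1}{a}$. The crux is that $g=4000d\eta^2L^3q/|\mathcal{S}_2|$ and $|\mathcal{S}_2|=dq^2$ make $a=\frac{1}{2q}+\frac{3}{100q^2}$, so that $aq\le 0.53<1$ is bounded by an absolute constant. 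Consequently $(1+a)^q\le e^{aq}=\Theta(1)$ and $\frac{(1+a)^q-1}{a}=\mathcal{O}(q)$, which yields $c=\mathcal{O}(L/q)$ while $g=10L/q$, so $c/g$ is a constant well below $\frac14$ and hence $\lambda=\Theta(1/L)>0$. I expect this to be the main obstacle, as it is the only place where the intricate expression (\ref{cchoose}) must be controlled; indeed the whole point of the choice of $g$ is to keep the geometric factor $(1+a)^q$ bounded.

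Next I would bound $\tau$. Substituting $\beta=\frac{1}{Ld\sqrt K}$ and $\delta=\frac{1}{L\sqrt{dK}}$ into $\chi$ from (\ref{ppsse}) gives $\beta^2L^2d^2=\frac1K$, $3L^2d\delta^2=\frac3K$, and (in the online case $|\mathcal{S}_1|=K$) an indicator contribution of order $\sigma^2/K$, so $\chi=\mathcal{O}(1/K)$. Since $\eta=\Theta(1/L)$ and $c/g$ is a bounded constant, the prefactor $\frac{\eta}{2}+\frac{2c\eta}{g}+4c\eta^2+3L\eta^2$ is $\Theta(1/L)$, and likewise $\rho=(6\eta^2L+c\eta/g)L^2d^2\beta^2=\Theta(1/(LK))$. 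Hence $\tau=\mathcal{O}(1/(LK))$ and $\tau/\lambda=\mathcal{O}(1/K)$. Since $f_\beta(\mathbf{x}^0)-f_\beta(\mathbf{x}_\beta^*)$ is $\mathcal{O}(1)$ by Assumption~\ref{assumption} (up to a vanishing smoothing gap), the leading term is $\frac{f_\beta(\mathbf{x}^0)-f_\beta(\mathbf{x}_\beta^*)}{\lambda(K+1)}=\Theta(L)\cdot\mathcal{O}(1/K)=\mathcal{O}(1/K)$, and (\ref{hilys}) gives $\mathbb{E}\|\nabla f(\mathbf{x}^\zeta)\|^2=\mathcal{O}(1/K)$.

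Finally I would count function queries. Each of the $\lceil K/q\rceil$ outer iterations spends $2d|\mathcal{S}_1|$ queries forming $\hat\nabla_{\text{coord}}f_{\mathcal{S}_1}$, and each of the at most $K$ inner iterations spends $4|\mathcal{S}_2|$ queries on the two two-point estimates in (\ref{mainsd}), the epoch's coordinate estimate being reused. With $q=\lceil|\mathcal{S}_1|^{1/3}\rceil$ and $|\mathcal{S}_2|=dq^2$, both parts equal $\mathcal{O}(dK|\mathcal{S}_1|^{2/3})$, so the total is $\mathcal{O}(dK|\mathcal{S}_1|^{2/3})$. Taking $K=\Theta(1/\epsilon)$ to force $\mathbb{E}\|\nabla f\|^2\le\epsilon$ and using $|\mathcal{S}_1|=\min\{n,K\}$, the case $n\le K$ gives $\mathcal{O}(dn^{2/3}/\epsilon)$ and the case $K<n$ gives $\mathcal{O}(dK^{5/3})=\mathcal{O}(d/\epsilon^{5/3})$, which together are exactly $\mathcal{O}(\min\{dn^{2/3}\epsilon^{-1},\,d\epsilon^{-5/3}\})$.
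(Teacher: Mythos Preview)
Your proposal is correct and follows essentially the same route as the paper's proof: bound the auxiliary constant $c$ by showing that the choice of $g$ forces $\theta q=(\eta g+12\eta^2dL^2/|\mathcal{S}_2|)q<1$ so that $(1+\theta)^q<e$, deduce $c/g$ is a small absolute constant and hence $\lambda=\Theta(\eta)$, substitute $\beta,\delta,|\mathcal{S}_1|$ to get $\chi,\rho,\tau=\mathcal{O}(\eta/K)$, and then count queries via the case split $n\le K$ versus $n>K$. The only cosmetic difference is that the paper performs the case split at the outset (computing explicit numerical constants such as $c/g\le 9(e-1)/2000$ and $\lambda\ge0.144\eta$ in each case), whereas you carry the analysis through in one pass and split only for the query accounting; the substance is identical.
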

Corollary~\ref{co1:svrg} implies that mini-batch ZO-SVRG-Coord-Rand  achieves a convergence rate of $\mathcal{O}(1/K)$, which improves the best rate of existing zeroth-order algorithms for nonconvex optimization by a factor of $\mathcal{O}(d)$.  In particular,  the function query complexity of our ZO-SVRG-Coord-Rand algorithm improves upon that of ZO-SGD by  a factor of $\mathcal{O}(\epsilon^{-1/3})$, and that of ZO-GD by a factor of $\mathcal{O}(n^{1/3})$.  As  far as we know, this is the first SVRG-based zeroth-order algorithm that outperforms both ZO-GD and ZO-SGD in terms of the function query complexity.


The mini-batching strategy in Corollary~\ref{co1:svrg} may require a parallel computation of $\mathbf{v}^k$. 
For nonparallel scenarios, we provide the following single-sample ZO-SVRG-Coord-Rand, which achieves the same function query complexity as mini-batch  ZO-SVRG-Coord-Rand.
\begin{corollary}[Single-sample, $|\mathcal{S}_2|=1$] \label{co:svrg2}
	Under the setting of Theorem~\ref{th:svrg},  let $g=4000 dq\eta^2 L^3$ and 
	\begin{align}\label{p22}
	&|\mathcal{S}_1|=\min\big\{n,\big\lceil (K/d)^{3/5}\big\rceil \big\}, q= |\mathcal{S}_1|d, \, \beta=\frac{|\mathcal{S}_1|^{1/3}}{L\sqrt{dK}} \nonumber
	\\& \eta=\frac{1}{20d^{1/3}q^{2/3}L}, \delta=\frac{|\mathcal{S}_1|^{1/3}}{L\sqrt{K}}, 
	\end{align}
	where $e$ is the Euler's number. Then, Algorithm~\ref{ours:2} satisfies 
	$\mathbb{E}\|\nabla f(\mathbf{x}^\zeta)\|^2
	\leq\mathcal{O}\left( d|\mathcal{S}_1|^{2/3}/K\right)$.
	
	To achieve an $\epsilon$-stationary point, i.e., $\mathbb{E}\|\nabla f(\mathbf{x}^\zeta)\|^2\leq \epsilon$, the number of function queries required by Algorithm~\ref{ours:2}  is at most $\mathcal{O}\left(\min\left\{n^{2/3}d\epsilon^{-1}, d\epsilon^{-5/3} \right\}\right)$.
\end{corollary}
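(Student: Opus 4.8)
The plan is to specialize Theorem~\ref{th:svrg} to the single-sample regime $|\mathcal{S}_2|=1$ under the schedule~\eqref{p22}, and to show that with these choices the leading term of~\eqref{hilys} is of order $d|\mathcal{S}_1|^{2/3}/K$ while the additive error $\tau/\lambda$ is no larger, so the two together give the claimed rate. Two quantitative facts drive everything: that $\lambda$ in~\eqref{ppsse} stays bounded below by a constant multiple of $\eta$, and that every piece of $\tau$ is controlled through $\chi$ and $\rho$, both of which become $\mathcal{O}(d|\mathcal{S}_1|^{2/3}/K)$ after substitution. Once the rate is in hand, I would count function queries epoch by epoch and invert the rate in the two regimes $|\mathcal{S}_1|=n$ and $|\mathcal{S}_1|=\lceil(K/d)^{3/5}\rceil$.

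The main obstacle is taming the auxiliary constant $c$ in~\eqref{cchoose}, which hides the potentially explosive factor $(1+\eta g+12\eta^2 dL^2)^q$. The key is the calibrated choice $g=4000\,dq\eta^2 L^3$: substituting $\eta=1/(20 d^{1/3}q^{2/3}L)$ makes $\eta g=1/(2q)$ exactly, while $12\eta^2 dL^2=\mathcal{O}(|\mathcal{S}_1|^{-4/3}d^{-1})$ is negligible beside $\eta g$. Hence the base of the power is $1+\mathcal{O}(1/q)$, the exponent satisfies $aq=\mathcal{O}(1)$ with $a=\eta g+12\eta^2 dL^2$, and $(1+a)^q\le e^{aq}=\mathcal{O}(1)$, so the geometric ratio $\frac{(1+a)^q-1}{a}$ is $\mathcal{O}(q)$. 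Feeding this back gives $c/g=\mathcal{O}(1)$ with an absolute constant below $1/100$, so that $\frac{4c\eta}{g}\le\frac{\eta}{10}$; combined with $\frac{3L\eta^2}{2}\le\frac{3\eta}{40}$ this yields $\lambda\ge\eta/8$, i.e. $\lambda=\Theta(\eta)$. This balancing of $g$ against the epoch length $q$ is the whole point of the single-sample schedule and the step I expect to require the most care.

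With $\lambda=\Theta(\eta)$ the remaining estimates are substitutions. Using $1/\eta=20 L d^{1/3}q^{2/3}=20 L d|\mathcal{S}_1|^{2/3}$ (since $q=d|\mathcal{S}_1|$) together with the smoothing bound $|f_\beta-f|\le L\beta^2/2$, which with Assumption~\ref{assumption}(1) makes $f_\beta(\mathbf{x}^0)-f_\beta(\mathbf{x}_\beta^*)=\mathcal{O}(1)$, the first term of~\eqref{hilys} is $\mathcal{O}(1/(\eta K))=\mathcal{O}(d|\mathcal{S}_1|^{2/3}/K)$. For the error term I would check that $\beta=|\mathcal{S}_1|^{1/3}/(L\sqrt{dK})$ and $\delta=|\mathcal{S}_1|^{1/3}/(L\sqrt{K})$ force $\beta^2 L^2 d^2=d|\mathcal{S}_1|^{2/3}/K$ and $3L^2 d\delta^2=3d|\mathcal{S}_1|^{2/3}/K$, and that in the online case $|\mathcal{S}_1|^{5/3}\approx K/d$ makes $\sigma^2/|\mathcal{S}_1|=\Theta(d|\mathcal{S}_1|^{2/3}/K)$ as well; hence $\chi=\mathcal{O}(d|\mathcal{S}_1|^{2/3}/K)$. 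Since the prefactor multiplying $\chi$ in $\tau$ is $\mathcal{O}(\eta)$ and $\rho=\mathcal{O}(\eta\,d|\mathcal{S}_1|^{2/3}/K)$, dividing by $\lambda=\Theta(\eta)$ gives $\tau/\lambda=\mathcal{O}(d|\mathcal{S}_1|^{2/3}/K)$, so $\mathbb{E}\|\nabla f(\mathbf{x}^\zeta)\|^2\le\mathcal{O}(d|\mathcal{S}_1|^{2/3}/K)$.

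Finally, for the query complexity I would tally costs per epoch: each outer step spends $2d|\mathcal{S}_1|$ queries on $\hat\nabla_{\text{\normalfont coord}}f_{\mathcal{S}_1}$, and with $q=d|\mathcal{S}_1|$ inner steps per epoch at $\mathcal{O}(1)$ queries each (since $|\mathcal{S}_2|=1$ and the stored full-batch estimate is reused across the epoch), the per-epoch cost is $\mathcal{O}(d|\mathcal{S}_1|)=\mathcal{O}(q)$, so the total over $K$ iterations is $\mathcal{O}(K)$. Inverting $d|\mathcal{S}_1|^{2/3}/K\le\epsilon$ then gives $K=\mathcal{O}(dn^{2/3}/\epsilon)$ when $|\mathcal{S}_1|=n$ and, solving $K^{3/5}\gtrsim d^{3/5}/\epsilon$, gives $K=\mathcal{O}(d/\epsilon^{5/3})$ when $|\mathcal{S}_1|=\lceil(K/d)^{3/5}\rceil$; since the total query count equals $\mathcal{O}(K)$, the overall complexity is $\mathcal{O}(\min\{dn^{2/3}\epsilon^{-1},d\epsilon^{-5/3}\})$.
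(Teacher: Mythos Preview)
Your proposal is correct and mirrors the paper's own proof: both substitute the schedule~\eqref{p22} into Theorem~\ref{th:svrg}, verify that $\eta g=1/(2q)$ while $12\eta^2 dL^2=\mathcal{O}(|\mathcal{S}_1|^{-1/3}/q)$ so that $\theta\in(1/(2q),1/q)$, $(1+\theta)^q<e$, and $c/g\le 9(e-1)/2000$, deduce $\lambda=\Theta(\eta)$, then check $\chi,\tau=\mathcal{O}(\eta\cdot d|\mathcal{S}_1|^{2/3}/K)$ and count per-epoch queries as $\mathcal{O}(q)$ to get total cost $\mathcal{O}(K)$, split across the two regimes $|\mathcal{S}_1|=n$ and $|\mathcal{S}_1|=\lceil(K/d)^{3/5}\rceil$. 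One small arithmetic slip to fix: from $4c\eta/g\le\eta/10$ and $3L\eta^2/2\le 3\eta/40$ you only obtain $\lambda\ge 3\eta/40$, not $\eta/8$; tighten to $4c\eta/g\le \eta/25$ (immediate from your own bound $c/g<1/100$) to recover $\lambda\ge 0.135\eta$, matching the paper's $\lambda\ge 0.144\eta$, after which everything else goes through unchanged.
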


\subsection{New Analysis for ZO-SVRG-Coord}
In this subsection, we provide an improved analysis for the ZO-SVRG-Coord algorithm proposed by~\citealt{liu2018zeroth}, which 
adopts the same outer-loop iteration as mini-batch ZO-SVRG-Coord-Rand, i.e., Algorithm~\ref{ours:2}, but updates the inner-loop estimator $\mathbf{v}^k$ coordinate-wisely  by 
\begin{align}\label{vnew}
\mathbf{v}^k=&\frac{1}{|\mathcal{S}_2|}\sum_{j=1}^{|\mathcal{S}_2|}\hat \nabla _{\text{coord}} f_{a_j}(\mathbf{x}^k)- \frac{1}{|\mathcal{S}_2|}\sum_{j=1}^{|\mathcal{S}_2|}\hat \nabla _{\text{coord}} f_{a_j}(\mathbf{x}^{qk_0}) \nonumber
\\&+\hat \nabla_{\text{\normalfont coord}}f_{\mathcal{S}_1}(\mathbf{x}^{qk_0}).
\end{align}
We first show that although the coordinate-wise gradient estimator in~\eqref{vnew} requires $d$ times more function queries than the two-point random gradient estimator at each inner-loop iteration, it  achieves  more accurate gradient estimation, as stated in the following lemma. 

\begin{lemma}\label{le:newnew}
	Under Assumption~\ref{assumption}, we have, for  any  $qk_0\leq  k\leq \min\{ q(k_0+1)-1, qh\},\,k_0=0,...,h,$ 
	\begin{align*}
	\mathbb{E}\|	\mathbf{v}^k &-\hat \nabla_{\text{\normalfont coord}}f(\mathbf{x}^k)\|^2\leq  \frac{12L^2d\delta^2}{|\mathcal{S}_2|}+\frac{6L^2}{|\mathcal{S}_2|}\|\mathbf{x}^k-\mathbf{x}^{qk_0}\|^2
	\\&+\frac{6I(|\mathcal{S}_1|<n)}{|\mathcal{S}_1|}\left(  2L^2d\delta^2+\sigma^2\right).
	\end{align*}
\end{lemma}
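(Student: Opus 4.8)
The plan is to exploit two structural features of the coordinate-wise scheme that make this variance bound much cleaner than Lemma~\ref{le:vks}: the estimator $\hat\nabla_{\mathrm{coord}} f_{\mathcal{S}}$ is \emph{linear} in the sample index (so $\hat\nabla_{\mathrm{coord}} f_{\mathcal{S}} = \frac{1}{|\mathcal{S}|}\sum_{i\in\mathcal{S}}\hat\nabla_{\mathrm{coord}} f_i$), and it is \emph{deterministic} given the query point (there is no smoothing-vector randomness, unlike the two-point random estimator). Hence the only randomness in $\mathbf{v}^k$ comes from the two independent samplings: $\mathcal{S}_1$ (without replacement, drawn once per epoch) and the fresh $\mathcal{S}_2$ (with replacement, drawn at the current inner step). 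Writing $g_i(\cdot):=\hat\nabla_{\mathrm{coord}} f_i(\cdot)$, I would first split the error into a $\mathcal{S}_2$-part and a $\mathcal{S}_1$-part:
\begin{align*}
\mathbf{v}^k - \hat\nabla_{\mathrm{coord}} f(\mathbf{x}^k)
&= \underbrace{\tfrac{1}{|\mathcal{S}_2|}\sum_{j}\Big[\big(g_{a_j}(\mathbf{x}^k)-g_{a_j}(\mathbf{x}^{qk_0})\big)-\big(\hat\nabla_{\mathrm{coord}} f(\mathbf{x}^k)-\hat\nabla_{\mathrm{coord}} f(\mathbf{x}^{qk_0})\big)\Big]}_{=:A} \\
&\quad + \underbrace{\big(\hat\nabla_{\mathrm{coord}} f_{\mathcal{S}_1}(\mathbf{x}^{qk_0})-\hat\nabla_{\mathrm{coord}} f(\mathbf{x}^{qk_0})\big)}_{=:B}.
\end{align*}
Conditioning on the history through the current inner step (so that $\mathbf{x}^k$, $\mathbf{x}^{qk_0}$, and $\mathcal{S}_1$ are fixed and only the current $\mathcal{S}_2$ is random), each summand of $A$ has conditional mean zero because $\mathbb{E}_{a_j} g_{a_j}(\cdot)=\hat\nabla_{\mathrm{coord}} f(\cdot)$ under uniform sampling, while $B$ is measurable; the cross term therefore drops and $\mathbb{E}\|\mathbf{v}^k-\hat\nabla_{\mathrm{coord}} f(\mathbf{x}^k)\|^2=\mathbb{E}\|A\|^2+\mathbb{E}\|B\|^2$.

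The enabling deterministic ingredient is an accuracy bound for the coordinate estimator, which I would establish from the $L$-Lipschitz gradient via the two-sided finite difference: for each coordinate $\ell$, $|f_i(\mathbf{x}\pm\delta\mathbf{e}_\ell)-f_i(\mathbf{x})\mp\delta\,[\nabla f_i(\mathbf{x})]_\ell|\le \tfrac{L\delta^2}{2}$, so subtracting gives $\big|[\hat\nabla_{\mathrm{coord}} f_i(\mathbf{x})]_\ell-[\nabla f_i(\mathbf{x})]_\ell\big|\le \tfrac{L\delta}{2}$, and summing over the $d$ coordinates yields $\|\hat\nabla_{\mathrm{coord}} f_i(\mathbf{x})-\nabla f_i(\mathbf{x})\|^2\le \tfrac{L^2 d\delta^2}{4}=\mathcal{O}(L^2 d\delta^2)$, uniformly in $i$ and $\mathbf{x}$.

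I would then bound the two pieces. For $A$, the i.i.d.\ mean-zero average gives $\mathbb{E}\|A\|^2\le\frac{1}{|\mathcal{S}_2|}\mathbb{E}_a\|g_a(\mathbf{x}^k)-g_a(\mathbf{x}^{qk_0})\|^2$; splitting $g_a(\mathbf{x}^k)-g_a(\mathbf{x}^{qk_0})$ into the estimator error at $\mathbf{x}^k$, minus the estimator error at $\mathbf{x}^{qk_0}$, plus the true-gradient difference $\nabla f_a(\mathbf{x}^k)-\nabla f_a(\mathbf{x}^{qk_0})$, and applying $\|\cdot\|^2\le 3(\|\cdot\|^2+\|\cdot\|^2+\|\cdot\|^2)$, the accuracy bound on the two error terms and $L$-Lipschitzness on the middle term produce the $\frac{L^2 d\delta^2}{|\mathcal{S}_2|}$ and $\frac{L^2}{|\mathcal{S}_2|}\|\mathbf{x}^k-\mathbf{x}^{qk_0}\|^2$ contributions. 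For $B$, the term vanishes when $|\mathcal{S}_1|=n$ (then $\mathcal{S}_1=[n]$ and $B=0$), explaining the indicator; otherwise $\mathcal{S}_1$ is a uniform sample without replacement at the frozen point $\mathbf{x}^{qk_0}$, so the finite-population variance bound gives $\mathbb{E}\|B\|^2\le\frac{1}{|\mathcal{S}_1|}\cdot\frac1n\sum_i\|g_i(\mathbf{x}^{qk_0})-\hat\nabla_{\mathrm{coord}} f(\mathbf{x}^{qk_0})\|^2$, and writing each $g_i-\hat\nabla_{\mathrm{coord}} f$ as estimator errors plus $\nabla f_i-\nabla f$ and invoking Assumption~\ref{assumption}(3) yields the $\frac{1}{|\mathcal{S}_1|}(L^2 d\delta^2+\sigma^2)$-type term. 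Summing the two bounds and loosening the (slightly better) constants into those of the statement completes the proof.

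The calculations are individually routine; the one place demanding care is the \emph{probabilistic bookkeeping}. I expect the main obstacle to be choosing the conditioning so that simultaneously (i) $A$ is genuinely conditionally mean-zero (this needs $\mathcal{S}_2$ sampled \emph{with} replacement, so the $a_j$ are i.i.d.\ uniform with mean $\hat\nabla_{\mathrm{coord}} f$), (ii) the $A$–$B$ cross term vanishes, and (iii) the without-replacement variance bound for $B$ is legitimately applied at $\mathbf{x}^{qk_0}$, which is determined by the previous epoch and hence independent of the current epoch's $\mathcal{S}_1$. Once the two independent randomness sources are cleanly separated, the deterministic accuracy of the coordinate estimator---in contrast to the smoothing-vector variance that complicates Lemma~\ref{le:vks}---is exactly what keeps the bound tight and free of any $\mathbb{E}\|\nabla f(\mathbf{x}^k)\|^2$ term.
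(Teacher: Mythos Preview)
Your proposal is correct and follows essentially the same route as the paper's proof: decompose the error into an $\mathcal{S}_2$-sampling piece $A$ and an $\mathcal{S}_1$-sampling piece $B$, bound $\mathbb{E}\|A\|^2$ via the i.i.d.\ variance formula together with the coordinate-estimator accuracy bound and $L$-Lipschitzness, and bound $\mathbb{E}\|B\|^2$ via the without-replacement variance formula (the paper's Lemma~\ref{evs1}). The one difference is that where the paper applies the crude split $\|A+B\|^2\le 2\|A\|^2+2\|B\|^2$, you observe that $\mathbb{E}\langle A,B\rangle=0$ because $A$ is conditionally mean-zero given the history (and $B$ is measurable with respect to it), which yields constants a factor of~$2$ sharper than the stated bound---so your remark about ``loosening the (slightly better) constants'' is exactly right.
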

It can be seen that the above bound in Lemma~\ref{le:newnew} contains a tighter error term  $\frac{6L^2}{|\mathcal{S}_2|}\|\mathbf{x}^k-\mathbf{x}^{qk_0}\|^2$ than that in 
Lemma~\ref{le:vks} by a factor of $\mathcal{O}(d)$. More importantly, this bound is  tighter than that in Theorem 3 in~\citealt{liu2018zeroth} for ZO-SVRG-Coord by a factor of $\mathcal{O}(d)$. Based on Lemma~\ref{le:newnew}, we have the following theorem. 
\begin{Theorem}\label{th:newsa}
	Let Assumption~\ref{assumption} hold, and select 
	\begin{align}\label{ppsassss}
	&\eta=\frac{1}{15L},  |\mathcal{S}_1|=\min\{n,K\}, q=\left\lceil |\mathcal{S}_1|^{1/3}\right\rceil \nonumber
	\\& |\mathcal{S}_2|=q^2, \delta=\frac{1}{L\sqrt{dK}},
	\end{align}
	where $e$ is the Euler's number. Then, ZO-SVRG-Coord satisfies 
	$\mathbb{E}\|\nabla f(\mathbf{x}^\zeta)\|^2\leq  \mathcal{O}(1/K).$
	
	To achieve an $\epsilon$-stationary point, i.e., $\mathbb{E}\|\nabla f(\mathbf{x}^\zeta)\|^2\leq \epsilon$, the number of function queries required by ZO-SVRG-Coord  is  at most $\mathcal{O}\left(\min\left\{n^{2/3}d\epsilon^{-1}, d\epsilon^{-5/3} \right\}\right)$. 
\end{Theorem}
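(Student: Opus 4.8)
The plan is to mirror the general framework behind Theorem~\ref{th:svrg}, but to substitute the sharper estimate of Lemma~\ref{le:newnew} for the coarser Lemma~\ref{le:vks}, and then specialize to the choices in~\eqref{ppsassss}. First I would use the $L$-smoothness of $f$ (Assumption~\ref{assumption}(2)) with the update $\mathbf{x}^{k+1}=\mathbf{x}^k-\eta\mathbf{v}^k$ to write the one-step descent inequality $f(\mathbf{x}^{k+1})\le f(\mathbf{x}^k)-\eta\langle\nabla f(\mathbf{x}^k),\mathbf{v}^k\rangle+\frac{L\eta^2}{2}\|\mathbf{v}^k\|^2$. Taking the conditional expectation over the inner-loop sampling of $\mathcal{S}_2$, I would split the inner product through $\nabla f(\mathbf{x}^k)=\hat\nabla_{\text{\normalfont coord}}f(\mathbf{x}^k)+(\nabla f(\mathbf{x}^k)-\hat\nabla_{\text{\normalfont coord}}f(\mathbf{x}^k))$, control the deterministic finite-difference bias $\|\nabla f(\mathbf{x}^k)-\hat\nabla_{\text{\normalfont coord}}f(\mathbf{x}^k)\|^2=\mathcal{O}(L^2 d\delta^2)$ of the coordinate estimator in~\eqref{vnew}, and absorb the discrepancy between $\mathbf{v}^k$ and $\hat\nabla_{\text{\normalfont coord}}f(\mathbf{x}^k)$ (both its $\mathcal{S}_2$-variance and, in the online case, its $\mathcal{S}_1$-sampling bias) into the mean-squared-error bound of Lemma~\ref{le:newnew}. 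The latter supplies the key drift term $\frac{6L^2}{|\mathcal{S}_2|}\|\mathbf{x}^k-\mathbf{x}^{qk_0}\|^2$, which crucially carries no factor of $d$.

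Next I would introduce the standard SVRG-type Lyapunov function $R_k=\mathbb{E}[f(\mathbf{x}^k)]+c_k\,\mathbb{E}\|\mathbf{x}^k-\mathbf{x}^{qk_0}\|^2$, where $c_k$ obeys a backward recursion with multiplicative growth factor $1+\eta g+12\eta^2 L^2/|\mathcal{S}_2|$ and an additive term of order $L^3\eta^2/|\mathcal{S}_2|$, and which resets to $c_k=0$ at each epoch boundary. Solving this geometric recursion yields a closed form for $c:=\max_k c_k$ of exactly the shape of~\eqref{cchoose}, but with the $d$ factors removed. Combining the descent inequality with the drift recursion and rearranging would give a per-step estimate $\lambda\,\mathbb{E}\|\nabla f(\mathbf{x}^k)\|^2\le R_k-R_{k+1}+\tau$ for a coefficient $\lambda=\Theta(\eta)$ of the same form as in~\eqref{ppsse} and an additive error $\tau$ collecting all the $\delta^2$, $\sigma^2/|\mathcal{S}_1|$ and $I(|\mathcal{S}_1|<n)$ terms. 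Telescoping over $k=0,\dots,K$, using that the drift resets at epoch boundaries and that $f$ is bounded below by Assumption~\ref{assumption}(1), and dividing by $K+1$ would give $\mathbb{E}\|\nabla f(\mathbf{x}^\zeta)\|^2\le\frac{f(\mathbf{x}^0)-f(\mathbf{x}^*)}{\lambda(K+1)}+\frac{\tau}{\lambda}$.

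It then remains to substitute~\eqref{ppsassss} and count queries. With $\delta=1/(L\sqrt{dK})$ every smoothing contribution to $\tau$ is $\mathcal{O}(1/K)$, so $\tau/\lambda=\mathcal{O}(1/K)$ and the rate $\mathbb{E}\|\nabla f(\mathbf{x}^\zeta)\|^2\le\mathcal{O}(1/K)$ follows. For complexity, each outer loop spends $2d|\mathcal{S}_1|$ queries and each of the $K$ inner steps spends $\mathcal{O}(d|\mathcal{S}_2|)$ queries, so with $h=K/q$ epochs the total is $\mathcal{O}(hd|\mathcal{S}_1|+Kd|\mathcal{S}_2|)=\mathcal{O}(dK|\mathcal{S}_1|^{2/3}+dKq^2)$; substituting $q=\lceil|\mathcal{S}_1|^{1/3}\rceil$ and $|\mathcal{S}_2|=q^2$ collapses both terms to $\mathcal{O}(dK|\mathcal{S}_1|^{2/3})$, and setting $K=\Theta(1/\epsilon)$ with $|\mathcal{S}_1|=\min\{n,K\}$ yields $\mathcal{O}(\min\{n^{2/3}d\epsilon^{-1},d\epsilon^{-5/3}\})$.

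I expect the main obstacle to be controlling the geometric blow-up in $c$: one must show that $(1+\eta g+12\eta^2 L^2/|\mathcal{S}_2|)^q-1=\mathcal{O}(1)$, so that $c=\mathcal{O}(\eta)$ and hence $\lambda$ stays positive and $\Theta(\eta)$ under the \emph{constant} stepsize $\eta=1/(15L)$. This is precisely where the factor-of-$d$ improvement in Lemma~\ref{le:newnew} is indispensable: the per-step growth is $12\eta^2 L^2/|\mathcal{S}_2|$ rather than the $12\eta^2 dL^2/|\mathcal{S}_2|$ of~\eqref{cchoose}, so with $|\mathcal{S}_2|=q^2$ the product $q\cdot(12\eta^2 L^2/|\mathcal{S}_2|)=\mathcal{O}(\eta^2 L^2/q)$ is bounded and the exponential stays $\mathcal{O}(1)$, whereas the $d$-dependent version would force $\eta=\mathcal{O}(1/d)$ and forfeit the $\mathcal{O}(1/K)$ rate.
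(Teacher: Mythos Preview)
Your proposal is correct and follows essentially the same approach as the paper's proof: a one-step smoothness descent inequality, the Lyapunov function $R_k=\mathbb{E}[f(\mathbf{x}^k)]+c_k\,\mathbb{E}\|\mathbf{x}^k-\mathbf{x}^{qk_0}\|^2$ with a backward geometric recursion for $c_k$, Lemma~\ref{le:newnew} to control the variance without the factor of $d$, and then the parameter substitution and case split $|\mathcal{S}_1|=n$ versus $|\mathcal{S}_1|=K$ for the query count. Your identification of the main obstacle---showing $(1+\eta g+\mathcal{O}(\eta^2 L^2/|\mathcal{S}_2|))^q=\mathcal{O}(1)$ so that $\lambda$ stays $\Theta(\eta)$ under constant stepsize---and of why Lemma~\ref{le:newnew} resolves it is exactly right; the paper makes this work by choosing $g=1/(2\eta q)$ so that $\theta q\le 1$, and the constants differ only cosmetically (the paper's multiplicative factor is $1+\eta g+18\eta^2 L^2/|\mathcal{S}_2|$ rather than your $12$).
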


Theorem~\ref{th:newsa} order-wisely improves the complexity bound  in~\citealt{liu2018zeroth} by a factor of $\mathcal{O}(\max\{n^{1/3}, dn^{-2/3}\})$ due to our new analysis. Furthermore,   Theorem~\ref{th:newsa}
shows that  ZO-SVRG-Coord achieves the same performance as ZO-SVRG-Coord-Rand, both of which order-wisely improves ZO-GD and ZO-SGD  in the convergence rate as well as the  function query complexity for nonconvex optimization. Moreover, both ZO-SVRG-Coord-Rand and ZO-SVRG-Coord (under our new  analysis) allows a much larger stepsize $\mathcal{O}(1)$ than $\eta=\mathcal{O}(1/d)$ used in ZO-SGD and all zeroth-order SVRG-based  algorithms in~\citealt{liu2018zeroth}, and hence converges much faster in practice, as demonstrated in our experiments. 

\section{ZO-SPIDER-Coord Algorithm for Nonconvex Optimization} \label{spidersss}

Recently,~\citealt{nguyen2017sarah,nguyen2017stochastic} and~\citealt{fang2018spider} proposed a new first-order variance-reduced stochastic gradient estimator named SARAH and SPIDER respectively, which estimates stochastic gradients in a {\em recursive} way as 
\begin{align}\label{spiders}
\mathbf{v}^k=\nabla f_{\mathcal{S}_2}(\mathbf{x}^k)-\nabla f_{\mathcal{S}_2}(\mathbf{x}^{k-1})+\mathbf{v}^{k-1}.
\end{align}
In this section, we explore the performance of  this estimator in  zeroth-order nonconvex optimization. Motivated by our new analysis for ZO-SVRG-Coord, 
we propose  a zeroth-order SPIDER-based algorithm ZO-SPIDER-Coord, as shown in Algorithm~\ref{ours:3}.  
Our 
ZO-SPIDER-Coord extends the estimator~\eqref{spiders} for zeroth-order optimization by 
\begin{align}\label{codnew}
\mathbf{v}^k=\hat \nabla _{\text{coord}} f_{\mathcal{S}_2}(\mathbf{x}^k)-\hat \nabla _{\text{coord}} f_{\mathcal{S}_2}(\mathbf{x}^{k-1}) +\mathbf{v}^{k-1}.
\end{align}
where $ \hat \nabla _{\text{coord}} f_{\mathcal{S}_2}(\mathbf{x})=\sum_{i=1}^d\frac{(f_{\mathcal{S}_2}(\mathbf{x}+\delta\mathbf{e}_i)-f_{\mathcal{S}_2}(\mathbf{x}-\delta\mathbf{e}_i))\mathbf{e}_i}{2\delta}$. 
Differently from the existing SPIDER-based zeroth-order algorithm SPIDER-SZO proposed in \citealt{fang2018spider}, which requires to generate totally $\mathcal{O}(d^2\sqrt{n}\epsilon^{-1})$ Gaussian random variables (see Theorem 8 in~\citealt{fang2018spider}), our ZO-SPIDER-Coord eliminates Gaussian variable generation due to the utilization of coordinate-wise gradient estimator, and still achieves the same complexity performance as SPIDER-SZO, as shown in the next subsection. 
In addition, mini-batch ZO-SPIDER-Coord allows a large constant stepsize (see Corollary~\ref{comain}), as apposed to the small stepsize $\mathcal{O}(\sqrt{\epsilon}/\|\mathbf{v}^k\|)$ used in SPIDER-SZO for guaranteeing the convergence.  Similar idea has also been  used in SpiderBoost~\cite{wang2018spiderboost} to enhance the stepsize of SPIDER~\cite{fang2018spider}.

\vspace{-0.1cm}
\subsection{Convergence and Complexity Analysis }
The following theorem provides the convergence guarantee for ZO-SPIDER-Coord.
\begin{algorithm}[t]
	\caption{ZO-SPIDER-Coord}
	\label{ours:3}
	\begin{algorithmic}[1]
		\STATE {\bfseries Input:}  $K$,   $q$,  $|\mathcal{S}_1|\leq n$, $|\mathcal{S}_2|$, $\mathbf{x}^0$,  $\delta>0$,  $\eta>0$.
		\FOR{$k=0$ {\bfseries to} $K$}
		\IF{$k\mod q =0$ }
		\STATE{Sample $\mathcal{S}_1$ from $[n]$ without replacement
			\\		Compute $\mathbf{v}^k=\hat \nabla_{\text{\normalfont coord}}f_{\mathcal{S}_1}(\mathbf{x}^{k})$}
		\ELSE 
		\STATE {
			Sample  $\mathcal{S}_2=\big\{a_1,a_2....,a_{|\mathcal{S}_2|}\big\}$  from $[n]$ with replacement
			\\ Compute $\mathbf{v}^k$ according to~\eqref{codnew}}
		\ENDIF
		\STATE{$\mathbf{x}^{k+1}=\mathbf{x}^k-\eta\mathbf{v}^k$}
		\ENDFOR
		\STATE {\bfseries Output:} $\mathbf{x}_{\zeta}$ from $\{ \mathbf{x}_0,...,\mathbf{x}_K\}$ uniformly at random.
	\end{algorithmic}
\end{algorithm}

\begin{Theorem}\label{mainTT}
	Let Assumption~\ref{assumption} hold, and define 
	\begin{align}\label{pallo}
	&\phi=  \frac{\eta}{2}-\frac{\eta^2L}{2}- \frac{3L^2\eta^3q}{|\mathcal{S}_2|},\nonumber \\&\theta=\frac{3qL^2d\delta^2}{|\mathcal{S}_2|}+\frac{3I(|\mathcal{S}_1|<n)}{|\mathcal{S}_1|}\left(  2L^2d\delta^2+\sigma^2\right).
	\end{align}
	Then, the output $\mathbf{x}^\zeta$ of Algorithm~\ref{ours:3} satisfies 
	\begin{align}\label{uioj}
	\mathbb{E}&\|f(\mathbf{x}^{\zeta})\|^2 \leq 3L^2d\delta^2+3\theta \nonumber
	\\&+\left( \frac{9q\eta^2L^2}{\phi|\mathcal{S}_2|} +\frac{3}{\phi}  \right)\left(   \frac{\Delta}{K}+\eta \big(\theta+L^2d\delta^2\big)\right),
	\end{align}
	where $\Delta:=f(\mathbf{x}^{0})-f(\mathbf{x}^*)$ with $\mathbf{x}^*:=\arg\min_{\mathbf{x}} f(\mathbf{x})$.
\end{Theorem}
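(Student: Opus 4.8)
The plan is to run the standard SARAH/SPIDER potential-function argument, adapted to the coordinate-wise finite-difference estimator, while tracking three separate sources of inexactness: the \emph{deterministic} finite-difference bias $\hat\nabla_{\text{coord}} f(\mathbf{x})-\nabla f(\mathbf{x})$, which $L$-smoothness controls coordinate-by-coordinate by $\|\hat\nabla_{\text{coord}} f(\mathbf{x})-\nabla f(\mathbf{x})\|^2\le L^2d\delta^2$; the \emph{recursive stochastic error} $e^k:=\mathbf{v}^k-\hat\nabla_{\text{coord}}f(\mathbf{x}^k)$; and the squared magnitude $\|\mathbf{v}^k\|^2$ of the estimator itself. (I read the left side of \eqref{uioj} as $\mathbb{E}\|\nabla f(\mathbf{x}^\zeta)\|^2$.) First I would write the descent inequality: $L$-smoothness of $f$ together with $\mathbf{x}^{k+1}=\mathbf{x}^k-\eta\mathbf{v}^k$ and the identity $\langle\nabla f(\mathbf{x}^k),\mathbf{v}^k\rangle=\tfrac12(\|\nabla f(\mathbf{x}^k)\|^2+\|\mathbf{v}^k\|^2-\|\nabla f(\mathbf{x}^k)-\mathbf{v}^k\|^2)$ gives
\[
f(\mathbf{x}^{k+1})\le f(\mathbf{x}^k)-\tfrac{\eta}{2}\|\nabla f(\mathbf{x}^k)\|^2-\tfrac{\eta}{2}(1-L\eta)\|\mathbf{v}^k\|^2+\tfrac{\eta}{2}\|\nabla f(\mathbf{x}^k)-\mathbf{v}^k\|^2.
\]

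The crux is the second step: bounding $\mathbb{E}\|e^k\|^2$ for the recursion \eqref{codnew}. Since $\mathcal{S}_2$ is sampled uniformly with replacement, the minibatch difference $\hat\nabla_{\text{coord}}f_{\mathcal{S}_2}(\mathbf{x}^k)-\hat\nabla_{\text{coord}}f_{\mathcal{S}_2}(\mathbf{x}^{k-1})$ is a conditionally unbiased estimate of $\hat\nabla_{\text{coord}}f(\mathbf{x}^k)-\hat\nabla_{\text{coord}}f(\mathbf{x}^{k-1})$, so $e^k=e^{k-1}+(\text{zero-mean increment})$ and hence $\mathbb{E}_k\|e^k\|^2=\|e^{k-1}\|^2+\mathbb{E}_k\|\text{increment}\|^2$. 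I would bound the increment variance by $1/|\mathcal{S}_2|$ times its per-sample second moment and invoke the coordinate-wise Lipschitz-type inequality underlying Lemma~\ref{le:newnew} to obtain $\mathbb{E}_k\|e^k\|^2\le\|e^{k-1}\|^2+\tfrac{3L^2}{|\mathcal{S}_2|}\|\mathbf{x}^k-\mathbf{x}^{k-1}\|^2+\tfrac{3L^2d\delta^2}{|\mathcal{S}_2|}$. Substituting $\|\mathbf{x}^k-\mathbf{x}^{k-1}\|^2=\eta^2\|\mathbf{v}^{k-1}\|^2$ and telescoping from the epoch start $k=qk_0$ — where $\mathbb{E}\|e^{qk_0}\|^2$ is the $\mathcal{S}_1$-sampling error, bounded by $\tfrac{3I(|\mathcal{S}_1|<n)}{|\mathcal{S}_1|}(2L^2d\delta^2+\sigma^2)$ and zero when $|\mathcal{S}_1|=n$ — and then summing the resulting within-epoch double sum (length $\le q$) yields
\[
\tfrac{1}{K+1}\textstyle\sum_k\mathbb{E}\|e^k\|^2\le \theta+\tfrac{3qL^2\eta^2}{|\mathcal{S}_2|}\cdot\tfrac{1}{K+1}\sum_k\mathbb{E}\|\mathbf{v}^k\|^2,
\]
which is exactly where $\theta$ and the single factor of $q$ appear.

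The assembly then proceeds in two moves. Summing the descent inequality over $k=0,\dots,K$, telescoping the left side to $\Delta$, dropping the favourable $-\tfrac{\eta}{2}\|\nabla f(\mathbf{x}^k)\|^2$ terms, using $\|\nabla f(\mathbf{x}^k)-\mathbf{v}^k\|^2\le 2\|e^k\|^2+2L^2d\delta^2$, and inserting the error bound above produces a net coefficient on $\sum_k\mathbb{E}\|\mathbf{v}^k\|^2$ equal to $\tfrac{\eta}{2}-\tfrac{L\eta^2}{2}-\tfrac{3L^2\eta^3q}{|\mathcal{S}_2|}=\phi$; positivity of $\phi$ (from the step-size choice) then gives $\tfrac{1}{K+1}\sum_k\mathbb{E}\|\mathbf{v}^k\|^2\le\tfrac{1}{\phi}\big(\tfrac{\Delta}{K+1}+\eta(\theta+L^2d\delta^2)\big)$. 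Finally, the three-way split $\|\nabla f(\mathbf{x}^k)\|^2\le 3\|\nabla f(\mathbf{x}^k)-\hat\nabla_{\text{coord}}f(\mathbf{x}^k)\|^2+3\|e^k\|^2+3\|\mathbf{v}^k\|^2\le 3L^2d\delta^2+3\|e^k\|^2+3\|\mathbf{v}^k\|^2$, averaged over $k$ (which equals $\mathbb{E}\|\nabla f(\mathbf{x}^\zeta)\|^2$ by the uniform draw of $\zeta$) and combined with the two displayed bounds, collapses to
\[
3L^2d\delta^2+3\theta+\Big(\tfrac{9q\eta^2L^2}{\phi|\mathcal{S}_2|}+\tfrac{3}{\phi}\Big)\Big(\tfrac{\Delta}{K+1}+\eta(\theta+L^2d\delta^2)\Big),
\]
the claimed bound, with the factor $9=3\times3$ arising from the split applied to the $\|e^k\|^2$ term and $K+1$ relaxed to $K$.

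The main obstacle I anticipate is the recursive error bound of the second step: establishing the coordinate-wise Lipschitz inequality for the finite-difference difference $\hat\nabla_{\text{coord}}f_i(\mathbf{x}^k)-\hat\nabla_{\text{coord}}f_i(\mathbf{x}^{k-1})$ with a clean separation of the $L^2\|\mathbf{x}^k-\mathbf{x}^{k-1}\|^2$ term from the $O(L^2d\delta^2)$ bias term, and then summing the within-epoch double sum so that \emph{exactly one} factor of $q$ survives and lands consistently in both $\phi$ and $\theta$. The delicate bookkeeping is the martingale (unbiasedness) structure across the estimator reset at each epoch boundary and aligning all constants so that the sole requirement is $\phi>0$.
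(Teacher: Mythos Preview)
Your proposal is correct and follows essentially the same route as the paper: the same descent inequality (the paper reaches it via $-\eta\langle\nabla f-\mathbf{v},\mathbf{v}\rangle\le\tfrac{\eta}{2}\|\nabla f-\mathbf{v}\|^2+\tfrac{\eta}{2}\|\mathbf{v}\|^2$ rather than polarization, so it simply lacks the $-\tfrac{\eta}{2}\|\nabla f\|^2$ term you drop anyway), the same martingale recursion for $e^k$ (this is the paper's Lemma~\ref{le:coord}), the same epoch-wise double summation producing the single factor $q$ in both $\phi$ and $\theta$, and the same final three-term split. One minor constant: the per-step bias in the $e^k$ recursion is $\tfrac{6L^2d\delta^2}{|\mathcal{S}_2|}$ rather than $3$ (there are two finite-difference errors in $\hat\nabla_{\text{coord}}f_i(\mathbf{x}^k)-\hat\nabla_{\text{coord}}f_i(\mathbf{x}^{k-1})$, one at each point), but after summing over the epoch the $6\cdot\tfrac{q(q-1)}{2}$ collapses to the $3q$ appearing in $\theta$, so your averaged error bound and the final inequality are unaffected.
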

Based on Theorem~\ref{mainTT}, 
we provide an analysis on mini-batch ZO-SPIDER-Coord.
\begin{corollary}[Mini-batch, $|\mathcal{S}_2|>1$]\label{comain}
	Under the setting of Theorem~\ref{mainTT}, we choose stepsize $\eta=\frac{1}{4L}$ and 
	\begin{small}
		\begin{align}\label{pabu1}
		|\mathcal{S}_1|=\min\{n,K\}, |\mathcal{S}_2|=q=\lceil |\mathcal{S}_1|^{1/2}\rceil,\delta=\frac{1}{\sqrt{Kd}L}.
		\end{align}
	\end{small}
	\hspace{-0.12cm}Then, Algorithm~\ref{ours:3} satisfies 
	$\mathbb{E}\|\nabla f(\mathbf{x}^\zeta)\|^2\leq \mathcal{O}(1/K)$. 
	
	To achieve an $\epsilon$-stationary point, i.e., $\mathbb{E}\|\nabla f(\mathbf{x}^\zeta)\|^2\leq \epsilon$, the number of function queries required by Algorithm~\ref{ours:3}  is 
	$\mathcal{O}\left(\min\left\{n^{1/2}d\epsilon^{-1}, d\epsilon^{-3/2} \right\}\right)$.
\end{corollary}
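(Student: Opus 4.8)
The plan is to treat Corollary~\ref{comain} as a direct consequence of the master bound~\eqref{uioj} in Theorem~\ref{mainTT}: I would substitute the parameter choices~\eqref{pabu1} and verify that every term on the right-hand side collapses to $\mathcal{O}(1/K)$, then separately count the function queries. The first and most important step is to compute $\phi$. Setting $\eta=1/(4L)$ and using the crucial choice $|\mathcal{S}_2|=q$, which makes the recursive variance term $3L^2\eta^3 q/|\mathcal{S}_2|$ independent of the epoch length $q$, the expression $\phi=\eta/2-\eta^2L/2-3L^2\eta^3$ reduces to a strictly positive constant of order $1/L$; I would carry out this arithmetic explicitly to confirm $\phi>0$, which is what makes the bound meaningful. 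The same cancellation of $q$ through $|\mathcal{S}_2|=q$ simultaneously reduces the prefactor $9q\eta^2L^2/(\phi|\mathcal{S}_2|)+3/\phi$ to a quantity of order $L$.

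Next I would control the smoothing-error quantities defined in~\eqref{pallo}. With $\delta=1/(\sqrt{Kd}L)$ one has $L^2d\delta^2=1/K$ exactly, so the standalone term $3L^2d\delta^2$ is already $\mathcal{O}(1/K)$. For $\theta$, the first piece $3qL^2d\delta^2/|\mathcal{S}_2|$ again simplifies to $3L^2d\delta^2=3/K$ by $|\mathcal{S}_2|=q$, while the second piece, carrying the indicator $I(|\mathcal{S}_1|<n)$, must be split into two cases: in the finite-sum regime $|\mathcal{S}_1|=n$ the indicator vanishes, and in the online regime $|\mathcal{S}_1|=K<n$ it contributes $\tfrac{3}{K}(2/K+\sigma^2)=\mathcal{O}(1/K)$. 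Hence $\theta=\mathcal{O}(1/K)$ in both cases. Assembling, the prefactor of order $L$ multiplies $\Delta/K+\eta(\theta+L^2d\delta^2)=\mathcal{O}(1/K)$, and together with the two standalone $\mathcal{O}(1/K)$ terms this yields $\mathbb{E}\|\nabla f(\mathbf{x}^\zeta)\|^2\le\mathcal{O}(1/K)$, establishing the claimed rate.

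For the query complexity I would first invert the rate: to reach $\mathbb{E}\|\nabla f(\mathbf{x}^\zeta)\|^2\le\epsilon$ it suffices to take $K=\mathcal{O}(1/\epsilon)$. I would then tally queries by loop. Each outer-loop (restart) iteration computes one coordinate-wise full-batch estimator $\hat\nabla_{\text{coord}}f_{\mathcal{S}_1}$ at a cost of $2d|\mathcal{S}_1|$ queries, and there are $K/q$ such iterations; each inner-loop iteration forms the recursive update~\eqref{codnew} from two coordinate-wise mini-batch estimators, at a cost of $\mathcal{O}(d|\mathcal{S}_2|)$ queries over at most $K$ iterations. Using $|\mathcal{S}_2|=q=\lceil|\mathcal{S}_1|^{1/2}\rceil$, both contributions balance to $\mathcal{O}(dK|\mathcal{S}_1|^{1/2})$. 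Finally I would substitute $|\mathcal{S}_1|=\min\{n,K\}$ with $K=\mathcal{O}(1/\epsilon)$: the finite-sum case $|\mathcal{S}_1|=n$ gives $\mathcal{O}(dn^{1/2}\epsilon^{-1})$ and the online case $|\mathcal{S}_1|=K$ gives $\mathcal{O}(dK^{3/2})=\mathcal{O}(d\epsilon^{-3/2})$, whose minimum is exactly the stated $\mathcal{O}\!\big(\min\{n^{1/2}d\epsilon^{-1},d\epsilon^{-3/2}\}\big)$.

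The routine part is the arithmetic; the one place demanding care is the query accounting, namely recognizing that the balance $|\mathcal{S}_2|=q=|\mathcal{S}_1|^{1/2}$ is precisely what equalizes the per-epoch outer cost $2d|\mathcal{S}_1|$ against the $q$ inner iterations each costing $\mathcal{O}(d|\mathcal{S}_1|^{1/2})$, and that the $\min$ in the final bound is an artifact of the $|\mathcal{S}_1|=\min\{n,K\}$ choice rather than of the analysis itself. I expect no genuine obstacle, since Theorem~\ref{mainTT} already does the heavy lifting and this corollary is a verification of parameter scaling.
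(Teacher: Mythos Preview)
Your proposal is correct and follows essentially the same approach as the paper: substitute the parameter choices into~\eqref{uioj}, exploit the cancellation $q/|\mathcal{S}_2|=1$ to make $\phi$ a positive constant of order $1/L$, split on whether $|\mathcal{S}_1|=n$ or $|\mathcal{S}_1|=K$ to handle the indicator in $\theta$, and then count queries separately for the outer and inner loops. The paper carries explicit constants throughout (obtaining $\phi=3\eta/16$ and a final bound like $(76\Delta L+88)/K$) and at the end of each case verifies the inequality between $n^{1/2}d\epsilon^{-1}$ and $d\epsilon^{-3/2}$ to justify writing the result as a minimum, but these are exactly the computations you outlined.
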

As shown in Corollary~\ref{comain}, mini-batch ZO-SPIDER-Coord achieves the convergence rate of $\mathcal{O}(1/K)$, and  improves the function query complexity of ZO-SVRG-Coord-Rand by a factor of $\min\{\epsilon^{-1/6}, n^{1/6}\}$.   
The following corollary  analyzes  single-sample  ZO-SPIDER-Coord, which achieves the same query complexity as mini-batch ZO-SPIDER-Coord.  
\begin{corollary}[Single-sample, $|\mathcal{S}_2|=1$] \label{nonbatch}
	Under the setting of Theorem~\ref{mainTT}, we choose $ 
	\eta=\frac{1}{4L\sqrt{q}},\delta=\frac{1}{\sqrt{qKd}L},q=|\mathcal{S}_1|=\min\left\{n,\big \lceil K^{2/3}\big \rceil\right\}$. 
	Then, Algorithm~\ref{ours:3} satisfies 
	$\mathbb{E}\|\nabla f(\mathbf{x}^\zeta)\|^2\leq \mathcal{O}(\sqrt{|\mathcal{S}_1|}/K)$.
	
	To achieve an $\epsilon$-stationary point, i.e., $\mathbb{E}\|\nabla f(\mathbf{x}^\zeta)\|^2\leq \epsilon<1$, the number of function queries required by Algorithm~\ref{ours:3}  is $\mathcal{O}\left(\min\left\{n^{1/2}d\epsilon^{-1}, d\epsilon^{-3/2} \right\}\right)$. 
	
	
\end{corollary}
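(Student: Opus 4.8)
The plan is to prove Corollary~\ref{nonbatch} as a direct specialization of Theorem~\ref{mainTT} to the single-sample regime $|\mathcal{S}_2|=1$ with the stated schedule $\eta=1/(4L\sqrt{q})$, $\delta=1/(\sqrt{qKd}\,L)$, and $q=|\mathcal{S}_1|=\min\{n,\lceil K^{2/3}\rceil\}$. First I would control the denominator $\phi$. Substituting $|\mathcal{S}_2|=1$ and $\eta=1/(4L\sqrt{q})$ into $\phi=\eta/2-\eta^2L/2-3L^2\eta^3q$ and bounding the two smaller-order terms by $1/(32Lq)$ and $3/(64L\sqrt{q})$ shows $\phi\geq \frac{3}{64L\sqrt{q}}$ for every $q\geq 1$, hence $\phi=\Theta(1/(L\sqrt{q}))$ and $1/\phi=\mathcal{O}(L\sqrt{q})$. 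This is precisely where the coordinate-wise estimator pays off: the stepsize need only shrink as $1/\sqrt{q}$ rather than as $\mathcal{O}(\sqrt{\epsilon})$.

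Next I would evaluate the smoothing-induced terms. With $\delta^2=1/(qKdL^2)$ one gets $L^2d\delta^2=1/(qK)$, so $3L^2d\delta^2=3/(qK)$; substituting into $\theta$ from \eqref{pallo} with $|\mathcal{S}_2|=1$ and $|\mathcal{S}_1|=q$ yields $\theta=\frac{3}{K}+\frac{3I(|\mathcal{S}_1|<n)}{q}\big(\frac{2}{qK}+\sigma^2\big)$, whose dominant pieces are $3/K$ and, only in the online case $|\mathcal{S}_1|<n$, the variance term $3\sigma^2/q$. I would then feed these into the bracketed coefficient of \eqref{uioj}: since $9q\eta^2L^2=9/16$ and $|\mathcal{S}_2|=1$, that coefficient equals $(9/16+3)/\phi=\mathcal{O}(L\sqrt{q})$, while the inner factor $\Delta/K+\eta(\theta+L^2d\delta^2)$ has dominant contributions $\Delta/K$ and $\frac{\eta\sigma^2 I}{q}=\mathcal{O}(\sigma^2/(Lq^{3/2}))$.

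Multiplying the $\mathcal{O}(L\sqrt{q})$ coefficient against these pieces gives $\mathcal{O}(L\Delta\sqrt{q}/K)$ and $\mathcal{O}(\sigma^2/q)$, while the standalone terms $3L^2d\delta^2+3\theta$ contribute $\mathcal{O}(1/K)$ and $\mathcal{O}(\sigma^2/q)$. Recalling $q=|\mathcal{S}_1|$: in the online regime $|\mathcal{S}_1|=K^{2/3}$ makes $\sigma^2/q=\sigma^2/K^{2/3}=\mathcal{O}(\sqrt{|\mathcal{S}_1|}/K)$, whereas in the finite-sum regime the indicator $I(|\mathcal{S}_1|<n)$ annihilates the $\sigma^2$ term entirely; every surviving contribution therefore collapses into $\mathcal{O}(\sqrt{|\mathcal{S}_1|}/K)$, which is exactly the claimed rate.

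Finally I would translate the rate into query complexity. Each outer iteration costs $2d|\mathcal{S}_1|$ queries and there are $K/q$ of them; each of the remaining $\approx K$ inner iterations costs $4d|\mathcal{S}_2|=4d$ queries, so with $q=|\mathcal{S}_1|$ and $|\mathcal{S}_2|=1$ the total is $\mathcal{O}(dK)$. Inverting $\mathbb{E}\|\nabla f(\mathbf{x}^\zeta)\|^2\leq\mathcal{O}(\sqrt{|\mathcal{S}_1|}/K)\leq\epsilon$ requires $K=\mathcal{O}(\epsilon^{-3/2})$ in the online regime (where $\sqrt{|\mathcal{S}_1|}/K=K^{-2/3}$) and $K=\mathcal{O}(\sqrt{n}\,\epsilon^{-1})$ in the finite-sum regime, giving query complexities $\mathcal{O}(d\epsilon^{-3/2})$ and $\mathcal{O}(n^{1/2}d\epsilon^{-1})$ respectively; taking the better of the two regimes yields $\mathcal{O}(\min\{n^{1/2}d\epsilon^{-1},d\epsilon^{-3/2}\})$. \textbf{The main obstacle} is the self-referential definition $q=|\mathcal{S}_1|=\min\{n,\lceil K^{2/3}\rceil\}$: because the target rate, the variance term $\sigma^2/q$, and the stepsize all depend on $K$ through $|\mathcal{S}_1|$, the inversion must be carried out separately in the two regimes, and one must verify that the $\sigma^2/q$ term is genuinely $\mathcal{O}(\sqrt{|\mathcal{S}_1|}/K)$ rather than dominating. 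This delicate bookkeeping, together with the lower bound on $\phi$, is what makes the single-sample schedule achieve the same complexity as the mini-batch version.
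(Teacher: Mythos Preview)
Your proposal is correct and follows essentially the same route as the paper's proof: both split into the finite-sum case $|\mathcal{S}_1|=n$ and the online case $|\mathcal{S}_1|=\lceil K^{2/3}\rceil$, derive the identical lower bound $\phi\geq 3\eta/16=3/(64L\sqrt{q})$, evaluate $\theta$ and the bracketed coefficient in \eqref{uioj} under the chosen schedule, and then invert the resulting rate $\mathcal{O}(\sqrt{|\mathcal{S}_1|}/K)$ separately in each regime to obtain the query complexity $\mathcal{O}(dK)$. The only difference is presentational: the paper carries explicit numerical constants through each case, whereas you work in big-$\mathcal{O}$ throughout and handle both regimes in a more unified fashion.
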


\subsection{ZO-SPIDER-Coord  under PL  without Restart}

Many nonconvex machine learning and deep learning problems satisfy the following Polyak-{\L}ojasiewicz  (PL) (i.e., gradient dominance) condition in  local regions  around  global minimizers~\citep{zhou2016geometrical,zhong2017recovery}.
\begin{Definition}[\cite{polyak1963gradient}]\label{d1}
	Let $\mathbf{x}^*=\arg\min_{\mathbf{x}\in\mathbb{R}^d}f(\mathbf{x})$. Then, the function $f$ is said to be $\gamma$-gradient dominated if for any $\mathbf{x}\in\mathbb{R}^d$, 
	$f(\mathbf{x})-f(\mathbf{x}^*)\leq \gamma \|\nabla f(\mathbf{x})\|^2.$
\end{Definition}
In this subsection, we explore whether ZO-SPIDER-Coord algorithm achieves a faster convergence rate when it enters the local areas where the loss function satisfies the PL condition. The following theorem provides an affirmative answer. For the simplicity of presentation, we choose $|\mathcal{S}_1|=n$. 
\vspace{0.05cm}
\begin{Theorem}\label{th:gd}
	Under the  parameters selected in  Corollary~\ref{comain}, we take $|\mathcal{S}_2|=\lceil \gamma LB_\gamma \rceil$ and  $\delta=\mathcal{O}(\sqrt{\epsilon}/(L\sqrt{\gamma d}))$. We further  assume that $\gamma>q/(b_\gamma L)$, where $b_\gamma$ and $B_\gamma$ are two positive constants satisfying  
	$\frac{1}{8}\left(1-\frac{b_\gamma}{16q}\right)^q -\frac{3 }{B_\gamma}>0$.
	Then, ZO-SPIDER-Coord satisfies 
	\begin{small}
		\begin{align*}
		\mathbb{E}(f(\mathbf{x}^{K})-f(\mathbf{x}^*))  \leq&  \left(1-\frac{1}{16L\gamma}\right)^K(f(\mathbf{x}^{0})-f(\mathbf{x}^*))+\epsilon,
		\end{align*}
	\end{small}
	\hspace{-0.15cm}and requires $  \mathcal{O}\left( d(\gamma n^{1/2}+\gamma^2)\log \left( \frac{1}{\epsilon}\right)\right)$ function queries. 
\end{Theorem}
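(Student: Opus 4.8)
The goal is to upgrade the sublinear guarantee behind Theorem~\ref{mainTT} into a linear rate by feeding the gradient-dominance condition of Definition~\ref{d1} into a one-step descent inequality, while controlling the SPIDER-type variance that accumulates \emph{within} each epoch and carries over \emph{across} epochs (there is no restart). I would not invoke the final bound \eqref{uioj} directly; instead I would reuse the two ingredients behind it: (i) the smoothness descent for the update $\mathbf{x}^{k+1}=\mathbf{x}^k-\eta\mathbf{v}^k$, and (ii) the recursive variance bound for the coordinate-wise SPIDER estimator \eqref{codnew}. Since the theorem fixes $|\mathcal{S}_1|=n$, we have $\mathbf{v}^{qk_0}=\hat\nabla_{\text{coord}}f(\mathbf{x}^{qk_0})$ exactly, so the sampling variance seeded at each epoch start is zero; this is precisely what lets the recursion be started cleanly without a restart.

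First I would write the smoothness descent and split $\mathbf{v}^k=\nabla f(\mathbf{x}^k)+(\hat\nabla_{\text{coord}}f(\mathbf{x}^k)-\nabla f(\mathbf{x}^k))+(\mathbf{v}^k-\hat\nabla_{\text{coord}}f(\mathbf{x}^k))$. The middle (bias) term is $\mathcal{O}(L^2d\delta^2)$ by the standard coordinate-estimator accuracy bound, and the choice $\delta=\mathcal{O}(\sqrt{\epsilon}/(L\sqrt{\gamma d}))$ is calibrated so that all $\delta$-dependent terms contribute at most the additive $\epsilon$ to the final bound. The last (variance) term obeys a recursion of the form $\mathbb{E}\|\mathbf{v}^k-\hat\nabla_{\text{coord}}f(\mathbf{x}^k)\|^2\le\frac{cL^2}{|\mathcal{S}_2|}\sum_{j=qk_0}^{k-1}\mathbb{E}\|\mathbf{x}^{j+1}-\mathbf{x}^j\|^2$ (the accumulated analogue of Lemma~\ref{le:newnew}), where $\|\mathbf{x}^{j+1}-\mathbf{x}^j\|^2=\eta^2\|\mathbf{v}^j\|^2$. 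Combining these with $\eta=\tfrac{1}{4L}$ and applying the PL inequality $f(\mathbf{x}^k)-f(\mathbf{x}^*)\le\gamma\|\nabla f(\mathbf{x}^k)\|^2$ should produce, up to the $\delta$-level noise, a relation $\mathbb{E}[D_{k+1}]\le(1-\tfrac{1}{16L\gamma})\mathbb{E}[D_k]+(\text{variance feedback})$, where $D_k:=f(\mathbf{x}^k)-f(\mathbf{x}^*)$.

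The crux is the variance feedback: because the variance at step $k$ depends on the whole trajectory since the last full-gradient computation, the naive one-step bound does not close. I would instead telescope over a full epoch of length $q$ and prove the contraction by induction on the inner index. The mini-batch size $|\mathcal{S}_2|=\lceil\gamma LB_\gamma\rceil$ makes the variance coefficient $cL^2/|\mathcal{S}_2|=\mathcal{O}(L/(\gamma B_\gamma))$ small, while the descent furnishes a per-step analysis factor of order $(1-\tfrac{b_\gamma}{16q})$. The hypothesis $\gamma>q/(b_\gamma L)$ is equivalent to $\tfrac{b_\gamma}{16q}>\tfrac{1}{16L\gamma}$, i.e.\ the analysis-level per-step descent is at least the advertised rate, so that the compounded epoch descent can be read off as the target factor at every step; and the condition $\tfrac{1}{8}(1-\tfrac{b_\gamma}{16q})^q-\tfrac{3}{B_\gamma}>0$ is exactly the statement that the compounded descent $(1-\tfrac{b_\gamma}{16q})^q$ over the epoch dominates the worst-case variance inflation $3/B_\gamma$ (with $B_\gamma$ controlling the $1/B_\gamma$-scale damping through $|\mathcal{S}_2|$), so the induction closes and each step retains the factor $(1-\tfrac{1}{16L\gamma})$. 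Chaining across all $K$ steps—legitimate without restart precisely because the epoch-start variance vanishes under $|\mathcal{S}_1|=n$—gives $\mathbb{E}[D_K]\le(1-\tfrac{1}{16L\gamma})^K D_0+\epsilon$.

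Finally I would count queries. Reaching $\mathbb{E}[D_K]\le\epsilon$ needs $K=\mathcal{O}(L\gamma\log(1/\epsilon))$ iterations. Each epoch spends $\mathcal{O}(dn)$ queries on the coordinate full gradient (over $|\mathcal{S}_1|=n$) and $\mathcal{O}(d|\mathcal{S}_2|)$ per inner step; with $q=\lceil n^{1/2}\rceil$ there are $K/q$ epochs, so the outer cost is $\mathcal{O}((K/q)dn)=\mathcal{O}(d\gamma n^{1/2}\log(1/\epsilon))$ and the inner cost is $\mathcal{O}(Kd|\mathcal{S}_2|)=\mathcal{O}(d\gamma^2\log(1/\epsilon))$, summing to $\mathcal{O}(d(\gamma n^{1/2}+\gamma^2)\log(1/\epsilon))$. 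The main obstacle is the third step: making the epoch-level induction close despite the across-epoch coupling that is inherent to the recursive SPIDER estimator—this is where the delicate balance encoded by $b_\gamma$ and $B_\gamma$, rather than a restart, does all the work.
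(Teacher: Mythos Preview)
Your plan is essentially the paper's approach, and your reading of the roles of $b_\gamma$, $B_\gamma$, and the query count is accurate. One technical device you leave implicit deserves to be made explicit, because without it your ``induction on the inner index'' will not close: the one-step descent must retain \emph{both} a $-\tfrac{\eta}{4}\|\nabla f(\mathbf{x}^m)\|^2$ term (to feed PL) \emph{and} a $-(\tfrac{\eta}{4}-\tfrac{L\eta^2}{2})\|\mathbf{v}^m\|^2$ term (to absorb the accumulated variance). This comes from the polarization split $\langle\nabla f(\mathbf{x}^m),\mathbf{v}^m\rangle=\tfrac{1}{2}\|\nabla f(\mathbf{x}^m)\|^2+\tfrac{1}{2}\|\mathbf{v}^m\|^2-\tfrac{1}{2}\|\nabla f(\mathbf{x}^m)-\mathbf{v}^m\|^2$, not the three-term decomposition of $\mathbf{v}^k$ you wrote. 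The paper then telescopes the resulting per-step inequality over one epoch with geometric weights $\alpha^{q(k_0+1)-1-m}$, where $\alpha=1-\tfrac{1}{16L\gamma}$: after swapping the order of summation, the variance double-sum $\sum_m\sum_{t\le m}\|\mathbf{v}^t\|^2$ picks up total weight at most $\tfrac{1}{1-\alpha}=\tfrac{4\gamma}{\eta}$, while each negative $\|\mathbf{v}^m\|^2$ retains weight at least $\alpha^q\ge(1-\tfrac{b_\gamma}{16q})^q$ (this last inequality is where $\gamma>q/(b_\gamma L)$ enters). The hypothesis $\tfrac{1}{8}(1-\tfrac{b_\gamma}{16q})^q>\tfrac{3}{B_\gamma}$ is then precisely the statement that the net coefficient on $\sum_m\|\mathbf{v}^m\|^2$ is nonpositive, so those terms drop and the clean epoch contraction $\mathbb{E}[D_{q(k_0+1)}]\le\alpha^q\,\mathbb{E}[D_{qk_0}]+(\text{$\delta$-noise})$ follows. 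The point is that $\|\mathbf{v}^j\|^2$ cannot be controlled by $D_j$ alone, so the $-\|\mathbf{v}^m\|^2$ ``bank'' in the descent is what does the cancellation, not a direct bound inside the $D_k$ recursion.
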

The assumption that $\gamma>q/(b_\gamma L)=\mathcal{O}(n^{1/2}/L)$ has been widely adopted in optimization under the PL condition, e.g., in \citealt{reddi2016stochastic,li2018simple}. 
In contrast to the restart technique commonly used in the first-order algorithms, e.g., GD-SVRG~\cite{reddi2016stochastic}, for proving the convergence under the PL condition,  
our proof of the  linear convergence rate for ZO-SPIDER-Coord does not  require restart and algorithmic modification. 
This implies that ZO-SPIDER-Coord can be initialized in a general nonconvex landscape and then automatically achieves  a faster convergence rate as  it enters a PL landscape. In addition, unlike  SPIDER~\cite{fang2018spider} and SpiderBoost~\citep{wang2018spiderboost}, our 
proof of Theorem~\ref{th:gd} does not need to upper-bound $\sum_{k=1}^K\mathbb{E}\|\mathbf{v}^k\|^2$, which is much simpler and 
can also be applied to  both SPIDER and SpiderBoost for  first-order nonconvex optimization under the PL condition. 

\vspace{-0.15cm}
\section{Experiments}
\vspace{-0.1cm}
In this section, we  compare the empirical performance of our proposed  ZO-SVRG-Coord-Rand,  ZO-SPIDER-Coord and ZO-SVRG-Coord (for which we provide improved analysis that allows a larger stepsize)  with  ZO-SGD~\cite{ghadimi2013stochastic}, ZO-SVRG-Ave (p=10)\footnote{ZO-SVRG-Ave (p=10) has the best performance among the three methods proposed by~\citealt{liu2018zeroth}.}~\cite{liu2018zeroth} and SPIDER-SZO~\cite{fang2018spider}. We conduct two experiments, i.e.,  generation of black-box adversarial  examples and nonconvex logistic regression. The parameter settings for these algorithms are further specified in the supplementary materials due to the space limitations. 

\begin{figure*}[t]
	\centering     
	\subfigure[Loss vs iterations: $n=10$] {\label{fig:a}\includegraphics[width=42mm]{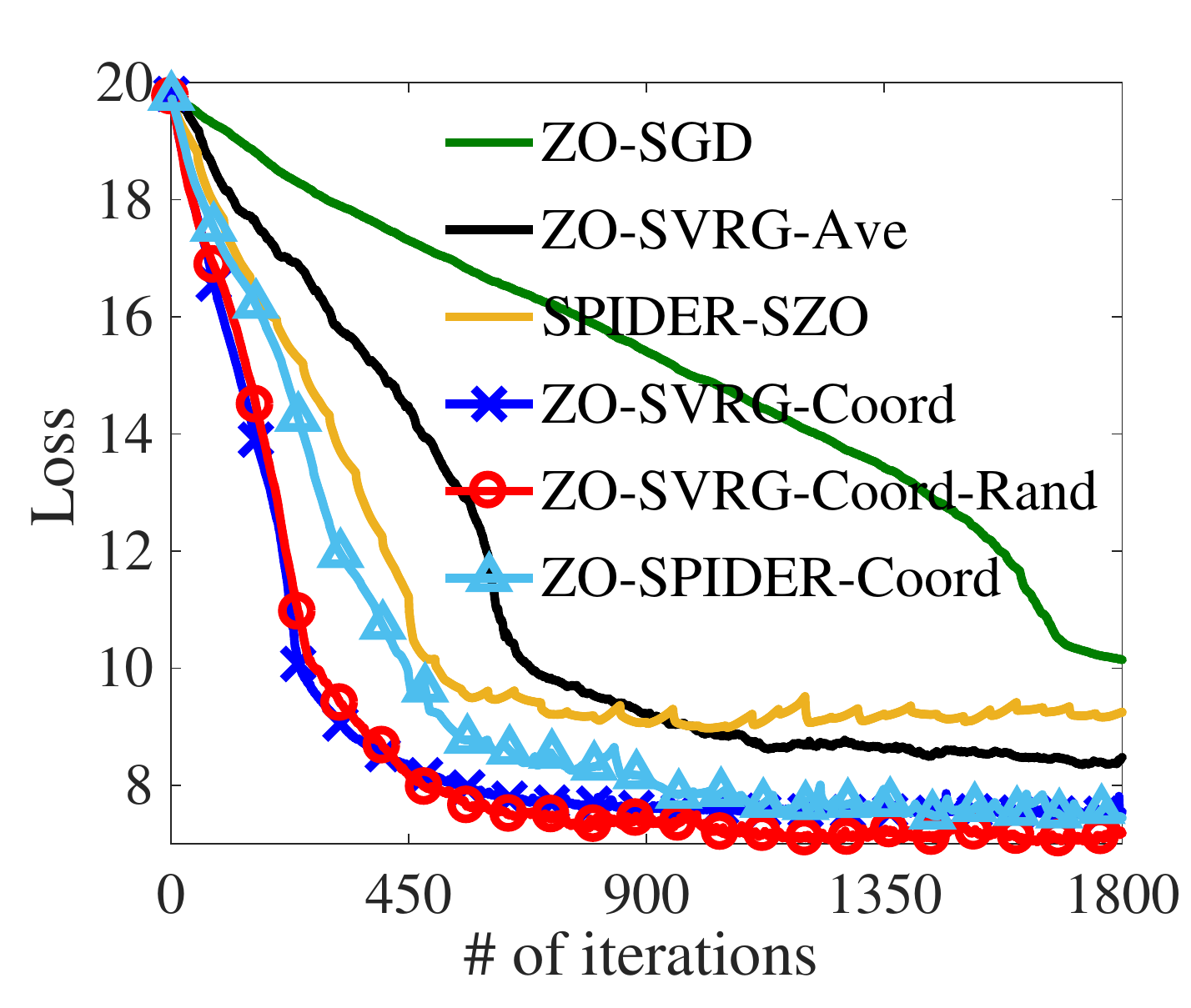}}
	\subfigure[Loss vs queries: $n=10$]{\label{fig:b}\includegraphics[width=42mm]{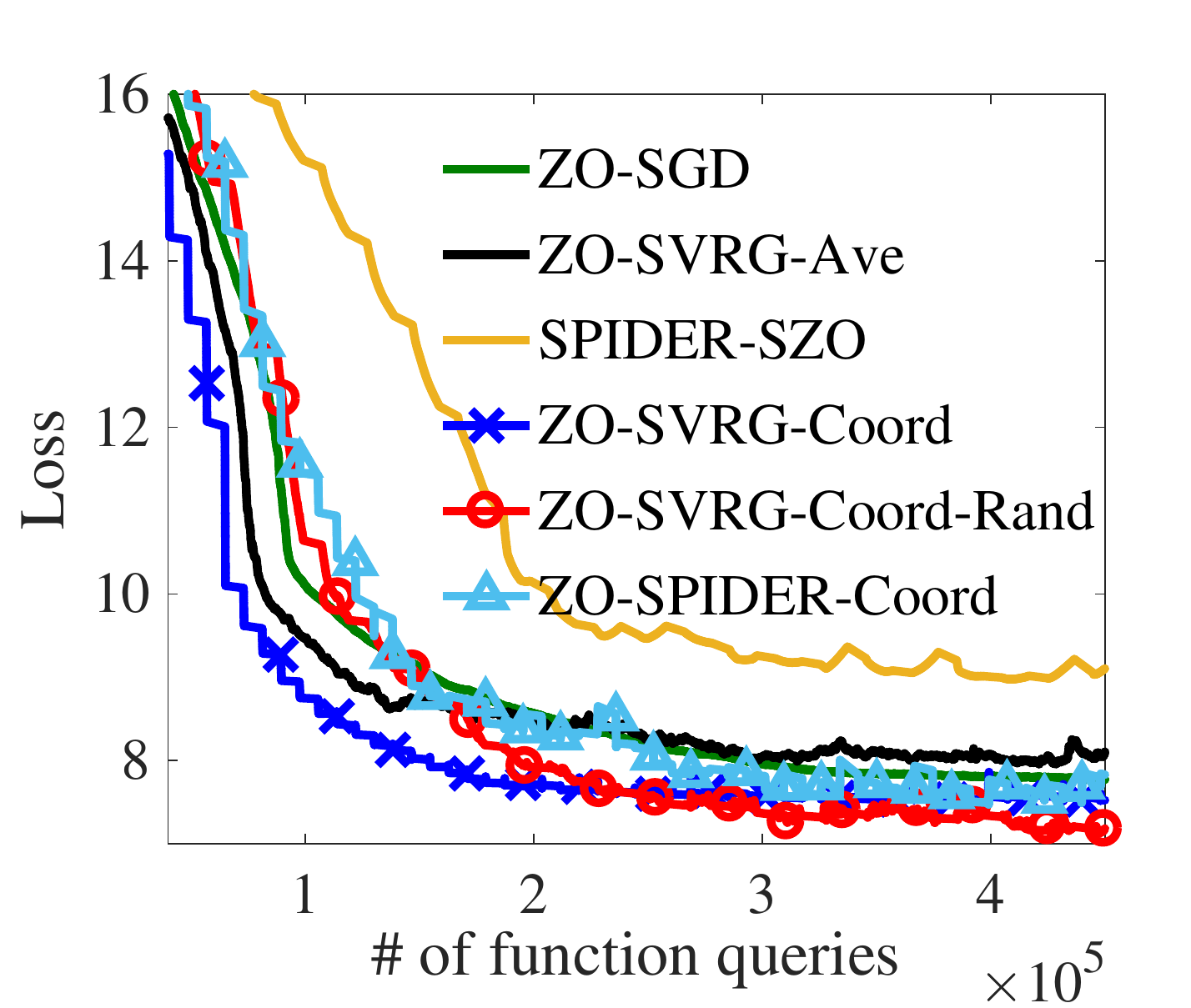}}
	\subfigure[Loss vs iterations: $n=100$]{\label{fig:c}\includegraphics[width=42mm]{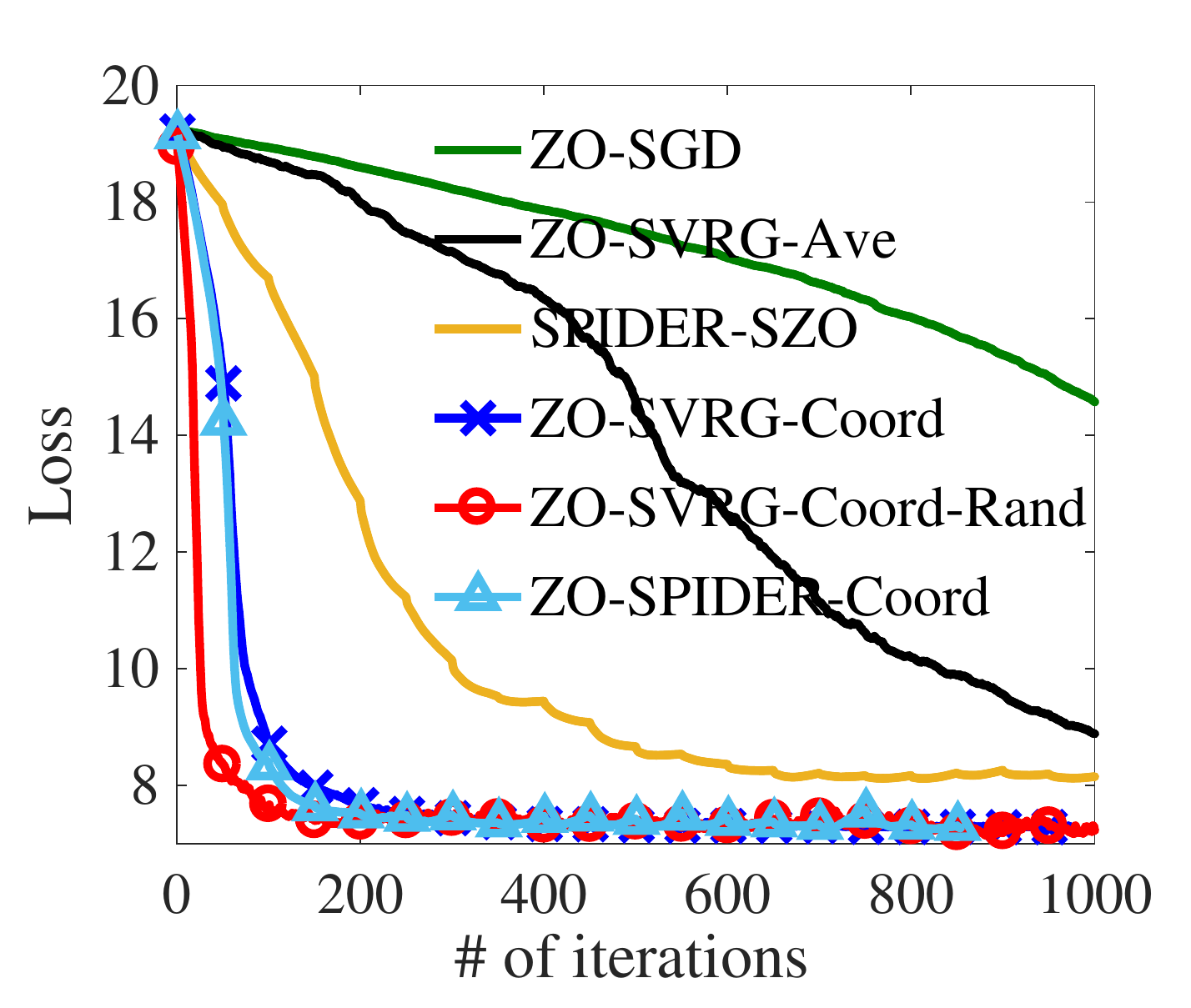}}
	\subfigure[Loss vs  queries: $n=100$]{\label{fig:d}\includegraphics[width=42mm]{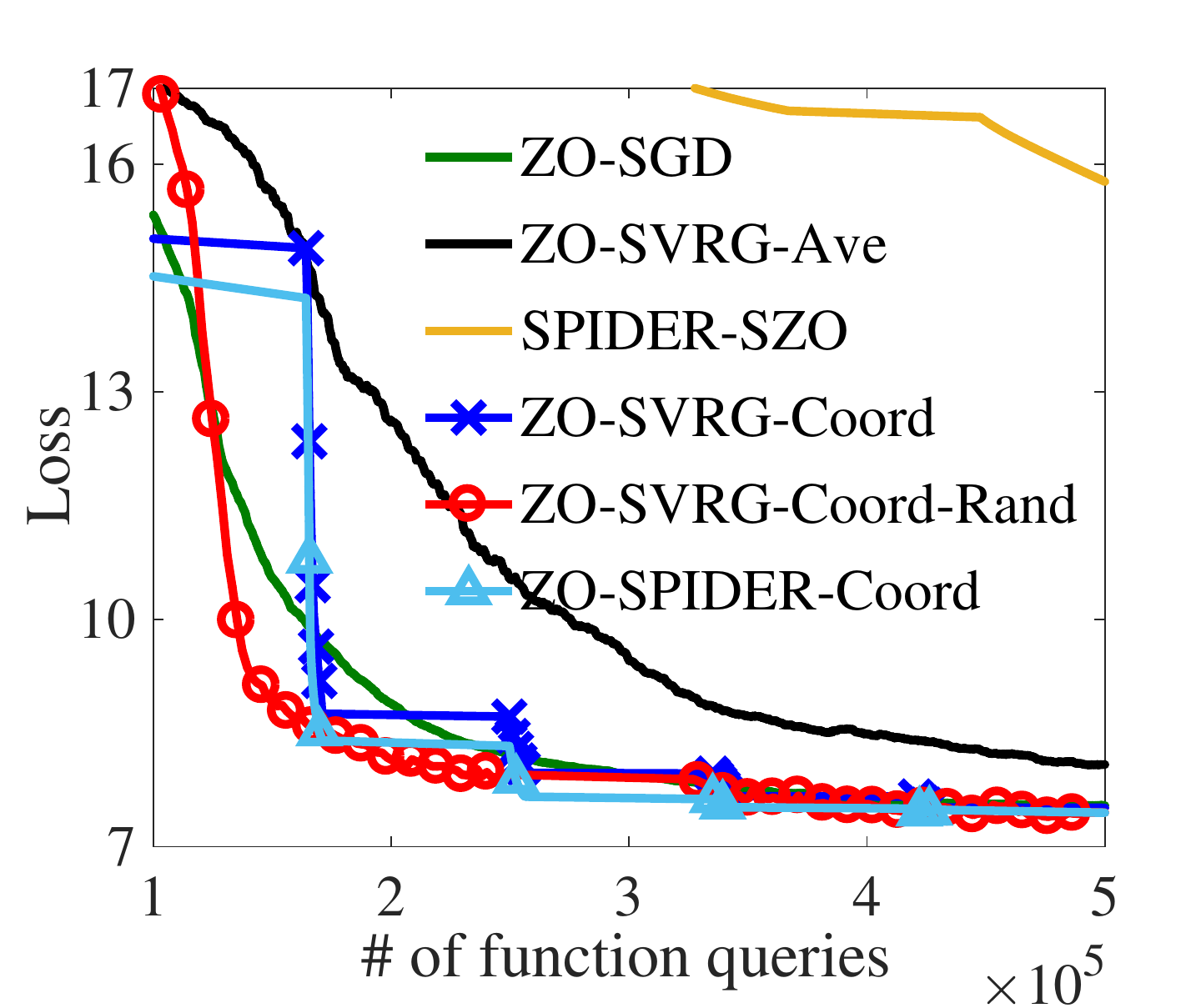}}
	\vspace{-0.1cm}
	\caption{Comparison of different zeroth-order algorithms for generating black-box adversarial examples for digit ``1'' class }\label{figure:result}
	\vspace{-0.15cm}
\end{figure*}

\begin{figure*}[t]
	\centering     
	\subfigure[Loss vs iterations: german ]{\label{fig2:a}\includegraphics[width=42mm]{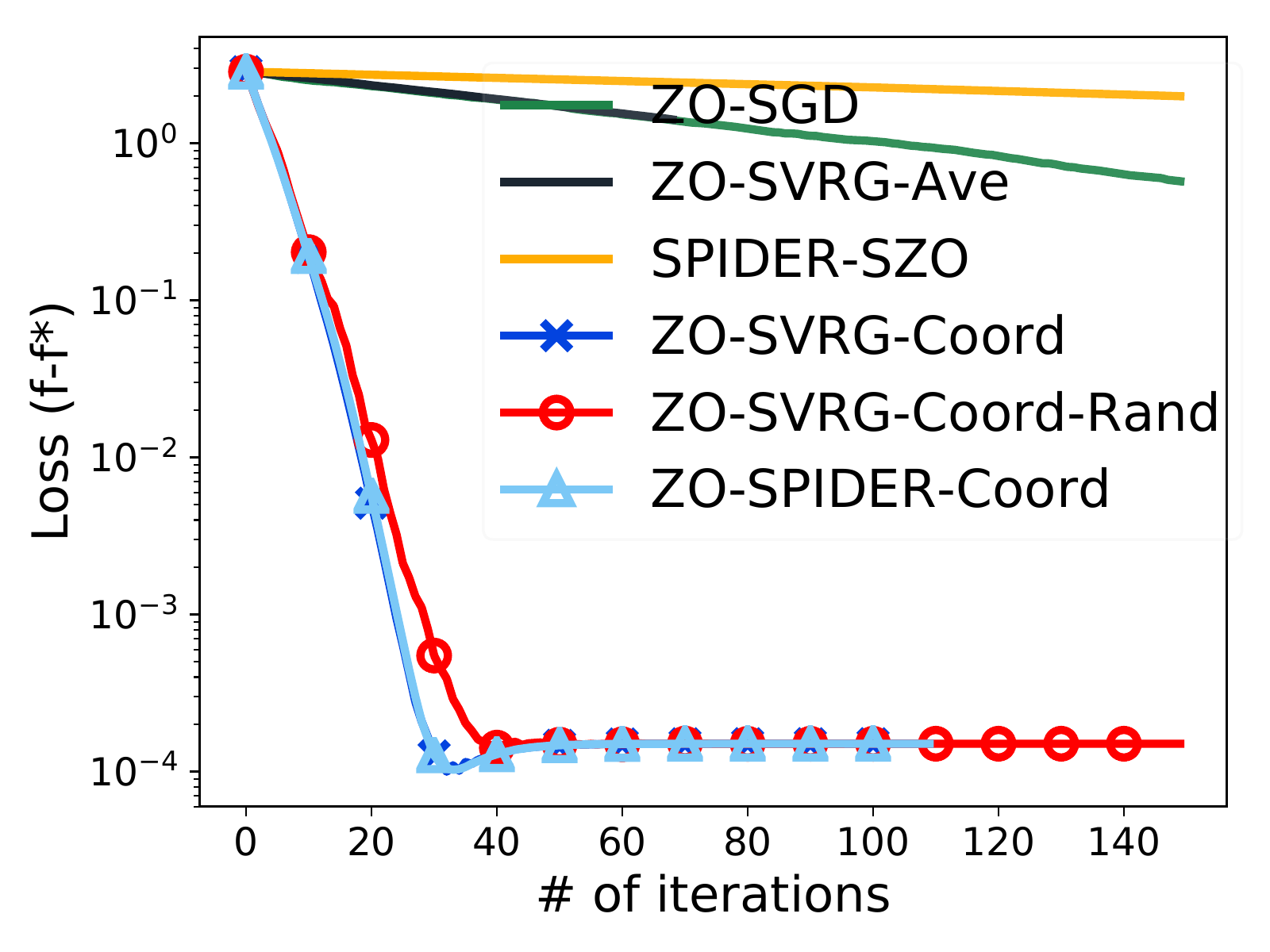}} 
	\subfigure[Loss vs queries: german ]{\label{fig2:b}\includegraphics[width=42mm]{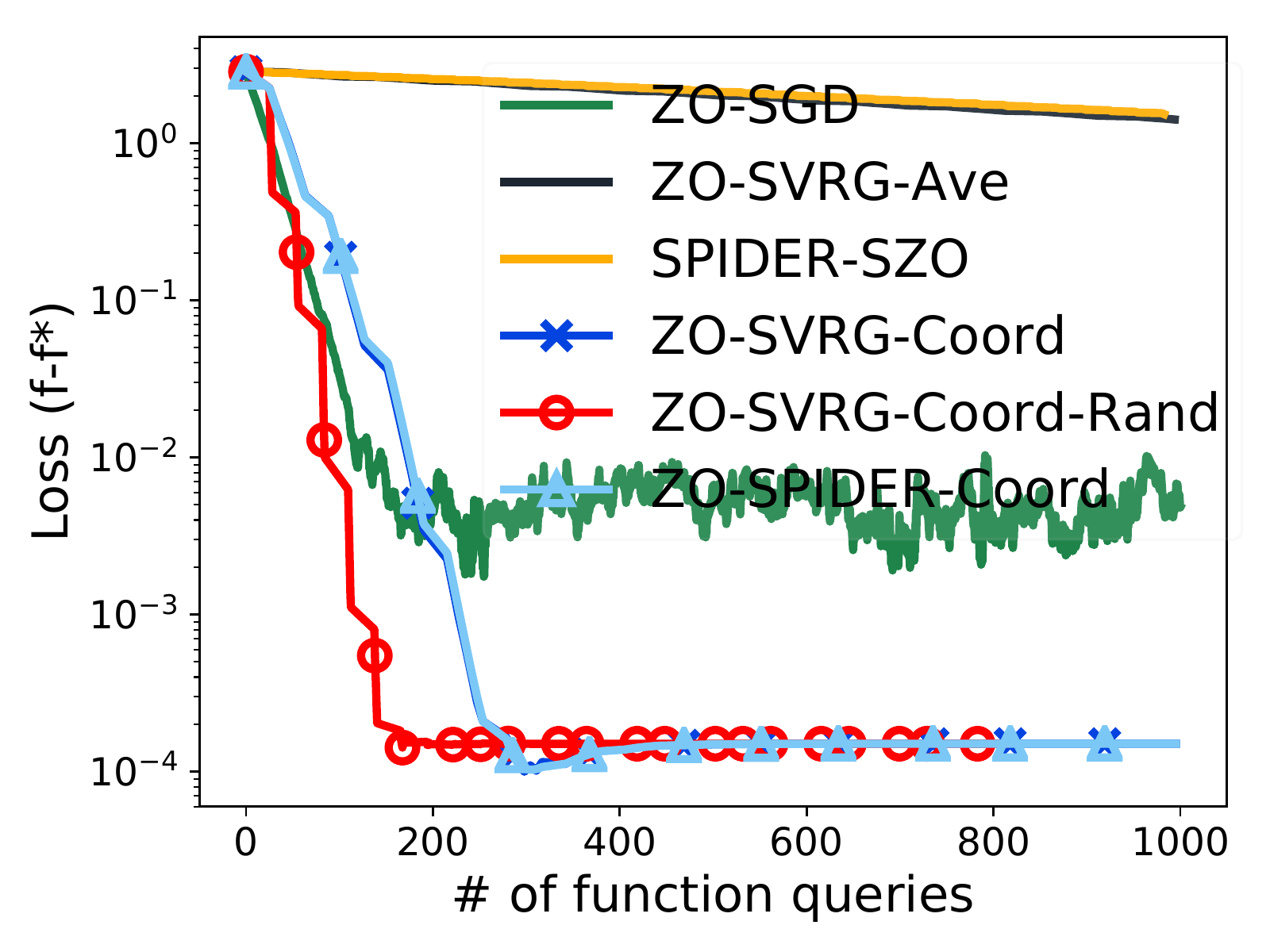}} 
	\subfigure[Loss vs iterations: ijcnn1 ]{\label{fig2:c}\includegraphics[width=42mm]{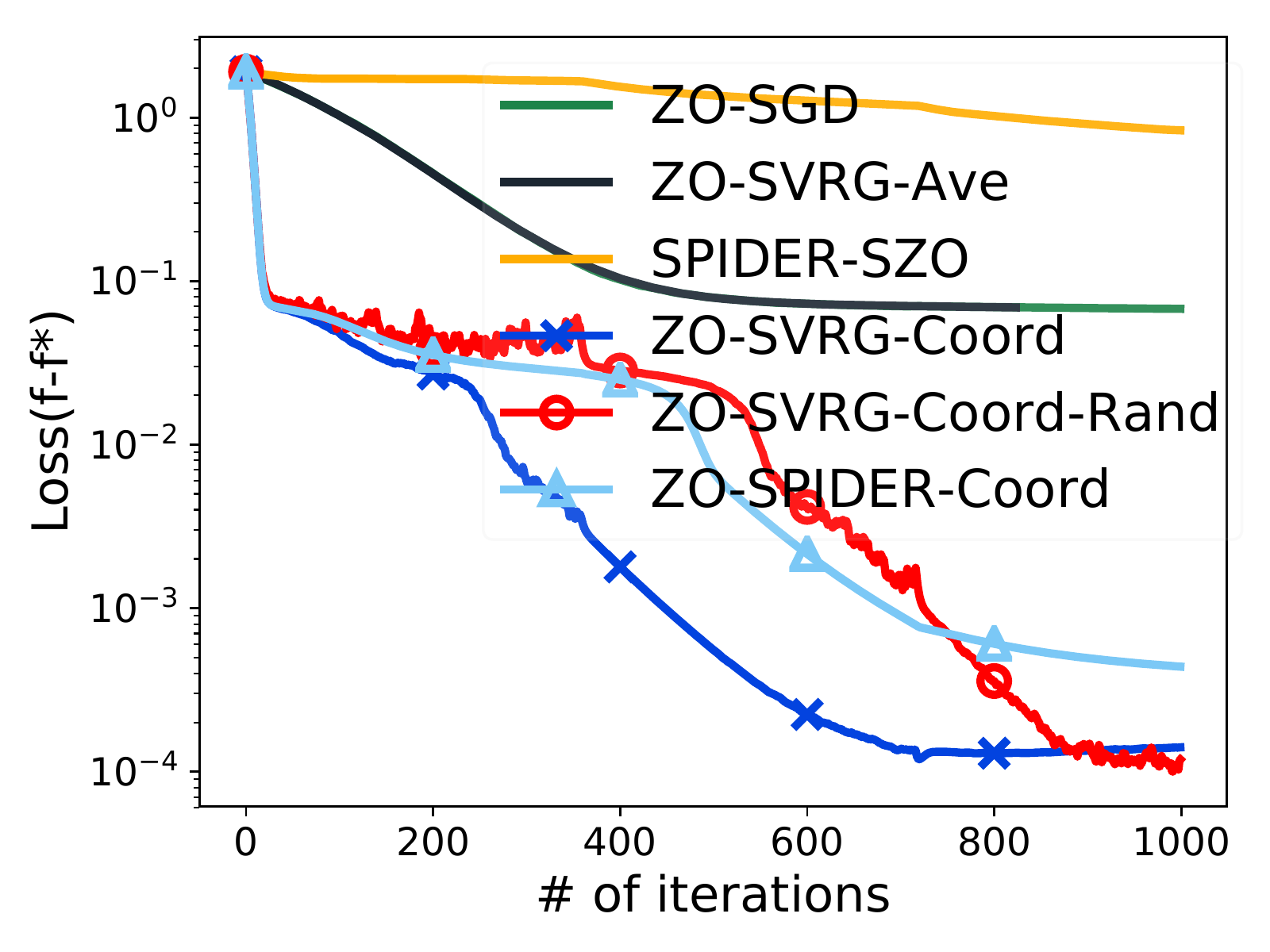}} 
	\subfigure[Loss vs  queries: ijcnn1]{\label{fig2:d}\includegraphics[width=42mm]{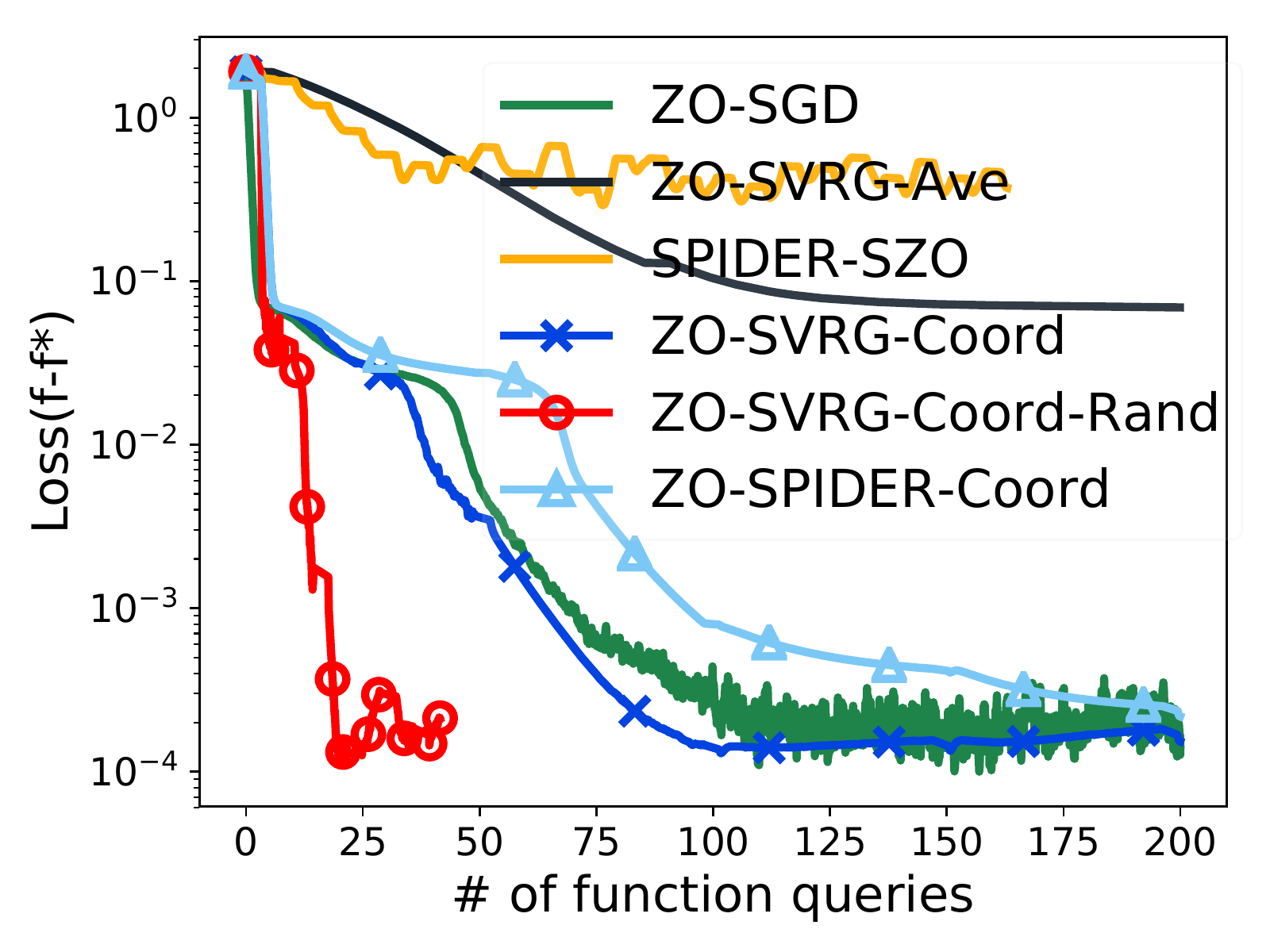}}
	\vspace{-0.15cm}
	\caption{Comparison of different zeroth-order algorithms for logistic regression problem with a nonconvex regularizer }\label{figure:result1}
	\vspace{-0.15cm}
\end{figure*}

\vspace{-0.1cm}
\subsection{Generation of Black-Box Adversarial  Examples }
In image classification, adversary attack crafts input images with imperceptive perturbation to mislead a  trained classifier. The resulting perturbed images are called adversarial examples, which are commonly  used to understand the robustness of learning models. In the black-box setting, the attacker can  access only  
the model evaluations, and hence the problem falls into the framework of zeroth-order optimization. 

We use a well-trained DNN\footnote{\fontsize{8}{8} \url {https://github.com/carlini/nn_robust_attacks}} $F(\cdot)=[F_1(\cdot),...,F_K(\cdot)]$ for the MNIST handwritten digit classification as the target black-box model, where $F_k(\cdot)$ returns the prediction score of the $k^{th}$ class. We attack a batch of $n$ correctly-classified images $\{\mathbf{a}_i\}_{i=1}^n$ from the same class, and adopt the same black-box attacking loss as in~\citealt{chen2017zoo,liu2018zeroth}. The  $i^{th}$  individual loss function $f_i(\mathbf{x})$ is given by 
\begin{align*}
f_i(\mathbf{x}) = &\max\big\{\log F_{y_i}\left(\mathbf{a}^{adv}_i\right)-\max_{t\neq y_i}\log F_t\left( \mathbf{a}_i^{adv}\right), 0\big\}
\\&+ \lambda\|\mathbf{a}^{adv}_i-\mathbf{a}_i\|^2,
\end{align*}
where  $\mathbf{a}^{adv}_i=0.5\tanh\left(\tanh^{-1}\left(2\mathbf{a}_i\right) +\mathbf{x}\right)$ is the adversarial example of the $i^{th}$ natural image $\mathbf{a}_i$, and $y_i$ is the true label of image $\mathbf{a}_i$. In our experiment, we set the regularization parameter $\lambda=1$ for  digit ``1''  image class, and set  $\lambda=0.1$ for digit ``4'' class. 

Fig.~\ref{figure:result} and Fig.~\ref{figure:result2} (in the supplementary materials) provide comparison of the performance for the algorithms of interest. Two major observations can be made. First, our proposed two algorithms ZO-SVRG-Coord-Rand and ZO-SPIDER-Coord as well as  ZO-SVRG-Coord (with the large stepsize due to our improved analysis) have much better performance both in convergence rate (iteration complexity) and function query complexity than ZO-SGD, ZO-SVRG-Ave and SPIDER-SZO. Among them,  ZO-SVRG-Coord-Rand achieves  the best performance. Second, our ZO-SPIDER-Coord  algorithm converges much faster than SPIDER-SZO in the initial optimization stage, and more importantly, has much lower function query complexity, which is largely due to the $\epsilon$-level stepsize required by SPIDER-SZO.  In addition, we present the generated adversarial examples for attacking digit ``4''  class in Table~\ref{table:digit1} in the supplementary materials,  where our ZO-SVRG-Coord-Rand  achieves the lowest  image distortion.

Interestingly, though SPIDER-based algorithms have been shown to outperfom SVRG-based algorithms in theory, our experiments suggest that SVRG-based algorithms in fact 
achieve comparable and sometimes even better performance 
in practice. The same observations have also been made  in~\citealt{fang2018spider} and~\citealt{nguyen2017sarah,nguyen2017stochastic}.

%

\vspace{-0.1cm}
\subsection{Nonconvex Logistic Regression}
In this subsection, 
we consider the following zeroth-order nonconvex logistic regression problem with two classes
$\min_{\mathbf{w}\in \mathbb{R}^d} \frac{1}{n} \sum_{i=1}^{n} \ell(\mathbf{w}^T \mathbf{x}_i, y_i)+ \alpha \sum_{i=1}^{d} \frac{ w_i^2}{1  + w_i^2 }$, 
where $\mathbf{x}_i\in \mathbb{R}^d$ denote the features, $y_i\in \{\pm 1 \}$ are the classification labels,  $\ell$ is the cross-entropy loss, and we set $\alpha=0.1$. For this problem, we use two datasets from LIBSVM \cite{chang2011libsvm}: the german dataset ($n=1000, d=24$) and the ijcnn1 dataset ($n=49990, d=22$).

As shown in Fig.~\ref{figure:result1}, ZO-SVRG-Coord-Rand, ZO-SVRG-Coord and ZO-SPIDER-Coord  converges much faster than ZO-SGD , ZO-SVRG-Ave and SPIDER-SZO in terms of  number of iterations for both datasets. 
In terms of function query complexity, ZO-SVRG-Coord converges much faster  than ZO-SVRG-Ave for both datasets and slightly faster than ZO-SGD for ijcnn1 dataset, which corroborates our new complexity analysis for ZO-SVRG-Coord. The convergence and complexity performance of ZO-SPIDER-Coord is  similar to ZO-SVRG-Coord. Among these algorithms, ZO-SVRG-Coord-Rand has the best function query complexity for both datasets. 

\vspace{-0.15cm}
\section{Conclusion}
\vspace{-0.1cm}
In this paper, we developed two  novel zeroth-order variance-reduced algorithms named ZO-SVRG-Coord-Rand  and ZO-SPIDER-Coord as well as an improved  analysis on ZO-SVRG-Coord proposed by~\citealt{liu2018zeroth}. We showed that  ZO-SVRG-Coord-Rand and ZO-SVRG-Coord (under our new analysis) outperform ZO-GD, ZO-SGD and all other existing SVRG-based zeroth-order algorithms. Furthermore, compared with SPIDER-SZO~\cite{fang2018spider}, our ZO-SPIDER-Coord allows a much larger constant stepsize and is free from the generation of a large number of Gaussian random variables while maintaining the same function query complexity. Our experiments demonstrate the superior performance of our proposed algorithms. 

\section*{Acknowledgements}
The work was supported in part by U.S. National Science Foundation under the grants CCF-1761506 and CCF-1801855.
	\bibliography{refs}
	\bibliographystyle{icml2019}
	
\newpage
\onecolumn 
\appendix

{\Large{\bf Supplementary Materials}}

\section{Further Specification of Experiments and Additional  Results}

\subsection{Generation of Black-Box Adversarial Examples }
{\bf Parameter selection for algorithms under comparison.} For ZO-SGD and ZO-SVRG-Ave, we adopt the  implementations\footnote{\url{https://github.com/IBM/ZOSVRG-BlackBox-Adv}} from~\citealt{liu2018zeroth}. As recommended by~\citealt{liu2018zeroth}, we set the epoch length $q=10$ for ZO-SVRG-Ave, and select the mini-batch size $|\mathcal{S}_2|$ from $\{5, 10,50\}$ and the stepsize $\eta$ from $\{1,10,20,30,40\}/d$ for both ZO-SGD and ZO-SVRG-Ave, and we present the best performance among these parameters, 
where $d=28\times 28$ is the input dimension. 
For SPIDER-SZO, we set the parameters by 
Theorem 8 in~\citealt{fang2018spider}. Namely, we  choose the epoch length $q$ from $\{30, 50,80\}$, mini-batch size $|\mathcal{S}_2|$ from $\{5, 80, 700\}$, and $\eta$ from $\{0.1,0.01\}/\|\mathbf{v}^k\|$, and we present the best performance among these parameters. The parameters chosen for our ZO-SVRG-Coord-Rand, ZO-SVRG-Coord (based on our new analysis, which allows a larger stepsize with performance guarantee) and ZO-SPIDER-Coord are listed in Table~\ref{niubiss}.  
For all algorithms, we choose $|\mathcal{S}_1|=n$, and set the smoothing parameters $\beta=0.01$ and $\delta=0.001$.

\begin{figure}[h]
	\centering     
	\subfigure[Loss vs iterations: $n=10$] {\label{fig4:a}\includegraphics[width=55mm]{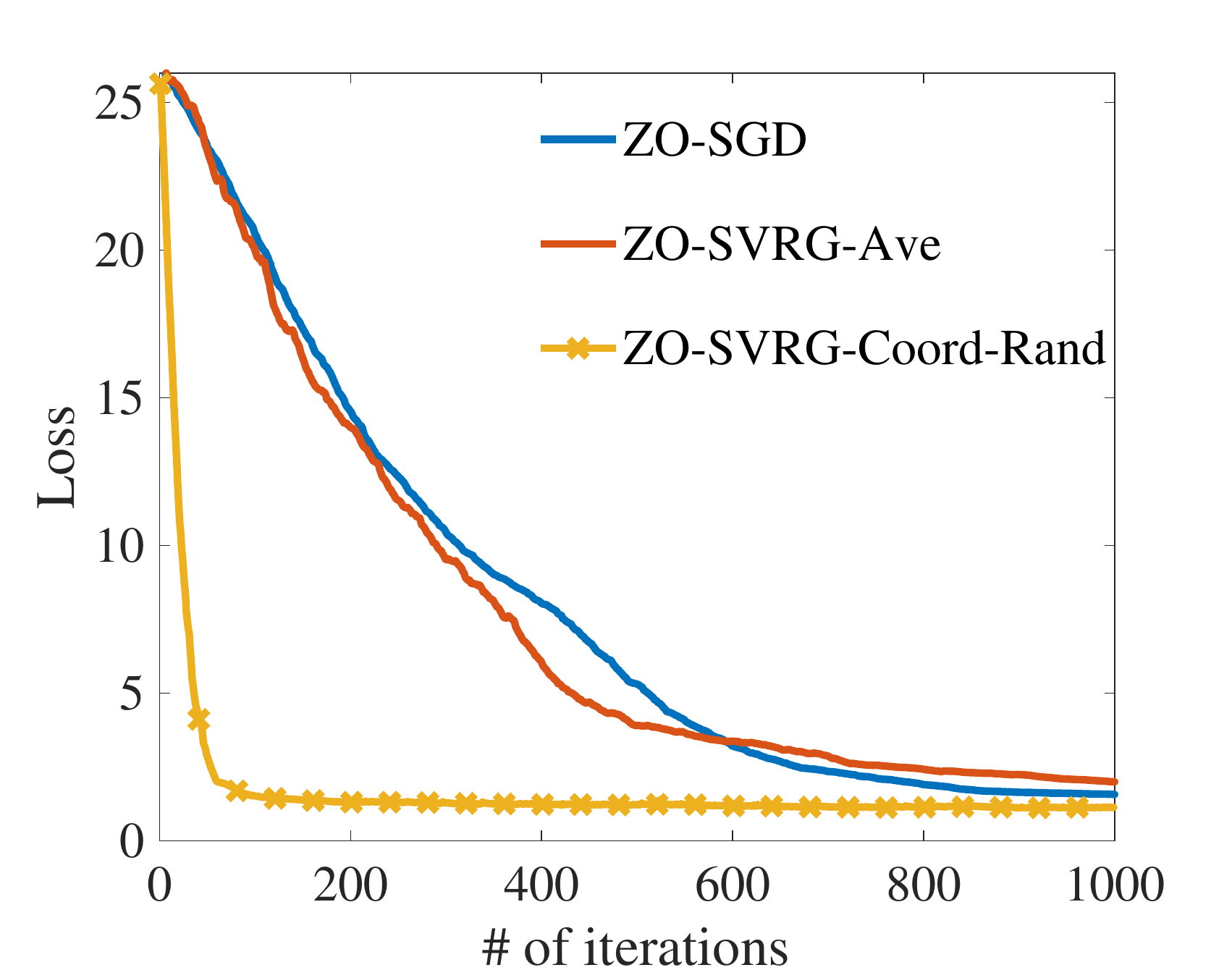}}
	\subfigure[Loss vs queries: $n=10$]{\label{fig4:b}\includegraphics[width=55mm]{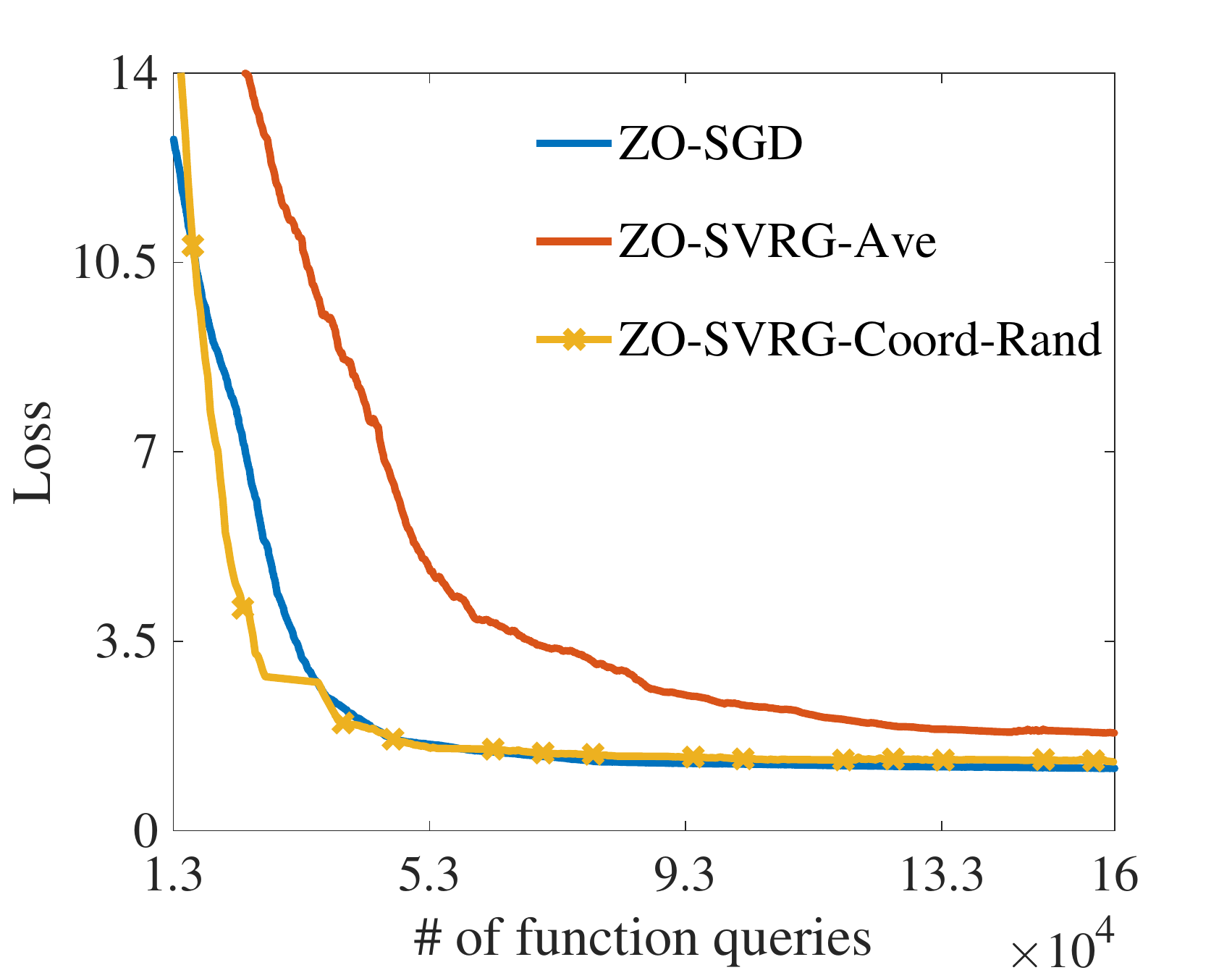}}
	\caption{Comparison of different zeroth-order algorithms for generating black-box adversarial  examples for digit ``4'' class}\label{figure:result2}
\end{figure}
\vspace{-0.5cm}
\begin{table*}[ht]
	\small
	\caption{Generated adversarial examples  for digit ``4''  class, where image distortion is defined as $\frac{1}{n}\sum_{i=1}^n \|\mathbf{a}^{adv}_i-\mathbf{a}_i\|^2$.}
	\vspace{0.2cm}
	\begin{adjustbox}{max width=\textwidth }
		\begin{tabular}
			{cccccccccccc}
			\hline
			Image ID & $4$& $6$ & $19$ & $24$ & $27$ & $33$ & $42$ & $48$ & $49$ & $56$ & Image  distortion \\
			\hline &&&&&&&&&& \vspace{-0.2cm} \\
			ZO-SGD &
			\parbox[c]{2.2em}{\includegraphics[width=0.4in]{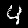}} &
			\parbox[c]{2.2em}{\includegraphics[width=0.4in]{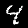}} &
			\parbox[c]{2.2em}{\includegraphics[width=0.4in]{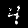}} &
			\parbox[c]{2.2em}{\includegraphics[width=0.4in]{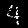}} &
			\parbox[c]{2.2em}{\includegraphics[width=0.4in]{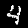}} &
			\parbox[c]{2.2em}{\includegraphics[width=0.4in]{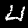}} &
			\parbox[c]{2.2em}{\includegraphics[width=0.4in]{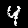}} &
			\parbox[c]{2.2em}{\includegraphics[width=0.4in]{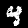}} &
			\parbox[c]{2.2em}{\includegraphics[width=0.4in]{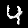}} &
			\parbox[c]{2.2em}{\includegraphics[width=0.4in]{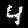}} &$11.46$\vspace{0.1cm}\\
			\hline
			Classified as & $\bf 9$ & $\bf 8$ & $\bf 1$ & $\bf 3$ & $\bf 2$ & $\bf 2$ & $\bf 9$ & $\bf 9$ & $\bf 9$ & $\bf 9$& \\
			
			\hline &&&&&&&&&& \vspace{-0.2cm} \\
			ZO-SVRG-Ave&
			\parbox[c]{2.2em}{\includegraphics[width=0.4in]{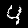}} &
			\parbox[c]{2.2em}{\includegraphics[width=0.4in]{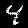}} &
			\parbox[c]{2.2em}{\includegraphics[width=0.4in]{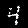}} &
			\parbox[c]{2.2em}{\includegraphics[width=0.4in]{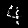}} &
			\parbox[c]{2.2em}{\includegraphics[width=0.4in]{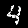}} &
			\parbox[c]{2.2em}{\includegraphics[width=0.4in]{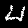}} &
			\parbox[c]{2.2em}{\includegraphics[width=0.4in]{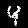}} &
			\parbox[c]{2.2em}{\includegraphics[width=0.4in]{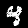}} &
			\parbox[c]{2.2em}{\includegraphics[width=0.4in]{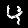}} &
			\parbox[c]{2.2em}{\includegraphics[width=0.4in]{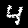}} & $13.85$\vspace{0.1cm}\\
			\hline
			Classified as & $\bf 9$& $\bf 8$ & $\bf 2$ & $\bf 3$ & $\bf 2$ & $\bf 2$ & $\bf 9$ & $\bf 9$ & $\bf 9$ & $\bf 3$ & \\
			
			\hline &&&&&&&&&& \vspace{-0.2cm} \\
			ZO-SVRG-Coord-Rand &
			\parbox[c]{2.2em}{\includegraphics[width=0.4in]{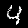}} &
			\parbox[c]{2.2em}{\includegraphics[width=0.4in]{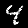}} &
			\parbox[c]{2.2em}{\includegraphics[width=0.4in]{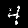}} &
			\parbox[c]{2.2em}{\includegraphics[width=0.4in]{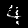}} &
			\parbox[c]{2.2em}{\includegraphics[width=0.4in]{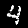}} &
			\parbox[c]{2.2em}{\includegraphics[width=0.4in]{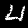}} &
			\parbox[c]{2.2em}{\includegraphics[width=0.4in]{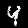}} &
			\parbox[c]{2.2em}{\includegraphics[width=0.4in]{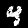}} &
			\parbox[c]{2.2em}{\includegraphics[width=0.4in]{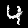}} &
			\parbox[c]{2.2em}{\includegraphics[width=0.4in]{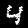}} & $11.21$\vspace{0.1cm}\\
			\hline
			Classified as & $\bf 9$ & $\bf 8$ & $\bf 2$ & $\bf 3$ & $\bf 2$ & $\bf 2$ & $\bf 9$ & $\bf 9$ & $\bf 9$ & $\bf 9$ \\
			\hline
		\end{tabular}
	\end{adjustbox}
	\label{table:digit1}
\end{table*}
\begin{table*} [h] 
	\centering  
	\caption{Parameter settings for ZO-SVRG-Coord-Rand  (left), ZO-SVRG-Coord (middle) and ZO-SPIDER-Coord (right).
	}
	\scalebox{0.9}{
		\subtable{  
			\begin{tabular}{ccc} \toprule
				{Parameters} &$n=10$  & $n=100$  \\   \midrule 
				$q$ &$50$  & $80$ \par    
				\\  \midrule 
				$|\mathcal{S}_2|$ &$80$  &  $700$\par   
				\\  \midrule 
				{$\eta$} &$0.102$  & $0.663$ \par \\ \midrule
			\end{tabular} 
			\hspace{0.4cm}
		}  
		
		\subtable{  
			\begin{tabular}{ccc} \toprule
				{Parameters} &$n=10$  & $n=100$  \\   \midrule 
				$q$ &$30$  & $50$ \par    
				\\  \midrule 
				$|\mathcal{S}_2|$ &$5$  &  $70$\par   
				\\  \midrule 
				{$\eta$} &$0.102$  & $0.255$ \par \\ \midrule
			\end{tabular}		
			\hspace{0.4cm}
		}  
		
		\subtable{          
			\begin{tabular}{ccc} \toprule
				{Parameters} &$n=10$  & $n=100$  \\   \midrule 
				$q$ &$30$  & $50$ \par    
				\\  \midrule 
				$|\mathcal{S}_2|$ &$5$  &  $70$\par   
				\\  \midrule 
				{$\eta$} &$0.064$  & $0.255$ \par \\ \midrule
			\end{tabular} 
		}
	}  
	\label{niubiss}
\end{table*}

\subsection{Nonconvex logistic regression}
{\bf Parameter selection for algorithms under comparison.} For all algorithms, we choose fixed mini-batch sizes $|\mathcal{S}_1|=n$ and $|\mathcal{S}_2|=128$,  the epoch length $q = n/128$ for german dataset, and choose fixed mini-batch sizes $|\mathcal{S}_1|=50*256$  and $|\mathcal{S}_2|=256$,  the epoch length $q=n/256$ for ijcnn1 dataset.  In addition, we set the learning rate for all algorithms according to their convergence guarantee. In specific, we choose  $\eta=0.8$ for ZO-SVRG-Coord-Rand, ZO-SPIDER-Coord, ZO-SVRG-Coord, and choose  $\eta=0.8/d$ for ZO-SGD,  ZO-SVRG-Ave, and set $\eta=0.8\sqrt{\epsilon}/\|v_k\|$ for SPIDER-SZO, as specified in~\citealt{fang2018spider}.   

\section{Zeroth-Order Nonconvex  Nonsmooth  Composite Optimization}
Zeroth-order optimization has been studied for nonconvex and nonsmooth objective function in~\citep{ghadimi2016mini}, where a zeroth-order stochastic algorithm named RSPGF has been proposed. Here, we propose a zeroth-order stochastic variance-reduced algorithm for the same objective function, and show that it order-wisely outperforms RSPGF. 
\subsection{PROX-ZO-SPIDER-Coord for Composite Optimization}\label{sarahnons}
In this subsection, we extend our study of  ZO-SPIDER-Coord to the following nonconvex and nonsmooth composite  problem
\begin{align}\label{compo}
\min_{\mathbf{x}\in\mathbb{R}^d} \Psi(\mathbf{x}) =f(\mathbf{x})+h(\mathbf{x}),  f(x)=\frac{1}{n}\sum_{i=1}^n f_i(\mathbf{x}),
\end{align}
where each $f_i(\mathbf{x})$ is smooth and  nonconvex, $h(\mathbf{x})$ is a   nonsmooth convex function ( e.g., $h(\mathbf{x})=\lambda\|\mathbf{x}\|_1,\lambda>0$). To address the nonsmooth term $h(\mathbf{x})$ in the objective function~\eqref{compo}, 
we propose PROX-ZO-SPIDER-Coord algorithm, which replaces line 8 in Algorithm~\ref{ours:3}  by 
a proximal gradient step
\begin{align*}
\quad\mathbf{x}^{k+1}=\arg\min_{\mathbf{x}\in\mathbb{R}^d}\big\{ \langle \mathbf{v}^k,\mathbf{x}\rangle  +\frac{1}{2\eta} \|\mathbf{x}-\mathbf{x}^k\|^2+h(\mathbf{x}) \big\}.
\end{align*} 
Similarly to~\citealt{ghadimi2016mini}, we define 
\begin{align}\label{GG}
G(\mathbf{x},\nabla f(\mathbf{x}),\eta)=\frac{1}{\eta}\left(\mathbf{x}-\mathbf{x}^+\right)
\end{align}
as a generalized projected gradient of $\Psi(\cdot)$ at the point $\mathbf{x}$ and use it to characterize the convergence criterion, where the point $\mathbf{x}^+$ is given by the proximal mapping 
\begin{align*}
\mathbf{x}^+=\arg\min_{\mathbf{z}\in\mathbb{R}^d}\left\{ \langle \nabla f(\mathbf{x}),\mathbf{z}\rangle  +\frac{1}{2\eta} \|\mathbf{z}-\mathbf{x}\|^2+h(\mathbf{z}) \right\}.
\end{align*}
Based on the above notations, we provide the following convergence guarantee for  PROX-ZO-SPIDER-Coord. 
\begin{Theorem}\label{coprox}
	Let Assumption~\ref{assumption} hold,  and we choose  the same parameters as in Corollary~\ref{comain}.  Then our PROX-ZO-SPIDER-Coord satisfies 
	$\mathbb{E}\|G(\mathbf{x}^\zeta,\nabla f(\mathbf{x}^\zeta),\eta) \|^2\leq(60\Delta_\psi L+80+69\sigma^2)/K+138/K^2 $, 
	where 
	$0<\Delta_\psi=\psi(\mathbf{x}^{0})-\psi(\mathbf{x}^{*})<\infty$ and  $ \mathbf{x}^*=\arg\min_{\mathbf{x}\in\mathbb{R}^d}\psi(\mathbf{x})$.
	
	To achieve $\mathbb{E}\|G(\mathbf{x}^\zeta,\nabla f(\mathbf{x}^\zeta),\eta)\|^2\leq \epsilon$, the number of function queries is at most  $\mathcal{O}\left(\min\left\{n^{1/2}d\epsilon^{-1}, d\epsilon^{-3/2} \right\}\right)$.
	
\end{Theorem}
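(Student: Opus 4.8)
The plan is to mirror the descent analysis behind Theorem~\ref{mainTT}, but to replace the gradient step by the proximal step and to measure progress through the generalized projected gradient $G(\mathbf{x}^k,\nabla f(\mathbf{x}^k),\eta)$ from~\eqref{GG} rather than through $\|\nabla f\|^2$. First I would record the optimality of the proximal update: since $\mathbf{x}^{k+1}$ minimizes $\langle\mathbf{v}^k,\mathbf{x}\rangle+\tfrac{1}{2\eta}\|\mathbf{x}-\mathbf{x}^k\|^2+h(\mathbf{x})$, evaluating this objective at $\mathbf{x}=\mathbf{x}^k$ gives
\[
\langle\mathbf{v}^k,\mathbf{x}^{k+1}-\mathbf{x}^k\rangle+\tfrac{1}{2\eta}\|\mathbf{x}^{k+1}-\mathbf{x}^k\|^2+h(\mathbf{x}^{k+1})\le h(\mathbf{x}^k).
\]
Combining this with the $L$-smoothness descent inequality for $f$ and a Young's inequality on the mismatch term $\langle\nabla f(\mathbf{x}^k)-\mathbf{v}^k,\mathbf{x}^{k+1}-\mathbf{x}^k\rangle$ produces a one-step bound
\[
\Psi(\mathbf{x}^{k+1})\le\Psi(\mathbf{x}^k)-\Big(\tfrac{1}{4\eta}-\tfrac{L}{2}\Big)\|\mathbf{x}^{k+1}-\mathbf{x}^k\|^2+\eta\|\mathbf{v}^k-\nabla f(\mathbf{x}^k)\|^2,
\]
where with $\eta=1/(4L)$ the coefficient $\tfrac{1}{4\eta}-\tfrac{L}{2}=L/2$ is strictly positive.

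Next I would relate the movement $\|\mathbf{x}^{k+1}-\mathbf{x}^k\|^2=\eta^2\|\widetilde G^k\|^2$, where $\widetilde G^k$ is the \emph{stochastic} projected gradient built from $\mathbf{v}^k$, to the true $\|G(\mathbf{x}^k,\nabla f(\mathbf{x}^k),\eta)\|^2$. By nonexpansiveness of the proximal operator, the two proximal outputs (one driven by $\mathbf{v}^k$, one by $\nabla f(\mathbf{x}^k)$) differ by at most $\eta\|\mathbf{v}^k-\nabla f(\mathbf{x}^k)\|$, so $\|G^k\|^2\le 2\|\widetilde G^k\|^2+2\|\mathbf{v}^k-\nabla f(\mathbf{x}^k)\|^2$. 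This reduces the theorem to (i) telescoping the descent inequality over $k=0,\dots,K$ against $\Delta_\psi=\Psi(\mathbf{x}^0)-\Psi(\mathbf{x}^*)$ to control $\sum_k\mathbb{E}\|\widetilde G^k\|^2$, and (ii) bounding the cumulative estimation error $\sum_k\mathbb{E}\|\mathbf{v}^k-\nabla f(\mathbf{x}^k)\|^2$.

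For (ii) I would reuse the recursive variance machinery already developed for Theorem~\ref{mainTT}: the coordinate-wise SPIDER recursion~\eqref{codnew} bounds $\mathbb{E}\|\mathbf{v}^k-\hat\nabla_{\text{coord}}f(\mathbf{x}^k)\|^2$ by a sum of per-step increments of order $\tfrac{L^2}{|\mathcal{S}_2|}\|\mathbf{x}^t-\mathbf{x}^{t-1}\|^2$ plus the smoothing-bias contributions collected in $\theta$ from~\eqref{pallo}, together with the residual $L^2d\delta^2$ between $\hat\nabla_{\text{coord}}f$ and $\nabla f$. The delicate point, and the step I expect to be the main obstacle, is that this variance bound is coupled to the very movement $\|\mathbf{x}^t-\mathbf{x}^{t-1}\|^2$ that the descent inequality is trying to bound; closing the loop requires summing over a full inner epoch of length $q$ and choosing $\eta$, $q$, $|\mathcal{S}_2|$ so that the accumulated variance is absorbed by the negative $-\tfrac{L}{2}\|\mathbf{x}^{k+1}-\mathbf{x}^k\|^2$ term, which is exactly where the SPIDER condition $\phi>0$ from~\eqref{pallo} enters. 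The proximal setting makes this coupling slightly more intricate than in Theorem~\ref{mainTT}, since progress is now measured through $\widetilde G^k$ rather than directly through $\mathbf{v}^k$.

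Finally, I would substitute the parameters of Corollary~\ref{comain}. The choice $\delta=1/(\sqrt{Kd}L)$ makes each smoothing bias $L^2d\delta^2=1/K$, so all $\theta$- and $\delta$-dependent terms collapse to $\mathcal{O}(1/K)$, with the compounded cross terms producing the $\mathcal{O}(1/K^2)$ remainder; averaging over the uniformly random output index $\zeta\in\{0,\dots,K\}$ then yields the stated bound $\mathbb{E}\|G(\mathbf{x}^\zeta,\nabla f(\mathbf{x}^\zeta),\eta)\|^2\le(60\Delta_\psi L+80+69\sigma^2)/K+138/K^2$ once constants are tracked. For the query complexity, the outer coordinate-wise full-batch estimate costs $2d|\mathcal{S}_1|$ queries every $q$ iterations and each inner step costs $2d|\mathcal{S}_2|$; with $|\mathcal{S}_2|=q=\lceil|\mathcal{S}_1|^{1/2}\rceil$ the amortized per-iteration cost is $\mathcal{O}(d|\mathcal{S}_1|^{1/2})$, so reaching $\epsilon$ with $K=\mathcal{O}(1/\epsilon)$ and $|\mathcal{S}_1|=\min\{n,K\}$ gives $\mathcal{O}(\min\{n^{1/2}d\epsilon^{-1},d\epsilon^{-3/2}\})$, matching the mini-batch ZO-SPIDER-Coord complexity of Corollary~\ref{comain}.
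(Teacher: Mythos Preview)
Your approach is essentially the paper's: derive a one-step descent for $\Psi$ in terms of the stochastic projected gradient $\widetilde G^k=G(\mathbf{x}^k,\mathbf{v}^k,\eta)$, feed the SPIDER variance recursion (now driven by $\|\mathbf{x}^{t+1}-\mathbf{x}^t\|^2=\eta^2\|\widetilde G^t\|^2$) back into the descent, close the loop over each epoch, and finally pass from $\widetilde G^\zeta$ to $G(\mathbf{x}^\zeta,\nabla f(\mathbf{x}^\zeta),\eta)$ via nonexpansiveness of the prox map. The paper packages the same steps through an auxiliary lemma (Lemma~\ref{proxT}) and the Ghadimi--Lan projected-gradient inequalities (Lemma~\ref{g2g}).

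There is, however, one quantitative slip that matters. Your prox optimality inequality is obtained by merely comparing the prox objective at $\mathbf{x}^{k+1}$ and $\mathbf{x}^k$; this discards the extra $\tfrac{1}{2\eta}\|\mathbf{x}^{k+1}-\mathbf{x}^k\|^2$ coming from the $1/\eta$-strong convexity of the prox subproblem. The sharper form is
\[
\langle\mathbf{v}^k,G(\mathbf{x}^k,\mathbf{v}^k,\eta)\rangle\ge\|G(\mathbf{x}^k,\mathbf{v}^k,\eta)\|^2+\tfrac{1}{\eta}\big(h(\mathbf{x}^{k+1})-h(\mathbf{x}^k)\big)
\]
(Lemma~\ref{g2g}(i)), which after the same Young step yields descent coefficient $\tfrac{\eta}{2}-\tfrac{L\eta^2}{2}$ on $\|\widetilde G^k\|^2$ and error coefficient $\tfrac{\eta}{2}$ on $\|\mathbf{v}^k-\nabla f(\mathbf{x}^k)\|^2$, rather than your $\tfrac{\eta}{4}-\tfrac{L\eta^2}{2}$ and $\eta$. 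This factor of two is not cosmetic: with the Corollary~\ref{comain} parameters $\eta=1/(4L)$ and $q=|\mathcal{S}_2|$, your version gives an effective $\tau$-type quantity
\[
\tfrac{\eta}{4}-\tfrac{L\eta^2}{2}-\tfrac{6L^2\eta^3 q}{|\mathcal{S}_2|}=\tfrac{1}{16L}-\tfrac{1}{32L}-\tfrac{3}{32L}=-\tfrac{1}{16L}<0,
\]
so the recursion does not close; and no rebalancing of Young's inequality rescues it, since with $\alpha=c/L$ the required condition becomes $24c^2-6c+1<0$, which has negative discriminant. Once you replace your weak comparison by the strong-convexity inequality above, everything goes through exactly as you outlined and the stated constants follow.
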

Let us compare our  PROX-ZO-SPIDER-Coord algorithm with the randomized stochastic projected gradient free algorithm RSPGF, introduced by~\citealt{ghadimi2016mini}.  Casting Corollary 8 in~\citealt{ghadimi2016mini} to the setting of our Theorem~\ref{coprox} yields 
$\mathbb{E}\|G(\mathbf{x}^\zeta,\nabla f(\mathbf{x}^\zeta),\eta)\|^2\leq \mathcal{O}\left( \frac{d}{\widetilde K} + \frac{\sqrt{d}}{\sqrt{\widetilde K}} \right)$, 
where $\widetilde K$ is the total number of function queries. Thus, RSPGF requires at most $\mathcal{O}(d/\epsilon^2)$ function queries to achieve  $\mathbb{E}\|G(\mathbf{x}^\zeta,\nabla f(\mathbf{x}^\zeta),\eta)\|^2\leq \epsilon<1$. As a comparison, the function query complexity of  PROX-ZO-SPIDER-Coord outperforms  that of RSPGF~\cite{ghadimi2016mini} by a factor of $\mathcal{O}(\max\{\epsilon^{-1}n^{-1/2}, \epsilon^{-1/2}\})$. 

\section{Zeroth-Order Variance-Reduced Algorithms for Convex Optimization}

In this paper, we have proposed two new zeroth-order variance-reduced algorithms ZO-SVRG-Coord-Rand and ZO-SPIDER-Coord, and have studied their performance for nonconvex optimization. 
In this section, we study the performance of these two algorithms for convex optimization, where each individual function $f_i(\cdot)$ is convex. We note that there was no proven convergence guarantee for previously proposed zeroth-order SVRG-based and SPIDER-based algorithms for convex optimization. 
\subsection{ZO-SVRG-Coord-Rand-C Algorithm }
In this subsection, we explore the convergence performance of ZO-SVRG-Coord-Rand for convex optimization. To fully utilize the convexity of the objective function, we  
propose a variant of our ZO-SVRG-Coord-Rand, which we refer to as ZO-SVRG-Coord-Rand-C.
Differently from ZO-SVRG-Coord-Rand, the outer-loop iteration (i.e., $k \mod q = 0$) of ZO-SVRG-Coord-Rand-C 
chooses $\mathbf{x}^k$ from $\{\mathbf{x}^{k-q},...,\mathbf{x}^{k-1}\}$ uniformly at random, which is a typical treatment used in convex first-order optimization~\citep{reddi2016stochastic,nguyen2017sarah}. 
In the meanwhile, the inner-loop iteration of ZO-SVRG-Coord-Rand-C  is the same as single-sample ZO-SVRG-Coord-Rand, which 
computes $\mathbf{v}^k=\hat \nabla_{\text{rand}} f_{i_k}(\mathbf{x}^k; \mathbf{u}^k)-\hat \nabla_{\text{rand}} f_{i_k}(\mathbf{x}^{qk_0}; \mathbf{u}^k)+\mathbf{v}^{qk_0}$ with a single sample $i_k$ drawn from $[n]$ and a smoothing vector $\mathbf{u}^k$  drawn from the uniform distribution over the unit sphere. .

The following theorem  provides the function query complexity for ZO-SVRG-Coord-Rand-C.

\begin{Theorem}\label{coco:convex}
	Under Assumption~\ref{assumption}, let   $\eta=1/(27dL), \beta=\epsilon/(c_\beta dL), \delta=\epsilon/(c_\delta \sqrt{d}L), q=c_qd/ \epsilon$, $h=\log_2 (c_h/\epsilon))$ and $|\mathcal{S}|=\min\{n,\lceil c_s /\epsilon\rceil \}$, where $c_q, c_h, c_\beta,c_\delta$ and $c_s$ are  sufficiently large  positive constants.  Then, to achieve an $\epsilon$-accuracy solution, i.e.,  $\mathbb{E}(f(\mathbf{x}^K)-f(\mathbf{x}^*) )\leq \epsilon$, the number of function queries required by ZO-SVRG-Coord-Rand-C algorithm is at most $\mathcal{O}(d\min\{n,1/\epsilon\}\log(1/\epsilon))$.
\end{Theorem}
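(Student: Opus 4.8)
The plan is to run a convex SVRG convergence argument in which the true gradients are replaced by the two zeroth-order estimators and all resulting bias and variance errors are tracked. I would carry out the analysis on the smoothed surrogate $f_\beta$, which inherits convexity and $L$-smoothness from the $f_i$ and satisfies $\|\nabla f_\beta(\mathbf{x})-\nabla f(\mathbf{x})\|=\mathcal{O}(\beta L)$ and $|f_\beta(\mathbf{x})-f(\mathbf{x})|=\mathcal{O}(\beta^2 L)$, so that I can pass between $f$ and $f_\beta$ at a cost that the choice $\beta=\mathcal{O}(\epsilon/(dL))$ makes negligible. First I would record the first two moments of the inner estimator $\mathbf{v}^k=\hat\nabla_{\text{rand}}f_{i_k}(\mathbf{x}^k;\mathbf{u}^k)-\hat\nabla_{\text{rand}}f_{i_k}(\mathbf{x}^{qk_0};\mathbf{u}^k)+\mathbf{v}^{qk_0}$: its conditional mean is $\nabla f_\beta(\mathbf{x}^k)$ up to the bias of the coordinate reference, and, applying the single-direction inequality behind Lemma~\ref{le:vks} with $|\mathcal{S}_2|=1$, its variance obeys $\mathbb{E}\|\mathbf{v}^k-\nabla f_\beta(\mathbf{x}^k)\|^2\leq c_1 dL^2\|\mathbf{x}^k-\mathbf{x}^{qk_0}\|^2+E$, where $E=\mathcal{O}(\beta^2 d^2L^2+\delta^2 dL^2+I(|\mathcal{S}|<n)(\delta^2 dL^2+\sigma^2)/|\mathcal{S}|)$. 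The factor $d$ multiplying the drift $\|\mathbf{x}^k-\mathbf{x}^{qk_0}\|^2$ is precisely what dictates the small step size $\eta=\Theta(1/(dL))$.

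Then I would expand $\mathbb{E}\|\mathbf{x}^{k+1}-\mathbf{x}^*\|^2=\mathbb{E}\|\mathbf{x}^k-\mathbf{x}^*\|^2-2\eta\langle\nabla f_\beta(\mathbf{x}^k),\mathbf{x}^k-\mathbf{x}^*\rangle+\eta^2\mathbb{E}\|\mathbf{v}^k\|^2$, substitute the variance bound, and invoke convexity of $f_\beta$ to extract $-2\eta(f_\beta(\mathbf{x}^k)-f_\beta^*)$ together with the smoothness--convexity inequality $\|\nabla f_\beta(\mathbf{x}^k)\|^2\leq 2L(f_\beta(\mathbf{x}^k)-f_\beta^*)$. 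Since $\eta<1/(2L)$, the coefficient of $f_\beta(\mathbf{x}^k)-f_\beta^*$ becomes negative, leaving the drift term $\eta^2 dL^2\|\mathbf{x}^k-\mathbf{x}^{qk_0}\|^2$ and the constant $\eta^2 E$ to be absorbed. Summing over one epoch, bounding the accumulated drift $\sum_k\|\mathbf{x}^k-\mathbf{x}^{qk_0}\|^2$ through the inner-loop steps, and using that both the next reference point $\tilde{\mathbf{x}}_{s+1}$ and the returned iterate are drawn uniformly over the epoch, I would obtain a per-epoch estimate bounding $\mathbb{E}[f_\beta(\tilde{\mathbf{x}}_{s+1})-f_\beta^*]$ by $\mathcal{O}(\mathbb{E}\|\tilde{\mathbf{x}}_s-\mathbf{x}^*\|^2/(q\eta))$ plus a residual in $E$. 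Chaining this over $h=\mathcal{O}(\log(1/\epsilon))$ epochs with $q=\Theta(d/\epsilon)$, while $\beta=\mathcal{O}(\epsilon/(dL))$, $\delta=\mathcal{O}(\epsilon/(\sqrt{d}L))$ and $|\mathcal{S}|=\min\{n,\mathcal{O}(1/\epsilon)\}$ keep the residual below $\epsilon$, drives $\mathbb{E}[f(\mathbf{x}^K)-f^*]\leq\epsilon$.

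Finally I would convert iterations into queries: each of the $h$ outer steps costs $2d|\mathcal{S}|$ evaluations for the coordinate estimate and each of the $\approx qh$ inner steps costs a constant number of two-point evaluations; taking the better of the finite-sum parameterization (full batch $|\mathcal{S}|=n$ with epoch length of order $n$) and the online one (of order $1/\epsilon$) produces the claimed $\mathcal{O}(d\min\{n,1/\epsilon\}\log(1/\epsilon))$. The main obstacle I anticipate is twofold and self-referential: the variance's $\|\mathbf{x}^k-\mathbf{x}^{qk_0}\|^2$ term must be controlled in terms of the very iterates being analyzed while the additive zeroth-order error $E$ accumulates over the long epoch $q=\Theta(d/\epsilon)$, so the bookkeeping that simultaneously keeps the per-epoch contraction effective and the residual below $\epsilon$ is delicate. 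In particular, obtaining a genuine geometric per-epoch decrease under mere convexity (rather than strong convexity)---which is what justifies the $\log(1/\epsilon)$ number of epochs and the precise constants $c_q,c_h,c_\beta,c_\delta,c_s$---will likely require a combined potential in the function-value gap and the iterate distance, and I expect this to be the hardest part of the argument.
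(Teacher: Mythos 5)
Your outline stalls exactly where you predict it will, and the paper has a specific device for getting past that point which your proposal lacks. You import the \emph{drift-based} variance bound $\mathbb{E}\|\mathbf{v}^k-\nabla f_\beta(\mathbf{x}^k)\|^2\leq c_1 dL^2\|\mathbf{x}^k-\mathbf{x}^{qk_0}\|^2+E$ (the Lemma~\ref{le:vks} mechanism from the nonconvex analysis) into the convex setting. With the epoch length $q=\Theta(d/\epsilon)$ this is fatal: if you bound the drift crudely by the (assumed bounded) iterate diameter $\Gamma$, the accumulated term $\sum_{k}\eta^2 dL^2\|\mathbf{x}^k-\mathbf{x}^{qk_0}\|^2$ contributes, after dividing by $2\eta q$, a quantity of order $\eta dL^2\Gamma^2=\Theta(L\Gamma^2)$ --- a \emph{constant}, not $O(\epsilon)$; and controlling the drift honestly requires the Lyapunov/recursive-constant machinery of Theorem~\ref{th:svrg}, whose condition $(1+\theta)^q=O(1)$ forces $\eta$ to shrink polynomially with $1/q$ (compare the single-sample choices $\eta=\Theta(1/(d^{1/3}q^{2/3}L))$ and $\eta=\Theta(1/(L\sqrt{q}))$ in Corollaries~\ref{co:svrg2} and~\ref{nonbatch}), which with $q=\Theta(d/\epsilon)$ destroys the scaling $q\eta=\Theta(1/(L\epsilon))$ that you need to make the residual $\Gamma^2/(q\eta)$ of order $\epsilon$. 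Moreover, even granting the per-epoch estimate $\mathbb{E}[f_\beta(\tilde{\mathbf{x}}_{s+1})-f_\beta^*]\lesssim \mathbb{E}\|\tilde{\mathbf{x}}_s-\mathbf{x}^*\|^2/(q\eta)+E'$, it cannot be chained geometrically under mere convexity, as you yourself concede; the ``combined potential'' you gesture at is never constructed.

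The missing idea is the paper's Lemma~\ref{le:convex}: anchor the second-moment bound at the \emph{optimum} rather than at the epoch reference. Writing $\mathbf{v}^k=\widehat\nabla f_{i_k}(\mathbf{x}^k)-\widehat\nabla f_{i_k}(\mathbf{x}^{qk_0})+\mathbf{v}^{qk_0}$ and inserting $\pm\widehat\nabla f_{i_k}(\mathbf{x}_\beta^*)$, the paper combines the two-point estimator inequality of Lemma~\ref{unifoT}(3) with the convexity inequality $\|\nabla f_{i}(\mathbf{x})-\nabla f_{i}(\mathbf{y})\|^2\leq 2L\big(f_{i}(\mathbf{x})-f_{i}(\mathbf{y})-\langle\nabla f_{i}(\mathbf{y}),\mathbf{x}-\mathbf{y}\rangle\big)$ to get $\mathbb{E}\|\mathbf{v}^k\|^2\leq 18dL\,\mathbb{E}\big(f(\mathbf{x}^k)-f(\mathbf{x}_\beta^*)+f(\mathbf{x}^{qk_0})-f(\mathbf{x}_\beta^*)\big)$ plus zeroth-order error terms. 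Because the second moment is now measured by \emph{function-value gaps}, the expansion of $\mathbb{E}\|\mathbf{x}^{k+1}-\mathbf{x}_\beta^*\|^2$ summed over an epoch --- using that the next reference point is drawn uniformly from the epoch, so $\mathbb{E} f(\mathbf{x}^{q(m+1)})=\frac{1}{q}\sum_{k}\mathbb{E} f(\mathbf{x}^k)$ --- closes into a recursion purely in gaps (Lemma~\ref{th:svrgconvex}): $(2\eta-18d\eta^2L)\,\mathbb{E}\big(f(\mathbf{x}^{q(m+1)})-f^*_\beta\big)\leq \Gamma^2/q+\eta\lambda+18d\eta^2L\,\mathbb{E}\big(f(\mathbf{x}^{qm})-f^*_\beta\big)$, i.e.\ a contraction with factor $\alpha=18d\eta^2L/(2\eta-18d\eta^2L)=1/2$ at $\eta=1/(27dL)$, with no drift term and no strong convexity; the leftover distance difference is simply bounded by $\Gamma^2$ and suppressed by $1/q$, since $q\eta=c_q/(27L\epsilon)$ makes $\Gamma^2/(q\eta)=O(\epsilon)$ for $c_q$ large. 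This optimum-anchored bound is precisely what resolves the difficulty you flag as the hardest part, and without it your argument cannot justify the $\log(1/\epsilon)$ epoch count in the theorem.
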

Let us compare our  result with that of ZO-SGD given by~\citealt{ghadimi2013stochastic}. Casting  Corollary 3.3 in~\citealt{ghadimi2013stochastic} under the setting of our Corollary~\ref{coco:convex} implies that the function query complexity of  ZO-SGD is $\mathcal{O}(d/\epsilon^{2})$, which is worse than that of  our ZO-SVRG-Coord-Rand-C by a factor of $ \mathcal{\widetilde O}(\max\{\epsilon^{-2}n^{-1},\epsilon^{-1}\})$.

\subsection{ZO-SPIDER-Coord-C Algorithm} 
In this subsection, we generalize our ZO-SPIDER-Coord to solving  convex optimization problem, and proposes the  ZO-SPIDER-Coord-C algorithm. 
ZO-SPIDER-Coord-C has the same outer-loop iteration as ZO-SVRG-Coord-Rand-C, but 
updates $\mathbf{v}^k$ in a different way  by
$\mathbf{v}^k=\hat \nabla _{\text{coord}} f_{i_k}(\mathbf{x}^k)- \hat \nabla _{\text{coord}} f_{i_k}(\mathbf{x}^{k-1})+
\mathbf{v}^{k-1}$ at each inner-loop iteration. 


Based on  Lemma~\ref{le:coord}, we obtain the following 
complexity result for ZO-SPIDER-Coord-C. 
\begin{Theorem}\label{co:sarass}
	Under Assumption~\ref{assumption}, let   $\eta=1/(24L), q=c_q/ \epsilon$, $h=\log_2 (c_h/\epsilon)), \delta=\epsilon/( c_q\sqrt{d}L)$ and $|\mathcal{S}|=\min\{n,\lceil c_s /\epsilon\rceil \}$, where $c_q, c_h$ and $c_s$ are  sufficiently large  positive constants.  Then, to achieve an $\epsilon$-accuracy solution, i.e.,  $\mathbb{E}\|\nabla f(\mathbf{x}^{K})\|^2\leq \epsilon$, the number of function queries required by ZO-SPIDER-Coord-C is at most $\mathcal{O}(d\min\{n,1/\epsilon\}\log(1/\epsilon))$
\end{Theorem}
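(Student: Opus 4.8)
The plan is to mirror the multi-epoch, random-restart template already used for ZO-SVRG-Coord-Rand-C in Theorem~\ref{coco:convex}, but to replace the SVRG variance control by the recursive (SARAH/SPIDER) variance bound supplied by Lemma~\ref{le:coord}, and then to exploit convexity to upgrade the sublinear nonconvex rate into a geometric per-epoch contraction. The first, routine step is to separate the deterministic smoothing bias from the stochastic error: since each $f_i$ is $L$-smooth, every coordinate of $\hat\nabla_{\text{coord}} f(\mathbf{x}) - \nabla f(\mathbf{x})$ is $O(L\delta)$, so $\|\hat\nabla_{\text{coord}} f(\mathbf{x}) - \nabla f(\mathbf{x})\|^2 = O(L^2 d\delta^2)$; with $\delta = \epsilon/(c_q\sqrt{d}L)$ this is $O(\epsilon^2/c_q^2)$ and will contribute only a harmless $O(\epsilon^2)$ floor. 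I would carry this bias term symbolically through the whole argument and verify at the end that it never dominates $\epsilon$.

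Next I would invoke Lemma~\ref{le:coord} to bound the recursive estimation error $\mathbb{E}\|\mathbf{v}^k - \hat\nabla_{\text{coord}} f(\mathbf{x}^k)\|^2$ by $\frac{C L^2}{|\mathcal{S}|}\sum_{t}\mathbb{E}\|\mathbf{x}^t - \mathbf{x}^{t-1}\|^2$ accumulated over the current epoch plus the smoothing floor; since $\mathbf{x}^t - \mathbf{x}^{t-1} = -\eta\mathbf{v}^{t-1}$, this couples the error to $\sum_t \mathbb{E}\|\mathbf{v}^{t-1}\|^2$. The core of the proof is then a one-epoch convex descent inequality. Starting from $\mathbf{x}^{k+1} = \mathbf{x}^k - \eta\mathbf{v}^k$, expanding $\|\mathbf{x}^{k+1} - \mathbf{x}^*\|^2$, taking conditional expectation, and splitting $\mathbf{v}^k = \nabla f(\mathbf{x}^k) + (\mathbf{v}^k - \nabla f(\mathbf{x}^k))$, convexity gives $\langle \nabla f(\mathbf{x}^k), \mathbf{x}^k - \mathbf{x}^*\rangle \ge f(\mathbf{x}^k) - f^*$. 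Summing over the epoch, telescoping the distance terms, absorbing the accumulated recursive error (this is where $\eta = 1/(24L)$ and the $|\mathcal{S}|$--$q$ scaling keep the error coefficient below the descent coefficient), and using the random-iterate selection of the outer loop to turn the running average into the starting point of the next epoch, I would obtain a recursion $\mathbb{E}[f(\tilde{\mathbf{x}}^{s+1}) - f^*] \le \frac{C_1}{\eta q}\mathbb{E}\|\tilde{\mathbf{x}}^s - \mathbf{x}^*\|^2 + O(\epsilon^2)$, together with the fact that $\mathbb{E}\|\tilde{\mathbf{x}}^s - \mathbf{x}^*\|^2$ stays bounded across epochs.

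With $q = \Theta(1/\epsilon)$ this shows the objective gap contracts by a constant factor each epoch down to an $O(\epsilon)$ floor, so after $h = O(\log(1/\epsilon))$ epochs $\mathbb{E}[f(\mathbf{x}^K) - f^*] \le \epsilon/(2L)$; invoking smoothness, $\|\nabla f(\mathbf{x}^K)\|^2 \le 2L(f(\mathbf{x}^K) - f^*) \le \epsilon$, which is the stated criterion. Finally I would count queries: each inner iteration uses the single-sample coordinate estimator at cost $O(d)$ and each of the $h$ outer iterations uses the batch estimator $\hat\nabla_{\text{coord}} f_{\mathcal{S}_1}$ at cost $O(d|\mathcal{S}_1|) = O(d\min\{n, 1/\epsilon\})$; choosing between the online scaling ($q, |\mathcal{S}_1| \sim 1/\epsilon$) and the finite-sum scaling ($q, |\mathcal{S}_1| \sim n$) produces the $\min\{n, 1/\epsilon\}$ and hence the total $O(d\min\{n, 1/\epsilon\}\log(1/\epsilon))$.

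The hard part will be the one-epoch step: obtaining a genuine \emph{geometric} contraction for a merely convex (not strongly convex) objective. The standard convex SVRG/SARAH bound leaves $\mathbb{E}\|\tilde{\mathbf{x}}^s - \mathbf{x}^*\|^2$ on the right-hand side, which does not shrink on its own; I expect the contraction to emerge only after choosing a composite Lyapunov potential (combining the objective gap with the squared distance) and carefully balancing the accumulating recursive error $\frac{L^2}{|\mathcal{S}|}\sum_t\mathbb{E}\|\mathbf{x}^t - \mathbf{x}^{t-1}\|^2$ against the descent so that a strict factor below one survives while the smoothing bias remains at the $O(\epsilon^2)$ level. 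Verifying that the constants $c_q, c_h, c_s$ can be chosen large enough, independently of $\epsilon$, to close this contraction is the delicate bookkeeping that makes the $\log(1/\epsilon)$ factor rigorous.
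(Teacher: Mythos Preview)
Your route diverges from the paper's in a way that creates a concrete gap. You propose to control the SPIDER variance via Lemma~\ref{le:coord}, which gives
\[
\mathbb{E}\|\mathbf{v}^k-\hat\nabla_{\text{coord}}f(\mathbf{x}^k)\|^2 \;\lesssim\; \eta^2L^2\sum_{t=qk_0}^{k-1}\mathbb{E}\|\mathbf{v}^t\|^2
\]
(single-sample, so $|\mathcal{S}_2|=1$), and then to ``absorb'' this against the descent. But with the parameters of Theorem~\ref{co:sarass} you have $\eta=\Theta(1/L)$ constant and $q=\Theta(1/\epsilon)$, so the accumulated coefficient $\eta^2L^2q=\Theta(1/\epsilon)$ is \emph{not} below the descent coefficient; the absorption you rely on (which works in the nonconvex Corollaries~\ref{comain}--\ref{nonbatch} only because there either $q=|\mathcal{S}_2|$ or $\eta=\Theta(1/\sqrt{q})$) fails here. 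This is precisely the place where convexity must be used more aggressively than the first-order inequality $\langle\nabla f(\mathbf{x}),\mathbf{x}-\mathbf{x}^*\rangle\ge f(\mathbf{x})-f^*$ you invoke.

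The paper exploits convexity differently and earlier, through co-coercivity of each $f_i$: from $\langle\nabla f_{i_m}(\mathbf{x}^m)-\nabla f_{i_m}(\mathbf{x}^{m-1}),\mathbf{x}^m-\mathbf{x}^{m-1}\rangle\ge\frac{1}{L}\|\nabla f_{i_m}(\mathbf{x}^m)-\nabla f_{i_m}(\mathbf{x}^{m-1})\|^2$ and $\mathbf{x}^m-\mathbf{x}^{m-1}=-\eta\mathbf{v}^{m-1}$, a short telescoping (Lemma~\ref{le:sarah2}, the SARAH trick of \citealt{nguyen2017sarah}) yields
\[
\sum_{m=qk_0+1}^{k}\mathbb{E}\|\hat\nabla_{\text{coord}}f_{i_m}(\mathbf{x}^m)-\hat\nabla_{\text{coord}}f_{i_m}(\mathbf{x}^{m-1})\|^2 \;\le\; \frac{L\eta}{2-L\eta}\,\mathbb{E}\|\mathbf{v}^{qk_0}\|^2,
\]
so the epoch variance is bounded by the \emph{starting} gradient estimator, independent of $q$. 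Plugging this into the smoothness descent inequality for $f$ (not the distance expansion), summing, and using the random-iterate restart gives a direct contraction on the gradient norm,
\[
\mathbb{E}\|\nabla f(\mathbf{x}^{q(k_0+1)})\|^2 \;\le\; \alpha\,\mathbb{E}\|\nabla f(\mathbf{x}^{qk_0})\|^2 + \Delta,\qquad \alpha\le\tfrac{1}{2},\quad \Delta=\mathcal{O}(\epsilon/c_q),
\]
with no need to track $\|\mathbf{x}^k-\mathbf{x}^*\|^2$ or to convert a function-value gap via $\|\nabla f\|^2\le 2L(f-f^*)$ at the end. The ``hard part'' you flag (manufacturing a geometric contraction for a merely convex objective from a distance/function-value Lyapunov) is thus bypassed entirely: the contraction lives on $\|\nabla f\|^2$ from the start, and the only place convexity enters is the co-coercivity bound above. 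Your plan would need this lemma (or an equivalent) to close; Lemma~\ref{le:coord} alone is not enough at the stated $\eta$ and $q$.
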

Note that ZO-SPIDER-Coord-C achieves the same function query complexity as  that of ZO-SVRG-Coord-Rand-C, and  improves that of ZO-SGD~\citep{ghadimi2013stochastic} by a factor of  $ \mathcal{\widetilde O}(\max\{\epsilon^{-2}n^{-1},\epsilon^{-1}\})$ w.r.t.   stationary gap $\mathbb{E}\|\nabla f(\mathbf{x}^{K})\|^2$. The detailed comparison among our algorithms and other exiting algorithms is summarized in Table~\ref{tas2}.

\renewcommand{\arraystretch}{1.3} 
\definecolor{LightCyan}{rgb}{0.9,1,0.9}
\begin{table*}[!h] 
	\caption{Comparison of zeroth-order algorithms in terms of the function query complexity for convex  optimization.
	}
	\vspace{0.2cm}
	\small
	\centering 
	\begin{tabular}{llcc} \toprule
		{Algorithms} &&Function query complexity  & Function value convergence  \\   \midrule
		ZO-SGD &\citep{ghadimi2013stochastic}&$\mathcal{O}\left( \frac{d}{\epsilon^{2}}\right)$  &  \cmark \par    
		\\  \midrule 
		ZSCG &\citep{balasubramanian2018zeroth}&$\mathcal{O}\left( \frac{d}{\epsilon^{3}}\right)$  &  \cmark \par   
		\\  \midrule 
		{M-ZSCG} &\citep{balasubramanian2018zeroth} &$\mathcal{O}\left( \frac{d}{\epsilon^{2}}\right)$  & \cmark \par \\ \midrule
		\belowrulesepcolor{LightCyan}
		\rowcolor{LightCyan}
		ZO-SPIDER-Coord-C & (This work)&$\mathcal{O}(\min\{dn,\frac{d}{\epsilon}\}\log\left(\frac{1}{\epsilon}\right))$& \xmark  \\   \aboverulesepcolor{LightCyan}  \bottomrule
		\belowrulesepcolor{LightCyan}
		\rowcolor{LightCyan}
		ZO-SVRG-Coord-Rand-C  &(This work)&$\mathcal{O}(\min\{dn,\frac{d}{\epsilon}\}\log\left(\frac{1}{\epsilon}\right))$ & \cmark \par\\   \aboverulesepcolor{LightCyan}  \bottomrule
	\end{tabular} 
	\label{tas2}
\end{table*}

\newpage
{\Large{\bf Technical Proofs}}
\vspace{-0.2cm}
\section{Proof for ZO-SVRG-Coord-Rand  }
\subsection{Auxiliary Lemmas}
Before proving our main results, we first establish three useful lemmas. 
\begin{lemma}\label{coordinate}
	For any given smoothing parameter $\delta>0$ and any $\mathbf{x}\in\mathbb{R}^d$, we have 
	\begin{align*}
	\|   \hat \nabla_{\text{\normalfont coord}}f(\mathbf{x})-\nabla f (\mathbf{x})\|^2\leq L^2d\delta^2.
	\end{align*}
\end{lemma}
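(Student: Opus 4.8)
The plan is to exploit the fact that both the coordinate-wise estimator and the true gradient decompose along the standard basis $\{\mathbf{e}_i\}_{i=1}^d$, so that the squared error splits into a sum of $d$ scalar central-difference errors. Recalling the definition $\hat\nabla_{\text{\normalfont coord}} f(\mathbf{x}) = \sum_{i=1}^d \frac{(f(\mathbf{x}+\delta\mathbf{e}_i)-f(\mathbf{x}-\delta\mathbf{e}_i))\mathbf{e}_i}{2\delta}$, its $i$-th entry is $\frac{f(\mathbf{x}+\delta\mathbf{e}_i)-f(\mathbf{x}-\delta\mathbf{e}_i)}{2\delta}$, whereas the $i$-th entry of $\nabla f(\mathbf{x})$ is the partial derivative $\partial_i f(\mathbf{x}) = \langle \nabla f(\mathbf{x}),\mathbf{e}_i\rangle$. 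Since these entries are attached to the same orthonormal vectors $\mathbf{e}_i$, I would write
\[
\|\hat\nabla_{\text{\normalfont coord}} f(\mathbf{x}) - \nabla f(\mathbf{x})\|^2 = \sum_{i=1}^d \Big(\tfrac{f(\mathbf{x}+\delta\mathbf{e}_i)-f(\mathbf{x}-\delta\mathbf{e}_i)}{2\delta} - \partial_i f(\mathbf{x})\Big)^2,
\]
so that it suffices to bound each scalar summand.

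First I would control a single coordinate's central-difference error. The cleanest route uses the $L$-smoothness quadratic bound: since each $f_i$ has $L$-Lipschitz gradient by Assumption~\ref{assumption}(2), so does the average $f$, giving $\big|f(\mathbf{x}\pm\delta\mathbf{e}_i)-f(\mathbf{x})\mp\delta\,\partial_i f(\mathbf{x})\big|\le \tfrac{L}{2}\delta^2$ for both sign choices. Subtracting the two one-sided estimates and applying the triangle inequality yields $\big|f(\mathbf{x}+\delta\mathbf{e}_i)-f(\mathbf{x}-\delta\mathbf{e}_i)-2\delta\,\partial_i f(\mathbf{x})\big|\le L\delta^2$, and dividing by $2\delta$ gives $\big|\tfrac{f(\mathbf{x}+\delta\mathbf{e}_i)-f(\mathbf{x}-\delta\mathbf{e}_i)}{2\delta}-\partial_i f(\mathbf{x})\big|\le \tfrac{L\delta}{2}$. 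Equivalently, one may express the central difference as $\tfrac{1}{2\delta}\int_{-\delta}^{\delta}\big(\partial_i f(\mathbf{x}+t\mathbf{e}_i)-\partial_i f(\mathbf{x})\big)\,dt$ and invoke $|\partial_i f(\mathbf{x}+t\mathbf{e}_i)-\partial_i f(\mathbf{x})|\le \|\nabla f(\mathbf{x}+t\mathbf{e}_i)-\nabla f(\mathbf{x})\|\le L|t|$ to reach the same $\tfrac{L\delta}{2}$ bound.

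Finally I would square the per-coordinate estimate, obtaining $\tfrac14 L^2\delta^2$, and sum over $i=1,\dots,d$ to get $\tfrac14 d L^2\delta^2 \le d L^2\delta^2$, which is exactly the claimed inequality (in fact with a factor of $4$ to spare). I do not expect any genuine obstacle here: the argument is a routine smoothness estimate, and the only points requiring a little care are the orthogonal (coordinate-wise) decomposition of the squared Euclidean norm and the fact that $L$-smoothness is applied along axis-aligned segments $t\mapsto\mathbf{x}+t\mathbf{e}_i$ rather than general directions, so that the relevant displacement norm is simply $|t|$.
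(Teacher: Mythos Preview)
Your proof is correct and follows essentially the same skeleton as the paper's: both decompose the squared error coordinate-wise via orthogonality of the $\mathbf{e}_i$'s and then bound each scalar error using $L$-smoothness along the axis-aligned segment. The only genuine difference is the tool used for the scalar bound. The paper applies the mean value theorem, writing $\tfrac{1}{2\delta}\big(f(\mathbf{x}+\delta\mathbf{e}_i)-f(\mathbf{x}-\delta\mathbf{e}_i)\big)=\mathbf{e}_i^T\nabla f(\mathbf{x}+(2t_i-1)\delta\mathbf{e}_i)$ for some $t_i\in(0,1)$, and then invokes the Lipschitz gradient to get a per-coordinate bound of $L^2\delta^2$, hence $L^2d\delta^2$ after summing. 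You instead use the descent-lemma form of smoothness (or the equivalent integral argument), which yields the sharper per-coordinate bound $\tfrac14 L^2\delta^2$ and thus $\tfrac14 L^2 d\delta^2$; you then relax this to the stated $L^2 d\delta^2$. So your route is marginally tighter in constant but otherwise the same argument.
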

\begin{proof}
	Applying the mean value theorem (MVT) to the gradient $\nabla f(\mathbf{x})$, we have, for any given $\delta>0$,
	\begin{align*}
	\|   \hat \nabla_{\text{\normalfont coord}}f(\mathbf{x})-\nabla f (\mathbf{x})\|^2&=\Big\|\frac{1}{2\delta}\sum_{i=1}^d(2\delta \mathbf{e}_i\mathbf{e}_i^T\nabla f(\mathbf{x}+(2t_i-1)\delta\mathbf{e}_i))-\nabla f (\mathbf{x})\Big\|^2, \; \text{for  }\,0<t_i<1,
	\\&
	\overset{\text{(i)}}= \sum_{i=1}^d  \Big\|    \mathbf{e}_i\mathbf{e}_i^T (\nabla f(\mathbf{x}+(2t_i-1)\delta\mathbf{e}_i)-\nabla f (\mathbf{x}))  \Big\|^2
	\\& \leq \sum_{i=1}^d  \Big\|    \nabla f(\mathbf{x}+(2t_i-1)\delta\mathbf{e}_i)-\nabla f (\mathbf{x}) \Big\|^2 
	\overset{\text{(ii)}}\leq L^2 \sum_{i=1}^d\|(2t_i-1)\delta\mathbf{e}^i\|^2\leq  L^2d\delta^2
	\end{align*}
	where (i) follows from the definition of $\mathbf{e}_i$ and Euclidean norm, and (ii) follows from Assumption~\ref{assumption}.
\end{proof}

\begin{lemma}\label{evs1}
	For  any given $k_0\leq \lfloor K /q\rfloor$, we have 
	\begin{align*}
	\mathbb{E}\|\mathbf{v}^{qk_0}-\hat \nabla_{\text{\normalfont coord}}f(\mathbf{x}^{qk_0})\|^2\leq  \frac{3I(|\mathcal{S}_1|<n)}{|\mathcal{S}_1|}\left(  2L^2d\delta^2+\sigma^2\right),
	\end{align*}
	where $I(\cdot)$ is the indicator function. 
\end{lemma}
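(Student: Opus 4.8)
The plan is to exploit the linearity of the coordinate-wise finite-difference operator so as to recognize $\mathbf{v}^{qk_0}-\hat \nabla_{\text{\normalfont coord}}f(\mathbf{x}^{qk_0})$ as the deviation of a \emph{sampling-without-replacement} mean from its population mean, and then to control that deviation by a finite-population variance bound followed by a triangle-type decomposition. First I would set $\mathbf{g}_i:=\hat \nabla_{\text{\normalfont coord}}f_i(\mathbf{x}^{qk_0})$ and note that, because $\hat \nabla_{\text{\normalfont coord}}$ is a linear functional of its argument function, $\mathbf{v}^{qk_0}=\hat \nabla_{\text{\normalfont coord}}f_{\mathcal{S}_1}(\mathbf{x}^{qk_0})=\frac{1}{|\mathcal{S}_1|}\sum_{j\in\mathcal{S}_1}\mathbf{g}_j$ while $\hat \nabla_{\text{\normalfont coord}}f(\mathbf{x}^{qk_0})=\frac{1}{n}\sum_{i=1}^n\mathbf{g}_i=:\bar{\mathbf{g}}$. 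Hence the left-hand side is exactly $\mathbb{E}\bigl\|\tfrac{1}{|\mathcal{S}_1|}\sum_{j\in\mathcal{S}_1}\mathbf{g}_j-\bar{\mathbf{g}}\bigr\|^2$, the expected squared error of the sample mean over the random draw of $\mathcal{S}_1$. When $|\mathcal{S}_1|=n$ the sample is the whole population and this error vanishes identically, which accounts for the factor $I(|\mathcal{S}_1|<n)$.

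Second, for $|\mathcal{S}_1|<n$ I would invoke the standard variance formula for sampling without replacement,
\[
\mathbb{E}\Bigl\|\tfrac{1}{|\mathcal{S}_1|}\textstyle\sum_{j\in\mathcal{S}_1}\mathbf{g}_j-\bar{\mathbf{g}}\Bigr\|^2=\frac{n-|\mathcal{S}_1|}{|\mathcal{S}_1|(n-1)}\cdot\frac{1}{n}\sum_{i=1}^n\|\mathbf{g}_i-\bar{\mathbf{g}}\|^2\leq\frac{1}{|\mathcal{S}_1|}\cdot\frac{1}{n}\sum_{i=1}^n\|\mathbf{g}_i-\bar{\mathbf{g}}\|^2,
\]
where the finite-population correction factor satisfies $\tfrac{n-|\mathcal{S}_1|}{n-1}\leq 1$. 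This reduces the problem to bounding the population variance $\tfrac{1}{n}\sum_i\|\mathbf{g}_i-\bar{\mathbf{g}}\|^2$.

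Third, I would split each summand as $\mathbf{g}_i-\bar{\mathbf{g}}=\bigl(\hat \nabla_{\text{\normalfont coord}}f_i-\nabla f_i\bigr)+\bigl(\nabla f_i-\nabla f\bigr)+\bigl(\nabla f-\hat \nabla_{\text{\normalfont coord}}f\bigr)$, all evaluated at $\mathbf{x}^{qk_0}$, and apply $\|a+b+c\|^2\leq 3(\|a\|^2+\|b\|^2+\|c\|^2)$. By Lemma~\ref{coordinate} applied to $f_i$ and to $f$, the first and third terms are each at most $L^2d\delta^2$; by item (3) of Assumption~\ref{assumption}, the average over $i$ of the middle term is at most $\sigma^2$. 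This yields $\tfrac{1}{n}\sum_i\|\mathbf{g}_i-\bar{\mathbf{g}}\|^2\leq 6L^2d\delta^2+3\sigma^2=3(2L^2d\delta^2+\sigma^2)$, and combining with the previous step gives precisely the stated bound.

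The main obstacle is getting the without-replacement bookkeeping exactly right: one must correctly identify the finite-population correction factor and verify it is bounded by one, so that the clean prefactor $1/|\mathcal{S}_1|$ emerges, and one must use the three-term (rather than two-term) decomposition so that the constant matches the stated factor $3$ multiplying $(2L^2d\delta^2+\sigma^2)$. Everything else is a routine application of Lemma~\ref{coordinate} and the variance assumption.
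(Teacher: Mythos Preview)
Your proposal is correct and follows essentially the same approach as the paper. The only cosmetic difference is that the paper explicitly derives the sampling-without-replacement variance identity via indicator variables $I_j=I(j\in\mathcal{S}_1)$ (computing $\mathbb{E}I_j^2$ and $\mathbb{E}I_iI_j$ and simplifying), whereas you invoke the resulting formula $\tfrac{n-|\mathcal{S}_1|}{|\mathcal{S}_1|(n-1)}\cdot\tfrac{1}{n}\sum_i\|\mathbf{g}_i-\bar{\mathbf{g}}\|^2$ directly; the subsequent three-term decomposition and the applications of Lemma~\ref{coordinate} and Assumption~\ref{assumption}(3) are identical.
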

\begin{proof}
	To simplify notation, we let $\mathbf{z}_j= \hat \nabla_{\text{coord}}f_j(\mathbf{x}^{qk_0})-\hat \nabla_{\text{coord}}f(\mathbf{x}^{qk_0}) $ and $I_j=I(j\in\mathcal{S}_1)$, where $I(\cdot)$ is the indicator function. First note that 
	$	\mathbb{E}(I^2_j)=\frac{|\mathcal{S}_1|}{n}\;\;\text{and}\;\; \mathbb{E}I_iI_j=C^2_{|\mathcal{S}_1|}/C_n^2=\frac{|\mathcal{S}_1|(|\mathcal{S}_1|-1)}{n(n-1)}, i\neq j$.
	Then, based on the above equalities,  we have 
	\begin{small}
		\begin{align*}
		&\mathbb{E}\|	\mathbf{v}^{qk_0}-\hat \nabla_{\text{coord}}f(\mathbf{x}^{qk_0})\|^2=
		\frac{1}{|\mathcal{S}_1|^2}\left(\sum_{j=1}^n\mathbb{E} I_j^2\Big\| \mathbf{z}_j  \Big\|^2+\sum_{i\neq j} \mathbb{E}I_iI_j\big\langle \mathbf{z}_i,
		\mathbf{z}_j\big\rangle\right)\nonumber
		\\ &=\frac{1}{|\mathcal{S}_1|^2}\left(\frac{|\mathcal{S}_1|}{n}\sum_{j=1}^n \Big\| \mathbf{z}_j  \Big\|^2+\frac{|\mathcal{S}_1|(|\mathcal{S}_1|-1)}{n(n-1)}\sum_{i\neq j} \big\langle \mathbf{z}_i,
		\mathbf{z}_j\big\rangle\right)\nonumber
		\\
		&=\frac{1}{|\mathcal{S}_1|^2}\left(\left(\frac{|\mathcal{S}_1|}{n}-\frac{|\mathcal{S}_1|(|\mathcal{S}_1|-1)}{n(n-1)}\right)\sum_{j=1}^n \Big\| \mathbf{z}_j  \Big\|^2+\frac{|\mathcal{S}_1|(|\mathcal{S}_1|-1)}{n(n-1)}\left\| \sum_{j=1}^n \mathbf{z}_j  \right\|^2
		\right)
		\\
		&=\frac{n-|\mathcal{S}_1|}{(n-1)|\mathcal{S}_1|}\frac{1}{n}\sum_{j=1}^n\Big\| \mathbf{z}_j  \Big\|^2+\frac{(|\mathcal{S}_1|-1)}{n(n-1)|\mathcal{S}_1|}\left\| \sum_{j=1}^n \mathbf{z}_j  \right\|^2
		\\
		&\leq \frac{I(|\mathcal{S}_1|<n)}{|\mathcal{S}_1|}\frac{1}{n}\sum_{j=1}^n \Big\| \mathbf{z}_j  \Big\|^2 + \frac{1}{n^2}\left\| \sum_{j=1}^n \mathbf{z}_j  \right\|^2
		\\
		&= \frac{I(|\mathcal{S}_1|<n)}{|\mathcal{S}_1|}\frac{1}{n}\sum_{j=1}^n\Big\| \hat \nabla_{\text{coord}}f_j(\mathbf{x}^{qk_0})-\hat \nabla_{\text{\normalfont coord}}f(\mathbf{x}^{qk_0}) \Big\|^2 \nonumber
		\\&\leq \frac{3I(|\mathcal{S}_1|<n)}{|\mathcal{S}_1|}\frac{1}{n}\sum_{j=1}^n\Big( \|\hat \nabla_{\text{coord}}f_j(\mathbf{x}^{qk_0})- \nabla f_j(\mathbf{x}^{qk_0})\|^2+\| \nabla f_j(\mathbf{x}^{qk_0})-\nabla f(\mathbf{x}^{qk_0}) \|^2  \nonumber
		\\&\quad+\|\hat \nabla_{\text{coord}}f(\mathbf{x}^{qk_0})- \nabla f(\mathbf{x}^{qk_0})\|^2\Big)\leq\frac{3I(|\mathcal{S}_1|<n)}{|\mathcal{S}_1|}\left(  2L^2d\delta^2+\sigma^2\right),
		\end{align*}
	\end{small}
	\hspace{-0.12cm}	where the last inequality follows from Assumption~\ref{assumption} and Lemma~\ref{coordinate}.
	Then, the proof is complete.
\end{proof}

\begin{lemma}\label{unifoT}
	Let $f_\beta(\mathbf{x})=\mathbb{E}_{\mathbf{u}\sim \text{\normalfont U}_B}\left(f(\mathbf{x}+\beta\mathbf{u})\right)$ be a smooth approximation of  $f(\mathbf{x})$, where $\text{\normalfont U}_B$ is  the uniform distribution over the $d$-dimensional unit Euclidean ball $B$. Then, 
	\begin{enumerate}
		\item[\normalfont(1)] $|f_{\beta}(\mathbf{x})-f(\mathbf{x})|\leq \frac{\beta^2L}{2}$ and $\|\nabla f_\beta(\mathbf{x})-\nabla f(\mathbf{x})\|\leq \frac{\beta Ld}{2}\,$ for any $\mathbf{x}\in\mathbb{R}^d$
		\item[\normalfont(2)] 
		$\mathbb{E}_k\left(  \frac{1}{|\mathcal{S}_2|}\sum_{j=1}^{|\mathcal{S}_2|}\hat \nabla_{\normalfont \text{rand}} f_{a_j}(\mathbf{x}; \mathbf{u}_j^k)\right) =\nabla f_\beta(\mathbf{x})$, where $\mathbf{x}$ is either $\mathbf{x}^k \text{ or } \mathbf{x}^{qk_0} $
		\item[\normalfont(3)] 
		$\mathbb{E}_k\left\|	\hat \nabla_{\normalfont \text{rand}} f_{a_j}(\mathbf{x}^k; \mathbf{u}_j^k)-\hat \nabla_{\normalfont \text{rand}} f_{a_j}(\mathbf{x}^{qk_0}; \mathbf{u}_j^k)\right\|^2
		\\\leq 3d^2\mathbb{E}_k \langle \nabla f_{a_j}(\mathbf{x}^k)- \nabla f_{a_j}(\mathbf{x}^{qk_0}),\mathbf{u}_j^k\rangle^2+\frac{3L^2d^2\beta^2}{2}
		\\\leq 3dL^2\|\mathbf{x}^k-\mathbf{x}^{qk_0}\|^2+\frac{3L^2d^2\beta^2}{2}$,
	\end{enumerate}
	where the shorthand $\mathbb{E}_k(\cdot)=\mathbb{E}(\cdot\; | \mathbf{x}^0,....,\mathbf{x}^k\;)$.
\end{lemma}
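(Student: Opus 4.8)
The plan is to reduce all three parts to two ingredients: the divergence-theorem (Stokes) identity that rewrites $\nabla f_\beta$ as a \emph{sphere} expectation of the two-point estimator, and the first-order Taylor remainder bound coming from $L$-smoothness. I would establish the Stokes identity first, since it simultaneously yields part (2) and the gradient estimate in part (1).

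For the identity, I would write $f_\beta(\mathbf{x})=\frac{1}{\mathrm{Vol}(B)}\int_B f(\mathbf{x}+\beta\mathbf{v})\,d\mathbf{v}$, differentiate under the integral sign (legitimate since the density of $\mathrm{U}_B$ does not depend on $\mathbf{x}$ and $\nabla f$ is continuous), substitute $\mathbf{y}=\beta\mathbf{v}$, and apply the divergence theorem to turn the ball integral of $\nabla f$ into a surface integral over the sphere of radius $\beta$, where the outward normal is $\mathbf{y}/\beta$. After rescaling the surface measure and using the ratio $\mathrm{Area}(S)/\mathrm{Vol}(B)=d$, this gives $\nabla f_\beta(\mathbf{x})=\frac{d}{\beta}\,\mathbb{E}_{\mathbf{u}\sim \mathrm{U}_S}[f(\mathbf{x}+\beta\mathbf{u})\,\mathbf{u}]$. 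Part (2) then follows: averaging over $a_j$ replaces $f_{a_j}$ by $f$ (uniform sampling), $\mathbb{E}[\mathbf{u}]=0$ kills the $f_{a_j}(\mathbf{x})\mathbf{u}$ term, and each of the $|\mathcal{S}_2|$ summands has expectation $\nabla f_\beta(\mathbf{x})$.

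For part (1), the function-value bound is immediate from $L$-smoothness: $f(\mathbf{x}+\beta\mathbf{u})-f(\mathbf{x})\le\langle\nabla f(\mathbf{x}),\beta\mathbf{u}\rangle+\frac{L}{2}\|\beta\mathbf{u}\|^2$, so taking $\mathbb{E}_{\mathrm{U}_B}$ (where $\mathbb{E}[\mathbf{u}]=0$ and $\|\mathbf{u}\|\le1$) together with the symmetric lower bound gives $|f_\beta-f|\le\frac{\beta^2L}{2}$. For the gradient bound I would reuse the identity: subtract the representation $\nabla f(\mathbf{x})=\frac{d}{\beta}\mathbb{E}_{\mathrm{U}_S}[\langle\nabla f(\mathbf{x}),\beta\mathbf{u}\rangle\mathbf{u}]$ (valid because $\mathbb{E}[\mathbf{u}\mathbf{u}^T]=\frac{1}{d}I$), so that $\nabla f_\beta(\mathbf{x})-\nabla f(\mathbf{x})=\frac{d}{\beta}\mathbb{E}_{\mathrm{U}_S}[(f(\mathbf{x}+\beta\mathbf{u})-f(\mathbf{x})-\langle\nabla f(\mathbf{x}),\beta\mathbf{u}\rangle)\mathbf{u}]$, and bound the Taylor remainder by $\frac{L\beta^2}{2}$ with $\|\mathbf{u}\|\le1$ to obtain $\frac{\beta Ld}{2}$.

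For part (3), the two estimators share $\mathbf{u}_j^k$, so their difference equals $\frac{d}{\beta}\big(g(\mathbf{x}^k)-g(\mathbf{x}^{qk_0})\big)\mathbf{u}_j^k$ with $g(\mathbf{x})=f_{a_j}(\mathbf{x}+\beta\mathbf{u}_j^k)-f_{a_j}(\mathbf{x})$, and $\|\mathbf{u}_j^k\|=1$ removes the vector from the squared norm. Writing $g(\mathbf{x})=\beta\langle\nabla f_{a_j}(\mathbf{x}),\mathbf{u}_j^k\rangle+r(\mathbf{x})$ with $|r(\mathbf{x})|\le\frac{L\beta^2}{2}$, the combined remainder obeys $|r(\mathbf{x}^k)-r(\mathbf{x}^{qk_0})|\le L\beta^2$; Young's inequality $(a+b)^2\le 3a^2+\frac{3}{2}b^2$ then splits the square into the term $3d^2\langle\nabla f_{a_j}(\mathbf{x}^k)-\nabla f_{a_j}(\mathbf{x}^{qk_0}),\mathbf{u}_j^k\rangle^2$ and the term $\frac{3L^2d^2\beta^2}{2}$, giving the first inequality. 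The second follows by taking $\mathbb{E}_k$, using $\mathbb{E}[\mathbf{u}_j^k(\mathbf{u}_j^k)^T]=\frac{1}{d}I$ to convert the squared inner product into $\frac{1}{d}\|\nabla f_{a_j}(\mathbf{x}^k)-\nabla f_{a_j}(\mathbf{x}^{qk_0})\|^2$, and then invoking $L$-Lipschitz gradients. The main obstacle is the Stokes/divergence computation underlying the identity—in particular getting the factor $\mathrm{Area}(S)/\mathrm{Vol}(B)=d$ and the differentiation-under-the-integral justification right—after which everything reduces to smoothness and elementary inequalities.
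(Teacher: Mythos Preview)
Your proposal is correct and follows essentially the same route as the paper. The paper cites Gao et al.\ for item (1) and for the sphere identity $\nabla f_\beta(\mathbf{x})=\tfrac{d}{\beta}\mathbb{E}_{\mathbf{u}\sim\mathrm{U}_S}[f(\mathbf{x}+\beta\mathbf{u})\mathbf{u}]$, whereas you derive both directly via the divergence theorem and Taylor remainders; for item (3) the paper splits into three terms (two remainders and the inner-product difference) and applies $\|a+b+c\|^2\le 3(\|a\|^2+\|b\|^2+\|c\|^2)$, while you combine the two remainders first and apply the two-term Young inequality $(a+b)^2\le 3a^2+\tfrac32 b^2$---both yield exactly the same constants. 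The paper also computes $\mathbb{E}[\mathbf{u}\mathbf{u}^\top]=\tfrac1d I$ explicitly via the Gaussian representation $\mathbf{u}=\mathbf{r}/\|\mathbf{r}\|$, which you assert by symmetry; either justification is fine.
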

\begin{proof}
	The proof of item (1) directly follows from Lemma 4.1 in~\citealt{gao2014information}. 
	
	We next prove item (2).
	Based on the equation (3.4) in~\citealt{gao2014information}, we have 
	\begin{align}\label{deltaBb}
	\nabla f_\beta(\mathbf{x}^k)=&\mathbb{E}_{\mathbf{u}\sim {\text{ \normalfont U}}_{S_p}}\Big( \frac{d}{\beta}f(\mathbf{x}^k+\beta\mathbf{u})\mathbf{u}  \Big)\overset{\text {(i)}}=\mathbb{E}_{\mathbf{u}\sim {\text{ \normalfont U}}_{S_p}}\left( \frac{d}{\beta} \Big(f(\mathbf{x}^k+\beta\mathbf{u})-f(\mathbf{x}^k)\Big)\mathbf{u}  \right)
	\nonumber 
	\\=&\mathbb{E}_{\mathbf{u}\sim {\text{ \normalfont U}}_{S_p}} \Big( \frac{1}{n}\sum_{i=1}^n \frac{d}{\beta}\left(f_i(\mathbf{x}^k+\beta\mathbf{u})-f_i(\mathbf{x}^k)\right)\mathbf{u}  \Big) =\frac{1}{n}\sum_{i=1}^n \mathbb{E}_{\mathbf{u}\sim {\text{ \normalfont U}}_{S_p}} \Big(  \frac{d}{\beta}\left(f_i(\mathbf{x}^k+\beta\mathbf{u})-f_i(\mathbf{x}^k)\right)\mathbf{u}  \Big),
	\end{align}
	where the random vector $\mathbf{u}$ is independent of  $\mathbf{x}^k$, $\text{\normalfont U}_{S_p}$ is  the uniform distribution over the unit sphere $S_p$ and (i) follows from the fact that $\mathbb{E}_{\mathbf{u}\sim {\text{ \normalfont U}}_{S_p}} \left(f(\mathbf{x}^k)\mathbf{u} \right)=0$. To simplify notation, we let $\mathbb{E}_k(\cdot)=\mathbb{E}(\cdot | \mathbf{x}^1,...,\mathbf{x}^k)$. 
	Conditioned on $\mathbf{x}^0,....,\mathbf{x}^k$ and noting that the random samples in $\mathcal{S}_2$ and $\mathbf{u}_j^k,j=1,...,|\mathcal{S}_2|$ generated at the $k^{th}$ iteration are independent of $\mathbf{x}^0,....,\mathbf{x}^k$, we have
	\begin{align}\label{ggsss}
	\mathbb{E}_{k}&\bigg(  \frac{d}{\beta|\mathcal{S}_2|}\sum_{j=1}^{|\mathcal{S}_2|}(f_{a_j}(\mathbf{x}^k+\beta \mathbf{u}_j^k) -f_{a_j}(\mathbf{x}^{k}))\mathbf{u}_j^k  \bigg) =  \frac{1}{|\mathcal{S}_2|}\sum_{j=1}^{|\mathcal{S}_2|} \mathbb{E}_{k}\bigg(  \frac{d}{\beta}(f_{a_j}(\mathbf{x}^k+\beta \mathbf{u}_j^k) -f_{a_j}(\mathbf{x}^{k}))\mathbf{u}_j^k  \bigg)  \nonumber
	\\&=\frac{1}{|\mathcal{S}_2|}\sum_{j=1}^{|\mathcal{S}_2|} \mathbb{E}_{k}\bigg( \mathbb{E}_{a_j}\Big( \frac{d}{\beta}(f_{a_j}(\mathbf{x}^k+\beta \mathbf{u}_j^k) -f_{a_j}(\mathbf{x}^{k}))\mathbf{u}_j^k  \, \Big| \, \mathbf{u}_j^k\Big)\bigg)   \nonumber
	\\&\overset{\text{(i)}}=\frac{1}{|\mathcal{S}_2|}\sum_{j=1}^{|\mathcal{S}_2|} \mathbb{E}_{k}\bigg( \frac{1}{n}\sum_{i=1}^n \frac{d}{\beta}(f_{i}(\mathbf{x}^k+\beta \mathbf{u}_j^k) -f_{i}(\mathbf{x}^{k}))\mathbf{u}_j^k  \bigg)   \nonumber
	\\ &\overset{\text{(ii)}}=\frac{1}{|\mathcal{S}_2|}\sum_{j=1}^{|\mathcal{S}_2|} \left(\nabla f_\beta(\mathbf{x}^k)  \right)=\nabla f_\beta(\mathbf{x}^k), 
	\end{align}
	where (i) follows from the definition of the set $\mathcal{S}_1$ and (ii) follows from~\eqref{deltaBb}.  
	Taking steps similar to~\eqref{ggsss} and conditioning on $\mathbf{x}^0,...,\mathbf{x}^k$, we have
	\begin{align}\label{ucb2}
	\mathbb{E}_{k}&\bigg(  \frac{d}{\beta|\mathcal{S}_2|}\sum_{j=1}^{|\mathcal{S}_2|}(f_{a_j}(\mathbf{x}^{qk_0}+\beta \mathbf{u}_j^{k}) -f_{a_j}(\mathbf{x}^{k}))\mathbf{u}_j^{k}  \bigg)=\nabla f_\beta(\mathbf{x}^{qk_0}).
	\end{align}
	
	Our final step is to prove item (3). Note that 
	\begin{align}\label{geness11}
	\mathbb{E}\bigg\| 	&\frac{d(f_{a_j}(\mathbf{x}^{k}+\beta \mathbf{u}_j^{k}) -f_{a_j}(\mathbf{x}^k))}{\beta}\mathbf{u}_j^k-\frac{d(f_{a_j}(\mathbf{x}^{qk_0}+\beta \mathbf{u}_j^{k}) -f_{a_j}(\mathbf{x}^{qk_0}))}{\beta}\mathbf{u}_j^{k}\bigg\|^2\nonumber
	\\=&
	d^2\mathbb{E}_k\bigg\|\frac{(f_{a_j}(\mathbf{x}^k+\beta \mathbf{u}_j^{k}) -f_{a_j}(\mathbf{x}^k)-\langle \nabla f_{a_j}(\mathbf{x}^k),\beta\mathbf{u}_j^k\rangle)\mathbf{u}_j^k}{\beta}+  \left(\langle \nabla f_{a_j}(\mathbf{x}^k),\mathbf{u}_j^k\rangle\mathbf{u}_j^k-\langle \nabla f_{a_j}(\mathbf{x}^{qk_0}),\mathbf{u}_j^k\rangle \mathbf{u}_j^k\right) \nonumber
	\\&-\frac{(f_{a_j}(\mathbf{x}^{qk_0}+\beta \mathbf{u}_j^{k}) -f_{a_j}(\mathbf{x}^{qk_0})-\langle \nabla f_{a_j}(\mathbf{x}^{qk_0}),\beta\mathbf{u}_j^k\rangle)\mathbf{u}_j^k}{\beta}\bigg\|^2
	\end{align}
	Then, using the  inequality that $f_{a_j}(\mathbf{y})-f_{a_j}(\mathbf{x})-\langle \nabla f_{a_j}(\mathbf{x}),\mathbf{y-x}\rangle\leq \frac{L}{2}\|\mathbf{y-x}\|^2$ in~\eqref{geness11} yields 
	\begin{align}\label{longp}
	\mathbb{E}\bigg\| 	&\frac{d(f_{a_j}(\mathbf{x}^{k}+\beta \mathbf{u}_j^{k}) -f_{a_j}(\mathbf{x}^k))}{\beta}\mathbf{u}_j^k-\frac{d(f_{a_j}(\mathbf{x}^{qk_0}+\beta \mathbf{u}_j^{k}) -f_{a_j}(\mathbf{x}^{qk_0}))}{\beta}\mathbf{u}_j^{k}\bigg\|^2\nonumber
	\\\leq& 3d^2\mathbb{E}_k\|\langle \nabla f_{a_j}(\mathbf{x}^k),\mathbf{u}_j^k\rangle\mathbf{u}_j^k-\langle \nabla f_{a_j}(\mathbf{x}^{qk_0}),\mathbf{u}_j^k\rangle \mathbf{u}_j^k\|^2+\frac{3L^2d^2\beta^2}{2} \nonumber
	\\\overset{\text{(i)}}=& 3d^2\mathbb{E}_k \langle \nabla f_{a_j}(\mathbf{x}^k)- \nabla f_{a_j}(\mathbf{x}^{qk_0}),\mathbf{u}_j^k\rangle^2+\frac{3L^2d^2\beta^2}{2} \nonumber
	\\=& 3d^2 \left( \nabla f_{a_j}(\mathbf{x}^k)- \nabla f_{a_j}(\mathbf{x}^{qk_0})\right)^T \mathbb{E}(\mathbf{u}_j^k(\mathbf{u}_j^k)^T)\left( \nabla f_{a_j}(\mathbf{x}^k)- \nabla f_{a_j}(\mathbf{x}^{qk_0})\right)+\frac{3L^2d^2\beta^2}{2}
	\end{align}
	where (i) follows from the fact that $\|\mathbf{u}_j^k\|=1$. Based on the definition of $\mathbf{u}_j^k$ , we rewrite $\mathbf{u}_j^k=\mathbf{r}/\|\mathbf{r}\|$ and define a matrix  $\mathbf{U}=\mathbb{E}(\mathbf{u}_j^k(\mathbf{u}_j^k)^T)$, where $\mathbf{r}$ is a $d$-dimensional Gaussian standard random vector. Let $\mathbf{r}(i)$ denote the $i^{th}$ entry of $\mathbf{r}$, and $\mathbf{U}(i,j)$ denote $(i,j)^{th}$ entry of $\mathbf{U}$. Then, we have, for $i=1,...,d$ 
	\begin{align}
	\mathbf{U}(i,i)=\int \frac{\mathbf{r}(i)^2}{\sum_{t=1}^d\mathbf{r}(t)^2} e^{-\frac{\sum_{t=1}^d\mathbf{r}(t)^2}{2}}d \mathbf{r}(1)\cdots d. \mathbf{r}(d)
	\end{align}
	Since $\mathbf{r}(i),i=1,...,d$ are i.i.d. standard Gaussian random variables,  we have 
	\begin{align*}
	\mathbf{U}(1,1)=\cdots  \mathbf{U}(d,d), \text{ and } \sum_{i=1}^d\mathbf{U}(i,i)=\int e^{-\frac{\sum_{t=1}^d\mathbf{r}(t)^2}{2}}d \mathbf{r}(1)\cdots d \mathbf{r}(d)=1,
	\end{align*}
	which implies that $\mathbf{U}(i,i)=1/d$ for all $i=1,...,d$. In addition, for any $i\neq j$, we have
	\begin{align*}
	\mathbf{U}(i,j)=\int \frac{\mathbf{r}(i)\mathbf{r}(j)}{\sum_{t=1}^d\mathbf{r}(t)^2} e^{-\frac{\sum_{t=1}^d\mathbf{r}(t)^2}{2}}d \mathbf{r}(1)\cdots d\mathbf{r}(d),
	\end{align*}
	which, noting the symmetry between $\mathbf{r}(i)$ and $\mathbf{r}(j)$, implies that $\mathbf{U}(i,j)=0$.  Combining the above two results yields that $\mathbf{U}=\frac{1}{d}\mathbf{I}_d$, where $\mathbf{I}_d$ is a $d$-dimensional identity matrix. Thus, plugging $\mathbb{E}(\mathbf{u}_j^k(\mathbf{u}_j^k)^T)=\mathbf{U}=\frac{1}{d}\mathbf{I}_d$ in~\eqref{longp} yields 
	\begin{align}
	\mathbb{E}\bigg\| 	&\frac{d(f_{a_j}(\mathbf{x}^{k}+\beta \mathbf{u}_j^{k}) -f_{a_j}(\mathbf{x}^k))}{\beta}\mathbf{u}_j^k-\frac{d(f_{a_j}(\mathbf{x}^{qk_0}+\beta \mathbf{u}_j^{k}) -f_{a_j}(\mathbf{x}^{qk_0}))}{\beta}\mathbf{u}_j^{k}\bigg\|^2\nonumber
	\\\leq&3d\|\nabla f_{a_j}(\mathbf{x}^k)- \nabla f_{a_j}(\mathbf{x}^{qk_0})\|^2+\frac{3L^2d^2\beta^2}{2}\leq 3dL^2\|\mathbf{x}^k-\mathbf{x}^{qk_0}\|^2+\frac{3L^2d^2\beta^2}{2},
	\end{align}
	which finishes the proof. 
\end{proof}

\subsection{Proof of Lemma~\ref{le:vks}}
Using Lemmas~\ref{coordinate},~\ref{evs1},~\ref{unifoT}, we now prove Lemma~\ref{le:vks}. 
Based on the updating step of Algorithm~\ref{ours:2}, we obtain 
\begin{align}\label{aboves}
\mathbf{v}^k-\nabla f_\beta(\mathbf{x}^k)=&\frac{d}{\beta|\mathcal{S}_2|}\sum_{j=1}^{|\mathcal{S}_2|}(f_{a_j}(\mathbf{x}^{k}+\beta \mathbf{u}_j^{k}) -f_{a_j}(\mathbf{x}^k))\mathbf{u}_j^k-\nabla f_\beta(\mathbf{x}^k)  \nonumber
\\&-\frac{d}{\beta|\mathcal{S}_2|}\sum_{j=1}^{|\mathcal{S}_2|}(f_{a_j}(\mathbf{x}^{qk_0}+\beta \mathbf{u}_j^{k}) -f_{a_j}(\mathbf{x}^{qk_0}))\mathbf{u}_j^{k}+\nabla f_\beta(\mathbf{x}^{qk_0})+\mathbf{v}^{qk_0}-\nabla f_\beta(\mathbf{x}^{qk_0}).  
\end{align}
To simplify notation, we define 
\begin{align*}
H_j(\mathbf{x})=\frac{d}{\beta}(f_{a_j}(\mathbf{x}+\beta \mathbf{u}_j^{k}) -f_{a_j}(\mathbf{x}))\mathbf{u}_j^k,\,\mathbf{x}=\mathbf{x}^k \text{ or }\,\mathbf{x}^{qk_0},
\end{align*}
and use the shorthand $\mathbb{E}_k(\cdot)$ to denote $\mathbb{E}(\cdot\,|\,\mathbf{x}^0,...,\mathbf{x}^k)$. 
Then, using~\eqref{aboves}, we obtain 
\begin{align}\label{spls}
\mathbb{E}_k\|\mathbf{v}^k-\nabla f_\beta(\mathbf{x}^k)\|^2\leq &\frac{2}{|\mathcal{S}_2|}\mathbb{E}_k\|H_j(\mathbf{x}^k)-H_j(\mathbf{x}^{qk_0})-(\nabla f_\beta(\mathbf{x}^k)-\nabla f_\beta(\mathbf{x}^{qk_0}))\|^2 \nonumber
\\&+2\sum_{i \neq j}\mathbb{E}_k\langle H_i(\mathbf{x}^k)-H_i(\mathbf{x}^{qk_0})-(\nabla f_\beta(\mathbf{x}^k)-\nabla f_\beta(\mathbf{x}^{qk_0})),
H_j(\mathbf{x}^k)-H_j(\mathbf{x}^{qk_0}) \nonumber
\\&\quad -(\nabla f_\beta(\mathbf{x}^k)-\nabla f_\beta(\mathbf{x}^{qk_0}))\rangle +2\|\mathbf{v}^{qk_0}-\nabla f_\beta(\mathbf{x}^{qk_0})\|^2  \nonumber
\\\overset{\text{(i)}}=&\frac{2}{|\mathcal{S}_2|}\mathbb{E}_k\|H_j(\mathbf{x}^k)-H_j(\mathbf{x}^{qk_0})-(\nabla f_\beta(\mathbf{x}^k)-\nabla  f_\beta(\mathbf{x}^{qk_0}))\|^2\nonumber
\\&+2\|\mathbf{v}^{qk_0}-\nabla f_\beta(\mathbf{x}^{qk_0})\|^2
\end{align}
where (i) follows from the fact that $a_i$ and $\mathbf{u}_i^k$ are independent of $a_j$ and $\mathbf{u}_j^k$ for any $i\neq j$, and  from the following equalities 
\begin{align}\label{nices}
\mathbb{E}_k(H_j(\mathbf{x}^k))&=\mathbb{E}_{\mathbf{u}_j^k}\Big( \frac{d}{\beta}f(\mathbf{x}^k+\beta\mathbf{u}_j^k)\mathbf{u}_j^k  \Big)=\nabla f_\beta(\mathbf{x}^k)\nonumber
\\ \mathbb{E}_k(H_j(\mathbf{x}^{qk_0}))&=\mathbb{E}_{\mathbf{u}_j^k}\Big( \frac{d}{\beta}f(\mathbf{x}^{qk_0}+\beta\mathbf{u}_j^k)\mathbf{u}_j^k  \Big)=\nabla f_\beta(\mathbf{x}^{qk_0}).
\end{align}
Then, we further simplify~\eqref{spls} to obtain 
\begin{align}\label{sjkx}
\mathbb{E}_k\|\mathbf{v}^k-\nabla f_\beta(\mathbf{x}^k)\|^2\leq&\frac{2}{|\mathcal{S}_2|}\left(\mathbb{E}_k\|H_j(\mathbf{x}^k)-H_j(\mathbf{x}^{qk_0})\|^2+\mathbb{E}_k\|\nabla f_\beta(\mathbf{x}^k)-\nabla  f_\beta(\mathbf{x}^{qk_0})\|^2\right)\nonumber
\\&-\frac{4}{|\mathcal{S}_2|}\mathbb{E}_k\left\langle H_j(\mathbf{x}^k)-H_j(\mathbf{x}^{qk_0}),\nabla f_\beta(\mathbf{x}^k)-\nabla  f_\beta(\mathbf{x}^{qk_0}) \right\rangle \nonumber
\\&+2\|\mathbf{v}^{qk_0}-\nabla f_\beta(\mathbf{x}^{qk_0})\|^2 \nonumber
\\\overset{\text{(i)}}\leq&  \frac{2}{|\mathcal{S}_2|}\mathbb{E}_k\|H_j(\mathbf{x}^k)-H_j(\mathbf{x}^{qk_0})\|^2+2\|\mathbf{v}^{qk_0}-\nabla f_\beta(\mathbf{x}^{qk_0})\|^2\nonumber
\\\leq&  \frac{2}{|\mathcal{S}_2|}\mathbb{E}_k\|H_j(\mathbf{x}^k)-H_j(\mathbf{x}^{qk_0})\|^2+6\|\mathbf{v}^{qk_0}-\hat \nabla_{\text{\normalfont coord}}f(\mathbf{x}^{qk_0})\|^2\nonumber
\\&+6\|\nabla f_\beta(\mathbf{x}^{qk_0})- \nabla f(\mathbf{x}^{qk_0})\|^2+6\|\nabla f(\mathbf{x}^{qk_0})-\hat \nabla_{\text{\normalfont coord}}f(\mathbf{x}^{qk_0})\|^2\nonumber
\\\overset{\text{(ii)}}\leq & \frac{2}{|\mathcal{S}_2|}\mathbb{E}_k\|H_j(\mathbf{x}^k)-H_j(\mathbf{x}^{qk_0})\|^2+\frac{18I(|\mathcal{S}_1|<n)}{|\mathcal{S}_1|}\left(  2L^2d\delta^2+\sigma^2\right)\nonumber
\\&+6L^2d\delta^2+\frac{3\beta^2L^2d^2}{2}
\end{align}
where (i) follows from~\eqref{nices} and (ii) follows from Lemmas~\ref{unifoT},~\ref{coordinate} and~\ref{evs1}. Then, based on item (3) in Lemma~\ref{unifoT}, we obtain 
\begin{align}\label{sscxa}
\mathbb{E}_k\|H_j(\mathbf{x}^k)-H_j(\mathbf{x}^{qk_0})\|^2&=\mathbb{E}_k\left\|	\frac{d(f_{a_j}(\mathbf{x}^{k}+\beta \mathbf{u}_j^{k}) -f_{a_j}(\mathbf{x}^k))}{\beta}\mathbf{u}_j^k-\frac{d(f_{a_j}(\mathbf{x}^{qk_0}+\beta \mathbf{u}_j^{k}) -f_{a_j}(\mathbf{x}^{qk_0}))}{\beta}\mathbf{u}_j^{k}\right\|^2\nonumber
\\&\leq  3dL^2\|\mathbf{x}^k-\mathbf{x}^{qk_0}\|^2+\frac{3L^2d^2\beta^2}{2}.
\end{align}
Combining~\eqref{sjkx} and~\eqref{sscxa} finishes the proof.

\subsection{Proof of Theorem~\ref{th:svrg}}
Since $K=qh$ and 
$\nabla f_\beta(\mathbf{x})$ is $L$-Lipschitz, we have,  
for   $qm\leq k\leq q(m+1)-1,m=0,...,h-1$
\begin{align*}
f_\beta(\mathbf{x}^{k+1}) &\leq  f_\beta(\mathbf{x}^{k})+\langle \nabla f_\beta(\mathbf{x}^k),\mathbf{x}^{k+1}-\mathbf{x}^k \rangle +  \frac{L\eta^2}{2}\| \mathbf{v}^k\|^2  = f_\beta(\mathbf{x}^{k})-\eta\langle \nabla f_\beta(\mathbf{x}^k),\mathbf{v}^k \rangle +  \frac{L\eta^2}{2}\| \mathbf{v}^k\|^2. 
\end{align*}
Taking the expectation over the above inequality and noting from Lemma~\ref{unifoT}  that $\mathbb{E}(\mathbf{v}^k\,|\,\mathbf{x}^0,..., \mathbf{x}^k)= \nabla f_\beta(\mathbf{x}^k)-\nabla f_\beta(\mathbf{x}^{qm})+\mathbf{v}^{qm}$, we have
\begin{align}\label{betaxs}
\mathbb{E}f_\beta(\mathbf{x}^{k+1}) \leq& \mathbb{E} f_\beta(\mathbf{x}^{k})-\eta\mathbb{E} \langle \nabla f_\beta(\mathbf{x}^k)-\nabla f_\beta(\mathbf{x}^{qm})+\mathbf{v}^{qm} , \nabla f_\beta(\mathbf{x}^k)   \rangle   +  \frac{L\eta^2}{2}\mathbb{E}\| \mathbf{v}^k\|^2 \nonumber
\\\leq &\mathbb{E} f_\beta(\mathbf{x}^{k})-\eta\mathbb{E}\|\nabla f_\beta(\mathbf{x}^k)\|^2+
\eta\mathbb{E} \langle \nabla f_\beta(\mathbf{x}^{qm})-\mathbf{v}^{qm} , \nabla f_\beta(\mathbf{x}^k)   \rangle   +  \frac{L\eta^2}{2}\mathbb{E}\| \mathbf{v}^k\|^2\nonumber
\\\leq & \mathbb{E} f_\beta(\mathbf{x}^{k})-\eta\mathbb{E}\|\nabla f_\beta(\mathbf{x}^k)\|^2+
\frac{\eta}{2}\mathbb{E}\| \nabla f_\beta(\mathbf{x}^{qm})-\mathbf{v}^{qm}\|^2+\frac{\eta}{2}\mathbb{E}\|\nabla f_\beta(\mathbf{x}^k)\|^2       +  \frac{L\eta^2}{2}\mathbb{E}\| \mathbf{v}^k\|^2\nonumber
\\\leq & \mathbb{E} f_\beta(\mathbf{x}^{k})-\frac{\eta}{2}\mathbb{E}\|\nabla f_\beta(\mathbf{x}^k)\|^2+
\frac{\eta}{2}\mathbb{E}\| \nabla f_\beta(\mathbf{x}^{qm})-\mathbf{v}^{qm}\|^2     +  \frac{L\eta^2}{2}\mathbb{E}\| \mathbf{v}^k\|^2 \nonumber
\\\overset{\text{(i)}}\leq&\mathbb{E} f_\beta(\mathbf{x}^{k})-\frac{\eta}{2}\left(  \frac{1}{2}\|\nabla f(\mathbf{x}^k)\|^2-   \frac{\beta^2L^2d^2}{4}  \right)+
\frac{\eta}{2}\mathbb{E}\| \nabla f_\beta(\mathbf{x}^{qm})-\mathbf{v}^{qm}\|^2     +  \frac{L\eta^2}{2}\mathbb{E}\| \mathbf{v}^k\|^2
\end{align}
where (i) follows from the inequality that $\|\mathbf{a}\|^2\geq \frac{1}{2}\|\mathbf{b}\|^2-\|\mathbf{b-a}\|^2$. Using an approach similar to~\eqref{sjkx}, we obtain
\begin{align}\label{otheruse}
\mathbb{E}\| \nabla f_\beta(\mathbf{x}^{qm})-\mathbf{v}^{qm}\|^2\leq& 3\|\mathbf{v}^{qm}-\hat \nabla_{\text{\normalfont coord}}f(\mathbf{x}^{qm})\|^2+3\|\nabla f_\beta(\mathbf{x}^{qm})- \nabla f(\mathbf{x}^{qm})\|^2\nonumber
\\&+3\|\nabla f(\mathbf{x}^{qm})-\hat \nabla_{\text{\normalfont coord}}f(\mathbf{x}^{qm})\|^2 \nonumber
\\\leq&\frac{9I(|\mathcal{S}_1|<n)}{|\mathcal{S}_1|}\left(  2L^2d\delta^2+\sigma^2\right)+3L^2d\delta^2+\frac{3\beta^2L^2d^2}{4}, 
\end{align}
which, in conjunction with~\eqref{betaxs}, implies that 
\begin{align}\label{csqqs}
\mathbb{E}f_\beta(\mathbf{x}^{k+1})\leq& \mathbb{E}f_\beta(\mathbf{x}^{k})-\frac{\eta}{4}\mathbb{E}\|\nabla f(\mathbf{x}^k)\|^2+\frac{\eta}{2}\left(\beta^2L^2d^2 +\frac{9I(|\mathcal{S}_1|<n)}{|\mathcal{S}_1|}\left(  2L^2d\delta^2+\sigma^2\right)+3L^2d\delta^2  \right) \nonumber
\\&+\frac{3L\eta^2}{2}\mathbb{E}\left( \|\nabla f_\beta(\mathbf{x}^k)-\mathbf{v}^k\|^2 +\|\nabla f_\beta(\mathbf{x}^k)-\nabla f(\mathbf{x}^k)\|^2+\|\nabla f(\mathbf{x}^k)\|^2\right) \nonumber
\\\overset{\text{(i)}}\leq&  \mathbb{E}f_\beta(\mathbf{x}^{k})-\left(\frac{\eta}{4}-\frac{3L\eta^2}{2}\right)\mathbb{E}\|\nabla f(\mathbf{x}^k)\|^2+\frac{\eta}{2}\left(\beta^2L^2d^2 +\frac{9I(|\mathcal{S}_1|<n)}{|\mathcal{S}_1|}\left(  2L^2d\delta^2+\sigma^2\right)+3L^2d\delta^2  \right) \nonumber
\\&+\frac{3\eta^2L^3d^2\beta^2}{8}+\frac{3L\eta^2}{|\mathcal{S}_2|}\left(   3d L^2\mathbb{E}\|\mathbf{x}^k-\mathbf{x}^{qm}\|^2+\frac{3L^2\beta^2d^2}{2}  \right)\nonumber
\\&+\frac{3L\eta^2}{2}\left(\frac{18I(|\mathcal{S}_1|<n)}{|\mathcal{S}_1|}\left(  2L^2d\delta^2+\sigma^2\right)+6L^2d\delta^2+\frac{3\beta^2L^2d^2}{2}\right)
\end{align}
where (i) follows from Lemma~\ref{le:vks}. To simplify notation, we define 
\begin{align}\label{para22}
\chi=& \beta^2L^2d^2 +\frac{9I(|\mathcal{S}_1|<n)}{|\mathcal{S}_1|}\left(  2L^2d\delta^2+\sigma^2\right)+3L^2d\delta^2
\end{align}
which, in conjunction with~\eqref{csqqs}, implies that 
\begin{align}\label{newas}
\mathbb{E}f_\beta(\mathbf{x}^{k+1})\leq& \mathbb{E}f_\beta(\mathbf{x}^{k})-\left(\frac{\eta}{4}-\frac{3L\eta^2}{2}\right)\mathbb{E}\|\nabla f(\mathbf{x}^k)\|^2+\frac{9dL^3\eta^2}{|\mathcal{S}_2|}\mathbb{E}\| \mathbf{x}^k-\mathbf{x}^{qm}  \|^2+\frac{3\eta^2L}{2}\left(\frac{L^2d^2\beta^2}{4}+\frac{3L^2d^2\beta^2}{|\mathcal{S}_2|}\right)\nonumber
\\&+\left(\frac{\eta}{2}+3L\eta^2\right)\chi.
\end{align}

We introduce a Lyapunov function $R_k^m=\mathbb{E}\left(  f_\beta(\mathbf{x}^k) +c^m_k\|\mathbf{x}^k-\mathbf{x}^{qm}\|^2 \right)$ for  $qm\leq k\leq q(m+1),m=0,...,h-1$, where $\{c_k^m\}$ are  constants such that $c^m_{q(m+1)}=0$. Then, we  obtain that for any $qm\leq k\leq q(m+1)-1$
\begin{align}\label{ck1s}
R_{k+1}^m=&\mathbb{E}\left(  f_\beta(\mathbf{x}^{k+1}) +c^m_{k+1}\|\mathbf{x}^{k+1}-\mathbf{x}^{k}+\mathbf{x}^k-\mathbf{x}^{qm}\|^2 \right)\nonumber
\\\leq&\mathbb{E} f_\beta(\mathbf{x}^{k+1})+c^m_{k+1}\eta^2\mathbb{E}\|\mathbf{v}^k\|^2+c^m_{k+1}\mathbb{E}\|\mathbf{x}^k-\mathbf{x}^{qm}\|^2-2c^m_{k+1}\eta\mathbb{E}\langle\mathbf{v}^k,  \mathbf{x}^k-\mathbf{x}^{qm} \rangle  \nonumber
\\\overset{\text{(i)}}=&\mathbb{E} f_\beta(\mathbf{x}^{k+1})+c^m_{k+1}\eta^2\mathbb{E}\|\mathbf{v}^k\|^2+c^m_{k+1}\mathbb{E}\|\mathbf{x}^k-\mathbf{x}^{qm}\|^2-2c^m_{k+1}\eta\mathbb{E}\langle \nabla f_\beta(\mathbf{x}^k)-\nabla f_\beta(\mathbf{x}^{qm})+\mathbf{v}^{qm},  \mathbf{x}^k-\mathbf{x}^{qm} \rangle \nonumber
\\\overset{\text{(ii)}}\leq&\mathbb{E} f_\beta(\mathbf{x}^{k+1})+c^m_{k+1}\eta^2\mathbb{E}\|\mathbf{v}^k\|^2+(c^m_{k+1}+c^m_{k+1}\eta g)\mathbb{E}\|\mathbf{x}^k-\mathbf{x}^{qm}\|^2 \nonumber
\\&+\frac{2c^m_{k+1}\eta}{g}\mathbb{E}\left( \|\nabla f_\beta(\mathbf{x}^k)\|^2+\|\nabla f_\beta(\mathbf{x}^{qm})-\mathbf{v}^{qm}\|^2\right)\nonumber
\\\leq&\mathbb{E} f_\beta(\mathbf{x}^{k+1})+c^m_{k+1}\eta^2\mathbb{E}\left(2\|\mathbf{v}^k-\nabla f_\beta(\mathbf{x}^k) \|^2+\frac{\beta^2L^2d^2}{2}\right)+\left(c^m_{k+1}+c^m_{k+1}\eta g\right)\mathbb{E}\|\mathbf{x}^k-\mathbf{x}^{qm}\|^2 \nonumber
\\&+\frac{4c^m_{k+1}\eta}{g}\mathbb{E}\|\nabla f(\mathbf{x}^k)\|^2+\frac{c^m_{k+1}\eta \beta^2L^2d^2}{g}+\frac{2c^m_{k+1}\eta}{g}\chi\nonumber
\\\overset{\text{(iii)}}\leq& \mathbb{E} f_\beta(\mathbf{x}^{k+1})+\left(c^m_{k+1}+c^m_{k+1}\eta g+\frac{12c^m_{k+1}\eta^2 L^2d}{|\mathcal{S}_2|}\right)\mathbb{E}\|\mathbf{x}^k-\mathbf{x}^{qm}\|^2 +c^m_{k+1}\eta^2\left(4\chi+\frac{6L^2\beta^2d^2}{|\mathcal{S}_2|}\right)\nonumber
\\&+\frac{4c^m_{k+1}\eta}{g}\mathbb{E}\|\nabla f(\mathbf{x}^k)\|^2+\frac{c^m_{k+1}\eta \beta^2L^2d^2}{g}+\frac{2c^m_{k+1}\eta}{g}\chi
\end{align}
where (i) follows from the fact that $\mathbb{E}(\mathbf{v}^k\,|\,\mathbf{x}^0,....,\mathbf{x}^k)=\nabla f_\beta(\mathbf{x}^k)-\nabla f_\beta(\mathbf{x}^{qm})+\mathbf{v}^{qm}$, (ii) follows from the fact that $-2\langle \mathbf{a},\mathbf{b}\rangle\leq \|\mathbf{a}\|^2/g+g\|\mathbf{b}\|^2$ holds for any constant $g>0$ and (iii) follows from Lemma~\ref{le:vks}. Combining~\eqref{newas} and~\eqref{ck1s}, we obtain that 
\begin{align}\label{ck1ssss}
R_{k+1}^m\leq& \mathbb{E}f_\beta(\mathbf{x}^{k})-\left(\frac{\eta}{4}-\frac{4c^m_{k+1}\eta}{g}-\frac{3L\eta^2}{2}\right)\mathbb{E}\|\nabla f(\mathbf{x}^k)\|^2  \nonumber
\\&+\left(c^m_{k+1}+c^m_{k+1}\eta g+\frac{12c^m_{k+1}\eta^2 L^2d}{|\mathcal{S}_2|}+\frac{9dL^3\eta^2}{|\mathcal{S}_2|}\right)\mathbb{E}\|\mathbf{x}^k-\mathbf{x}^{qm}\|^2 \nonumber
\\&+\left( \frac{\eta}{2}+\frac{2c^m_{k+1}\eta}{g}+4c^m_{k+1}\eta^2+3L\eta^2 \right)\chi+\left(\frac{3\eta^2L}{2}\left(\frac{1}{4}+\frac{3}{|\mathcal{S}_2|}\right)+\frac{c^m_{k+1}\eta }{g}\right)L^2d^2\beta^2.
\end{align}
We define the following recursion for $qm\leq k\leq q(m+1)-1,m=0,...,h-1$
\begin{align}\label{ckmm}
c^m_k&=c^m_{k+1}+c^m_{k+1}\eta g+\frac{12c^m_{k+1}\eta^2 L^2d}{|\mathcal{S}_2|}+\frac{9dL^3\eta^2}{|\mathcal{S}_2|}
\end{align}
which, in conjunction with~\eqref{ck1ssss}, implies that 
\begin{align}\label{ainish}
R_{k+1}^m\leq& R_k^m-\left(\frac{\eta}{4}-\frac{4c^m_{k+1}\eta}{g}-\frac{3L\eta^2}{2}\right)\mathbb{E}\|\nabla f(\mathbf{x}^k)\|^2 \nonumber
\\&+\left( \frac{\eta}{2}+\frac{2c^m_{k+1}\eta}{g}+4c^m_{k+1}\eta^2+3L\eta^2 \right)\chi+\left(\frac{3\eta^2L}{2}\left(\frac{1}{4}+\frac{3}{|\mathcal{S}_2|}\right)+\frac{c^m_{k+1}\eta }{g}\right)L^2d^2\beta^2 \nonumber
\\\leq& R_k^m-\left(\frac{\eta}{4}-\frac{4c^m_{k+1}\eta}{g}-\frac{3L\eta^2}{2}\right)\mathbb{E}\|\nabla f(\mathbf{x}^k)\|^2 \nonumber
\\&+\left( \frac{\eta}{2}+\frac{2c^m_{k+1}\eta}{g}+4c^m_{k+1}\eta^2+3L\eta^2 \right)\chi+\left(6\eta^2L+\frac{c^m_{k+1}\eta }{g}\right)L^2d^2\beta^2
\end{align}
where the last inequality follows from the fact that $1/4+3/|\mathcal{S}_2|\leq 4$.
Letting $\theta=\eta g +12\eta^2 dL^2/|\mathcal{S}_2|$  and noting that $c_{q(m+1)}^m=0$, we obtain from~\eqref{ckmm} that for $qm\leq k\leq q(m+1)-1,m=0,...,h-1$
\begin{align*}
c_k^m\leq c_{qm}^m=\frac{9dL^3\eta^2}{|\mathcal{S}_2|}\frac{(1+\theta)^q-1}{\theta},
\end{align*}
which, in conjunction with~\eqref{ainish} and the parameter selection in~\eqref{ppsse}, implies that   
\begin{align*}
R_{k+1}^m\leq& R_k^m-\lambda \mathbb{E}\|\nabla f(\mathbf{x}^k)\|^2+\tau.
\end{align*}
Telescoping the above inequality over $k$ from $qm$ to $q(m+1)-1$ and noting that $R_{qm}^m=\mathbb{E}f_\beta(\mathbf{x}^{qm})$ and $R_{q(m+1)}^m=\mathbb{E} f_\beta(\mathbf{x}^{q(m+1)})$, we obtain 
\begin{align*}
\mathbb{E}f_\beta (\mathbf{x}^{q(m+1)})\leq \mathbb{E}f_\beta (\mathbf{x}^{qm})-\lambda \sum_{k=qm}^{q(m+1)-1} \mathbb{E}\|\nabla f(\mathbf{x}^k)\|^2+q\tau. 
\end{align*}
Then, telescoping the above inequality over $m$ from $0$ to $h-1$,  we obtain 
\begin{align*}
\mathbb{E}f_\beta (\mathbf{x}^{K})\leq \mathbb{E}f_\beta (\mathbf{x}^{0})-\lambda \sum_{k=0}^K\mathbb{E}\|\nabla f(\mathbf{x}^k)\|^2+K\tau,
\end{align*}
which can be rewritten as 
\begin{align}\label{sklip}
\frac{1}{K+1}\sum_{k=0}^K\mathbb{E}\|\nabla f(\mathbf{x}^k)\|^2\leq \frac{f_\beta(\mathbf{x}^0)-f_{\beta}(\mathbf{x}_\beta^*)}{\lambda(K+1)}+\frac{\tau}{\lambda},
\end{align}
where $\mathbf{x}_\beta^*=\arg\min_{\mathbf{x}}f_\beta(\mathbf{x})$. Since the output $\mathbf{x}^\zeta$ of  Algorithm~\ref{ours:2} is generated from $\{\mathbf{x}^0,....,\mathbf{x}^K \}$ uniformly at random, we have 
\begin{align*}
\mathbb{E}\|\nabla f(\mathbf{x}^\zeta)\|^2=\frac{1}{K+1}\sum_{k=0}^K\|\nabla f(\mathbf{x}^k)\|^2,
\end{align*}
which, in conjunction with~\eqref{sklip}, finishes the proof. 
\subsection{Proof of Corollary~\ref{co1:svrg}}\label{profco1}
We prove two cases with $n\leq K$ and $n> K$, separately. 

First we suppose $n\leq K$. In this case, we have $|\mathcal{S}_1|=n$. Recall from~\eqref{cchoose} that  
\begin{align}\label{thess}
c=\frac{9dL^3\eta^2}{|\mathcal{S}_2|}\frac{(1+\theta)^q-1}{\theta}
\end{align}
where 
$\theta=\eta g+12\eta^2dL^2/|\mathcal{S}_2|$. 
Based on the parameter selection in~\eqref{pacos},  we have 
\begin{align}\label{opis}
\theta\overset{\text{(i)}}\leq \frac{1}{2q}+\frac{3}{100}\frac{1}{q}\frac{1}{q}< \frac{1}{q} \text{ and } \theta>\frac{1}{2q}.
\end{align}
which, in conjunction with~\eqref{thess}, yields 
\begin{align}\label{cniu}
c\leq \frac{18(e-1)dL^3\eta^2q}{|\mathcal{S}_2|}\leq \frac{9(e-1)L}{200q}
\end{align}
where (i) follows from the fact that $(1+\theta)^q\leq (1+1/q)^q<e$ and $e$ is the Euler's number.
Since $g=4000 d\eta^2L^3q/|\mathcal{S}_2|$, we obtain from~\eqref{cniu} that $c/g\leq 9(e-1)/2000$. Then, we obtain from~\eqref{ppsse} that 
\begin{align*}
\lambda&\geq 0.144\eta,\;\chi= \frac{4}{K},  \tau\leq \frac{5\eta}{K},
\end{align*}
which, in conjunction with~\eqref{hilys}, implies that 
\begin{align}
\mathbb{E}\|\nabla f(\mathbf{x}^\zeta)\|^2\leq  \frac{140L(f_\beta(\mathbf{x}^0)-f_{\beta}(\mathbf{x}^*))}{(K+1)}+\frac{35}{K}\leq \mathcal{O}\left(\frac{1}{K}\right).
\end{align}
We choose $K=C\epsilon^{-1}$, where $C$ is a positive constant. Then, based on the above inequality, we have, for $C$ large enough, our Algorithm~\ref{ours:2}
achieves $\mathbb{E}\|f(\mathbf{x}^\zeta)\|^2\leq \epsilon$,  and  the total number of function queries  is 
\begin{align}\label{faac1}
\left\lceil \frac{K}{q}\right\rceil nd + K |\mathcal{S}_2|d\leq \mathcal{O}\left(nd+\frac{nd}{\epsilon n^{1/3}}+\frac{n^{2/3}d}{\epsilon}\right)=\mathcal{O}\left(nd+\frac{n^{2/3}d}{\epsilon}\right)\leq \mathcal{O}\left(\frac{n^{2/3}d}{\epsilon}\right)\leq \mathcal{O}\left( \frac{d}{\epsilon^{5/3}}  \right)
\end{align}
where the last two inequalities follow from the assumption that $n\leq K=C\epsilon^{-1}$.

Next, we suppose $n>K$. In this case, we have $|\mathcal{S}_1|=K$. Similarly to the case when $n\leq K$, we obtain 
\begin{align}
&c/g\leq \frac{9(e-1)}{2000},\; \lambda\geq 0.144\eta,\;\chi= \frac{9\sigma^2+4}{K}+\frac{18}{K^2}, \nonumber
\\ &\tau\leq\eta\left(\frac{18}{K^2}+\frac{9\sigma^2+4}{K}\right)+\frac{\eta}{K}\leq \frac{18\eta}{K^2}+ \frac{(9\sigma^2+5)\eta}{K}
\end{align}
which, in conjunction with~\eqref{hilys}, implies that 
\begin{align*}
\mathbb{E}\|\nabla f(\mathbf{x}^\zeta)\|^2\leq  \frac{140L(f_\beta(\mathbf{x}^0)-f_{\beta}(\mathbf{x}_\beta^*))}{(K+1)}+ \frac{125}{K^2}+\frac{63\sigma^2+35}{K}
\end{align*}
We choose $K=C\epsilon^{-1}$, where $C>0$ is a positive constant. Then, based on the above inequality, we have, for $C$ large enough,  our Algorithm~\ref{ours:2}
achieves $\mathbb{E}\|f(\mathbf{x}^\zeta)\|^2\leq \epsilon$,  and  the total number of function queries  is 
\begin{align*}
\left\lceil \frac{K}{q}\right\rceil Kd + K |\mathcal{S}_2|d&\leq  Kd+K^{5/3}d+K^{5/3}d+Kd=2K^{5/3}d+2Kd\leq \mathcal{O}(K^{5/3}d)\nonumber
\\&\leq\mathcal{O}(d\epsilon^{-5/3})\leq \mathcal{O}(\epsilon^{-1}n^{2/3}d),
\end{align*}
where the last inequality follows from the assumption that $n>K\geq C\epsilon^{-1}$. 

Combining the above two cases finishes the proof. 

\subsection{Proof of Corollary~\ref{co:svrg2}}
We prove two cases with  $n\leq \big\lceil (K/d)^{3/5}\big\rceil $ and $n> \big\lceil (K/d)^{3/5}\big\rceil $, separately. 

First we suppose $n\leq\big\lceil (K/d)^{3/5}\big\rceil$, and thus we have $|\mathcal{S}_1|=n$ and $q=nd$. Based on~\eqref{cchoose}, we have 
\begin{align}\label{ops1s}
c=9dL^3\eta^2\frac{(1+\theta)^q-1}{\theta}
\end{align}
where 
$\theta=\eta g+12\eta^2dL^2$. 
Based on the parameter selection in~\eqref{p22},  we have, 
\begin{align}\label{opis11}
\theta= \frac{1}{2q}+\frac{3}{100n^{1/3}}\frac{1}{q}< \frac{1}{q} \text{ and } \theta>\frac{1}{2q}.
\end{align}
Combining~\eqref{ops1s} and~\eqref{opis11} yields 
\begin{align}\label{ssqas}
c\leq18(e-1)d\eta^2q L^3\leq \frac{9(e-1)L}{200n^{1/3}}.
\end{align}
Since $g=4000d\eta^2qL^3$, we obtain from~\eqref{ssqas} that $c/g\leq 9(e-1)/2000$, which, in conjunction with~\eqref{ppsse} and~\eqref{p22}, implies that 
\begin{align*}
\lambda&\geq 0.144\eta,\;\chi= \frac{4n^{2/3}}{K}  \nonumber
\\ \tau&\leq\frac{4n^{2/3}\eta}{K}+\frac{n^{2/3}\eta}{K} <\frac{5\eta n^{2/3}}{K},
\end{align*}
which, in conjunction with~\eqref{hilys}, implies that  
\begin{align}
\mathbb{E}\|\nabla f(\mathbf{x}^\zeta)\|^2\leq  \frac{140 Ld n^{2/3}(f_\beta(\mathbf{x}^0)-f_{\beta}(\mathbf{x}_\beta^*))}{(K+1)}+\frac{35dn^{2/3}}{K}\leq \mathcal{O}\left(\frac{dn^{2/3}}{K}\right).
\end{align}
Let $K=\lceil C dn^{2/3} \epsilon^{-1} \rceil$ for a constant $C>0$, which, in conjunction with the assumption that $n\leq \big\lceil (K/d)^{3/5}\big\rceil$, implies that $n\leq \Theta(\epsilon^{-1})$.  Then, we have, for $C$ large enough,  $\mathbb{E}\|\nabla f(\mathbf{x}^\zeta)\|^2\leq \epsilon$, and the number of function queries is 
\begin{align}
\left\lceil \frac{K}{q}\right\rceil nd + K |\mathcal{S}_2|d\leq \mathcal{O}\left(nd+\frac{nd}{\epsilon n^{1/3}}+\frac{n^{2/3}d}{\epsilon}\right)=\mathcal{O}\left(nd+\frac{n^{2/3}d}{\epsilon}\right)\leq \mathcal{O}\left(\frac{n^{2/3}d}{\epsilon}\right)\leq \mathcal{O}\left( \frac{d}{\epsilon^{5/3}}  \right)
\end{align}
where the last two inequalities follow from the assumption that $n\leq \big\lceil (K/d)^{3/5}\big\rceil\leq \mathcal{O}(\epsilon^{-1})$.

Next, we suppose $n>\big\lceil (K/d)^{3/5}\big\rceil $, and thus we have $|\mathcal{S}_1|=\big\lceil (K/d)^{3/5}\big\rceil$. Similarly to the case when $n\leq \big\lceil (K/d)^{3/5}\big\rceil$, we obtain 
\begin{align}
&c/g\leq \frac{9(e-1)}{2000},\; \lambda \geq 0.144\eta,\;\chi\leq \frac{(9\sigma^2+4)d^{3/5}}{K^{3/5}}+\frac{18d^{6/5}}{K^{6/5}}, \nonumber
\\ &\tau\leq \eta\left(\frac{(9\sigma^2+5)d^{3/5}}{K^{3/5}} + \frac{18d^{6/5}}{K^{6/5}}\right)
\end{align}
which, in conjunction with~\eqref{hilys}, implies that  
\begin{align}\label{mops}
\mathbb{E}\|\nabla f(\mathbf{x}^\zeta)\|^2\leq&  \frac{140 Ld|\mathcal{S}_1|^{2/3}(f_\beta(\mathbf{x}^0)-f_{\beta}(\mathbf{x}_\beta^*))}{K}+\frac{(63\sigma^2+35)d|\mathcal{S}_1|^{2/3}}{K}+\frac{125d}{K|\mathcal{S}_1|^{1/3}} \nonumber
\\\leq& \mathcal{O} \left( \frac{d|\mathcal{S}_1|^{2/3}}{K}  \right)
\end{align}
where the first inequality follows from $|\mathcal{S}_1|=\big\lceil (K/d)^{3/5}\big\rceil$.  Let $K=Cd\epsilon^{-5/3}$, where $C>0$ is a large constant. Then, using~\eqref{mops} , we have, for $C$ large enough,  $\mathbb{E}\|\nabla f(\mathbf{x}^\zeta)\|^2\leq \epsilon$, and thus the number of function queries is 
\begin{align}
\left\lceil \frac{K}{q}\right\rceil |\mathcal{S}_1|d + K&\leq  \mathcal{O}\left(|\mathcal{S}_1|d+\frac{K}{q}|\mathcal{S}_1|d+K\right)\leq \mathcal{O}(K^{3/5}d^{2/5}+K)\leq\mathcal{O}(d\epsilon^{-5/3})\leq  \mathcal{O}\left(\frac{n^{2/3}d}{\epsilon}\right)
\end{align}
where the last two inequalities follow from $K=Cd\epsilon^{-5/3}$ and the assumption that $n>\big\lceil (K/
d)^{3/5}\big\rceil=C^{3/5}\epsilon^{-1}$. 

Combining the above two cases finishes the proof.

\section{Proof for ZO-SVRG-Coord}
\subsection{Proof of Lemma~\ref{le:newnew}}\label{appen:kaiyi}
For  any  $qk_0\leq  k\leq \min\{ q(k_0+1)-1, qh\},\,k_0=0,...,h,$ 
based on~\eqref{codnew}, we obtain 
\begin{align}\label{cords111}
\mathbb{E}&\|	\mathbf{v}^k-\hat \nabla_{\text{\normalfont coord}}f(\mathbf{x}^k)\|^2 \nonumber
\\&\leq 2\mathbb{E}\|\mathbf{v}^k-\mathbf{v}^{qk_0}-(\hat \nabla_{\text{\normalfont coord}}f(\mathbf{x}^k) -\hat \nabla_{\text{\normalfont coord}}f(\mathbf{x}^{qk_0})   )\|^2+2\mathbb{E}\|	\mathbf{v}^{qk_0}-\hat \nabla_{\text{\normalfont coord}}f(\mathbf{x}^{qk_0})\|^2. 
\end{align}
To simplify notation, we denote 
\begin{align}\label{newddefine}
G_{\mathcal{S}_2}(\mathbf{x})= \frac{1}{|\mathcal{S}_2|}\sum_{j=1}^{|\mathcal{S}_2|}\hat \nabla _{\text{coord}} f_{a_j}(\mathbf{x})\,\text{ and }\; H_{j}(\mathbf{x})=\hat \nabla _{\text{coord}} f_{a_j}(\mathbf{x}^t),
\end{align}
which, in conjunction with~\eqref{cords111}, implies that  
\begin{align}\label{gsss12}
\mathbb{E}\|	\mathbf{v}^k-\hat \nabla_{\text{\normalfont coord}}f(\mathbf{x}^k)\|^2=&2\underbrace{\mathbb{E}\left(\mathbb{E} \|G_{\mathcal{S}_2}(\mathbf{x}^k)-G_{\mathcal{S}_2}(\mathbf{x}^{qk_0})-(\hat \nabla_{\text{\normalfont coord}}f(\mathbf{x}^k) -\hat \nabla_{\text{\normalfont coord}}f(\mathbf{x}^{qk_0})   )\|^2\,|\,\mathbf{x}^{0},...,\mathbf{x}^{k} \right) }_{P}\nonumber
\\&+2\mathbb{E}\|	\mathbf{v}^{qk_0}-\hat \nabla_{\text{\normalfont coord}}f(\mathbf{x}^{qk_0})\|^2.
\end{align}
Conditioned on $\mathbf{x}^{0},...,\mathbf{x}^{k}$, we next provide an upper bound on the conditional expectation term $P$ in~\eqref{gsss12}. Using the  shorthand $\mathbb{E}_k(\cdot)$ to denote $\mathbb{E}(\cdot \,|\,\mathbf{x}^1,...,\mathbf{x}^k)$, we have  
\begin{small}
	\begin{align}\label{opps}
	\mathbb{E}_k \|G_{\mathcal{S}_2}&(\mathbf{x}^k)-G_{\mathcal{S}_2}(\mathbf{x}^{qk_0})-(\hat \nabla_{\text{\normalfont coord}}f(\mathbf{x}^k) -\hat \nabla_{\text{\normalfont coord}}f(\mathbf{x}^{qk_0})   )\|^2 \nonumber
	\\=&\mathbb{E}_k\bigg\| \frac{1}{|\mathcal{S}_2|}\sum_{j=1}^{|\mathcal{S}_2|}\big(  H_{j}(\mathbf{x}^k) -H_{j}(\mathbf{x}^{qk_0})-(\hat \nabla_{\text{\normalfont coord}}f(\mathbf{x}^k) -\hat \nabla_{\text{\normalfont coord}}f(\mathbf{x}^{qk_0})   ) \big)\bigg\|^2\nonumber
	\\=&\frac{1}{|\mathcal{S}_2|^2} \sum_{j=1}^{|\mathcal{S}_2|}\mathbb{E}_k\left\| H_{j}(\mathbf{x}^k) -H_{j}(\mathbf{x}^{qk_0})-(\hat \nabla_{\text{\normalfont coord}}f(\mathbf{x}^k) -\hat \nabla_{\text{\normalfont coord}}f(\mathbf{x}^{qk_0})   ) \right\|^2\nonumber
	\\&-2\sum_{i\neq j}\mathbb{E}_k\, \big\langle  H_{i}(\mathbf{x}^k) -H_{i}(\mathbf{x}^{qk_0})-(\hat \nabla_{\text{\normalfont coord}}f(\mathbf{x}^k) -\hat \nabla_{\text{\normalfont coord}}f(\mathbf{x}^{qk_0})   ),  \nonumber
	\\&\hspace{4cm}H_{j}(\mathbf{x}^k) -H_{j}(\mathbf{x}^{qk_0})-(\hat \nabla_{\text{\normalfont coord}}f(\mathbf{x}^k) -\hat \nabla_{\text{\normalfont coord}}f(\mathbf{x}^{qk_0})   )\big\rangle \nonumber
	\\\overset{\text{(i)}}=&\frac{1}{|\mathcal{S}_2|^2} \sum_{j=1}^{|\mathcal{S}_2|}\mathbb{E}_k\left\| H_{j}(\mathbf{x}^k) -H_{j}(\mathbf{x}^{qk_0})-(\hat \nabla_{\text{\normalfont coord}}f(\mathbf{x}^k) -\hat \nabla_{\text{\normalfont coord}}f(\mathbf{x}^{qk_0})   ) \right\|^2
	\end{align}
\end{small}
\hspace{-0.12cm}where (i) follows from the facts that  $a_i$ is independent of $a_j$ for any $i\neq j$,  $\mathbb{E}_k(H_{j}(\mathbf{x}^k))=\hat \nabla_{\text{\normalfont coord}}f(\mathbf{x}^k)$ and $\mathbb{E}_k(H_{j}(\mathbf{x}^{qk_0}))=\hat \nabla_{\text{\normalfont coord}}f(\mathbf{x}^{qk_0})$. Then, we further simplify~\eqref{opps} to  
\begin{small}
	\begin{align}\label{cordsss2}
	\frac{1}{|\mathcal{S}_2|}&\mathbb{E}_k\left\| H_{j}(\mathbf{x}^k) -H_{j}(\mathbf{x}^{qk_0})-(\hat \nabla_{\text{\normalfont coord}}f(\mathbf{x}^k) -\hat \nabla_{\text{\normalfont coord}}f(\mathbf{x}^{qk_0})   ) \right\|^2\nonumber
	\\&= \frac{1}{|\mathcal{S}_2|} \mathbb{E}_k\left\| H_{j}(\mathbf{x}^k) -H_{j}(\mathbf{x}^{qk_0})\right\|^2+\frac{1}{|\mathcal{S}_2|} \|\hat \nabla_{\text{\normalfont coord}}f(\mathbf{x}^k) -\hat \nabla_{\text{\normalfont coord}}f(\mathbf{x}^{qk_0}) \|^2\nonumber
	\\&\hspace{0.5cm}-\frac{2}{|\mathcal{S}_2|}\mathbb{E}_k \left \langle  H_{j}(\mathbf{x}^k)
	-H_{j}(\mathbf{x}^{qk_0}), \hat \nabla_{\text{\normalfont coord}}f(\mathbf{x}^k) -\hat \nabla_{\text{\normalfont coord}}f(\mathbf{x}^{qk_0}) \right\rangle  \nonumber
	\\&\overset{\text{(i)}}= \frac{1}{|\mathcal{S}_2|} \mathbb{E}_k\left\| H_{j}(\mathbf{x}^k) -H_{j}(\mathbf{x}^{qk_0})\right\|^2-\frac{1}{|\mathcal{S}_2|} \|\hat \nabla_{\text{\normalfont coord}}f(\mathbf{x}^k) -\hat \nabla_{\text{\normalfont coord}}f(\mathbf{x}^{qk_0}) \|^2  \nonumber
	\\&\overset{\text{(ii)}}\leq \frac{3}{|\mathcal{S}_2|}\mathbb{E}_k\left\| \hat \nabla _{\text{coord}} f_{a_j}(\mathbf{x}^k)- \nabla  f_{a_j}(\mathbf{x}^{k})\right\|^2 +\frac{3}{|\mathcal{S}_2|}\mathbb{E}_k\left\| \nabla  f_{a_j}(\mathbf{x}^{k})- \nabla  f_{a_j}(\mathbf{x}^{qk_0})\right\|^2 \nonumber
	\\&\hspace{0.5cm}+\frac{3}{|\mathcal{S}_2|}\mathbb{E}_k\left\| \hat \nabla _{\text{coord}} f_{a_j}(\mathbf{x}^{qk_0})- \nabla  f_{a_j}(\mathbf{x}^{qk_0})\right\|^2 \nonumber
	\\&\overset{\text{(iii)}}\leq\frac{6L^2d\delta^2}{|\mathcal{S}_2|}+\frac{3L^2}{|\mathcal{S}_2|}\|\mathbf{x}^k-\mathbf{x}^{qk_0}\|^2
	\end{align}
\end{small}
\hspace{-0.12cm}where (i) follows from the fact that $\mathbb{E}_k(H_{j}(\mathbf{x}^k))=\hat \nabla_{\text{\normalfont coord}}f(\mathbf{x}^k)$ and $\mathbb{E}_k(H_{j}(\mathbf{x}^{qk_0}))=\hat \nabla_{\text{\normalfont coord}}f(\mathbf{x}^{m-1})$, (ii) follows from the inequality that
$\|\mathbf{a+b+c}\|^2\leq 3(\|\mathbf{a}\|^2+\|\mathbf{b}\|^2+\|\mathbf{c}\|^2)$, and  (iii) follows from Lemma~\ref{coordinate} and Assumption~\ref{assumption}. 
Combining~\eqref{gsss12},~\eqref{cordsss2}, Lemma~\ref{evs1} and unconditioned on $\mathbf{x}^0,...,\mathbf{x}^k$, we have  
\begin{align*}
\mathbb{E}\|	\mathbf{v}^k-\hat \nabla_{\text{\normalfont coord}}f(\mathbf{x}^k)\|^2\leq & \frac{12L^2d\delta^2}{|\mathcal{S}_2|}+\frac{6L^2}{|\mathcal{S}_2|}\|\mathbf{x}^k-\mathbf{x}^{qk_0}\|^2+\frac{6I(|\mathcal{S}_1|<n)}{|\mathcal{S}_1|}\left(  2L^2d\delta^2+\sigma^2\right),
\end{align*}
which finishes the proof. 
\subsection{Proof of Theorem~\ref{th:newsa}}
Since $K=qh$ and 
$\nabla f_\beta(\mathbf{x})$ is $L$-Lipschitz, we have,  
for   $qm\leq k\leq q(m+1)-1,m=0,...,h-1$
\begin{align*}
f(\mathbf{x}^{k+1}) &\leq  f(\mathbf{x}^{k})+\langle \nabla f(\mathbf{x}^k),\mathbf{x}^{k+1}-\mathbf{x}^k \rangle +  \frac{L\eta^2}{2}\| \mathbf{v}^k\|^2  = f(\mathbf{x}^{k})-\eta\langle \nabla f(\mathbf{x}^k),\mathbf{v}^k \rangle +  \frac{L\eta^2}{2}\| \mathbf{v}^k\|^2. 
\end{align*}
Taking the expectation over the above inequality and noting  that $\mathbb{E}(\mathbf{v}^k\,|\,\mathbf{x}^0,..., \mathbf{x}^k)=\hat \nabla_{\text{\normalfont coord}}f(\mathbf{x}^k)-\hat \nabla_{\text{\normalfont coord}}f(\mathbf{x}^{qm})+\mathbf{v}^{qm}$, we have
\begin{align}\label{betaxsss}
\mathbb{E}f(\mathbf{x}^{k+1}) \leq& \mathbb{E} f(\mathbf{x}^{k})-\eta\mathbb{E} \langle \hat \nabla_{\text{\normalfont coord}}f(\mathbf{x}^k)-\hat \nabla_{\text{\normalfont coord}}f(\mathbf{x}^{qm})+\mathbf{v}^{qm} , \nabla f(\mathbf{x}^k)   \rangle   +  \frac{L\eta^2}{2}\mathbb{E}\| \mathbf{v}^k\|^2 \nonumber
\\\leq &\mathbb{E} f(\mathbf{x}^{k})-\eta\mathbb{E}\|\nabla f(\mathbf{x}^k)\|^2+
\eta\mathbb{E} \langle\nabla f(\mathbf{x}^k)-\hat \nabla_{\text{\normalfont coord}}f(\mathbf{x}^k)
+\hat \nabla_{\text{\normalfont coord}}f(\mathbf{x}^{qm})-\mathbf{v}^{qm} , \nabla f(\mathbf{x}^k)   \rangle   +  \frac{L\eta^2}{2}\mathbb{E}\| \mathbf{v}^k\|^2\nonumber
\\\leq & \mathbb{E} f(\mathbf{x}^{k})-\eta\mathbb{E}\|\nabla f(\mathbf{x}^k)\|^2+
\frac{\eta}{2}\mathbb{E}\|\nabla f(\mathbf{x}^k)-\hat \nabla_{\text{\normalfont coord}}f(\mathbf{x}^k)
+\hat \nabla_{\text{\normalfont coord}}f(\mathbf{x}^{qm})-\mathbf{v}^{qm} \|^2 \nonumber
\\&+\frac{\eta}{2}\mathbb{E}\|\nabla f(\mathbf{x}^k)\|^2       +  \frac{L\eta^2}{2}\mathbb{E}\| \mathbf{v}^k\|^2\nonumber
\\\leq & \mathbb{E} f(\mathbf{x}^{k})-\frac{\eta}{2}\mathbb{E}\|\nabla f(\mathbf{x}^k)\|^2+\eta\mathbb{E}\| \nabla f(\mathbf{x}^k)-\hat \nabla_{\text{\normalfont coord}}f(\mathbf{x}^k) \|^2 + 
\eta\mathbb{E}\| \hat \nabla_{\text{\normalfont coord}}f(\mathbf{x}^{qm})-\mathbf{v}^{qm} \|^2     +  \frac{L\eta^2}{2}\mathbb{E}\| \mathbf{v}^k\|^2 \nonumber
\\\overset{\text{(i)}}\leq&\mathbb{E} f(\mathbf{x}^{k})-\frac{\eta}{2} \|\nabla f(\mathbf{x}^k)\|^2+
\eta L^2d\delta^2    + \frac{3\eta I(|\mathcal{S}_1|<n)}{|\mathcal{S}_1|}\left(  2L^2d\delta^2+\sigma^2\right)+ \frac{L\eta^2}{2}\mathbb{E}\| \mathbf{v}^k\|^2
\end{align}
where (i) follows from Lemmas~\ref{coordinate} and~\ref{evs1}. 

We introduce a Lyapunov function $R_k^m=\mathbb{E}\left(  f(\mathbf{x}^k) +c^m_k\|\mathbf{x}^k-\mathbf{x}^{qm}\|^2 \right)$ for  $qm\leq k\leq q(m+1),m=0,...,h-1$, where $\{c_k^m\}$ are  constants such that $c^m_{q(m+1)}=0$. Then, we  obtain that for any $qm\leq k\leq q(m+1)-1$
\begin{align}\label{ck1sss}
R_{k+1}^m=&\mathbb{E}\left(  f(\mathbf{x}^{k+1}) +c^m_{k+1}\|\mathbf{x}^{k+1}-\mathbf{x}^{k}+\mathbf{x}^k-\mathbf{x}^{qm}\|^2 \right)\nonumber
\\\leq&\mathbb{E} f(\mathbf{x}^{k+1})+c^m_{k+1}\eta^2\mathbb{E}\|\mathbf{v}^k\|^2+c^m_{k+1}\mathbb{E}\|\mathbf{x}^k-\mathbf{x}^{qm}\|^2-2c^m_{k+1}\eta\mathbb{E}\langle\mathbf{v}^k,  \mathbf{x}^k-\mathbf{x}^{qm} \rangle  \nonumber
\\\overset{\text{(i)}}=&\mathbb{E} f(\mathbf{x}^{k+1})+c^m_{k+1}\eta^2\mathbb{E}\|\mathbf{v}^k\|^2+c^m_{k+1}\mathbb{E}\|\mathbf{x}^k-\mathbf{x}^{qm}\|^2-2c^m_{k+1}\eta\mathbb{E}\langle \hat \nabla_{\text{\normalfont coord}}f(\mathbf{x}^k)-\hat \nabla_{\text{\normalfont coord}}f(\mathbf{x}^{qm})+\mathbf{v}^{qm},  \mathbf{x}^k-\mathbf{x}^{qm} \rangle \nonumber
\\\leq&\mathbb{E} f(\mathbf{x}^{k+1})+c^m_{k+1}\eta^2\mathbb{E}\|\mathbf{v}^k\|^2+(c^m_{k+1}+c^m_{k+1}\eta g)\mathbb{E}\|\mathbf{x}^k-\mathbf{x}^{qm}\|^2 \nonumber
\\&+\frac{2c^m_{k+1}\eta}{g}\mathbb{E}\left( \|\hat \nabla_{\text{\normalfont coord}}f(\mathbf{x}^k)\|^2+\|\hat \nabla_{\text{\normalfont coord}}f(\mathbf{x}^{qm})-\mathbf{v}^{qm}\|^2\right) \nonumber
\\\overset{\text{(ii)}}\leq&\mathbb{E} f(\mathbf{x}^{k+1})+c^m_{k+1}\eta^2\mathbb{E}\|\mathbf{v}^k\|^2+(c^m_{k+1}+c^m_{k+1}\eta g)\mathbb{E}\|\mathbf{x}^k-\mathbf{x}^{qm}\|^2 \nonumber
\\&+\frac{4c^m_{k+1}\eta}{g}\left(\mathbb{E} \| \nabla f(\mathbf{x}^k)\|^2+L^2d\delta^2\right)+\frac{12c^m_{k+1}\eta}{g}  \frac{ I(|\mathcal{S}_1|<n)}{|\mathcal{S}_1|}\left(  2L^2d\delta^2+\sigma^2\right).
\end{align}
where (i) follows from the definition of $\mathbf{v}^k$ and (ii) follows from Lemma~\ref{le:newnew}.
Combining~\eqref{betaxsss} and~\eqref{ck1sss}, we obtain 
\begin{align}\label{ninis}
R_{k+1}^m\leq& \mathbb{E} f(\mathbf{x}^{k})-\left(\frac{\eta}{2}-\frac{4c^m_{k+1}\eta}{g}\right) \|\nabla f(\mathbf{x}^k)\|^2 + \left(\frac{L}{2}+c^m_{k+1}\right)\eta^2\mathbb{E}\| \mathbf{v}^k\|^2 +(c^m_{k+1}+c^m_{k+1}\eta g)\mathbb{E}\|\mathbf{x}^k-\mathbf{x}^{qm}\|^2 \nonumber
\\&+\frac{4c^m_{k+1}\eta}{g}L^2d\delta^2+\eta L^2d\delta^2   +\left(\frac{12c^m_{k+1}\eta}{g}+3\eta\right)  \frac{ I(|\mathcal{S}_1|<n)}{|\mathcal{S}_1|}\left(  2L^2d\delta^2+\sigma^2\right).
\end{align}
Based on Lemma~\ref{le:newnew}, we obtain
\begin{align*}
\mathbb{E}\|\mathbf{v}^k\|^2\leq&3\mathbb{E}\|\mathbf{v}^k-\hat \nabla_{\text{\normalfont coord}}f(\mathbf{x}^k)\|^2  + 3\mathbb{E}\|\nabla f(\mathbf{x}^k)-\hat \nabla_{\text{\normalfont coord}}f(\mathbf{x}^k)\|^2  +3\mathbb{E}\|\nabla f(\mathbf{x}^k)\|^2   \nonumber
\\\leq&  \frac{36L^2d\delta^2}{|\mathcal{S}_2|}+\frac{18L^2}{|\mathcal{S}_2|}\|\mathbf{x}^k-\mathbf{x}^{qm}\|^2+\frac{18I(|\mathcal{S}_1|<n)}{|\mathcal{S}_1|}\left(  2L^2d\delta^2+\sigma^2\right)+3L^2d\delta^2 +3\mathbb{E}\|\nabla f(\mathbf{x}^k)\|^2,
\end{align*}
which, in conjunction with~\eqref{ninis}, implies that  
\begin{align}\label{ninisss}
R_{k+1}^m\leq& \mathbb{E} f(\mathbf{x}^{k})-\left(\frac{\eta}{2}-\frac{4c^m_{k+1}\eta}{g}-3\left(\frac{L}{2}+c^m_{k+1}\right)\eta^2\right) \|\nabla f(\mathbf{x}^k)\|^2 \nonumber
\\&+\left(c^m_{k+1}+c^m_{k+1}\eta g+\frac{18L^2}{|\mathcal{S}_2|} \left(\frac{L}{2}+c^m_{k+1}\right)\eta^2\right)\mathbb{E}\|\mathbf{x}^k-\mathbf{x}^{qm}\|^2 \nonumber
\\& +\left(\frac{12c^m_{k+1}\eta}{g}+3\eta+18 \left(\frac{L}{2}+c^m_{k+1}\right)\eta^2\right)  \frac{ I(|\mathcal{S}_1|<n)}{|\mathcal{S}_1|}\left(  2L^2d\delta^2+\sigma^2\right)\nonumber
\\&+\frac{4c^m_{k+1}\eta}{g}L^2d\delta^2+\eta L^2d\delta^2  +\left(\frac{36}{|\mathcal{S}_2|}+3\right) \left(\frac{L}{2}+c^m_{k+1}\right)\eta^2L^2d\delta^2.
\end{align}
Let $c_k^m:=(1+\theta)c^m_{k+1}+\frac{9L^3\eta^2}{|\mathcal{S}_2|}$, where $\theta=\eta g+\frac{18L^2\eta^2}{|\mathcal{S}_2|}$. Then, we rewrite~\eqref{ninisss} as 
\begin{align}\label{nipsas}
R_{k+1}^m\leq& R_{k}^m-\left(\frac{\eta}{2}-\frac{4c^m_{k+1}\eta}{g}- 3\left(\frac{L}{2}+c^m_{k+1}\right)\eta^2\right)\mathbb{E} \|\nabla f(\mathbf{x}^k)\|^2 \nonumber
\\& +\left(\frac{12c^m_{k+1}\eta}{g}+3\eta+18 \left(\frac{L}{2}+c^m_{k+1}\right)\eta^2\right)  \frac{ I(|\mathcal{S}_1|<n)}{|\mathcal{S}_1|}\left(  2L^2d\delta^2+\sigma^2\right)\nonumber
\\&+\frac{4c^m_{k+1}\eta}{g}L^2d\delta^2+\eta L^2d\delta^2  +\left(\frac{36}{|\mathcal{S}_2|}+3\right) \left(\frac{L}{2}+c^m_{k+1}\right)\eta^2L^2d\delta^2.
\end{align}
Note that  for $qm\leq k\leq q(m+1)-1,m=0,...,h-1$
\begin{align*}
c_k^m\leq c=\frac{9L^3\eta^2}{|\mathcal{S}_2|}\frac{(1+\theta)^q-1}{\theta}.
\end{align*}
To simplify notation, we define 
\begin{align}\label{papass}
\lambda &=\frac{\eta}{2}-\frac{4c\eta}{g}-3\left( \frac{L}{2} +c \right)\eta^2 \nonumber
\\\chi &=  \left(\frac{12c\eta}{g}+3\eta+18 \left(\frac{L}{2}+c\right)\eta^2\right) \frac{I(|\mathcal{S}_1|<n)}{|\mathcal{S}_1|}\left(  2L^2d\delta^2+\sigma^2\right) \nonumber
\\\tau &=\left( \frac{4c}{g} +1\right)\eta L^2d\delta^2+\left( \frac{36}{|\mathcal{S}_2|}+3 \right)\left(\frac{L}{2}+c\right)\eta^2L^2d\delta^2,
\end{align}
which, in conjunction with~\eqref{nipsas}, implies that   
\begin{align*}
R_{k+1}^m\leq& R_k^m-\lambda \mathbb{E}\|\nabla f(\mathbf{x}^k)\|^2+\tau+\chi.
\end{align*}
Telescoping the above inequality over $k$ from $qm$ to $q(m+1)-1$ and noting that $R_{qm}^m=\mathbb{E}f(\mathbf{x}^{qm})$ and $R_{q(m+1)}^m=\mathbb{E} f(\mathbf{x}^{q(m+1)})$, we obtain 
\begin{align*}
\mathbb{E}f(\mathbf{x}^{q(m+1)})\leq \mathbb{E}f (\mathbf{x}^{qm})-\lambda \sum_{k=qm}^{q(m+1)-1} \mathbb{E}\|\nabla f(\mathbf{x}^k)\|^2+q\tau+q\chi
\end{align*}
Then, telescoping the above inequality over $m$ from $0$ to $h-1$,  we obtain 
\begin{align*}
\mathbb{E}f (\mathbf{x}^{K})\leq \mathbb{E}f (\mathbf{x}^{0})-\lambda  \sum_{k=0}^K\mathbb{E}\|\nabla f(\mathbf{x}^k)\|^2+K\tau+K\chi,
\end{align*}
which, in conjunction with the definition of $\mathbf{x}^\zeta$, implies that 
\begin{align}\label{zetasss}
\mathbb{E}\|\nabla f(\mathbf{x}^\zeta)\|^2 \leq \frac{\Delta}{\lambda K}+\frac{\tau+\chi}{\lambda}.
\end{align}
where $\Delta:=f(\mathbf{x}^0)-f(\mathbf{x}^*)$. 

Let $g=1/(2\eta q)$. Then, based on the selected parameters in~\eqref{ppsassss} and the definition of $\theta$, we have 
\begin{align*}
\frac{1}{2q}<\theta\leq \frac{1}{2q}+\frac{2}{25q}\frac{1}{q}\leq\frac{1}{q},
\end{align*}
which, in conjunction with the definition of $c$, implies that 
\begin{align}\label{cvalue}
c\leq \frac{18(e-1)L^3\eta^2q}{|\mathcal{S}_2|}\leq \frac{2(e-1)L}{25 q},
\end{align}
and $c/g\leq 0.02$.

Next, we prove two cases when $n\leq K$ and $n>K$, separately. First suppose $n\leq K$. In such a case, we have $|\mathcal{S}_1|=n$ and $q=\lceil n^{1/3}\rceil$. Then, based on~\eqref{papass},~\eqref{ppsassss} and~\eqref{cvalue}, we obtain 
\begin{align*}
\lambda&\geq 0.22\eta,\; \chi=  0, \;\tau<\frac{5\eta}{K},
\end{align*} 
which, in conjunction with~\eqref{zetasss}, yields 
\begin{align}
\mathbb{E}\|\nabla f(\mathbf{x}^\zeta)\|^2 \leq \frac{69\Delta+23}{K}\leq \mathcal{O}\left( \frac{1}{K} \right).
\end{align} 
Let $K=C\epsilon^{-1}$, where $C$ is a constant.   Then, we have, for $C$ large enough,  $\mathbb{E}\|\nabla f(\mathbf{x}^\zeta)\|^2\leq \epsilon$, and the number of function queries is 
\begin{align}
\left\lceil\frac{K}{q}\right\rceil nd+K|\mathcal{S}_2|d\leq \mathcal{O}\left( nd+\frac{dn^{2/3}}{\epsilon}\right)\leq \mathcal{O}\left( \frac{dn^{2/3}}{\epsilon}\right)\leq\mathcal{O}\left(\min\left\{\frac{n^{2/3}d}{\epsilon}, \frac{d}{\epsilon^{5/3}} \right\}\right)
\end{align}
where the last two inequalities follow from the assumption that $n\leq K=C\epsilon^{-1}$.

Next, we suppose $n>K$. In this case, we obtain 
\begin{align*}
\lambda&\geq 0.22\eta,\;\chi\leq \frac{5\eta}{K}\left( \frac{2}{K}+\sigma^2\right),\;\tau\leq \frac{5\eta}{K} 
\end{align*}
which, in conjunction with~\eqref{zetasss}, yields 
\begin{align}
\mathbb{E}\|\nabla f(\mathbf{x}^\zeta)\|^2 \leq \frac{69\Delta+23+23\sigma^2}{K}+\frac{46}{K^2}
\end{align} 
Let $K=C\epsilon^{-1}$, where $C$ is a constant.   Then, we have, for $C$ large enough,  $\mathbb{E}\|\nabla f(\mathbf{x}^\zeta)\|^2\leq \epsilon$, and the number of function queries is 
\begin{align}
\left\lceil\frac{K}{q}\right\rceil Kd+K|\mathcal{S}_2|d \leq \mathcal{O}\left(dK^{5/3}\right)\leq \mathcal{O}\left(\min\left\{\frac{n^{2/3}d}{\epsilon}, \frac{d}{\epsilon^{5/3}} \right\}\right)
\end{align}
where the last inequality follows from the assumption that $n>K=C\epsilon^{-1}$. 

Combining the above two cases finish the proof. 

\section{ Proofs for ZO-SPIDER-Coord}
\subsection{Auxiliary Lemma}
The following lemma provides an upper bound on the error of $\mathbf{v}^k$ for estimating the second moment of $\|	\mathbf{v}^k-\hat \nabla_{\text{\normalfont coord}}f(\mathbf{x}^k)\|$.
\begin{lemma}\label{le:coord}
	For  any given $k_0\leq \lfloor K /q\rfloor$ and $qk_0\leq k\leq \min\{q(k_0+1)-1,K\}$, we have 
	\begin{small}
		\begin{align}\label{bonusB}
		&\mathbb{E}\|	\mathbf{v}^k-\hat \nabla_{\text{\normalfont coord}}f(\mathbf{x}^k)\|^2\leq\frac{3\eta^2L^2}{|\mathcal{S}_2|}\sum_{t=qk_0}^{k-1}\mathbb{E} \|  \mathbf{v}^{t}\|^2 +(k-qk_0)\frac{6L^2d\delta^2}{|\mathcal{S}_2|}+\frac{3I(|\mathcal{S}_1|<n)}{|\mathcal{S}_1|}\left(  2L^2d\delta^2+\sigma^2\right).
		\end{align}
	\end{small}
	\hspace{-0.15cm}	where  we define $\sum_{t=qk_0}^{qk_0-1}\mathbb{E}\|\mathbf{v}^t\|^2=0$ for simplicity. 
\end{lemma}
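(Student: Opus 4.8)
The plan is to track the estimation error $\mathbf{e}^k := \mathbf{v}^k - \hat\nabla_{\text{coord}} f(\mathbf{x}^k)$ and exploit the recursive structure of the SPIDER-type estimator~\eqref{codnew}. First I would subtract $\hat\nabla_{\text{coord}} f(\mathbf{x}^k)$ from both sides of~\eqref{codnew} and insert $\pm\,\hat\nabla_{\text{coord}} f(\mathbf{x}^{k-1})$ to write
\[
\mathbf{e}^k = \mathbf{e}^{k-1} + \boldsymbol{\xi}^k, \quad \boldsymbol{\xi}^k := \hat\nabla_{\text{coord}} f_{\mathcal{S}_2}(\mathbf{x}^k) - \hat\nabla_{\text{coord}} f_{\mathcal{S}_2}(\mathbf{x}^{k-1}) - \big(\hat\nabla_{\text{coord}} f(\mathbf{x}^k) - \hat\nabla_{\text{coord}} f(\mathbf{x}^{k-1})\big).
\]
Since the mini-batch $\mathcal{S}_2$ drawn at iteration $k$ is independent of $\mathbf{x}^0,\dots,\mathbf{x}^k$ and $\mathbb{E}_k[\hat\nabla_{\text{coord}} f_{\mathcal{S}_2}(\mathbf{x})] = \hat\nabla_{\text{coord}} f(\mathbf{x})$ for $\mathbf{x}\in\{\mathbf{x}^k,\mathbf{x}^{k-1}\}$ (sampling with replacement makes the coordinate estimator conditionally unbiased), one has $\mathbb{E}_k[\boldsymbol{\xi}^k]=0$, while $\mathbf{e}^{k-1}$ is measurable with respect to the conditioning $\sigma$-field. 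Hence the cross term vanishes and the variance obeys the clean recursion $\mathbb{E}\|\mathbf{e}^k\|^2 = \mathbb{E}\|\mathbf{e}^{k-1}\|^2 + \mathbb{E}\|\boldsymbol{\xi}^k\|^2$.

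Next I would telescope this one-step recursion from the epoch start $t=qk_0$ up to $k$, obtaining $\mathbb{E}\|\mathbf{e}^k\|^2 = \mathbb{E}\|\mathbf{e}^{qk_0}\|^2 + \sum_{t=qk_0+1}^{k}\mathbb{E}\|\boldsymbol{\xi}^t\|^2$ (the case $k=qk_0$ being the empty-sum base case). The base term $\mathbb{E}\|\mathbf{e}^{qk_0}\|^2 = \mathbb{E}\|\mathbf{v}^{qk_0} - \hat\nabla_{\text{coord}} f(\mathbf{x}^{qk_0})\|^2$ is exactly the quantity controlled by Lemma~\ref{evs1}, contributing $\tfrac{3I(|\mathcal{S}_1|<n)}{|\mathcal{S}_1|}(2L^2 d\delta^2 + \sigma^2)$. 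For each noise term $\boldsymbol{\xi}^t$ I would use that the $|\mathcal{S}_2|$ samples in $\mathcal{S}_2$ are i.i.d.\ (with replacement), so $\boldsymbol{\xi}^t$ is an average of $|\mathcal{S}_2|$ independent zero-mean summands; the cross terms again drop out, giving $\mathbb{E}_t\|\boldsymbol{\xi}^t\|^2 \leq \tfrac{1}{|\mathcal{S}_2|}\mathbb{E}_t\|\hat\nabla_{\text{coord}} f_{a}(\mathbf{x}^t) - \hat\nabla_{\text{coord}} f_{a}(\mathbf{x}^{t-1})\|^2$ for a single sample $a$, where the variance is bounded by the raw second moment.

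The remaining single-sample bound is the computational core: I would split $\hat\nabla_{\text{coord}} f_a(\mathbf{x}^t) - \hat\nabla_{\text{coord}} f_a(\mathbf{x}^{t-1})$ into $[\hat\nabla_{\text{coord}} f_a(\mathbf{x}^t) - \nabla f_a(\mathbf{x}^t)] + [\nabla f_a(\mathbf{x}^t) - \nabla f_a(\mathbf{x}^{t-1})] + [\nabla f_a(\mathbf{x}^{t-1}) - \hat\nabla_{\text{coord}} f_a(\mathbf{x}^{t-1})]$, apply $\|\mathbf{a}+\mathbf{b}+\mathbf{c}\|^2 \leq 3(\|\mathbf{a}\|^2+\|\mathbf{b}\|^2+\|\mathbf{c}\|^2)$, bound the two estimator-bias terms by Lemma~\ref{coordinate} (each $\leq L^2 d\delta^2$) and the middle term by $L$-smoothness, then substitute the update rule $\mathbf{x}^t - \mathbf{x}^{t-1} = -\eta\mathbf{v}^{t-1}$ to reach $6L^2 d\delta^2 + 3L^2\eta^2\|\mathbf{v}^{t-1}\|^2$. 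Summing over $t$ and re-indexing $t-1\mapsto t$ converts $\sum_{t=qk_0+1}^{k}\|\mathbf{v}^{t-1}\|^2$ into $\sum_{t=qk_0}^{k-1}\|\mathbf{v}^{t}\|^2$ and yields the factor $(k-qk_0)\tfrac{6L^2 d\delta^2}{|\mathcal{S}_2|}$, assembling the claimed inequality. I expect the main obstacle to be the bookkeeping in the martingale step, namely verifying that the fresh noise $\boldsymbol{\xi}^t$ is conditionally centered so that \emph{every} cross term — both across iterations and across the $|\mathcal{S}_2|$ samples — vanishes; once this orthogonality is secured, the rest is a routine triangle-inequality-plus-smoothness estimate.
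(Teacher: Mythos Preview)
Your proposal is correct and follows essentially the same route as the paper's proof: both exploit the martingale-difference structure $\mathbf{e}^k = \mathbf{e}^{k-1} + \boldsymbol{\xi}^k$ with $\mathbb{E}_k[\boldsymbol{\xi}^k]=0$ to obtain the exact recursion $\mathbb{E}\|\mathbf{e}^k\|^2 = \mathbb{E}\|\mathbf{e}^{k-1}\|^2 + \mathbb{E}\|\boldsymbol{\xi}^k\|^2$, then bound $\mathbb{E}\|\boldsymbol{\xi}^t\|^2$ via i.i.d.\ averaging, the three-term split with Lemma~\ref{coordinate} and $L$-smoothness, and finally telescope using Lemma~\ref{evs1} for the base term. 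The only cosmetic difference is that the paper phrases the orthogonality step as ``the property of square-integrable martingales'' (citing~\citealt{fang2018spider}) whereas you derive it directly from conditional centering and measurability of $\mathbf{e}^{k-1}$; the remaining bounds and constants are identical.
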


\begin{proof}	
	First we consider the case when $k\geq qk_0+1$. 
	For $qk_0+1\leq m\leq k$, we have 
	\begin{align}\label{martin}
	\mathbf{v}^m-\hat \nabla_{\text{\normalfont coord}}f(\mathbf{x}^m)=\mathbf{v}^{qk_0}-\hat \nabla_{\text{\normalfont coord}}f(\mathbf{x}^{qk_0})+\sum_{t=qk_0+1}^m(\mathbf{v}^t-\mathbf{v}^{t-1}-(\hat \nabla_{\text{\normalfont coord}}f(\mathbf{x}^t) -\hat \nabla_{\text{\normalfont coord}}f(\mathbf{x}^{t-1}   )  )).
	\end{align}
	Recall that  $\mathbf{v}^t$ is given by 
	\begin{align}\label{tyu}
	\mathbf{v}^t=\frac{1}{|\mathcal{S}_2|}\sum_{j=1}^{|\mathcal{S}_2|}\hat \nabla _{\text{coord}} f_{a_j}(\mathbf{x}^t)- \frac{1}{|\mathcal{S}_2|}\sum_{j=1}^{|\mathcal{S}_2|}\hat \nabla _{\text{coord}} f_{a_j}(\mathbf{x}^{t-1})+
	\mathbf{v}^{t-1}.
	\end{align}
	We then have for any $qk_0\leq t\leq m$ , $\mathbb{E}(\mathbf{v}^t-\mathbf{v}^{t-1}-(\hat \nabla_{\text{\normalfont coord}}f(\mathbf{x}^t) -\hat \nabla_{\text{\normalfont coord}}f(\mathbf{x}^{t-1}   ))\,| \mathbf{x}^0,...,\mathbf{x}^t\,)=0$,  
	which, in conjunction with~\eqref{martin}, implies that the sequence $(\mathbf{v}^t-\hat \nabla_{\text{\normalfont coord}}f(\mathbf{x}^t),t=qk_0,...,m )$ is a martingale. Then, based on the property of square-integrable martingales~\citep{fang2018spider}, we can obtain, for $qk_0+1\leq m\leq k$, 
	\begin{align*}
	\mathbb{E}\|	\mathbf{v}^m-\hat \nabla_{\text{\normalfont coord}}f(\mathbf{x}^m)\|^2=&\mathbb{E}\|\mathbf{v}^{qk_0}-\hat \nabla_{\text{\normalfont coord}}f(\mathbf{x}^{qk_0})\|^2 \nonumber
	\\&	+\sum_{t=qk_0+1}^m\mathbb{E}\|\mathbf{v}^t-\mathbf{v}^{t-1}-(\hat \nabla_{\text{\normalfont coord}}f(\mathbf{x}^t) -\hat \nabla_{\text{\normalfont coord}}f(\mathbf{x}^{t-1}   ))\|^2.
	\end{align*}
	The above equality further 
	implies that 
	\begin{align}\label{cords}
	\mathbb{E}&\|	\mathbf{v}^m-\hat \nabla_{\text{\normalfont coord}}f(\mathbf{x}^m)\|^2 \nonumber
	\\&=\mathbb{E}\|\mathbf{v}^m-\mathbf{v}^{m-1}-(\hat \nabla_{\text{\normalfont coord}}f(\mathbf{x}^m) -\hat \nabla_{\text{\normalfont coord}}f(\mathbf{x}^{m-1})   )\|^2+\mathbb{E}\|	\mathbf{v}^{m-1}-\hat \nabla_{\text{\normalfont coord}}f(\mathbf{x}^{m-1})\|^2.
	\end{align}
	Based on~\eqref{tyu} and using the same notations as in~\eqref{newddefine}, we have
	$\mathbf{v}^m-\mathbf{v}^{m-1}=G_{\mathcal{S}_2}(\mathbf{x}^m)-G_{\mathcal{S}_2}(\mathbf{x}^{m-1}),$
	which, in conjunction with~\eqref{cords}, implies
	\begin{align}\label{gs2}
	\mathbb{E}\|	\mathbf{v}^m-\hat \nabla_{\text{\normalfont coord}}f(\mathbf{x}^m)\|^2=&\underbrace{\mathbb{E}\left(\mathbb{E} \|G_{\mathcal{S}_2}(\mathbf{x}^m)-G_{\mathcal{S}_2}(\mathbf{x}^{m-1})-(\hat \nabla_{\text{\normalfont coord}}f(\mathbf{x}^m) -\hat \nabla_{\text{\normalfont coord}}f(\mathbf{x}^{m-1})   )\|^2\,|\,\mathbf{x}^{0},...,\mathbf{x}^{m} \right) }_{Q}\nonumber
	\\&+\mathbb{E}\|	\mathbf{v}^{m-1}-\hat \nabla_{\text{\normalfont coord}}f(\mathbf{x}^{m-1})\|^2.
	\end{align}
	Conditioned on $\mathbf{x}^{0},...,\mathbf{x}^{m}$, we next provide an upper bound on the conditional expectation term $Q$ in~\eqref{gs2}. Using the  shorthand $\mathbb{E}_m(\cdot)$ to denote $\mathbb{E}(\cdot \,|\,\mathbf{x}^1,...,\mathbf{x}^m)$, we have  
	\begin{align*}
	\mathbb{E}_m \|G_{\mathcal{S}_2}&(\mathbf{x}^m)-G_{\mathcal{S}_2}(\mathbf{x}^{m-1})-(\hat \nabla_{\text{\normalfont coord}}f(\mathbf{x}^m) -\hat \nabla_{\text{\normalfont coord}}f(\mathbf{x}^{m-1})   )\|^2
	\\=&\mathbb{E}_m\bigg\| \frac{1}{|\mathcal{S}_2|}\sum_{j=1}^{|\mathcal{S}_2|}\big(  H_{j}(\mathbf{x}^m) -H_{j}(\mathbf{x}^{m-1})-(\hat \nabla_{\text{\normalfont coord}}f(\mathbf{x}^m) -\hat \nabla_{\text{\normalfont coord}}f(\mathbf{x}^{m-1})   ) \big)\bigg\|^2
	\\=&\frac{1}{|\mathcal{S}_2|^2} \sum_{j=1}^{|\mathcal{S}_2|}\mathbb{E}_m\left\| H_{j}(\mathbf{x}^m) -H_{j}(\mathbf{x}^{m-1})-(\hat \nabla_{\text{\normalfont coord}}f(\mathbf{x}^m) -\hat \nabla_{\text{\normalfont coord}}f(\mathbf{x}^{m-1})   ) \right\|^2
	\\&-2\sum_{i\neq j}\mathbb{E}_m\, \big\langle  H_{i}(\mathbf{x}^m) -H_{i}(\mathbf{x}^{m-1})-(\hat \nabla_{\text{\normalfont coord}}f(\mathbf{x}^m) -\hat \nabla_{\text{\normalfont coord}}f(\mathbf{x}^{m-1})   ), 
	\\&\hspace{4cm}H_{j}(\mathbf{x}^m) -H_{j}(\mathbf{x}^{m-1})-(\hat \nabla_{\text{\normalfont coord}}f(\mathbf{x}^m) -\hat \nabla_{\text{\normalfont coord}}f(\mathbf{x}^{m-1})   )\big\rangle
	\\\overset{\text{(i)}}=&\frac{1}{|\mathcal{S}_2|^2} \sum_{j=1}^{|\mathcal{S}_2|}\mathbb{E}_m\left\| H_{j}(\mathbf{x}^m) -H_{j}(\mathbf{x}^{m-1})-(\hat \nabla_{\text{\normalfont coord}}f(\mathbf{x}^m) -\hat \nabla_{\text{\normalfont coord}}f(\mathbf{x}^{m-1})   ) \right\|^2
	\end{align*}
	where (i) follows from the facts that  $a_i$ is independent of $a_j$ for any $i\neq j$,  $\mathbb{E}_m(H_j(\mathbf{x}^m))=\hat \nabla_{\text{\normalfont coord}}f(\mathbf{x}^m)$, and $\mathbb{E}_m(H_j(\mathbf{x}^{m-1}))=\hat \nabla_{\text{\normalfont coord}}f(\mathbf{x}^{m-1})$. 
	Then, we further simplify the above equation to 
	\begin{align}
	\frac{1}{|\mathcal{S}_2|}&\mathbb{E}_m\left\| H_{j}(\mathbf{x}^m) -H_{j}(\mathbf{x}^{m-1})-(\hat \nabla_{\text{\normalfont coord}}f(\mathbf{x}^m) -\hat \nabla_{\text{\normalfont coord}}f(\mathbf{x}^{m-1})   ) \right\|^2\nonumber
	\\&= \frac{1}{|\mathcal{S}_2|} \mathbb{E}_m\left\| H_{j}(\mathbf{x}^m) -H_{j}(\mathbf{x}^{m-1})\right\|^2+\frac{1}{|\mathcal{S}_2|} \|\hat \nabla_{\text{\normalfont coord}}f(\mathbf{x}^m) -\hat \nabla_{\text{\normalfont coord}}f(\mathbf{x}^{m-1}) \|^2\nonumber
	\\&\hspace{0.5cm}-\frac{2}{|\mathcal{S}_2|}\mathbb{E}_m \left \langle  H_{j}(\mathbf{x}^m)
	-H_{j}(\mathbf{x}^{m-1}), \hat \nabla_{\text{\normalfont coord}}f(\mathbf{x}^m) -\hat \nabla_{\text{\normalfont coord}}f(\mathbf{x}^{m-1}) \right\rangle  \nonumber
	\\&\overset{\text{(i)}}= \frac{1}{|\mathcal{S}_2|} \mathbb{E}_m\left\| H_{j}(\mathbf{x}^m) -H_{j}(\mathbf{x}^{m-1})\right\|^2-\frac{1}{|\mathcal{S}_2|} \|\hat \nabla_{\text{\normalfont coord}}f(\mathbf{x}^m) -\hat \nabla_{\text{\normalfont coord}}f(\mathbf{x}^{m-1}) \|^2  \nonumber
	\\&\leq \frac{1}{|\mathcal{S}_2|} \mathbb{E}_m\left\| H_{j}(\mathbf{x}^m) -H_{j}(\mathbf{x}^{m-1})\right\|^2\nonumber
	\end{align}
	\begin{align}\label{cords2}
	\hspace{1.7cm}&\overset{\text{(ii)}}\leq \frac{3}{|\mathcal{S}_2|}\mathbb{E}_m\left\| \hat \nabla _{\text{coord}} f_{a_j}(\mathbf{x}^m)- \nabla  f_{a_j}(\mathbf{x}^{m})\right\|^2 +\frac{3}{|\mathcal{S}_2|}\mathbb{E}_m\left\| \nabla  f_{a_j}(\mathbf{x}^{m})- \nabla  f_{a_j}(\mathbf{x}^{m-1})\right\|^2 \nonumber
	\\&\hspace{0.5cm}+\frac{3}{|\mathcal{S}_2|}\mathbb{E}_m\left\| \hat \nabla _{\text{coord}} f_{a_j}(\mathbf{x}^{m-1})- \nabla  f_{a_j}(\mathbf{x}^{m-1})\right\|^2 \nonumber
	\\&\overset{\text{(iii)}}\leq\frac{6L^2d\delta^2}{|\mathcal{S}_2|}+\frac{3L^2}{|\mathcal{S}_2|}\|\mathbf{x}^m-\mathbf{x}^{m-1}\|^2=\frac{6L^2d\delta^2}{|\mathcal{S}_2|}+\frac{3\eta^2L^2}{|\mathcal{S}_2|}\|\mathbf{v}^{m-1}\|^2
	\end{align}
	where (i) follows from the fact that $\mathbb{E}_m(H_{j}(\mathbf{x}^m))=\hat \nabla_{\text{\normalfont coord}}f(\mathbf{x}^m)$ and $\mathbb{E}_m(H_{j}(\mathbf{x}^{m-1}))=\hat \nabla_{\text{\normalfont coord}}f(\mathbf{x}^{m-1})$, (ii) follows from the inequality that
	$\|\mathbf{a+b+c}\|^2\leq 3(\|\mathbf{a}\|^2+\|\mathbf{b}\|^2+\|\mathbf{c}\|^2)$, and  (iii) follows from Lemma~\ref{coordinate} and Assumption~\ref{assumption}. Combining~\eqref{cords} and~\eqref{cords2}  and unconditioned on $\mathbf{x}^0,...,\mathbf{x}^m$, we obtain 
	\begin{align}\label{nuews}
	\mathbb{E}&\|	\mathbf{v}^m-\hat \nabla_{\text{\normalfont coord}}f(\mathbf{x}^m)\|^2 \leq\frac{6L^2d\delta^2}{|\mathcal{S}_2|}+\frac{3\eta^2L^2}{|\mathcal{S}_2|}\|\mathbf{v}^{m-1}\|^2+\mathbb{E}\|	\mathbf{v}^{m-1}-\hat \nabla_{\text{\normalfont coord}}f(\mathbf{x}^{m-1})\|^2.
	\end{align}
	Telescoping the above inequality over $m$ from $qk_0+1$ to k, we obtain 
	\begin{align}\label{ssc1s}
	\mathbb{E}\|	\mathbf{v}^k-\hat \nabla_{\text{\normalfont coord}}f(\mathbf{x}^k)\|^2&\leq\frac{3\eta^2L^2}{|\mathcal{S}_2|}\sum_{t=qk_0+1}^k\mathbb{E} \|  \mathbf{v}^{t-1}\|^2+(k-qk_0)\frac{6L^2d\delta^2}{|\mathcal{S}_2|}\nonumber
	\\&+\mathbb{E}\|	\mathbf{v}^{qk_0}-\hat \nabla_{\text{\normalfont coord}}f(\mathbf{x}^{qk_0})\|^2.
	\end{align}
	Using  Lemma~\ref{evs1} 
	and ~\eqref{ssc1s}  yields~\eqref{bonusB}.  
	For the case when $k=qk_0$, it can be checked that~\eqref{bonusB} also holds.  
\end{proof}

\subsection{Proof of Theorem~\ref{mainTT}}
Noting that $f(\cdot)$ has a $L$-Lipschitz  gradient, we have, for  any given $k_0\leq \lfloor K /q\rfloor$ and $qk_0\leq m\leq \min\{q(k_0+1)-1,K\}$, 
\begin{align}
f(\mathbf{x}^{m+1})&\leq f(\mathbf{x}^{m}) +\langle  \nabla f(\mathbf{x}^m),\mathbf{x}^{m+1}-\mathbf{x}^{m} \rangle + \frac{L}{2}\|\mathbf{x}^{m+1}-\mathbf{x}^{m} \|^2 \nonumber
\\&=f(\mathbf{x}^m)-\eta\langle \nabla f(\mathbf{x}^m) -\mathbf{v}^m,\mathbf{v}^m  \rangle -\eta\|\mathbf{v}^m\|^2+\frac{L\eta^2}{2}\|\mathbf{v}^m\|^2  \nonumber
\\&\leq f(\mathbf{x}^m) +\frac{\eta}{2}\|\nabla f(\mathbf{x}^m) -\mathbf{v}^m\|^2+\frac{\eta}{2}\|\mathbf{v}^m\|^2-\eta\|\mathbf{v}^m\|^2+\frac{L\eta^2}{2}\|\mathbf{v}^m\|^2  \nonumber
\\&= f(\mathbf{x}^m) +\frac{\eta}{2}\|\nabla f(\mathbf{x}^m) -\mathbf{v}^m\|^2-\Big(\frac{\eta}{2}-\frac{L\eta^2}{2}\Big)\|\mathbf{v}^m\|^2. \nonumber
\end{align}
Taking the expectation over the above inequality yields
\begin{align*}
\mathbb{E}f(\mathbf{x}^{m+1}) &\leq \mathbb{E} f(\mathbf{x}^m) +\eta \left(\mathbb{E}\|\hat \nabla_{\text{\normalfont coord}}f(\mathbf{x}^m) -\mathbf{v}^m\|^2+\mathbb{E}\| \nabla f(\mathbf{x}^m) -\hat \nabla_{\text{\normalfont coord}}f(\mathbf{x}^m)\|^2 \right)-\Big(\frac{\eta}{2}-\frac{L\eta^2}{2}\Big)\mathbb{E}\|\mathbf{v}^m\|^2  \nonumber
\\&\leq \mathbb{E} f(\mathbf{x}^m) +\eta \left(\mathbb{E}\|\hat \nabla_{\text{\normalfont coord}}f(\mathbf{x}^m) -\mathbf{v}^m\|^2+L^2d\delta^2 \right)-\Big(\frac{\eta}{2}-\frac{L\eta^2}{2}\Big)\mathbb{E}\|\mathbf{v}^m\|^2,
\end{align*}
which, in conjunction with Lemma~\ref{le:coord}, implies that 
\begin{align}\label{simnf}
\mathbb{E}f(\mathbf{x}^{m+1})\leq& \mathbb{E} f(\mathbf{x}^m) + \frac{3\eta^3L^2}{|\mathcal{S}_2|}\sum_{t=qk_0}^{m-1}\mathbb{E} \|  \mathbf{v}^{t}\|^2+(m-qk_0)\frac{6\eta L^2d\delta^2}{|\mathcal{S}_2|}+\frac{3\eta I(|\mathcal{S}_1|<n)}{|\mathcal{S}_1|}\left(  2L^2d\delta^2+\sigma^2\right)+\eta L^2d\delta^2 \nonumber
\\& -\Big(\frac{\eta}{2}-\frac{L\eta^2}{2}\Big)\mathbb{E}\|\mathbf{v}^m\|^2. 
\end{align}
To simplify notation, we define 
\begin{align}\label{pisw}
\pi(\mathcal{S}_1,\delta)=\frac{3 I(|\mathcal{S}_1|<n)}{|\mathcal{S}_1|}\left(  2L^2d\delta^2+\sigma^2\right)+ L^2d\delta^2.
\end{align}
Then, telescoping~\eqref{simnf} over $m$ from  $qk_0$ to $ k$ yields 
\begin{small}
	\begin{align}\label{kko}
	\mathbb{E}f(\mathbf{x}^{k+1})\leq &\mathbb{E}f(\mathbf{x}^{qk_0})+\frac{3L^2\eta^3}{|\mathcal{S}_2|}\sum_{t_1=qk_0}^k\sum_{t_2=qk_0}^{t_1-1}\mathbb{E}\|\mathbf{v}^{t_2}\|^2+\sum_{t_1=qk_0}^k(t_1-qk_0)\frac{6\eta L^2d\delta^2}{|\mathcal{S}_2|}  \nonumber
	\\&+(k-qk_0+1)\eta\pi(\mathcal{S}_1,\delta)-\Big(\frac{\eta}{2}-\frac{L\eta^2}{2}\Big)\sum_{t_1=qk_0}^k\mathbb{E}\|\mathbf{v}^k\|^2. \nonumber
	\\\overset{\text{(i)}}\leq&\mathbb{E}f(\mathbf{x}^{qk_0})+\frac{3L^2\eta^3}{|\mathcal{S}_2|}\sum_{t_1=qk_0}^k\sum_{t_2=qk_0}^{k}\mathbb{E}\|\mathbf{v}^{t_2}\|^2+\frac{(k-qk_0)(k-qk_0+1)}{|\mathcal{S}_2|} 3\eta L^2d\delta^2\nonumber
	\\&+(k-qk_0+1)\eta\pi(\mathcal{S}_1,\delta)-\Big(\frac{\eta}{2}-\frac{L\eta^2}{2}\Big)\sum_{t_1=qk_0}^k\mathbb{E}\|\mathbf{v}^k\|^2.\nonumber
	\\
	\leq&\mathbb{E}f(\mathbf{x}^{qk_0})+\frac{3L^2\eta^3(k-k_0+1)}{|\mathcal{S}_2|}\sum_{t=qk_0}^k\mathbb{E}\|\mathbf{v}^{t}\|^2+\frac{(k-qk_0)(k-qk_0+1)}{|\mathcal{S}_2|} 3\eta L^2d\delta^2 \nonumber
	\\&+(k-qk_0+1)\eta\pi(\mathcal{S}_1,\delta)-\Big(\frac{\eta}{2}-\frac{L\eta^2}{2}\Big)\sum_{t_1=qk_0}^k\mathbb{E}\|\mathbf{v}^k\|^2.\nonumber
	\\
	\leq&\mathbb{E}f(\mathbf{x}^{qk_0})-\left( \frac{\eta}{2}-\frac{\eta^2L}{2}- \frac{3L^2\eta^3(k-k_0+1)}{|\mathcal{S}_2|}\right)\sum_{t=qk_0}^k\mathbb{E}\|\mathbf{v}^{t}\|^2 \nonumber
	\\&+\frac{(k-qk_0)(k-qk_0+1)}{|\mathcal{S}_2|} 3\eta L^2d\delta^2 +(k-qk_0+1)\eta\pi(\mathcal{S}_1,\delta).
	\end{align}
\end{small}
\hspace{-0.12cm}where (i) follows from the fact that $\sum_{t_2=qk_0}^{t_1-1}\mathbb{E}\|\mathbf{v}^{t_2}\|^2\leq \sum_{t_2=qk_0}^{k}\mathbb{E}\|\mathbf{v}^{t_2}\|^2$ for $t_1-1\leq k-1<k$.  Without loss of generality we suppose $k^{\star} q<K\leq ( k^\star+1)q-1 $, where $k^\star=\lfloor K/q \rfloor$.  Then, 
based on~\eqref{kko}, we have, after $K$ iterations, 
\begin{align}\label{xK}
\mathbb{E}f(\mathbf{x}^K)\leq&\mathbb{E}f(\mathbf{x}^{0})-\left( \frac{\eta}{2}-\frac{\eta^2L}{2}- \frac{3L^2\eta^3(k-k^\star)}{|\mathcal{S}_2|}\right)\sum_{t=qk^\star}^{K-1}\mathbb{E}\|\mathbf{v}^{t}\|^2+\frac{(K-qk^\star)^2}{|\mathcal{S}_2|} 3\eta L^2d\delta^2 \nonumber
\\& +(K-qk^\star)\eta\pi(\mathcal{S}_1,\delta)+\sum_{t=1}^{k^\star}\left(\mathbb{E}f(\mathbf{x}^{tq})-\mathbb{E}f(\mathbf{x}^{{(t-1)}q})\right).
\end{align}
The term  $\mathbb{E}f(\mathbf{x}^{tq})-\mathbb{E}f(\mathbf{x}^{{(t-1)}q})$ in the above inequality can be upper-bounded by 
\begin{align}\label{tqx}
\mathbb{E}f(\mathbf{x}^{tq})-\mathbb{E}f(\mathbf{x}^{{(t-1)}q})\overset{\text{(i)}}\leq& \left( \frac{\eta}{2}-\frac{\eta^2L}{2}- \frac{3L^2\eta^3q}{|\mathcal{S}_2|}\right)\sum_{t_1=(t-1)q}^{tq-1}\mathbb{E}\|\mathbf{v}^{t_1}\|^2 +\frac{q^2}{|\mathcal{S}_2|} 3\eta L^2d\delta^2\nonumber
\\& +q\eta\pi(\mathcal{S}_1,\delta),
\end{align}
where (i) is obtained by letting $k_0=(t-1)$ and $k=tq-1$ in~\eqref{kko}. Combining~\eqref{xK} and~\eqref{tqx} yields 
\begin{align}\label{teleoog}
\mathbb{E}f(\mathbf{x}^K)\leq&\mathbb{E}f(\mathbf{x}^0)-\left( \frac{\eta}{2}-\frac{\eta^2L}{2}- \frac{3L^2\eta^3q}{|\mathcal{S}_2|}\right)\sum_{t=0}^{K-1}\mathbb{E}\|\mathbf{v}^{t}\|^2+\frac{(K-qk^\star)q}{|\mathcal{S}_2|} 3\eta L^2d\delta^2\nonumber
\\&+\frac{k^\star q^2}{|\mathcal{S}_2|} 3\eta L^2d\delta^2+K\eta\pi(\mathcal{S}_1,\delta) \nonumber
\\\leq&\mathbb{E}f(\mathbf{x}^0)-\left( \frac{\eta}{2}-\frac{\eta^2L}{2}- \frac{3L^2\eta^3q}{|\mathcal{S}_2|}\right)\sum_{t=0}^{K-1}\mathbb{E}\|\mathbf{v}^{t}\|^2+\frac{Kq}{|\mathcal{S}_2|}3\eta L^2d\delta^2\nonumber
\\&+K\eta \pi(\mathcal{S}_1,\delta),
\end{align}
which, in conjunction with~\eqref{pisw}, yields 
\begin{align}\label{niubibis}
\mathbb{E}f(\mathbf{x}^K)
\leq&\mathbb{E}f(\mathbf{x}^0)-\left( \frac{\eta}{2}-\frac{\eta^2L}{2}- \frac{3L^2\eta^3q}{|\mathcal{S}_2|}\right)\sum_{t=0}^{K-1}\mathbb{E}\|\mathbf{v}^{t}\|^2+\frac{3Kq}{|\mathcal{S}_2|} \eta L^2d\delta^2 \nonumber
\\&+K\eta\left( \frac{3I(|\mathcal{S}_1|<n)}{|\mathcal{S}_1|}\left(  2L^2d\delta^2+\sigma^2\right)+L^2d\delta^2\right).
\end{align}
Plugging the notations in Theorem~\ref{mainTT} into~\eqref{niubibis} yields 
\begin{align}\label{poisc}
\phi\sum_{t=0}^K\mathbb{E}\|\mathbf{v}^t\|^2\leq \Delta+\frac{3(K+1)q}{|\mathcal{S}_2|} \eta L^2d\delta^2 +(K+1)\eta\left( \frac{3I(|\mathcal{S}_1|<n)}{|\mathcal{S}_1|}\left(  2L^2d\delta^2+\sigma^2\right)+L^2d\delta^2\right),
\end{align}
where $\Delta:=f(\mathbf{x}^{0})-f(\mathbf{x}^*)$ with $\mathbf{x}^*:=\arg\min_{\mathbf{x}} f(\mathbf{x})$.

As $\zeta$ is generated from $\{0,...,K\}$ uniformly  at random, we have the output $\mathbf{x}^{\zeta}$ satisfies 
\begin{align}\label{nicep2}
\mathbb{E}\|f(\mathbf{x}^{\zeta})\|^2 \leq& 3\mathbb{E}\left( \|f(\mathbf{x}^{\zeta})-\hat \nabla_{\text{coord}}f(\mathbf{x}^{\zeta})\|^2+\|\mathbf{v}^\zeta-\hat \nabla_{\text{coord}}f(\mathbf{x}^{\zeta})\|^2
+\|\mathbf{v}^\zeta\|^2\right)\nonumber
\\\leq&  3L^2d\delta^2+\underbrace{\frac{3}{K+1}\sum_{k=0}^{K}\mathbb{E}\|\mathbf{v}^k-\hat \nabla_{\text{coord}}f(\mathbf{x}^{k})\|^2}_{\text{(A)}} +\frac{3}{K+1}\sum_{k=0}^{K}\mathbb{E}\|\mathbf{v}^k\|^2.
\end{align}
We next upper-bound the second term (A) in the above inequality.
First note that 
\begin{align}
\frac{1}{K+1} \sum_{t=0}^K\mathbb{E}\| \hat \nabla_{\text{coord}}f(\mathbf{x}^t)-\mathbf{v}^t\| =\frac{1}{K+1}\left( \sum_{p=0}^{k^\star-1}\sum_{t=pq}^{(p+1)q-1} \mathbb{E}\|\hat \nabla_{\text{coord}}f(\mathbf{x}^t)-\mathbf{v}^t\|^2+\sum_{t=qk^\star}^K\mathbb{E}\|\hat \nabla_{\text{coord}}f(\mathbf{x}^t)-\mathbf{v}^t\|^2 \right).\nonumber
\end{align}
Applying Lemma~\ref{le:coord} to the above equation yields 
\begin{align*}
\sum_{t=0}^K&\mathbb{E}\| \hat \nabla_{\text{coord}}f(\mathbf{x}^t)-\mathbf{v}^t\|-\sum_{t=qk^\star}^K\mathbb{E}\|\hat \nabla_{\text{coord}}f(\mathbf{x}^t)-\mathbf{v}^t\|^2
\\\leq&   \sum_{p=0}^{k^\star-1}\sum_{t=pq}^{(p+1)q-1} \left(\frac{3L^2\eta^2}{|\mathcal{S}_2|}\sum_{t_1=pq}^{t-1}\mathbb{E}\|\mathbf{v}^{t_1}\|^2+(t-pq)\frac{6L^2d\delta^2}{|\mathcal{S}_2|} +\pi(\mathcal{S}_1,\delta)\right) 
\\\leq&\sum_{p=0}^{k^\star-1}\left(   \frac{3qL^2\eta^2}{|\mathcal{S}_2|}\sum_{t_1=pq}^{(p+1)q-1}\mathbb{E}\|\mathbf{v}^{t_1}\|^2+q(q-1)\frac{3L^2d\delta^2}{|\mathcal{S}_2|}+ q\pi(\mathcal{S}_1,\delta) \right)
\\ \leq &\frac{3qL^2\eta^2}{|\mathcal{S}_2|}\sum_{p=0}^{k^\star-1}\sum_{t_1=pq}^{(p+1)q-1}\mathbb{E}\|\mathbf{v}^{t_1}\|^2+qk^\star(q-1)\frac{3L^2d\delta^2}{|\mathcal{S}_2|}+qk^\star\pi(\mathcal{S}_1,\delta),
\end{align*}
which, by applying Lemma~\ref{le:coord} to $\sum_{t=qk^\star}^K\mathbb{E}\|\hat \nabla_{\text{coord}}f(\mathbf{x}^t)-\mathbf{v}^t\|^2$, yields 
\begin{align}\label{repuse}
\sum_{t=0}^K\mathbb{E}\| \hat \nabla_{\text{coord}}f(\mathbf{x}^t)-\mathbf{v}^t\|\leq& \frac{3qL^2\eta^2}{|\mathcal{S}_2|}\sum_{t=0}^{K}\mathbb{E}\|\mathbf{v}^t\|^2+\frac{(K-qk^\star)q+q^2k^\star}{|\mathcal{S}_2|}3L^2d\delta^2+(K+1)\pi(\mathcal{S}_1,\delta).
\end{align}
The above inequality can be further simplified to
\begin{align}\label{ggsw}
\frac{1}{K+1} \sum_{t=0}^K\mathbb{E}\| \hat \nabla_{\text{coord}}f(\mathbf{x}^t)-\mathbf{v}^t\|\leq& \frac{3qL^2\eta^2}{|\mathcal{S}_2|}\frac{1}{K+1}\sum_{t=0}^{K}\mathbb{E}\|\mathbf{v}^t\|^2+\frac{3q}{|\mathcal{S}_2|}L^2d\delta^2
+ \pi(\mathcal{S}_1,\delta).
\end{align}
Combining~\eqref{poisc}, \eqref{nicep2} and~\eqref{ggsw} yields 
\begin{align*}
\mathbb{E}\|f(\mathbf{x}^{\zeta})\|^2 \leq&  3L^2d\delta^2+\frac{1}{\phi}\left( \frac{9q\eta^2L^2}{|\mathcal{S}_2|} +3  \right)\left(   \frac{\Delta}{K+1}+\eta \big(\theta+L^2d\delta^2\big)\right)+3\theta,
\end{align*}
which finishes the proof.
\subsection{Proof of Corollary~\ref{comain}}
We prove two cases when $n\leq K$ and $n> K$, separately. 

First we suppose $n\leq K$. Under the selection of parameters in~\eqref{pabu1}, we have $|\mathcal{S}_1|=n,q=|\mathcal{S}_1|=\lceil n^{1/2}\rceil$, and thus obtain 
\begin{align*}
\phi= \eta\left(\frac{1}{2}-\frac{1}{8}-\frac{3}{16}\right)=\frac{3\eta}{16},\;\theta=\frac{3}{K},
\end{align*} 
which, in conjunction with~\eqref{uioj}, yields
\begin{align*}
\mathbb{E}\|f(\mathbf{x}^\zeta)\|^2\leq \frac{3}{K}+\left(\frac{16}{3\eta}\frac{9}{16}+\frac{16}{\eta}\right)\left(\frac{\Delta}{K}+\eta\left( \frac{4}{K}\right)\right)+\frac{9}{K}=\frac{76\Delta L+88}{K}\leq \mathcal{O}\left(\frac{1}{K}\right).
\end{align*} 
We choose $K=C\epsilon^{-1}$, where $C>0$ is a constant. Then, based on the above inequality, we have, for $C$ large enough, our Algorithm~\ref{ours:3}
achieves $\mathbb{E}\|f(\mathbf{x}^\zeta)\|^2\leq \epsilon$,  and  the total number of function queries  can be bounded as 
\begin{align}\label{faac1}
\left\lceil \frac{K}{q}\right\rceil nd + K |\mathcal{S}_2|d\leq Kn^{1/2}d +nd+ K n^{1/2}d+Kd\leq \mathcal{O}(nd+\epsilon^{-1}n^{1/2}d)\leq   \mathcal{O}(\epsilon^{-1}n^{1/2}d)\leq \mathcal{O}(d\epsilon^{-3/2}),
\end{align}
where the last two inequalities follow from the assumption that $n\leq K=C\epsilon^{-1}$.

Next, we suppose $n>K$. In this case, we have $|\mathcal{S}_1|=K, q=|\mathcal{S}_1|=\lceil K^{1/2}\rceil$, and 
\begin{align*}
\phi= \eta\left(\frac{1}{2}-\frac{1}{8}-\frac{3}{16}\right)=\frac{3\eta}{16},\;\theta=\frac{3}{K}+\frac{3}{K}\left( \frac{2}{K}+\sigma^2\right),
\end{align*} 
which, in conjunction with~\eqref{uioj}, yields
\begin{align*}
\mathbb{E}\|f(\mathbf{x}^\zeta)\|^2\leq \frac{3}{K} +\frac{19}{\eta}\left(\frac{\Delta}{K}+\eta\left(\frac{4+3\sigma^2}{K}+\frac{6}{K^2}\right)\right)+\frac{9}{K}=\frac{76\Delta L+88+57\sigma^2}{K}+\frac{114}{K^2}\leq \mathcal{O}\left( \frac{1}{K}\right).
\end{align*}
We choose $K=C\epsilon^{-1}$, where $C>0$ is a constant. Then, for $C$ large enough, our Algorithm~\ref{ours:3}  achieves  $\mathbb{E}\|f(\mathbf{x}^\zeta)\|^2\leq \epsilon$, and  the total number of function queries  can be bounded as 
\begin{align}\label{facc2}
\left\lceil \frac{K}{q}\right\rceil Kd + K |\mathcal{S}_2|d&\leq  Kd+K^{3/2}d+K^{3/2}d+Kd=2K^{3/2}d+2Kd\leq \mathcal{O}(K^{3/2}d)\nonumber
\\&\leq\mathcal{O}(d\epsilon^{-3/2})\leq \mathcal{O}(\epsilon^{-1}n^{1/2}d),
\end{align}
where the last inequality follows from the assumption that $n>K\geq C\epsilon^{-1}$. 

Combining~\eqref{faac1} and~\eqref{facc2}  implies that the number of function queries required by Algorithm~\ref{ours:3} is at most $\mathcal{O}\left(\min\{\epsilon^{-1}n^{1/2}d, d\epsilon^{-3/2} \}\right)$.

\subsection{Proof of Corollary~\ref{nonbatch} }\label{co3}
We prove two cases when $n\leq \lceil K^{2/3}\rceil  $ and $n> \lceil K^{2/3}\rceil  $, separately. 

First we suppose $n\leq \lceil K^{2/3}\rceil $, and thus we have $q=|\mathcal{S}_1|=n, \eta=1/(4L\sqrt{n}), \delta=1/(L\sqrt{nKd})$. Then,  we obtain 
\begin{align*}
\phi=\eta\left( \frac{1}{2}-\frac{1}{8\sqrt{n}}-\frac{3}{16} \right)\geq \frac{3\eta}{16},\; \theta=\frac{3}{K},
\end{align*}
which, in conjunction with~\eqref{uioj}, yields 
\begin{align*}
\mathbb{E}\|f(\mathbf{x}^\zeta)\|^2\leq \frac{76\sqrt{n}L\delta}{K}+\frac{66}{K}+\frac{22}{nK}= \frac{76\sqrt{|\mathcal{S}_1|}L\delta}{K}+\frac{66}{K}+\frac{22}{|\mathcal{S}_1|K} \leq \mathcal{O}\left(\frac{\sqrt{|\mathcal{S}_1|}}{K}\right).
\end{align*}
Let $K=C\sqrt{n}\epsilon^{-1}$ for a positive constant $C$, which, combined with $n \leq \lceil K^{2/3}\rceil$, implies that $n\leq \mathcal{O}(\epsilon^{-1})$.
Then, our Algorithm~\ref{ours:3}  achieves  $\mathbb{E}\|f(\mathbf{x}^\zeta)\|^2\leq \epsilon$, and  the total number of function queries  can be bounded as 
\begin{align}
\left\lceil \frac{K}{q}\right\rceil nd + K d \leq nd+2Kd\leq \mathcal{O}\left(nd + \frac{n^{1/2}d}{\epsilon}\right)\leq  \mathcal{O}(\epsilon^{-1}n^{1/2}d) \leq \mathcal{O}\left( \frac{d}{\epsilon^{3/2}} \right),
\end{align}
where the last two inequalities follow from the assumption that $n\leq \mathcal{O}( \epsilon^{-1})$. 

Next, we suppose $n>\lceil K^{2/3}\rceil$. In this case, we have $|\mathcal{S}_1|=\lceil K^{2/3} \rceil$, and thus
\begin{align}
\phi \geq \eta\left( \frac{1}{2}-\frac{1}{8}-\frac{3}{16} \right)\geq \frac{3\eta}{16},\; \theta=\frac{3}{K}+\frac{6}{K^{7/3}}+\frac{3\sigma^2}{K^{2/3}},
\end{align}
which, in conjunction with~\eqref{uioj}, yields 
\begin{align*}
\mathbb{E}\|f(\mathbf{x}^\zeta)\|^2\leq \mathcal{O}\left( \frac{K^{1/3}}{K}\right)\leq \mathcal{O}\left( \frac{\sqrt{|\mathcal{S}_1|}}{K}  \right),
\end{align*}
where the last inequality follows from the assumption that $\lceil K ^{2/3}\rceil< n$.  Let $K=C\epsilon^{-3/2}$, where $C>0$ is a constant. 
Then, for $C$ large enough, our Algorithm~\ref{ours:3}  achieves  $\mathbb{E}\|f(\mathbf{x}^\zeta)\|^2\leq \epsilon$, and  the total number of function queries can be bounded as 
\begin{align}
\left\lceil \frac{K}{q}\right\rceil |\mathcal{S}_1|d + K d \leq 2Kd+|\mathcal{S}_1|d\leq 3Kd \leq \mathcal{O}\left( \frac{d}{\epsilon^{3/2}}\right) \leq \mathcal{O}\left( \frac{n^{1/2}d}{\epsilon}\right)
\end{align}
where the last inequality follows from the assumption that $n>\lceil K ^{2/3}\rceil=C^{2/3}\epsilon^{-1}$. 

Combining the above two cases finishes the proof. 

\section{Proof for ZO-SPIDER-Coord  under PL Condition}\label{plpp}
\subsection{Proof of Theorem~\ref{th:gd}}
Let $\mathbf{x}^*=\arg\min_{\mathbf{x}} f(\mathbf{x})$. Then, for  any $qk_0\leq m\leq q(k_0+1)-1, k_0=0,....,h-1$ $(h=K/q)$, we have
\begin{align}
&f(\mathbf{x}^{m+1})\nonumber
\\&\leq f(\mathbf{x}^{m}) +\langle  \nabla f(\mathbf{x}^m),\mathbf{x}^{m+1}-\mathbf{x}^{m} \rangle + \frac{L}{2}\|\mathbf{x}^{m+1}-\mathbf{x}^{m} \|^2 \nonumber
\\&=f(\mathbf{x}^m)-\frac{\eta}{2}\langle \nabla f(\mathbf{x}^m) -\mathbf{v}^m,\mathbf{v}^m  \rangle -\frac{\eta}{2}\|\mathbf{v}^m\|^2-\frac{\eta}{2}\langle  \nabla f(\mathbf{x}^m),\mathbf{v}^m-\nabla f(\mathbf{x}^m)  \rangle -\frac{\eta}{2}\|\nabla f(\mathbf{x}^m)\|^2+\frac{L\eta^2}{2}\|\mathbf{v}^m\|^2  \nonumber
\\&\leq f(\mathbf{x}^m) +\frac{\eta}{2}\|\nabla f(\mathbf{x}^m) -\mathbf{v}^m\|^2-\frac{\eta}{4}\|\mathbf{v}^m\|^2-\frac{\eta}{4}\|\nabla f(\mathbf{x}^m)\|^2+\frac{L\eta^2}{2}\|\mathbf{v}^m\|^2  \nonumber
\\&= f(\mathbf{x}^m) +\frac{\eta}{2}\|\nabla f(\mathbf{x}^m) -\mathbf{v}^m\|^2-\Big(\frac{\eta}{4}-\frac{L\eta^2}{2}\Big)\|\mathbf{v}^m\|^2-\frac{\eta}{4}\|\nabla f(\mathbf{x}^m)\|^2, \nonumber
\\&\overset{\text{(i)}}\leq   f(\mathbf{x}^m) +\frac{\eta}{2}\|\nabla f(\mathbf{x}^m) -\mathbf{v}^m\|^2-\Big(\frac{\eta}{4}-\frac{L\eta^2}{2}\Big)\|\mathbf{v}^m\|^2-\frac{\eta}{4\gamma} (f(\mathbf{x}^m)-f(\mathbf{x}^*))
\end{align}
where (i) follows from Definition~\ref{d1}. Taking expectation over the above inequality and using Lemma~\ref{le:coord}, we have
\begin{align}\label{sioop}
\mathbb{E}(f(\mathbf{x}^{m+1})-f(\mathbf{x}^*))\leq& \left(1-\frac{\eta}{4\gamma} \right)\mathbb{E}(f(\mathbf{x}^m)-f(\mathbf{x}^*))+ \frac{3\eta^3L^2}{|\mathcal{S}_2|}\sum_{t=qk_0}^{m}\mathbb{E} \|  \mathbf{v}^{t}\|^2+(m-qk_0)\frac{6\eta L^2d\delta^2}{|\mathcal{S}_2|}+\eta L^2d\delta^2\nonumber
\\&-\Big(\frac{\eta}{4}-\frac{L\eta^2}{2}\Big)\mathbb{E}\|\mathbf{v}^m\|^2. 
\end{align}
To simplify notation, we let $\alpha:=1-\eta/(4\gamma)$. Then, telescoping~\eqref{sioop} over $m$ from $qk_0$ to $q(k_0+1)-1$ yields
\begin{align}
\mathbb{E}(f(\mathbf{x}^{q(k_0+1)})&-f(\mathbf{x}^*))\leq \alpha^q \mathbb{E}(f(\mathbf{x}^{qk_0})-f(\mathbf{x}^*)) + \sum_{m=qk_0}^{q(k_0+1)-1}\frac{3\eta^3L^2}{|\mathcal{S}_2|}\alpha^{q(k_0+1)-m-1}\sum_{t=qk_0}^m\mathbb{E}\|\mathbf{v}^t\|^2 \nonumber
\\&+\sum_{m=qk_0}^{q(k_0+1)-1}\left(\frac{b_\gamma}{B_\gamma}+1\right)\alpha^{q(k_0+1)-m-1}\eta L^2d\delta^2 -\Big(\frac{\eta}{4}-\frac{L\eta^2}{2}\Big)\sum_{m=qk_0}^{q(k_0+1)-1}\alpha^{q(k_0+1)-m-1} \mathbb{E}\|\mathbf{v}^m\|^2 \nonumber
\\=& \alpha^q \mathbb{E}(f(\mathbf{x}^{qk_0})-f(\mathbf{x}^*)) +\frac{3\eta^3L^2}{|\mathcal{S}_2|} \sum_{m=qk_0}^{q(k_0+1)-1}\sum_{t=m}^{q(k_0+1)-1}\alpha^{q(k_0+1)-t-1}\mathbb{E}\|\mathbf{v}^m\|^2 \nonumber
\\&+\sum_{m=qk_0}^{q(k_0+1)-1}\left(\frac{b_\gamma}{B_\gamma}+1\right)\alpha^{q(k_0+1)-m-1}\eta L^2d\delta^2 -\Big(\frac{\eta}{4}-\frac{L\eta^2}{2}\Big)\sum_{m=qk_0}^{q(k_0+1)-1}\alpha^{q(k_0+1)-m-1} \mathbb{E}\|\mathbf{v}^m\|^2 \nonumber
\end{align}
where the first inequality follows from the fact that $(m-qk_0)/|\mathcal{S}_2|\leq q/(B_\gamma \gamma L)\leq b_\gamma/B_\gamma$.
Noting that $\alpha^{q(k_0+1)-m-1}\geq \alpha^q$ and $\sum_{t=m}^{q(k_0+1)-1}\alpha^{q(k_0+1)-t-1}=(1-\alpha^{q(k_0+1)-m})/(1-\alpha)<1/(1-\alpha)=4\gamma/\eta$, we obtain from the above inequality that 
\begin{align}\label{telesss}
\mathbb{E}(f&(\mathbf{x}^{q(k_0+1)})-f(\mathbf{x}^*)) \nonumber
\\\leq& \alpha^q \mathbb{E}(f(\mathbf{x}^{qk_0})-f(\mathbf{x}^*)) +\frac{12\gamma\eta^2L^2}{|\mathcal{S}_2|} \sum_{m=qk_0}^{q(k_0+1)-1}\mathbb{E}\|\mathbf{v}^m\|^2 \nonumber
\\&+\frac{1-\alpha^{q}}{1-\alpha}\left(\frac{b_\gamma}{B_\gamma}+1\right)\eta L^2d\delta^2 -\Big(\frac{\eta}{4}-\frac{L\eta^2}{2}\Big)\sum_{m=qk_0}^{q(k_0+1)-1}\alpha^{q} \mathbb{E}\|\mathbf{v}^m\|^2 \nonumber
\\ \overset{\text{(i)}}\leq& \alpha^q \mathbb{E}(f(\mathbf{x}^{qk_0})-f(\mathbf{x}^*)) +\frac{12\eta^2L}{B_\gamma}\sum_{m=qk_0}^{q(k_0+1)-1}\mathbb{E}\|\mathbf{v}^m\|^2 \nonumber
\\&+\frac{1-\alpha^{q}}{1-\alpha}\left(\frac{b_\gamma}{B_\gamma}+1\right)\eta L^2d\delta^2 -\Big(\frac{\eta}{4}-\frac{L\eta^2}{2}\Big)\left(1-\frac{b_\gamma}{16q}\right)^q\sum_{m=qk_0}^{q(k_0+1)-1} \mathbb{E}\|\mathbf{v}^m\|^2 \nonumber
\\=&  \alpha^q \mathbb{E}(f(\mathbf{x}^{qk_0})-f(\mathbf{x}^*))+\frac{1-\alpha^{q}}{1-\alpha}\left(\frac{b_\gamma}{B_\gamma}+1\right)\eta L^2d\delta^2-\eta\left(\frac{1}{8}\left(1-\frac{b_\gamma}{16q}\right)^q -\frac{3 }{B_\gamma}  \right)\sum_{m=qk_0}^{q(k_0+1)-1} \mathbb{E}\|\mathbf{v}^m\|^2 \nonumber
\\\leq &  \alpha^q \mathbb{E}(f(\mathbf{x}^{qk_0})-f(\mathbf{x}^*))+\frac{1-\alpha^{q}}{1-\alpha}\left(\frac{b_\gamma}{B_\gamma}+1\right)\eta L^2d\delta^2
\end{align}
where (i) follows from the facts that $\frac{q}{b_\gamma L} <\gamma$ and $|\mathcal{S}_2|=\lceil \gamma L B_\gamma\rceil$ and the last inequality  follows from the condition that $\frac{1}{8}\left(1-\frac{b_\gamma}{16q}\right)^q -\frac{3 }{B_\gamma}>0$. 

Telescoping~\eqref{telesss} over $k_0$ from $0$ to $h-1$ yields 
\begin{align}\label{sopop}
\mathbb{E}(f(\mathbf{x}^{K})-f(\mathbf{x}^*))  \leq& \left(1-\frac{\eta}{4\gamma}\right)^K(f(\mathbf{x}^{0})-f(\mathbf{x}^*)) +\sum_{k_0=0}^{h-1}\alpha^{qk_0}\frac{1-\alpha^q}{1-\alpha}\left(\frac{b_\gamma}{B_\gamma}+1\right)\eta L^2d\delta^2  \nonumber
\\=&\left(1-\frac{\eta}{4\gamma}\right)^K(f(\mathbf{x}^{0})-f(\mathbf{x}^*))+\frac{1-\alpha^k}{1-\alpha^q}\frac{1-\alpha^q}{1-\alpha}\left(\frac{b_\gamma}{B_\gamma}+1\right)\eta L^2d\delta^2 \nonumber
\\\leq& \left(1-\frac{1}{16L\gamma}\right)^K(f(\mathbf{x}^{0})-f(\mathbf{x}^*))+4\gamma \left(\frac{b_\gamma}{B_\gamma}+1\right)L^2d\delta^2.
\end{align}
From~\eqref{sopop}, we require the total number $K=\mathcal{O}(\gamma \log \left(1/\epsilon\right))$ and $\delta=\mathcal{O}(\sqrt{\epsilon}/(L\sqrt{\gamma d}))$ to achieve $ \mathbb{E}(f(\mathbf{x}^{K})-f(\mathbf{x}^*)) <\epsilon$. Thus, the total number of function queries is 
$\left\lceil  \frac{K}{q}\right\rceil nd +K|\mathcal{S}_2|d = \mathcal{O}\left( d(\gamma n^{1/2}+\gamma^2)\log \left( \frac{1}{\epsilon}\right)\right)$. 
Then, the proof is complete.

\section{Proofs for PROX-ZO-SPIDER-Coord }
\subsection{Auxiliary Lemma}
We first prove the following useful lemma. 
\begin{lemma}\label{proxT}
	Let Assumption~\ref{assumption} hold, and define 
	\begin{align}\label{plllss}
	&\tau=\frac{\eta}{2}-\frac{L\eta^2}{2}-\frac{3q}{|\mathcal{S}_2|}\eta^3L^2 \nonumber \\&C=\frac{6I(|\mathcal{S}_1|<n)}{|\mathcal{S}_1|}\left(  2L^2d\delta^2+\sigma^2\right)+2L^2d\delta^2.
	\end{align}
	Then, we have,  
	\begin{align}\label{egah}
	\mathbb{E}\|G(\mathbf{x}^\zeta,\nabla f(\mathbf{x}^\zeta),\eta) \|^2\leq& \frac{2}{\tau}\left( \frac{\Delta_\psi}{K} +\frac{3q}{|\mathcal{S}_2|}\eta L^2d\delta^2+ \frac{\eta}{2}C\right) +\frac{12qL^2\eta^2}{|\mathcal{S}_2|\tau}\left(\frac{\Delta_\psi}{K} +\frac{3q}{|\mathcal{S}_2|}\eta L^2d\delta^2+ \frac{\eta}{2}C\right)\nonumber
	\\&+\frac{12q}{|\mathcal{S}_2|}L^2d\delta^2+4C, 
	\end{align}
	where $\Delta_\psi=\psi(\mathbf{x}^{0})-\psi(\mathbf{x}^{*})$ with $\mathbf{x}^*=\arg\min_{\mathbf{x}\in\mathbb{R}^d}\psi(\mathbf{x})$.
\end{lemma}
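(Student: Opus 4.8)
The plan is to run the standard nonconvex composite proximal descent argument, but with the true gradient replaced by the recursive coordinate estimator $\mathbf{v}^k$ and the resulting estimation error carried through the telescoping, exactly mirroring the proof of Theorem~\ref{mainTT}. First I would introduce the surrogate prox-gradient $\mathbf{g}^k:=\eta^{-1}(\mathbf{x}^k-\mathbf{x}^{k+1})$, so that the proximal update reads $\mathbf{x}^{k+1}=\mathrm{prox}_{\eta h}(\mathbf{x}^k-\eta\mathbf{v}^k)$ and its first-order optimality condition yields the inclusion $\mathbf{g}^k-\mathbf{v}^k\in\partial h(\mathbf{x}^{k+1})$. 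Combining the $L$-smoothness descent inequality for $f$ with the convexity inequality $h(\mathbf{x}^k)\ge h(\mathbf{x}^{k+1})+\langle \mathbf{g}^k-\mathbf{v}^k,\mathbf{x}^k-\mathbf{x}^{k+1}\rangle$ and the Young bound $\eta\langle \mathbf{v}^k-\nabla f(\mathbf{x}^k),\mathbf{g}^k\rangle\le \tfrac{\eta}{2}\|\mathbf{v}^k-\nabla f(\mathbf{x}^k)\|^2+\tfrac{\eta}{2}\|\mathbf{g}^k\|^2$ gives the one-step descent on $\Psi=f+h$,
\[\mathbb{E}\Psi(\mathbf{x}^{k+1})\le \mathbb{E}\Psi(\mathbf{x}^k)+\frac{\eta}{2}\mathbb{E}\|\mathbf{v}^k-\nabla f(\mathbf{x}^k)\|^2-\Big(\frac{\eta}{2}-\frac{L\eta^2}{2}\Big)\mathbb{E}\|\mathbf{g}^k\|^2.\]

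Next I would split $\|\mathbf{v}^k-\nabla f(\mathbf{x}^k)\|^2\le 2\|\mathbf{v}^k-\hat\nabla_{\text{coord}}f(\mathbf{x}^k)\|^2+2\|\hat\nabla_{\text{coord}}f(\mathbf{x}^k)-\nabla f(\mathbf{x}^k)\|^2$, bounding the second piece by $2L^2 d\delta^2$ via Lemma~\ref{coordinate} and the first by the proximal analog of Lemma~\ref{le:coord} (whose martingale derivation is unchanged, except that $\|\mathbf{x}^m-\mathbf{x}^{m-1}\|^2=\eta^2\|\mathbf{g}^{m-1}\|^2$ now replaces $\eta^2\|\mathbf{v}^{m-1}\|^2$). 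This assembles precisely the constant $C$, together with a term $\tfrac{6\eta^2 L^2}{|\mathcal{S}_2|}\sum_{t=qk_0}^{k-1}\mathbb{E}\|\mathbf{g}^t\|^2$ and a smoothing error linear in $(k-qk_0)$. Telescoping the descent first within each epoch and then across all epochs, and controlling the double sum by $\sum_{t_1}\sum_{t_2\le t_1-1}\mathbb{E}\|\mathbf{g}^{t_2}\|^2\le (k-qk_0+1)\sum_t\mathbb{E}\|\mathbf{g}^t\|^2$ exactly as in Theorem~\ref{mainTT}, collapses everything into
\[\tau\sum_{k=0}^{K}\mathbb{E}\|\mathbf{g}^k\|^2\le \Delta_\psi+(K+1)\Big(\frac{3q}{|\mathcal{S}_2|}\eta L^2 d\delta^2+\frac{\eta}{2}C\Big),\]
with $\tau=\tfrac{\eta}{2}-\tfrac{L\eta^2}{2}-\tfrac{3q}{|\mathcal{S}_2|}\eta^3 L^2$ as defined in \eqref{plllss}.

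To convert the computable surrogate $\mathbf{g}^k$ into the stated criterion $G(\mathbf{x}^k,\nabla f(\mathbf{x}^k),\eta)=\eta^{-1}(\mathbf{x}^k-\mathbf{x}^+)$, I would invoke nonexpansiveness of $\mathrm{prox}_{\eta h}$: since $\mathbf{x}^{k+1}$ and $\mathbf{x}^+$ are the proximal images of $\mathbf{x}^k-\eta\mathbf{v}^k$ and $\mathbf{x}^k-\eta\nabla f(\mathbf{x}^k)$, one gets $\|G(\mathbf{x}^k,\nabla f(\mathbf{x}^k),\eta)-\mathbf{g}^k\|\le \|\nabla f(\mathbf{x}^k)-\mathbf{v}^k\|$, hence $\|G\|^2\le 2\|\mathbf{g}^k\|^2+2\|\nabla f(\mathbf{x}^k)-\mathbf{v}^k\|^2$. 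Averaging over the uniformly random output index $\zeta$, the $\mathbf{g}^k$ term contributes $\tfrac{2}{\tau}(\cdots)$ after substituting the telescoped bound. For the variance term I would re-use the averaged martingale estimate (the proximal analog of \eqref{ggsw}), which dominates $\tfrac{1}{K+1}\sum_k\mathbb{E}\|\hat\nabla_{\text{coord}}f(\mathbf{x}^k)-\mathbf{v}^k\|^2$ by $\tfrac{3qL^2\eta^2}{|\mathcal{S}_2|}$ times $\tfrac{1}{K+1}\sum_k\mathbb{E}\|\mathbf{g}^k\|^2$ plus smoothing constants; feeding in the telescoped bound on $\sum_k\mathbb{E}\|\mathbf{g}^k\|^2$ produces the $\tfrac{12qL^2\eta^2}{|\mathcal{S}_2|\tau}(\cdots)$ term, while the residual smoothing and variance constants collect into $\tfrac{12q}{|\mathcal{S}_2|}L^2 d\delta^2+4C$ via the crude bound $4\pi(\mathcal{S}_1,\delta)+4L^2 d\delta^2\le 4C$. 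This reproduces \eqref{egah}.

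The scaffolding is identical to Theorem~\ref{mainTT}, so the real work lies in the two composite-specific ingredients. The first is the descent step, where one must replace $\|\mathbf{v}^k\|^2$-based reasoning by the prox-gradient $\|\mathbf{g}^k\|^2$ and correctly exploit the subgradient inclusion $\mathbf{g}^k-\mathbf{v}^k\in\partial h(\mathbf{x}^{k+1})$ with convexity of $h$. The second, which I expect to be the genuine crux, is the nonexpansiveness bridge controlling the gap between the computable $\mathbf{g}^k$ and the true generalized projected gradient $G$ by the single quantity $\|\nabla f(\mathbf{x}^k)-\mathbf{v}^k\|$; without it the analysis would only bound $\mathbb{E}\|\mathbf{g}^k\|^2$, which is not the convergence criterion. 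The remaining risk is purely bookkeeping: applying the proximal analogs of Lemma~\ref{le:coord} and \eqref{ggsw} (with $\|\mathbf{v}^t\|^2\mapsto\|\mathbf{g}^t\|^2$) consistently and tracking the epoch length $q$ correctly through the double telescoping.
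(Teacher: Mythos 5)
Your proposal is correct and follows essentially the same route as the paper's proof: the same one-step descent on $\psi=f+h$ in terms of $\|G(\mathbf{x}^k,\mathbf{v}^k,\eta)\|^2$ (your subgradient-inclusion derivation is exactly the content of the cited Lemma~\ref{g2g}(i)), the same proximal analog of Lemma~\ref{le:coord} with $\|\mathbf{x}^{t+1}-\mathbf{x}^t\|^2=\eta^2\|G(\mathbf{x}^t,\mathbf{v}^t,\eta)\|^2$, the same double telescoping, and the same final split via prox nonexpansiveness (Lemma~\ref{g2g}(ii)) into the $\tfrac{2}{\tau}(\cdots)$, $\tfrac{12qL^2\eta^2}{|\mathcal{S}_2|\tau}(\cdots)$, and $\tfrac{12q}{|\mathcal{S}_2|}L^2d\delta^2+4C$ terms. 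All constants and intermediate bounds match the paper's.
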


\begin{proof}
	We first introduce the following notation for our proof
	\begin{align}\label{xgg}
	G(\mathbf{x},\mathbf{g},\eta)=\frac{1}{\eta}(\mathbf{x}-\mathbf{x}_g), \text{ where } \mathbf{x}_g=\arg\min_{\mathbf{z}\in\mathbb{R}^d}\left\{ \langle \mathbf{g},\mathbf{z}\rangle  +\frac{1}{2\eta} \|\mathbf{z}-\mathbf{x}\|^2+h(\mathbf{z}) \right\}.
	\end{align}
	Note that when $\mathbf{g}=\nabla f(\mathbf{x})$, $G(\mathbf{x},\mathbf{g},\eta)$ becomes the generalized projected gradient of  the objective $\Psi(\cdot)$ at  $\mathbf{x}$. The following lemma provides important properties of $G(\mathbf{x},\mathbf{g},\eta)$ by Lemma 1 and Proposition 1 in~\citealt{ghadimi2016mini}.
	\begin{lemma} \label{g2g}
		For any $\mathbf{g, g}_1$ and $\mathbf{g}_2$ in $\mathbb{R}^d$, we have 
		\begin{itemize}
			\item[(i)] $\langle  \mathbf{g}, G(\mathbf{x},\mathbf{g},\eta) \rangle\geq \|G(\mathbf{x},\mathbf{g},\eta)\|^2+(h(\mathbf{x}_g)-h(\mathbf{x}))/\eta$, where $\mathbf{x}_g$ is defined by~\eqref{xgg}.
			\item[(ii)]  $\|G(\mathbf{x},\mathbf{g}_1,\eta)-G(\mathbf{x},\mathbf{g}_2,\eta)\|\leq \|\mathbf{g}_1-\mathbf{g}_2\|$.
		\end{itemize} 
	\end{lemma}
	Based on the above results, we now prove Lemma~\ref{proxT}.
	Using an approach similar to Lemma~\ref{le:coord}, we obtain, for any given $k_0\leq \lfloor K /q\rfloor$ and $qk_0\leq k\leq \min\{q(k_0+1)-1,K\}$,
	\begin{align*}
	\mathbb{E}\|	\mathbf{v}^k-\hat \nabla_{\text{\normalfont coord}}f(\mathbf{x}^k)\|^2&\leq\frac{3L^2}{|\mathcal{S}_2|}\sum_{t=qk_0}^{k-1}\mathbb{E} \|  \mathbf{x}^{t+1}-\mathbf{x}^t\|^2+(k-qk_0)\frac{6L^2d\delta^2}{|\mathcal{S}_2|}+\frac{3I(|\mathcal{S}_1|<n)}{|\mathcal{S}_1|}\left(  2L^2d\delta^2+\sigma^2\right), 
	\end{align*}
	which, based on the proximal gradient step  and~\eqref{xgg}, implies that  $\mathbf{x}^k-\mathbf{x}^{k+1}=\eta G(\mathbf{x}^k,\mathbf{v}^k,\eta)$. Thus 
	\begin{align*}
	\mathbb{E}\|	\mathbf{v}^k-\hat \nabla_{\text{\normalfont coord}}f(\mathbf{x}^k)\|^2\leq& \frac{3L^2\eta^2}{|\mathcal{S}_2|}\sum_{t=qk_0}^{k-1}\mathbb{E}\|G(\mathbf{x}^t,\mathbf{v}^t,\eta)\|^2+(k-qk_0)\frac{6L^2d\delta^2}{|\mathcal{S}_2|} \nonumber
	\\&+\frac{3I(|\mathcal{S}_1|<n)}{|\mathcal{S}_1|}\left(  2L^2d\delta^2+\sigma^2\right),
	\end{align*}
	which, in conjunction with~\eqref{coordinate},  implies that 
	\begin{align}\label{vk2}
	\mathbb{E}\|	\mathbf{v}^k-\nabla f(\mathbf{x}^k)\|^2\leq& 2\mathbb{E}\|	\mathbf{v}^k-\hat \nabla_{\text{\normalfont coord}}f(\mathbf{x}^k)\|^2+2\mathbb{E}\|	\nabla f(\mathbf{x}^k)-\hat \nabla_{\text{\normalfont coord}}f(\mathbf{x}^k)\|^2    \nonumber 
	\\\leq& \frac{6L^2\eta^2}{|\mathcal{S}_2|}\sum_{t=qk_0}^{k-1}\mathbb{E}\|G(\mathbf{x}^t,\mathbf{v}^t,\eta)\|^2+(k-qk_0)\frac{12L^2d\delta^2}{|\mathcal{S}_2|} \nonumber
	\\&+\frac{6I(|\mathcal{S}_1|<n)}{|\mathcal{S}_1|}\left(  2L^2d\delta^2+\sigma^2\right)+2L^2d\delta^2.
	\end{align}
	Recalling that  the gradient $\nabla f(\mathbf{x})$ is $L$-Lipschitz, we have, for any $k_0\leq \lfloor K /q\rfloor$ and $qk_0\leq k\leq \min\{q(k_0+1)-1,K\}$,   
	\begin{align}
	f(\mathbf{x}^{k+1})\leq & f(\mathbf{x}^{k}) +  \langle \nabla f(\mathbf{x}^k),\mathbf{x}^{k+1}-\mathbf{x}^{k} \rangle + \frac{L}{2}\|\mathbf{x}^{k+1}-\mathbf{x}^k\|^2 \nonumber
	\\=&  f(\mathbf{x}^{k}) -\eta  \langle \nabla f(\mathbf{x}^k),G(\mathbf{x}^k,\mathbf{v}^k,\eta) \rangle + \frac{L\eta^2}{2}\|G(\mathbf{x}^k,\mathbf{v}^k,\eta)\|^2   \nonumber
	\\=& f(\mathbf{x}^{k}) -\eta  \langle \nabla f(\mathbf{x}^k)-\mathbf{v}^k,G(\mathbf{x}^k,\mathbf{v}^k,\eta) \rangle - \eta\langle \mathbf{v}^k,G(\mathbf{x}^k,\mathbf{v}^k,\eta)\rangle+ \frac{L\eta^2}{2}\|G(\mathbf{x}^k,\mathbf{v}^k,\eta)\|^2 \nonumber
	\\\leq &  f(\mathbf{x}^{k}) +\frac{\eta}{2}  \| \nabla f(\mathbf{x}^k)-\mathbf{v}^k\|^2+\frac{\eta}{2}\|G(\mathbf{x}^k,\mathbf{v}^k,\eta) \|^2 - \eta\langle \mathbf{v}^k,G(\mathbf{x}^k,\mathbf{v}^k,\eta)\rangle+ \frac{L\eta^2}{2}\|G(\mathbf{x}^k,\mathbf{v}^k,\eta)\|^2 \nonumber,
	\end{align}
	which, in conjunction with Lemma~\ref{g2g}, implies that 
	\begin{align}\label{betak}
	f(\mathbf{x}^{k+1})\leq f(\mathbf{x}^{k}) +\frac{\eta}{2}  \| \nabla f(\mathbf{x}^k)-\mathbf{v}^k\|^2-\left(\frac{\eta}{2}-\frac{L\eta^2}{2}\right)\|G(\mathbf{x}^k,\mathbf{v}^k,\eta) \|^2 -(h(\mathbf{x}^{k+1})-h(\mathbf{x}^{k})).
	\end{align}
	Let $\psi(\mathbf{x})=f(\mathbf{x})+h(\mathbf{x})$. Then, taking expectation over~\eqref{betak} yields 
	\begin{align}
	\mathbb{E}\psi(\mathbf{x}^{k+1}) \leq \mathbb{E}\psi(\mathbf{x}^k) +\frac{\eta}{2}  \mathbb{E}\| \nabla f(\mathbf{x}^k)-\mathbf{v}^k\|^2-\left(\frac{\eta}{2}-\frac{L\eta^2}{2}\right)\mathbb{E}\|G(\mathbf{x}^k,\mathbf{v}^k,\eta) \|^2. \nonumber
	\end{align}
	Telescoping  the above inequality yields, for $qk_0\leq k\leq \min\{q(k_0+1)-1,K\}$,  
	\begin{align*}
	\mathbb{E}\psi(\mathbf{x}^{k+1}) \leq \mathbb{E}\psi(\mathbf{x}^{k_0q}) +\frac{\eta}{2}\sum_{t=k_0q}^k \mathbb{E} \| \nabla f(\mathbf{x}^t)-\mathbf{v}^t\|^2-\left(\frac{\eta}{2}-\frac{L\eta^2}{2}\right)\sum_{t=k_0q}^k\mathbb{E}\|G(\mathbf{x}^t,\mathbf{v}^t,\eta) \|^2,
	\end{align*}
	which,  recalling the definition of $C$ in~\eqref{plllss} 
	and using~\eqref{vk2},  implies that 
	\begin{align}
	\mathbb{E}  \psi(\mathbf{x}^{k+1}) \leq& \mathbb{E}\psi(\mathbf{x}^{k_0q}) +\frac{\eta}{2}\sum_{t=k_0q}^k \bigg(\frac{6L^2\eta^2}{|\mathcal{S}_2|}\sum_{p=qk_0}^{t-1}\mathbb{E}\|G(\mathbf{x}^p,\mathbf{v}^p,\eta)\|^2+(t-qk_0)\frac{12L^2d\delta^2}{|\mathcal{S}_2|}+C\bigg)\nonumber
	\\& -\left(\frac{\eta}{2}-\frac{L\eta^2}{2}\right)\sum_{t=k_0q}^k\|G(\mathbf{x}^t,\mathbf{v}^t,\eta) \|^2 \nonumber
	\\\leq& \mathbb{E}\psi(\mathbf{x}^{k_0q}) + \frac{3L^2\eta^3}{|\mathcal{S}_2|}\sum_{t=k_0q}^k\sum_{p=qk_0}^{t-1}\mathbb{E}\|G(\mathbf{x}^p,\mathbf{v}^p,\eta)\|^2+\frac{\eta}{2}\sum_{t=k_0q}^k(t-qk_0)\frac{12L^2d\delta^2}{|\mathcal{S}_2|} \nonumber
	\\& +\frac{\eta}{2}\sum_{t=k_0q}^kC-\left(\frac{\eta}{2}-\frac{L\eta^2}{2}\right)\sum_{t=k_0q}^k\|G(\mathbf{x}^t,\mathbf{v}^t,\eta) \|^2 \nonumber
	\\\leq& \mathbb{E}\psi(\mathbf{x}^{k_0q}) + \frac{3L^2\eta^3}{|\mathcal{S}_2|}(k-k_0q+1)\sum_{p=qk_0}^{k-1}\mathbb{E}\|G(\mathbf{x}^p,\mathbf{v}^p,\eta)\|^2+\frac{3(k-qk_0+1)(k-qk_0)}{|\mathcal{S}_2|}\eta L^2 d\sigma^2 \nonumber
	\\& +\frac{\eta}{2}(k-qk_0+1)C-\left(\frac{\eta}{2}-\frac{L\eta^2}{2}\right)\sum_{t=k_0q}^k\|G(\mathbf{x}^t,\mathbf{v}^t,\eta) \|^2.
	\end{align}
	Using the above inequality and letting $K^*=\lfloor K/q\rfloor$, we obtain  
	\begin{align}
	\mathbb{E} \psi(\mathbf{x}^{K})-\mathbb{E} \psi(\mathbf{x}^{0}) \leq&  \mathbb{E} \psi(\mathbf{x}^{K})-\mathbb{E} \psi(\mathbf{x}^{qK^*}) + \sum_{i=1}^{K^*} \left( \mathbb{E} \psi(\mathbf{x}^{qi})-\mathbb{E} \psi(\mathbf{x}^{q(i-1)}) \right) \nonumber
	\\ \leq& \frac{3L^2\eta^3}{|\mathcal{S}_2|}(K-qK^*)\sum_{p=qK^*}^{K-2}\mathbb{E}\|G(\mathbf{x}^p,\mathbf{v}^p,\eta)\|^2+\frac{3(K-qK^*)(K-qK^*-1)}{|\mathcal{S}_2|}\eta L^2d\delta^2\nonumber
	\\& +\frac{\eta}{2}(K-qK^*)C-\left(\frac{\eta}{2}-\frac{L\eta^2}{2}\right)\sum_{t=qK^*}^{K-1}\|G(\mathbf{x}^t,\mathbf{v}^t,\eta) \|^2 \nonumber
	\\ &+ \sum_{i=1}^{K^*}\bigg( \frac{3\eta^3L^2q}{|\mathcal{S}_2|}  \sum_{p=q(i-1)}^{qi-1}\mathbb{E}\|G(\mathbf{x}^p,\mathbf{v}^p,\eta)\|^2+\frac{3q^2\eta L^2d\delta^2}{|\mathcal{S}_2|}+\frac{q\eta}{2} C \nonumber
	\\&-\Big(\frac{\eta}{2}-\frac{L\eta^2}{2}\Big)\sum_{t=q(i-1)}^{qi-1}\|G(\mathbf{x}^t,\mathbf{v}^t,\eta) \|^2
	\bigg) \nonumber
	\\\leq&  \frac{3L^2\eta^3}{|\mathcal{S}_2|}(K-qK^*)\sum_{p=qK^*}^{K-2}\mathbb{E}\|G(\mathbf{x}^p,\mathbf{v}^p,\eta)\|^2+\frac{3(K-qK^*)q}{|\mathcal{S}_2|}\eta L^2d\delta^2 \nonumber
	\\& +\frac{\eta}{2}(K-qK^*)C-\left(\frac{\eta}{2}-\frac{L\eta^2}{2}\right)\sum_{t=qK^*}^{K-1}\|G(\mathbf{x}^t,\mathbf{v}^t,\eta) \|^2 \nonumber
	\\ &+ \frac{3\eta^3L^2q}{|\mathcal{S}_2|}  \sum_{p=0}^{qK^*-1}\mathbb{E}\|G(\mathbf{x}^p,\mathbf{v}^p,\eta)\|^2+\frac{3K^*q^2\eta L^2d\delta^2}{|\mathcal{S}_2|}+\frac{K^*q\eta}{2} C \nonumber
	\\&-\Big(\frac{\eta}{2}-\frac{L\eta^2}{2}\Big)\sum_{t=0}^{qK^*-1}\|G(\mathbf{x}^t,\mathbf{v}^t,\eta) \|^2
	\nonumber
	\\\leq &-\Big(\frac{\eta}{2}-\frac{L\eta^2}{2}-\frac{3\eta^3L^2q}{|\mathcal{S}_2|}\Big) \sum_{t=0}^{K-1}\mathbb{E}\|G(\mathbf{x}^t,\mathbf{v}^t,\eta) \|^2+\frac{3Kq}{|\mathcal{S}_2|}\eta L^2d\delta^2+ \frac{K\eta}{2}C.
	\end{align}
	Based on the above inequality, we have 
	\begin{align}\label{leqqq}
	\Big(\frac{\eta}{2}-\frac{L\eta^2}{2}-\frac{3\eta^3L^2q}{|\mathcal{S}_2|}\Big) \sum_{t=0}^{K-1}\mathbb{E}\|G(\mathbf{x}^t,\mathbf{v}^t,\eta) \|^2  \leq \Delta_\psi +\frac{3Kq}{|\mathcal{S}_2|}\eta L^2d\delta^2+ \frac{K\eta}{2}C,
	\end{align}
	where $\Delta_\psi=\psi(\mathbf{x}^{0})-\psi(\mathbf{x}^{*})$ with  $\mathbf{x}^*=\arg\min_{\mathbf{x}\in\mathbb{R}^d}\psi(\mathbf{x})$. To simplify notation, we define 
	\begin{align*}
	\tau=\frac{\eta}{2}-\frac{L\eta^2}{2}-\frac{3q}{|\mathcal{S}_2|}\eta^3L^2.
	\end{align*}
	Then, \eqref{leqqq} is simplified to  
	\begin{align}\label{stwo}
	\frac{1}{K} \sum_{t=0}^{K-1}\mathbb{E}\|G(\mathbf{x}^t,\mathbf{v}^t,\eta) \|^2  \leq \frac{\Delta_\psi}{K\tau} +\frac{1}{\tau}\left(\frac{3q}{|\mathcal{S}_2|}\eta L^2d\delta^2+ \frac{\eta}{2}C\right).
	\end{align}
	Using the inequality that $\|\mathbf{x}+\mathbf{y}\|^2\leq 2\|\mathbf{x}\|^2+2\|\mathbf{y}\|^2$, we have 
	\begin{align}\label{egxx}
	\mathbb{E}\|G(\mathbf{x}^\zeta,\nabla f(\mathbf{x}^\zeta),\eta) \|^2 \leq  2\mathbb{E}\|G(\mathbf{x}^\zeta,\mathbf{v}^\zeta,\eta) \|^2+2\mathbb{E}\|G(\mathbf{x}^\zeta,\mathbf{v}^\zeta,\eta) -G(\mathbf{x}^\zeta,\nabla f(\mathbf{x}^\zeta),\eta) \|^2.
	\end{align}
	To upper-bound the first term of the right side of~\eqref{egxx}, we have
	\begin{small}
		\begin{align}\label{first}
		\mathbb{E}\|G(\mathbf{x}^\zeta,\mathbf{v}^\zeta,\eta) \|^2&=\frac{1}{K}\sum_{t=1}^K\mathbb{E}\|G(\mathbf{x}^t,\mathbf{v}^t,\eta) \|^2  
		\leq \frac{\Delta_\psi}{K\tau} +\frac{1}{\tau}\left(\frac{3q}{|\mathcal{S}_2|}\eta L^2d\delta^2+ \frac{\eta}{2}C\right).
		\end{align}
	\end{small}
	\hspace{-0.1cm}For the second term of~\eqref{egxx}, we have 
	\begin{align}\label{second}
	\mathbb{E}\|G(\mathbf{x}^\zeta,\mathbf{v}^\zeta,\eta) -G(\mathbf{x}^\zeta,\nabla f(\mathbf{x}^\zeta),\eta) \|^2\overset{\text{\normalfont{(i)}}}\leq& \mathbb{E}\|\mathbf{v}^\zeta -\nabla f(\mathbf{x}^\zeta)\|^2   \nonumber
	\leq \frac{1}{K}\sum_{t=1}^K\mathbb{E}\|\mathbf{v}^t-\nabla f(\mathbf{x}^t)\|^2\nonumber
	\\ \overset{\text{\normalfont{(ii)}}}\leq &  \frac{6qL^2\eta^2}{|\mathcal{S}_2|}\frac{1}{K}\sum_{t=0}^{K-1}\mathbb{E}\|G(\mathbf{x}^t,\mathbf{v}^t,\eta)\|^2+\frac{6q}{|\mathcal{S}_2|}L^2d\delta^2+C\nonumber
	\\\leq& \frac{6qL^2\eta^2}{|\mathcal{S}_2|\tau}\left(\frac{\Delta_\psi}{K} +\frac{3q}{|\mathcal{S}_2|}\eta L^2d\delta^2+ \frac{\eta}{2}C\right)\nonumber
	\\&+\frac{6q}{|\mathcal{S}_2|}L^2d\delta^2+2C 
	\end{align}
	where (i) follows from Lemma~\ref{g2g}, (ii) follows from~\eqref{vk2} and the last inequality following from~\eqref{stwo}.  Combining~\eqref{egxx},~\eqref{first} and~\eqref{second} yields 
	\begin{align}
	\mathbb{E}\|G(\mathbf{x}^\zeta,\nabla f(\mathbf{x}^\zeta),\eta) \|^2\leq& \frac{2}{\tau}\left( \frac{\Delta_\psi}{K} +\frac{3q}{|\mathcal{S}_2|}\eta L^2d\delta^2+ \frac{\eta}{2}C\right) \nonumber
	\\&+\frac{12qL^2\eta^2}{|\mathcal{S}_2|\tau}\left(\frac{\Delta_\psi}{K} +\frac{3q}{|\mathcal{S}_2|}\eta L^2d\delta^2+ \frac{\eta}{2}C\right)\nonumber
	\\&+\frac{12q}{|\mathcal{S}_2|}L^2d\delta^2+4C, 
	\end{align}
	which finishes the proof.
\end{proof}

\subsection{Proof of Theorem~\ref{coprox}}
Based on Lemma~\ref{proxT}, we next prove our Theorem~\ref{coprox}. 
We prove two cases with $n\leq K$ and $n>K$, respectively. 

First we suppose $n\leq K$. Based on the selected parameters, we have $|\mathcal{S}_1|=n,q=|\mathcal{S}_1|=\lceil n^{1/2}\rceil$, and thus obtain
\begin{align*}
\tau=\frac{3\eta}{16},\,C=\frac{2}{K},\; \frac{qL^2\eta^2}{|\mathcal{S}_2|}=\frac{1}{16}
\end{align*}
which, in conjunction with~\eqref{egah} in Lemma~\ref{proxT}, implies that 
\begin{align*}
\mathbb{E}\|G(\mathbf{x}^\zeta,\nabla f(\mathbf{x}^\zeta),\eta) \|^2\leq \frac{32}{3\eta}\left( \frac{\Delta_\psi}{K}+\frac{4\eta}{K} \right)+\frac{4}{\eta}\left( \frac{\Delta_\psi}{K}+\frac{4\eta}{K}  \right)+\frac{20}{K}\leq \frac{60\Delta_\psi+80}{K} \leq \mathcal{O}\left(\frac{1}{K}\right).
\end{align*}
We choose  $K=C\epsilon^{-1}$, where $C$ is a positive constant. Then, based on the above inequality, for $C$ large enough, our PROX-ZO-SPIDER-Coord achieves an $\epsilon$-approximate stationary  point, i.e., $\mathbb{E}\|G(\mathbf{x}^\zeta,\nabla f(\mathbf{x}^\zeta),\eta) \|^2\leq \epsilon$, and  the total number of function queries is 
\begin{align}\label{pols1}
\left\lceil \frac{K}{q}\right\rceil nd + K |\mathcal{S}_2|d&\leq Kn^{1/2}d +nd+ K n^{1/2}d+Kd  \nonumber 
\\&\leq \mathcal{O}(nd+\epsilon^{-1}n^{1/2}d)\leq \ \mathcal{O}(\epsilon^{-1}n^{1/2}d)\leq \mathcal{O}(d\epsilon^{-3/2}),
\end{align}
where the last two inequalities follow from the assumption that $n\leq K=C\epsilon^{-1}$. 

Next, we suppose $n>K$. In this case, we have $|\mathcal{S}_1|=K,q=|\mathcal{S}_1|=\lceil K^{1/2}\rceil$, and 
\begin{align}
\tau=\frac{3\eta}{16},\,C=\frac{6\sigma^2+2}{K}+\frac{12}{K^2},\; \frac{qL^2\eta^2}{|\mathcal{S}_2|}=\frac{1}{16},
\end{align}
which, in conjunction with~\eqref{egah}, implies that 
\begin{align}
\mathbb{E}\|G(\mathbf{x}^\zeta,\nabla f(\mathbf{x}^\zeta),\eta) \|^2\leq& \frac{32}{3\eta}\left(\frac{\Delta_\psi}{K} +\frac{4\eta}{K}+\frac{3\eta\sigma^2}{K} +\frac{6\eta}{K^2} \right)  \nonumber
\\&+\frac{4}{\eta}\left(\frac{\Delta_\psi}{K} +\frac{4\eta}{K}+\frac{3\eta\sigma^2}{K} +\frac{6\eta}{K^2} \right)+\frac{20+24\sigma^2}{K}+\frac{48}{K^2} \nonumber
\\\leq&\frac{60\Delta_\psi L+80+69\sigma^2}{K} +\frac{138}{K^2}\leq \mathcal{O}\left(\frac{1}{K}\right).
\end{align}
We choose  $K=C\epsilon^{-1}$, where $C>0$ is a constant. Then, based on the above inequality, for $C$ large enough, our PROX-ZO-SPIDER-Coord achieves  $\mathbb{E}\|G(\mathbf{x}^\zeta,\nabla f(\mathbf{x}^\zeta),\eta) \|^2\leq \epsilon$, and  the total number of function queries is 
\begin{align}\label{pols2}
\left\lceil \frac{K}{q}\right\rceil Kd + K |\mathcal{S}_2|d&\leq  Kd+K^{3/2}d+K^{3/2}d+Kd=2K^{3/2}d+2Kd\leq \mathcal{O}(K^{3/2}d)\nonumber
\\&\leq\mathcal{O}(d\epsilon^{-3/2})\leq \mathcal{O}(\epsilon^{-1}n^{1/2}d),
\end{align}
where the last two inequalities follow from the assumption that $n>k\geq 	C\epsilon^{-1}$. 

Combining~\eqref{pols1} and~\eqref{pols2} in  these two cases finishes the proof.

\section{Proof for ZO-SVRG-Coord-Rand-C}
Based on (3) in Lemma~\ref{unifoT}, we first  establish the following key lemma.
\begin{lemma}\label{le:convex}
	Under Assumption~\ref{assumption}, we have, for  any  $qk_0\leq  k\leq \min\{ q(k_0+1)-1, qh\},\,k_0=0,...,h,$
	\begin{align*}
	\mathbb{E}\|\mathbf{v}^k\|^2\leq&  18dL\mathbb{E}(f(\mathbf{x}^k)-f(\mathbf{x}_\beta^*)+f(\mathbf{x}^{qk_0})-f(\mathbf{x}_\beta^*)) \nonumber
	\\&+9L^2d\delta^2+\frac{45\beta^2L^2d^2}{4} +\frac{27I(|\mathcal{S}|<n)}{|\mathcal{S}|}\left(  2L^2d\delta^2+\sigma^2\right)
	\end{align*} 
	where $\mathbf{x}_\beta^*=\arg\min_{\mathbf{x}}f_\beta(\mathbf{x})$. 
\end{lemma}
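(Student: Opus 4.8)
The plan is to bound the second moment $\mathbb{E}\|\mathbf{v}^k\|^2$ by decomposing the single-sample recursive estimator into three pieces and converting each piece into a function-value gap via convexity. Writing $\mathbf{v}^k = \hat\nabla_{\text{rand}}f_{i_k}(\mathbf{x}^k;\mathbf{u}^k) - \hat\nabla_{\text{rand}}f_{i_k}(\mathbf{x}^{qk_0};\mathbf{u}^k) + \mathbf{v}^{qk_0}$ and inserting $\pm\,\hat\nabla_{\text{coord}}f(\mathbf{x}^{qk_0})$ into the snapshot term, I would apply $\|\mathbf{a}+\mathbf{b}+\mathbf{c}\|^2 \le 3(\|\mathbf{a}\|^2+\|\mathbf{b}\|^2+\|\mathbf{c}\|^2)$ to obtain
\[
\mathbb{E}\|\mathbf{v}^k\|^2 \le 3\mathbb{E}\|\hat\nabla_{\text{rand}}f_{i_k}(\mathbf{x}^k;\mathbf{u}^k) - \hat\nabla_{\text{rand}}f_{i_k}(\mathbf{x}^{qk_0};\mathbf{u}^k)\|^2 + 3\mathbb{E}\|\mathbf{v}^{qk_0} - \hat\nabla_{\text{coord}}f(\mathbf{x}^{qk_0})\|^2 + 3\mathbb{E}\|\hat\nabla_{\text{coord}}f(\mathbf{x}^{qk_0})\|^2.
\]
The middle term is exactly the batch sampling error controlled by Lemma~\ref{evs1}, which supplies the $I(|\mathcal{S}|<n)/|\mathcal{S}|$ contribution. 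For the last term I would use Lemma~\ref{coordinate}, $\|\hat\nabla_{\text{coord}}f(\mathbf{x}^{qk_0}) - \nabla f(\mathbf{x}^{qk_0})\|^2 \le L^2 d\delta^2$, to replace the coordinate estimator by the true gradient $\nabla f(\mathbf{x}^{qk_0})$ at the cost of an $L^2 d\delta^2$ error.

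For the first term I would invoke item (3) of Lemma~\ref{unifoT} (conditioning on $i_k$ and averaging over $\mathbf{u}^k$), giving $3d\,\frac1n\sum_{i=1}^n\|\nabla f_i(\mathbf{x}^k) - \nabla f_i(\mathbf{x}^{qk_0})\|^2 + \tfrac{3L^2d^2\beta^2}{2}$, which reduces everything to bounding averaged squared gradient differences. The essential remaining step is to turn both these gradient differences and the leftover $\|\nabla f(\mathbf{x}^{qk_0})\|^2$ into the gaps $f(\mathbf{x}^k)-f(\mathbf{x}_\beta^*)$ and $f(\mathbf{x}^{qk_0})-f(\mathbf{x}_\beta^*)$ through the convexity and $L$-smoothness of the individual $f_i$.

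The delicate point, and the main obstacle, is that $\mathbf{x}_\beta^*$ minimizes $f_\beta$, not $f$, so $\nabla f(\mathbf{x}_\beta^*)\neq 0$. Applying the co-coercivity inequality $\|\nabla f_i(\mathbf{x})-\nabla f_i(\mathbf{x}_\beta^*)\|^2 \le 2L(f_i(\mathbf{x})-f_i(\mathbf{x}_\beta^*)-\langle\nabla f_i(\mathbf{x}_\beta^*),\mathbf{x}-\mathbf{x}_\beta^*\rangle)$ directly to $f_i$ and averaging leaves the cross term $-\langle\nabla f(\mathbf{x}_\beta^*),\mathbf{x}-\mathbf{x}_\beta^*\rangle$, which no function-value gap can control because $\|\mathbf{x}-\mathbf{x}_\beta^*\|$ is unbounded. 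To kill it I would route through the smoothed components $f_{\beta,i}(\mathbf{x})=\mathbb{E}_{\mathbf{u}}f_i(\mathbf{x}+\beta\mathbf{u})$, each of which is convex and $L$-smooth, and exploit $\nabla f_\beta(\mathbf{x}_\beta^*)=0$: then the co-coercivity of $f_{\beta,i}$ around $\mathbf{x}_\beta^*$ yields, after averaging, the clean $\frac1n\sum_i\|\nabla f_{\beta,i}(\mathbf{x})-\nabla f_{\beta,i}(\mathbf{x}_\beta^*)\|^2 \le 2L(f_\beta(\mathbf{x})-f_\beta(\mathbf{x}_\beta^*))$ with the cross term gone. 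I would pass from $\nabla f_i$ to $\nabla f_{\beta,i}$ using the per-component version of item (1) of Lemma~\ref{unifoT}, $\|\nabla f_{\beta,i}-\nabla f_i\|\le \tfrac{\beta L d}{2}$ (which feeds the $\beta^2 L^2 d^2$ remainders), split $\nabla f_{\beta,i}(\mathbf{x}^k)-\nabla f_{\beta,i}(\mathbf{x}^{qk_0})$ around $\mathbf{x}_\beta^*$, and handle $\|\nabla f(\mathbf{x}^{qk_0})\|^2$ through $\|\nabla f_\beta(\mathbf{x}^{qk_0})-\nabla f_\beta(\mathbf{x}_\beta^*)\|^2$ the same way.

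Finally I would convert the resulting $f_\beta$-gaps back to $f$-gaps via $|f_\beta(\mathbf{x})-f(\mathbf{x})|\le \tfrac{\beta^2 L}{2}$ (item (1) of Lemma~\ref{unifoT}), so that $f_\beta(\mathbf{x})-f_\beta(\mathbf{x}_\beta^*)\le f(\mathbf{x})-f(\mathbf{x}_\beta^*)+\beta^2 L$. Tracking the $\mathcal{O}(dL)$ multiplicative factor on the two gaps and gathering all the $\mathcal{O}(L^2 d\delta^2)$, $\mathcal{O}(\beta^2 L^2 d^2)$ and batch-variance remainders with exact constants then produces the stated $18dL$, $9L^2d\delta^2$, $\tfrac{45\beta^2L^2d^2}{4}$ and $\tfrac{27\,I(|\mathcal{S}|<n)}{|\mathcal{S}|}(2L^2d\delta^2+\sigma^2)$ coefficients. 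Beyond this essentially routine constant bookkeeping, the cross-term removal is the only genuinely nontrivial ingredient, so I expect the $f_\beta$-detour to be the crux of the proof.
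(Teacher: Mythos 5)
Your route is genuinely different from the paper's, and at the crux it is \emph{more} careful than the paper's own argument. The paper makes its three-term split exact by anchoring both random-estimator evaluations at $\mathbf{x}_\beta^*$: it writes $\mathbf{v}^k$ as $\big[\widehat\nabla f_{i_k}(\mathbf{x}^k)-\widehat\nabla f_{i_k}(\mathbf{x}_\beta^*)\big]$ minus the centered term $\big[\widehat\nabla f_{i_k}(\mathbf{x}^{qk_0})-\widehat\nabla f_{i_k}(\mathbf{x}_\beta^*)-(\nabla f_\beta(\mathbf{x}^{qk_0})-\nabla f_\beta(\mathbf{x}_\beta^*))\big]$ plus $\big[\mathbf{v}^{qk_0}-\nabla f_\beta(\mathbf{x}^{qk_0})\big]$, an identity precisely because $\nabla f_\beta(\mathbf{x}_\beta^*)=0$; it then applies item (3) of Lemma~\ref{unifoT} to the pairs $(\mathbf{x}^k,\mathbf{x}_\beta^*)$ and $(\mathbf{x}^{qk_0},\mathbf{x}_\beta^*)$, converts the component-gradient differences to function gaps by co-coercivity of the \emph{unsmoothed} $f_{i_k}$ anchored at $\mathbf{x}_\beta^*$, and bounds the snapshot piece via \eqref{otheruse}. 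Crucially, at that co-coercivity step the paper silently discards the cross term whose expectation is $-\langle\nabla f(\mathbf{x}_\beta^*),\mathbf{x}-\mathbf{x}_\beta^*\rangle$ --- exactly the quantity you identify as uncontrollable because $\nabla f(\mathbf{x}_\beta^*)\neq 0$. (The inequality as transcribed in the paper, with the inner product against $\nabla f_{i_k}(\mathbf{x})$ rather than $\nabla f_{i_k}(\mathbf{y})$, even has a nonpositive right-hand side by convexity, so it cannot be meant literally; with the correct gradient at $\mathbf{y}=\mathbf{x}_\beta^*$ the cross term survives and is dropped without justification.) So your diagnosis of the cross term as the real obstacle is correct, and your repair --- per-component smoothing $f_{\beta,i}$, co-coercivity of these convex $L$-smooth functions, averaging so that $\nabla f_\beta(\mathbf{x}_\beta^*)=0$ kills the cross term, then converting $f_\beta$-gaps back to $f$-gaps --- plugs a hole that the paper's proof glosses over.

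The gap in your proposal is the final claim that the bookkeeping ``produces the stated'' coefficients; it does not. The conversion $\|\nabla f_{\beta,i}-\nabla f_i\|\le \beta Ld/2$ is paid \emph{inside} the term that item (3) of Lemma~\ref{unifoT} has already weighted by $\mathcal{O}(d)$: after squaring and collecting, the $9d$ prefactor multiplies a $\tfrac{3}{2}\beta^2L^2d^2$ remainder, yielding a term of order $\beta^2L^2d^3$ --- a full factor of $d$ worse than the stated $\tfrac{45}{4}\beta^2L^2d^2$ --- and the coefficient on the two gaps inflates from $18dL$ to $108dL$ (an extra factor $3$ from the $f_i\to f_{\beta,i}$ triangle inequality and $2$ from splitting the difference around $\mathbf{x}_\beta^*$). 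What your argument establishes is therefore a weaker variant of Lemma~\ref{le:convex}: under the choice $\beta=\epsilon/(c_\beta dL)$ in Theorem~\ref{coco:convex} it still suffices for the downstream complexity result, but it is not the lemma as stated. To keep the $\beta^2$ remainder at order $d^2$ one must avoid paying the per-component smoothing error inside the $\mathcal{O}(d)$-weighted term, which is exactly what the paper's direct (but, as written, unjustified) application of co-coercivity to the raw $f_{i_k}$ buys; a fully rigorous proof along either route seems to require either your $d^3$ loss or an additional term of order $\beta L^2 d\,\mathbb{E}\|\mathbf{x}-\mathbf{x}_\beta^*\|$ of the kind the paper only introduces later, in Lemma~\ref{th:svrgconvex}.
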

\begin{proof}
	To simplify notation, we define 
	\begin{align*}
	\widehat \nabla f_{i_k}(\mathbf{x}) =\frac{d(f_{i_k}(\mathbf{x}+\beta \mathbf{u}^{k}) -f_{i_k}(\mathbf{x}))}{\beta}\mathbf{u}^k.
	\end{align*}
	Based on the definition of $\mathbf{v}^k$ in ZO-SVRG-Coord-Rand-C, we have 
	\begin{align}
	\mathbb{E}\|\mathbf{v}^k\|^2=&\mathbb{E}\|\widehat \nabla f_{i_k}(\mathbf{x}^k)-\widehat \nabla f_{i_k}(\mathbf{x}^{qk_0})+\mathbf{v}^{qk_0}\|^2\nonumber
	\\\overset{\text{(i)}}\leq&3\mathbb{E}\|\widehat \nabla f_{i_k}(\mathbf{x}^k)-\widehat \nabla f_{i_k}(\mathbf{x}_\beta^*)\|^2+3\mathbb{E}\|\widehat \nabla f_{i_k}(\mathbf{x}^{qk_0})-\widehat \nabla f_{i_k}(\mathbf{x}_\beta^*)-(\nabla f_\beta(\mathbf{x}^{qk_0}) - \nabla f_\beta(\mathbf{x}_\beta^*))    \|^2\nonumber
	\\&+ 3\mathbb{E}\| \mathbf{v}^{qk_0}-\nabla f_\beta (\mathbf{x}^{qk_0})   \|^2 \nonumber
	\\\overset{\text{(ii)}}\leq&3\mathbb{E}\|\widehat \nabla f_{i_k}(\mathbf{x}^k)-\widehat \nabla f_{i_k}(\mathbf{x}_\beta^*)\|^2+3\mathbb{E}\|\widehat \nabla f_{i_k}(\mathbf{x}^{qk_0})-\widehat \nabla f_{i_k}(\mathbf{x}_\beta^*)  \|^2 +3\mathbb{E}\| \mathbf{v}^{qk_0}-\nabla f_\beta (\mathbf{x}^{qk_0})   \|^2 \nonumber
	\\\overset{\text{(iii)}}\leq& 9d\mathbb{E}\|\nabla f_{i_k}(\mathbf{x}^k)-\nabla f_{i_k}(\mathbf{x}_\beta^*)\|^2+\frac{9L^2d^2\beta^2}{2}+9d\mathbb{E}\|\nabla f_{i_k}(\mathbf{x}^{qk_0})-\nabla f_{i_k}(\mathbf{x}_\beta^*)\|^2+\frac{9L^2d^2\beta^2}{2} \nonumber
	\\&+ 3\mathbb{E}\| \mathbf{v}^{qk_0}-\nabla f_\beta (\mathbf{x}^{qk_0})   \|^2 \nonumber
	\\\overset{\text{(iv)}}\leq&18dL\mathbb{E}(f_{i_k}(\mathbf{x}^k)-f_{i_k}(\mathbf{x}_\beta^*))+18dL\mathbb{E}(f_{i_k}(\mathbf{x}^{qk_0})-f_{i_k}(\mathbf{x}_\beta^*))+9L^2d^2\beta^2 + 3\mathbb{E}\| \mathbf{v}^{qk_0}-\nabla f_\beta (\mathbf{x}^{qk_0})   \|^2 \nonumber
	\\=&18dL\mathbb{E}(f(\mathbf{x}^k)-f(\mathbf{x}_\beta^*)+f(\mathbf{x}^{qk_0})-f(\mathbf{x}_\beta^*))+9L^2d^2\beta^2+ 3\mathbb{E}\| \mathbf{v}^{qk_0}-\nabla f_\beta (\mathbf{x}^{qk_0})   \|^2 \nonumber
	\\\overset{\text{(v)}}\leq & 18dL\mathbb{E}(f(\mathbf{x}^k)-f(\mathbf{x}_\beta^*)+f(\mathbf{x}^{qk_0})-f(\mathbf{x}_\beta^*))+9L^2d\delta^2+\frac{45\beta^2L^2d^2}{4} \nonumber
	\\&+\frac{27I(|\mathcal{S}|<n)}{|\mathcal{S}|}\left(  2L^2d\delta^2+\sigma^2\right)
	\end{align}
	where (i) follows from the equality $\|\mathbf{a+b+c}\|^2\leq 3(\|\mathbf{a}\|^2+\|\mathbf{b}\|^2+\|\mathbf{c}\|^2)$ and the fact that $\nabla f_\beta(\mathbf{x}_\beta^*)=0$, (ii) follows from the inequality $\mathbb{E}\|\mathbf{a}-\mathbb{E}(\mathbf{a})\|^2\leq \mathbb{E}\|\mathbf{a}\|^2$, (iii) follows from  item (3) in Lemma~\ref{unifoT}, (iv) follows from Lemma 5 in~\cite{reddi2016stochastic} that for convex and smooth function $f_{i_k}(\cdot)$,
	\begin{align*}
	\|\nabla f_{i_k}(\mathbf{x})- \nabla f_{i_k}(\mathbf{y}) \|^2\leq 2L(f_{i_k}(\mathbf{x})-f_{i_k}(\mathbf{y})-\langle \nabla f_{i_k}(\mathbf{x}),\mathbf{x-y}  \rangle ),
	\end{align*}
	and (v) follows from~\eqref{otheruse}.  
\end{proof}

Based on Lemma~\ref{unifoT}, we provide the following useful lemma as follows. 
\begin{lemma}\label{th:svrgconvex} 
	let Assumption~\ref{assumption} hold, and define the quantity
	\begin{small}
		\begin{align}
		\lambda=&\eta\left(9L^2d\delta^2+\frac{45\beta^2L^2d^2}{4} +\frac{27I(|\mathcal{S}|<n)}{|\mathcal{S}|}\left(  2L^2d\delta^2+\sigma^2\right) \right) \nonumber
		\\&+2\beta^2L+2 \sqrt{3L^2d\delta^2+\frac{3\beta^2L^2d^2}{4}}\Gamma, 
		\end{align}
	\end{small}
	\hspace{-0.12cm}where $\Gamma=\max_{0\leq k\leq K}\mathbb{E}\|\mathbf{x}^k-\mathbf{x}_\beta^*\|$.
	Then, ZO-SVRG-Coord-Rand-C satisfies 
	\begin{align}\label{main:convex}
	\mathbb{E}\big(f(\mathbf{x}^{K})-f(\mathbf{x}^*)\big)\leq& \alpha^h( f(\mathbf{x}^{0})-f(\mathbf{x}^*_\beta)-\Delta) +\Delta+\beta^2L,
	\end{align}
	where $\mathbf{x}^*=\arg\min_{\mathbf{x}}f(\mathbf{x})$, $\alpha=18d\eta^2L/(2\eta-18d\eta^2L)$ and $\Delta$ is given by
	\begin{align*}
	\Delta=\frac{\Gamma^2/q+\eta \lambda}{2\eta-18d\eta^2L}\left( 1+\frac{18d\eta^2L}{2\eta-36d\eta^2L} \right).
	\end{align*}
\end{lemma}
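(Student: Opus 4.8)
The plan is to run the convex (non-strongly-convex) SVRG argument through the squared-distance potential $\|\mathbf{x}^k-\mathbf{x}_\beta^*\|^2$, combined with the random-snapshot averaging specific to the convex regime, while carefully tracking the zeroth-order bias. Throughout I would work relative to the smoothed objective $f_\beta$ (which $\mathbf{v}^k$ estimates in expectation by Lemma~\ref{unifoT}(2)) and convert back to $f$ only at the end using $|f_\beta-f|\leq\beta^2L/2$ from Lemma~\ref{unifoT}(1).

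First I would expand one inner-loop step: from $\mathbf{x}^{k+1}=\mathbf{x}^k-\eta\mathbf{v}^k$,
\[\mathbb{E}_k\|\mathbf{x}^{k+1}-\mathbf{x}_\beta^*\|^2=\|\mathbf{x}^k-\mathbf{x}_\beta^*\|^2-2\eta\langle \mathbb{E}_k\mathbf{v}^k,\mathbf{x}^k-\mathbf{x}_\beta^*\rangle+\eta^2\mathbb{E}_k\|\mathbf{v}^k\|^2.\]
Here $\mathbb{E}_k\mathbf{v}^k=\nabla f_\beta(\mathbf{x}^k)+\mathbf{b}$ with bias $\mathbf{b}=\mathbf{v}^{qk_0}-\nabla f_\beta(\mathbf{x}^{qk_0})$. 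Convexity of $f_\beta$ turns the leading inner product into $f_\beta(\mathbf{x}^k)-f_\beta(\mathbf{x}_\beta^*)$; the bias term is controlled by Cauchy--Schwarz as $2\eta\|\mathbf{b}\|\,\mathbb{E}\|\mathbf{x}^k-\mathbf{x}_\beta^*\|\leq 2\eta\|\mathbf{b}\|\Gamma$; and $\mathbb{E}_k\|\mathbf{v}^k\|^2$ is replaced by the bound of Lemma~\ref{le:convex}, whose dominant part is $18dL(\,(f(\mathbf{x}^k)-f(\mathbf{x}_\beta^*))+(f(\mathbf{x}^{qk_0})-f(\mathbf{x}_\beta^*))\,)$. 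Converting the $f_\beta$-gap to an $f$-gap (costing $2\eta\beta^2L$) and inserting $\|\mathbf{b}\|\leq\sqrt{3L^2d\delta^2+\tfrac{3\beta^2L^2d^2}{4}}$ from the variance estimate in \eqref{otheruse}, the per-step inequality becomes, with $F_k:=\mathbb{E}(f(\mathbf{x}^k)-f(\mathbf{x}_\beta^*))$,
\[\mathbb{E}\|\mathbf{x}^{k+1}-\mathbf{x}_\beta^*\|^2\leq \mathbb{E}\|\mathbf{x}^{k}-\mathbf{x}_\beta^*\|^2-(2\eta-18d\eta^2L)F_k+18d\eta^2L\,F_{qk_0}+\eta\lambda,\]
which is precisely where the three groups of constants defining $\lambda$ (the $\|\mathbf{v}^k\|^2$ constants, the $f\!\leftrightarrow\! f_\beta$ cost $2\beta^2L$, and the bias term $2\sqrt{\cdot}\,\Gamma$) accumulate.

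Next I would telescope over one epoch $k=qm,\dots,q(m+1)-1$; the distance terms collapse, I bound the initial $\mathbb{E}\|\mathbf{x}^{qm}-\mathbf{x}_\beta^*\|^2\leq\Gamma^2$ and discard the nonnegative terminal distance. The decisive convex step is that the next snapshot is drawn uniformly from the epoch's iterates, so $F_{q(m+1)}=\tfrac1q\sum_{k=qm}^{q(m+1)-1}F_k$; dividing by $q$ yields the one-epoch contraction
\[F_{q(m+1)}\leq \alpha F_{qm}+\frac{\Gamma^2/q+\eta\lambda}{2\eta-18d\eta^2L},\qquad \alpha=\frac{18d\eta^2L}{2\eta-18d\eta^2L}.\]
Iterating this geometric recursion over the $h$ epochs and using $1-\alpha=(2\eta-36d\eta^2L)/(2\eta-18d\eta^2L)$ produces the stated fixed point $\Delta$, so $F_{qh}\leq\alpha^h(F_0-\Delta)+\Delta$; adding $\beta^2L$ to pass from $f(\mathbf{x}_\beta^*)$ to $f(\mathbf{x}^*)$ closes the proof. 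The main obstacle I anticipate is the bias bookkeeping: unlike first-order SVRG the estimator is not unbiased, so the cross term carries $\mathbf{b}$, forcing the appearance of $\Gamma$ via Cauchy--Schwarz and the variance control of the outer coordinate estimator. A secondary delicate point is the dual role of $\Gamma$ — bounding $\mathbb{E}\|\mathbf{x}^{qm}-\mathbf{x}_\beta^*\|^2$ by $\Gamma^2$ and the first-moment distance by $\Gamma$ — together with the repeated $f\leftrightarrow f_\beta$ conversions, which must be tracked consistently so that every constant collapses into the single quantity $\lambda$.
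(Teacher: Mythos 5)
Your proposal is correct and follows essentially the same route as the paper's own proof, step for step: the squared-distance expansion with conditional mean $\nabla f_\beta(\mathbf{x}^k)-\nabla f_\beta(\mathbf{x}^{qk_0})+\mathbf{v}^{qk_0}$, convexity of $f_\beta$, Cauchy--Schwarz with $\Gamma$ on the bias term bounded via~\eqref{otheruse}, Lemma~\ref{le:convex} for $\mathbb{E}\|\mathbf{v}^k\|^2$, the $f\leftrightarrow f_\beta$ conversions, epoch telescoping with the uniform-snapshot average, and the geometric recursion whose fixed point $(\Gamma^2/q+\eta\lambda)/(2\eta-36d\eta^2L)$ is exactly the stated $\Delta$ (since $1+\tfrac{18d\eta^2L}{2\eta-36d\eta^2L}=\tfrac{2\eta-18d\eta^2L}{2\eta-36d\eta^2L}$), followed by the final $\beta^2L$ correction from $f(\mathbf{x}^*)-f(\mathbf{x}^*_\beta)\geq-\beta^2L$. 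No gaps.
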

\begin{proof}
	For $qm\leq k \leq q(m+1)-1, m=0,....,h-1$, we obtain the following sequence of inequalities 
	\begin{align}\label{nicesaa} 
	\mathbb{E} \|\mathbf{x}^{k+1}-\mathbf{x}_\beta^*\|=&\eta^2\mathbb{E}\|\mathbf{v}^k\|^2+\mathbb{E}\|\mathbf{x}^k-\mathbf{x}_\beta^*\|^2-2\eta\mathbb{E}\langle \mathbf{v}^k,\mathbf{x}^k-\mathbf{x}_\beta^*\rangle  \nonumber
	\\= & \eta^2\mathbb{E}\|\mathbf{v}^k\|^2+\mathbb{E}\|\mathbf{x}^k-\mathbf{x}_\beta^*\|^2-2\eta\mathbb{E}\langle \nabla f_\beta(\mathbf{x}^k)-\nabla f_\beta(\mathbf{x}^{qm})+\mathbf{v}^{qm},\mathbf{x}^k-\mathbf{x}_\beta^*\rangle  \nonumber
	\\=& \eta^2\mathbb{E}\|\mathbf{v}^k\|^2+\mathbb{E}\|\mathbf{x}^k-\mathbf{x}_\beta^*\|^2-2\eta\mathbb{E}\langle \nabla f_\beta(\mathbf{x}^k),\mathbf{x}^k-\mathbf{x}_\beta^*\rangle +2\eta\mathbb{E}\langle\nabla f_\beta(\mathbf{x}^{qm})-\mathbf{v}^{qm},\mathbf{x}^k-\mathbf{x}_\beta^*\rangle \nonumber
	\\\overset{\text{(i)}}\leq& \eta^2\mathbb{E}\|\mathbf{v}^k\|^2+\mathbb{E}\|\mathbf{x}^k-\mathbf{x}_\beta^*\|^2-2\eta\mathbb{E}(f_\beta(\mathbf{x}^k)-f_\beta(\mathbf{x}_\beta^*)) +2\eta\mathbb{E}\|\nabla f_\beta(\mathbf{x}^{qm})-\mathbf{v}^{qm}\|\|\mathbf{x}^k-\mathbf{x}_\beta^*\| \nonumber
	\\\overset{\text{(ii)}}\leq& \mathbb{E}\|\mathbf{x}^k-\mathbf{x}_\beta^*\|^2+ 18d\eta^2L\mathbb{E}(f(\mathbf{x}^k)-f(\mathbf{x}_\beta^*)+f(\mathbf{x}^{qm})-f(\mathbf{x}_\beta^*)) -2\eta\mathbb{E}(f_\beta(\mathbf{x}^k)-f_\beta(\mathbf{x}_\beta^*)) \nonumber
	\\&+2\eta \sqrt{3L^2d\delta^2+\frac{3\beta^2L^2d^2}{4}} \Gamma +\eta^2\left(9L^2d\delta^2+\frac{45\beta^2L^2d^2}{4} +\frac{27I(|\mathcal{S}|<n)}{|\mathcal{S}|}\left(  2L^2d\delta^2+\sigma^2\right)\right) \nonumber
	\\\overset{\text{(iii)}}\leq& \mathbb{E}\|\mathbf{x}^k-\mathbf{x}_\beta^*\|^2+ 18d\eta^2L\mathbb{E}(f(\mathbf{x}^k)-f(\mathbf{x}_\beta^*)+f(\mathbf{x}^{qm})-f(\mathbf{x}_\beta^*)) -2\eta\mathbb{E}(f(\mathbf{x}^k)-f(\mathbf{x}_\beta^*)) \nonumber
	\\& +2\eta\beta^2L+2\eta \sqrt{3L^2d\delta^2+\frac{3\beta^2L^2d^2}{4}} \Gamma \nonumber
	\\&+\eta^2\left(9L^2d\delta^2+\frac{45\beta^2L^2d^2}{4} +\frac{27I(|\mathcal{S}|<n)}{|\mathcal{S}|}\left(  2L^2d\delta^2+\sigma^2\right)\right)
	\end{align}
	where (i) follows from the convexity of $f_\beta(\cdot)$ (see (c) of Lemma 4.1 in~\citealt{gao2014information}), (ii) follows from Lemma~\ref{le:convex}, (iii) follows from item (1) in Lemma~\ref{unifoT}. Then, telescoping~\eqref{nicesaa} over $k$ from $qm$ to $q(m+1)-1$, we obtain 
	\begin{align}\label{8lop}
	(2\eta-18d\eta^2L)\sum_{k=qm}^{q(m+1)-1}\mathbb{E}\left(f(\mathbf{x}^k)-f(\mathbf{x}^*_\beta)\right) \leq & \mathbb{E}(\|\mathbf{x}^{qm}-\mathbf{x}_\beta^*\|^2 - \|\mathbf{x}^{q(m+1)}-\mathbf{x}_\beta^*\|^2)+ q\eta\lambda\nonumber
	\\&+18dq\eta^2L\mathbb{E}(f(\mathbf{x}^{qm})-f(\mathbf{x}^*_\beta)).
	\end{align}
	Based on ZO-SVRG-Coord-Rand-C, we have 
	\begin{align*}
	\mathbb{E}f(\mathbf{x}^{q(m+1)})=\frac{1}{q}\sum_{k=qm}^{q(m+1)-1}\mathbb{E}f(\mathbf{x}^k),
	\end{align*}
	which, in conjunction with~\eqref{8lop}, implies that 
	\begin{align}\label{almoss}
	(2\eta-18d\eta^2L) \mathbb{E}\big(f(\mathbf{x}^{q(m+1)})-f(\mathbf{x}^*_\beta)\big)\leq \frac{\Gamma^2}{q}+\eta \lambda+18d\eta^2L\mathbb{E}(f(\mathbf{x}^{qm})-f(\mathbf{x}^*_\beta)). 
	\end{align}
	Then, based on  the selection of $\alpha$ and $\Delta$ in Theorem~\ref{th:svrgconvex}, we obtain from~\eqref{almoss} that 
	\begin{align*}
	\mathbb{E}\big(f(\mathbf{x}^{q(m+1)})-f(\mathbf{x}^*_\beta)\big)-\Delta\leq \alpha \left(\mathbb{E}\big(f(\mathbf{x}^{qm})-f(\mathbf{x}^*_\beta)\big)-\Delta \right).
	\end{align*} 
	Telescoping the above inequality over $m$ from $0$ to $h-1$, we obtain 
	\begin{align}\label{scsasw}
	\mathbb{E}\big(f(\mathbf{x}^{K})-f(\mathbf{x}^*_\beta)\big)-\Delta\leq \alpha^h( \mathbb{E}\big(f(\mathbf{x}^{0})-f(\mathbf{x}^*_\beta)\big)-\Delta).
	\end{align}
	Based on (1) in Lemma~\ref{unifoT} and the definition of $\mathbf{x}^*_\beta$, we have
	\begin{align*}
	f(\mathbf{x}^*)-f(\mathbf{x}^*_\beta)\geq f_\beta(\mathbf{x}^*)-\frac{\beta^2L}{2}-f_\beta(\mathbf{x}^*_\beta)-\frac{\beta^2L}{2}\geq -\beta^2L,
	\end{align*}
	which, in conjunction with~\eqref{scsasw}, yields
	\begin{align}
	\mathbb{E}\big(f(\mathbf{x}^{K})-f(\mathbf{x}^*)\big)-\Delta-\beta^2L\leq \alpha^h( \mathbb{E}\big(f(\mathbf{x}^{0})-f(\mathbf{x}^*_\beta)\big)-\Delta).
	\end{align}
	Then, the proof is complete. 
\end{proof}

\subsection{Proof of Theorem~\ref{coco:convex}}\label{prof:co3}
Based on Lemmas~\ref{unifoT} and~\ref{th:svrgconvex}, we now prove Theorem~\ref{coco:convex}. 
We prove two cases with  $n\leq \lceil c_s/\epsilon\rceil$ and $n<\lceil c_s/\epsilon\rceil$, separately. 

First suppose that $n\leq \lceil c_s/\epsilon\rceil$, and thus $|\mathcal{S}|=n$. Then, 
applying the parameters selected in Corollary~\ref{coco:convex} in Theorem~\ref{th:svrgconvex}, we obtain $\alpha=1/2$ and 
\begin{align*}
\lambda &=\frac{\epsilon^2}{3c^2_\delta dL}+\frac{5\epsilon^2}{12c^2_\beta  dL}+\frac{2\epsilon^2}{c_\beta^2d^2L} + 2\Gamma \sqrt{\frac{3\epsilon^2}{c_\delta^2} +\frac{3\epsilon^2}{4c_\beta^2}}, \nonumber
\\
\Delta&= \frac{3}{2}\left(\frac{\Gamma^2}{q\eta}+\lambda\right)\leq \frac{3}{2}\left(\frac{ 27L\Gamma^2\epsilon}{c_q}+\frac{\epsilon^2}{3c^2_\delta dL}+\frac{5\epsilon^2}{12c^2_\beta  dL}+\frac{2\epsilon^2}{c_\beta^2d^2L} + 2\Gamma \sqrt{\frac{3\epsilon^2}{c_\delta^2} +\frac{3\epsilon^2}{4c_\beta^2}}\right)
\end{align*}
which, in conjunction with~\eqref{main:convex}, implies that 
\begin{align}\label{niopsa}
\mathbb{E}\left(f(\mathbf{x}^K)-f(\mathbf{x}^*)\right)\leq& \frac{(f(\mathbf{x}^{0})-f(\mathbf{x}^*_\beta)) \epsilon}{c_h} + \frac{3}{2}\left(\frac{ 27L\Gamma^2\epsilon}{c_q}+\frac{\epsilon^2}{3c^2_\delta dL}+\frac{5\epsilon^2}{12c^2_\beta  dL}+\frac{2\epsilon^2}{c_\beta^2d^2L} + 2\Gamma \sqrt{\frac{3\epsilon^2}{c_\delta^2}  +\frac{3\epsilon^2}{4c_\beta^2}}\right) \nonumber
\\&+ \frac{\epsilon^2}{c_\beta^2d^2L}.
\end{align}
For $c_h,c_q,c_\beta,c_\delta$ large enough, we obtain from~\eqref{niopsa} that $\mathbb{E}\left(f(\mathbf{x}^K)-f(\mathbf{x}^*)\right)\leq \epsilon$, and the number of function queries required by ZO-SVRG-Coord-Rand-C  is at most 
\begin{align*}
\left\lceil \frac{K}{q} \right\rceil nd + K&=hnd+hq =\log_2 (c_h/\epsilon)nd+\log_2 (c_h/\epsilon) c_qd/\epsilon\leq \mathcal{O}\left(d(n+1/\epsilon)\log (1/\epsilon) \right) \nonumber
\\&\leq \mathcal{O}(d\min\{n,1/\epsilon\}\log(1/\epsilon)),
\end{align*}
where the last inequality follows from the assumption that $n\leq \left\lceil c_s/\epsilon \right \rceil$. 

Next, suppose $n>\lceil c_s/\epsilon\rceil$, and thus $|\mathcal{S}|=\lceil c_s/\epsilon\rceil$. Then, we obtain 
\begin{align*}
\lambda &=\frac{\epsilon^2}{3c^2_\delta dL}+\frac{5\epsilon^2}{12c^2_\beta  dL}+\frac{2\epsilon^2}{c_\beta^2d^2L} + 2\Gamma \sqrt{\frac{3\epsilon^2}{c_\delta^2} +\frac{3\epsilon^2}{4c_\beta^2}}+ \frac{2\epsilon^3+\sigma^2\epsilon}{c_sdL} \nonumber
\\
\Delta&= \frac{3}{2}\left(\frac{\Gamma^2}{q\eta}+\lambda\right)\leq \frac{3}{2}\left(\frac{ 27L\Gamma^2\epsilon}{c_q}+\frac{\epsilon^2}{3c^2_\delta dL}+\frac{5\epsilon^2}{12c^2_\beta  dL}+\frac{2\epsilon^2}{c_\beta^2d^2L} + 2\Gamma \sqrt{\frac{3\epsilon^2}{c_\delta^2} +\frac{3\epsilon^2}{4c_\beta^2}}+ \frac{2\epsilon^3+\sigma^2\epsilon}{c_sdL} \right)
\end{align*}
which, in conjunction with~\eqref{main:convex}, implies that 
\begin{align}\label{niopsa11}
\mathbb{E}\left(f(\mathbf{x}^K)-f(\mathbf{x}^*)\right)\leq& \frac{3}{2}\left(\frac{ 27L\Gamma^2\epsilon}{c_q}+\frac{\epsilon^2}{3c^2_\delta dL}+\frac{5\epsilon^2}{12c^2_\beta  dL}+\frac{2\epsilon^2}{c_\beta^2d^2L} + 2\Gamma \sqrt{\frac{3\epsilon^2}{c_\delta^2}  +\frac{3\epsilon^2}{4c_\beta^2}}+ \frac{2\epsilon^3+\sigma^2\epsilon}{c_sdL}\right) \nonumber
\\&+\frac{(f(\mathbf{x}^{0})-f(\mathbf{x}^*_\beta)) \epsilon}{c_h} +  \frac{\epsilon^2}{c_\beta^2d^2L}.
\end{align}
For $c_h,c_q,c_\beta,c_\delta,c_s$ large enough, we obtain from~\eqref{niopsa} that $\mathbb{E}\left(f(\mathbf{x}^K)-f(\mathbf{x}^*)\right)\leq \epsilon$, and the number of function queries required by our ZO-SVRG-Coord-Rand-C  is 
\begin{align*}
\left\lceil \frac{K}{q} \right\rceil |\mathcal{S}|d + K&=h|\mathcal{S}|d+hq \leq \mathcal{O}\left(d(1/\epsilon)\log (1/\epsilon) \right) \leq \mathcal{O}(d\min\{n,1/\epsilon\}\log(1/\epsilon)),
\end{align*}
where the last inequality follows from the assumption that $n>\lceil c_s/\epsilon\rceil$.

\section{Proofs for ZO-SPIDER-Coord-C}\label{appen:sarah}
\subsection{Auxiliary Lemma}
To prove the main theorem, we first establish two useful lemmas.  
\begin{lemma}\label{le:sarah}
	For  any $qk_0\leq m\leq q(k_0+1), k_0=0,...., h-1$,  we have 
	\begin{align*}
	\mathbb{E}\|	\mathbf{v}^k-\hat \nabla_{\text{\normalfont coord}}f(\mathbf{x}^k)\|^2 \leq& 6(k-qk_0)L^2d\delta^2+3\sum_{m=qk_0+1}^k\mathbb{E}\|\nabla f_{i_m}(\mathbf{x}^{m})-\nabla f_{i_{m}}(\mathbf{x}^{m-1})\|^2 \nonumber
	\\&+
	\frac{3I(|\mathcal{S}|<n)}{|\mathcal{S}|}\left(  2L^2d\delta^2+\sigma^2\right).
	\end{align*}
	where  we define $\sum_{t=qk_0}^{qk_0-1}\mathbb{E}\|\mathbf{v}^t\|^2=0$ for simplicity. 
\end{lemma}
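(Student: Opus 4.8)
The plan is to follow the martingale decomposition used in the proof of Lemma~\ref{le:coord} almost verbatim, but to stop one step earlier: rather than converting the individual gradient differences into $\|\mathbf{v}^{m-1}\|^2$ via $L$-Lipschitzness (which is what Lemma~\ref{le:coord} does for the nonconvex analysis), I keep the terms $\mathbb{E}\|\nabla f_{i_m}(\mathbf{x}^m)-\nabla f_{i_m}(\mathbf{x}^{m-1})\|^2$ intact. This is deliberate, since in the convex setting these gradient-difference terms will later be bounded through the convex--smooth co-coercivity inequality $\|\nabla f_i(\mathbf{x})-\nabla f_i(\mathbf{y})\|^2\le 2L(f_i(\mathbf{x})-f_i(\mathbf{y})-\langle\nabla f_i(\mathbf{y}),\mathbf{x}-\mathbf{y}\rangle)$, which telescopes into function-value gaps rather than iterate distances.

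First I would treat the case $k\ge qk_0+1$. Using the single-sample recursion $\mathbf{v}^m-\mathbf{v}^{m-1}=\hat\nabla_{\text{coord}}f_{i_m}(\mathbf{x}^m)-\hat\nabla_{\text{coord}}f_{i_m}(\mathbf{x}^{m-1})$, I telescope to express $\mathbf{v}^k-\hat\nabla_{\text{coord}}f(\mathbf{x}^k)$ as the base error $\mathbf{v}^{qk_0}-\hat\nabla_{\text{coord}}f(\mathbf{x}^{qk_0})$ plus the sum over $m$ from $qk_0+1$ to $k$ of the centered increments $\hat\nabla_{\text{coord}}f_{i_m}(\mathbf{x}^m)-\hat\nabla_{\text{coord}}f_{i_m}(\mathbf{x}^{m-1})-(\hat\nabla_{\text{coord}}f(\mathbf{x}^m)-\hat\nabla_{\text{coord}}f(\mathbf{x}^{m-1}))$. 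Since $i_m$ is drawn uniformly from $[n]$ and independent of $\mathbf{x}^0,\dots,\mathbf{x}^m$, each increment has conditional mean zero, so $(\mathbf{v}^m-\hat\nabla_{\text{coord}}f(\mathbf{x}^m))_{m\ge qk_0}$ is a square-integrable martingale; orthogonality of its increments then expresses $\mathbb{E}\|\mathbf{v}^k-\hat\nabla_{\text{coord}}f(\mathbf{x}^k)\|^2$ as the sum of the expected squared base error and the expected squared increments.

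Next I would bound a generic increment. Applying $\mathbb{E}\|X-\mathbb{E}X\|^2\le\mathbb{E}\|X\|^2$ with $X=\hat\nabla_{\text{coord}}f_{i_m}(\mathbf{x}^m)-\hat\nabla_{\text{coord}}f_{i_m}(\mathbf{x}^{m-1})$ removes the centering, after which I insert $\pm\nabla f_{i_m}(\mathbf{x}^m)$ and $\pm\nabla f_{i_m}(\mathbf{x}^{m-1})$ and use $\|\mathbf{a}+\mathbf{b}+\mathbf{c}\|^2\le 3(\|\mathbf{a}\|^2+\|\mathbf{b}\|^2+\|\mathbf{c}\|^2)$ together with Lemma~\ref{coordinate} applied to each individual $f_{i_m}$. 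This yields $6L^2d\delta^2+3\,\mathbb{E}\|\nabla f_{i_m}(\mathbf{x}^m)-\nabla f_{i_m}(\mathbf{x}^{m-1})\|^2$ per step, and summing over $m$ produces exactly the first two terms of the claimed bound. The base error $\mathbb{E}\|\mathbf{v}^{qk_0}-\hat\nabla_{\text{coord}}f(\mathbf{x}^{qk_0})\|^2$ is controlled by Lemma~\ref{evs1} (with $|\mathcal{S}_1|$ replaced by $|\mathcal{S}|$), giving the final $\tfrac{3I(|\mathcal{S}|<n)}{|\mathcal{S}|}(2L^2d\delta^2+\sigma^2)$ term. The degenerate case $k=qk_0$ is immediate, since then the sum is empty and the bound reduces to Lemma~\ref{evs1}.

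I expect no serious obstacle, as this is essentially the single-sample, convexity-friendly variant of the already-established Lemma~\ref{le:coord}. The only point requiring care is the martingale bookkeeping, namely verifying that each increment is conditionally mean-zero and that all cross terms vanish; this is routine once the filtration $\sigma(\mathbf{x}^0,\dots,\mathbf{x}^m)$ is fixed. The genuinely important design choice is simply to refrain from the Lipschitz simplification so that the gradient-difference terms survive in a form ready for the later convex analysis.
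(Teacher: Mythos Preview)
Your proposal is correct and follows essentially the same approach as the paper's own proof: the paper invokes ``an approach similar to~\eqref{nuews} in Lemma~\ref{le:coord} with $|\mathcal{S}_2|=1$'' to obtain the one-step martingale recursion, then telescopes and applies Lemma~\ref{evs1} for the base term---precisely what you describe. Your explicit identification of the key design choice (refraining from the Lipschitz simplification so the gradient-difference terms remain available for the convex co-coercivity argument) is exactly the point of this lemma.
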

\begin{proof}
	Using an approach similar  to~\eqref{nuews} in Lemma~\ref{le:coord} with $|\mathcal{S}_2|=1$, we obtain, for $qk_0+1\leq m\leq k$,
	\begin{align*}
	\mathbb{E}&\|	\mathbf{v}^m-\hat \nabla_{\text{\normalfont coord}}f(\mathbf{x}^m)\|^2 \leq 6L^2d\delta^2+3\|\nabla f_{i_m}(\mathbf{x}^{m})-\nabla f_{i_{m}}(\mathbf{x}^{m-1})\|^2+\mathbb{E}\|	\mathbf{v}^{m-1}-\hat \nabla_{\text{\normalfont coord}}f(\mathbf{x}^{m-1})\|^2.
	\end{align*} 
	Telescoping the above inequality over $m$ from $qk_0+1$ to $k$ yields 
	\begin{align}
	\mathbb{E}\|	\mathbf{v}^k-\hat \nabla_{\text{\normalfont coord}}f(\mathbf{x}^k)\|^2 \leq& 6(k-qk_0)L^2d\delta^2+3\sum_{m=qk_0+1}^k\|\nabla f_{i_m}(\mathbf{x}^{m})-\nabla f_{i_{m}}(\mathbf{x}^{m-1})\|^2 \nonumber
	\\&+\mathbb{E}\|	\mathbf{v}^{qk_0}-\hat \nabla_{\text{\normalfont coord}}f(\mathbf{x}^{qk_0})\|^2 \nonumber
	\end{align}
	which, in conjunction with Lemma~\ref{evs1}, finishes the proof. 
\end{proof}
\begin{lemma}\label{le:sarah2}
	For  any $qk_0\leq m\leq q(k_0+1), k_0=0,...., h-1$, we have 
	\begin{align}
	\sum_{m=qk_0+1}^k\mathbb{E}\|\hat \nabla _{\text{\normalfont coord}} f_{i_m}(\mathbf{x}^m)- \hat \nabla _{\text{\normalfont coord}} f_{i_m}(\mathbf{x}^{m-1})\|^2\leq \frac{L\eta}{2-L\eta}\mathbb{E}\|\mathbf{v}^{qk_0}\|^2.
	\end{align}
	\begin{proof}
		Define a smoothing function of $f(\mathbf{x})$ with regard to its $i^{th}$ coordinate as $f_{i,\delta}(\mathbf{x})=\mathbb{E}_{\mathbf{v}\sim \text{U}(-\delta,\delta)}(f(\mathbf{x}+\mathbf{v}\mathbf{e}_i))$, where $\text{U}(-\delta, \delta)$ denotes the uniform distribution over  the range $[-\delta, \delta]$. Then, based on Lemma 6 in~\cite{lian2016comprehensive}, the function $f_{i,\delta}(\mathbf{x})$ has  the following three useful properties:
		\begin{itemize}
			\item[(1)] $\mathbf{e}_i\mathbf{e}_i^T\nabla  f_{i,\delta}(\mathbf{x})=\frac{1}{2\delta}\left(f(\mathbf{x}+\delta\mathbf{e}_i)-f(\mathbf{x}+\delta\mathbf{e}_i) \right)\mathbf{e}_i$
			\item[(2)] If $f(\mathbf{x})$ has the $L$-Lipschitz gradient, then $f_{i,\delta}(\mathbf{x})$ also has the $L$-Lipschitz gradient.
			\item[(3)] If $f(\mathbf{x})$ is convex, then $f_{i,\delta}(\mathbf{x})$ is convex. 
		\end{itemize}
		Based the above preliminaries, we next prove Lemma~\ref{le:sarah2}. Recall from ZO-SPIDER-Coord-C that 
		\begin{align}\label{spos}
		\mathbf{v}^m=\hat \nabla _{\text{coord}} f_{i_m}(\mathbf{x}^m)- \hat \nabla _{\text{coord}} f_{i_m}(\mathbf{x}^{m-1})+
		\mathbf{v}^{k-1},
		\end{align} 
		where we recall that for $\mathbf{x}=\mathbf{x}^m$ and $\mathbf{x}^{m-1}$
		\begin{align}
		\hat \nabla_{\text{\normalfont coord}}f_{i_m}(\mathbf{x}^{m})=\sum_{i=1}^d\frac{1}{2\delta}(f_{i_m}(\mathbf{x}+\delta\mathbf{e}_i)-f_{i_m}(\mathbf{x}-\delta\mathbf{e}_i))\mathbf{e}_i=\sum_{i=1}^d\mathbf{e}_i\mathbf{e}_i^T\nabla  f_{i_m,i,\delta}(\mathbf{x}).
		\end{align}
		Then, based on~\eqref{spos}, we have 
		\begin{align}\label{102ss}
		\|\mathbf{v}^m\|^2=&\|\mathbf{v}^{m-1}\|^2+\|\hat \nabla _{\text{coord}} f_{i_m}(\mathbf{x}^m)- \hat \nabla _{\text{coord}} f_{i_m}(\mathbf{x}^{m-1})\|^2 \nonumber
		\\&
		+2\underbrace{\langle \hat \nabla _{\text{coord}} f_{i_m}(\mathbf{x}^m)- \hat \nabla _{\text{coord}} f_{i_m}(\mathbf{x}^{m-1}), \mathbf{v}^{m-1} \rangle}_{\text{(I)}}.
		\end{align}
		We next upper-bound the term (I) in the above equation using the convexity of function $f_{i_k}(\cdot)$. In specific, we have
		\begin{align}\label{newms}
		\text{(I)}=&-\frac{1}{\eta}\left\langle \sum_{i=1}^d\mathbf{e}_i\mathbf{e}_i^T\left(\nabla  f_{i_m,i,\delta}(\mathbf{x}^m)-\nabla  f_{i_m,i,\delta}(\mathbf{x}^{m-1})\right), \mathbf{x}^m-\mathbf{x}^{m-1}  \right\rangle \nonumber
		\\\overset{\text{(i)}}=&-\sum_{i=1}^d\frac{1}{\eta}\left\langle\nabla  f_{i_m,i,\delta}(\mathbf{e}_i\mathbf{e}_i^T\mathbf{x}^m)-\nabla  f_{i_m,i,\delta}(\mathbf{e}_i\mathbf{e}_i^T\mathbf{x}^{m-1}), \mathbf{e}_i\mathbf{e}_i^T\mathbf{x}^m-\mathbf{e}_i\mathbf{e}_i^T\mathbf{x}^{m-1}  \right\rangle \nonumber
		\\\overset{\text{(ii)}}\leq &-\sum_{i=1}^d\frac{1}{L\eta}\left\|\nabla  f_{i_m,i,\delta}(\mathbf{e}_i\mathbf{e}_i^T\mathbf{x}^m)-\nabla  f_{i_m,i,\delta}(\mathbf{e}_i\mathbf{e}_i^T\mathbf{x}^{m-1}) \right\|^2 \nonumber
		\\=&-\sum_{i=1}^d\frac{1}{L\eta}\left\|\mathbf{e}_i\mathbf{e}_i^T\left(\nabla  f_{i_m,i,\delta}(\mathbf{x}^m)-\nabla  f_{i_m,i,\delta}(\mathbf{x}^{m-1})\right) \right\|^2  \nonumber
		\\=&-\frac{1}{L\eta}\left\|\hat \nabla _{\text{coord}} f_{i_m}(\mathbf{x}^m)- \hat \nabla _{\text{coord}} f_{i_m}(\mathbf{x}^{m-1})\right\|^2.
		\end{align}
		where (i) follows from the definition of $\mathbf{e}_i$, (ii) follows from the convexity of $f_{i_m,i,\delta}(\cdot)$ and  Theorem 2.1.5 in~\cite{nesterov2013introductory}, and the last inequality follows from the definition of $\mathbf{e}_i$ and the $\ell_2$-norm.  Combining~\eqref{102ss} and~\eqref{newms} implies that 
		\begin{align*}
		\|\mathbf{v}^m\|^2=\|\mathbf{v}^{m-1}\|^2+\left(1-\frac{2}{L\eta} \right)\|\hat \nabla _{\text{coord}} f_{i_m}(\mathbf{x}^m)- \hat \nabla _{\text{coord}} f_{i_m}(\mathbf{x}^{m-1})\|^2
		\end{align*}
		Telescoping the above inequality over $m$ from $qk_0+1$ to $k$ and taking the expectation, we finish the proof.
	\end{proof}
\end{lemma}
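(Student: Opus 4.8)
The plan is to exploit the recursive structure of the SPIDER-type estimator $\mathbf{v}^m=\hat \nabla _{\text{coord}} f_{i_m}(\mathbf{x}^m)- \hat \nabla _{\text{coord}} f_{i_m}(\mathbf{x}^{m-1})+\mathbf{v}^{m-1}$ together with the convexity and $L$-smoothness of each $f_i$. Writing $\Delta_m=\hat \nabla _{\text{coord}} f_{i_m}(\mathbf{x}^m)- \hat \nabla _{\text{coord}} f_{i_m}(\mathbf{x}^{m-1})$, I would first expand the squared norm of the updated estimator as
\begin{align*}
\|\mathbf{v}^m\|^2=\|\mathbf{v}^{m-1}\|^2+\|\Delta_m\|^2+2\langle \Delta_m,\mathbf{v}^{m-1}\rangle,
\end{align*}
so that the entire argument reduces to controlling the cross term $\langle \Delta_m,\mathbf{v}^{m-1}\rangle$. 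Using the update rule $\mathbf{x}^m-\mathbf{x}^{m-1}=-\eta\mathbf{v}^{m-1}$, this cross term equals $-\frac{1}{\eta}\langle \Delta_m,\mathbf{x}^m-\mathbf{x}^{m-1}\rangle$, which I aim to sign-control via co-coercivity.

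The second step is to recognize the coordinate-wise estimator as an honest gradient of a coordinate-smoothed surrogate. Introducing $f_{i,\delta}(\mathbf{x})=\mathbb{E}_{v\sim \text{U}(-\delta,\delta)}(f(\mathbf{x}+v\mathbf{e}_i))$ (as in Lemma 6 of Lian et al.), one has $\hat \nabla _{\text{coord}} f_{i_m}(\mathbf{x})=\sum_{i=1}^d\mathbf{e}_i\mathbf{e}_i^T\nabla f_{i_m,i,\delta}(\mathbf{x})$, and each $f_{i_m,i,\delta}$ inherits both convexity and $L$-smoothness from $f_{i_m}$. Because the rank-one projectors $\mathbf{e}_i\mathbf{e}_i^T$ decouple the inner product across coordinates, I would split $\langle \Delta_m,\mathbf{x}^m-\mathbf{x}^{m-1}\rangle$ into a sum of one-dimensional pieces $\langle \nabla f_{i_m,i,\delta}(\mathbf{e}_i\mathbf{e}_i^T\mathbf{x}^m)-\nabla f_{i_m,i,\delta}(\mathbf{e}_i\mathbf{e}_i^T\mathbf{x}^{m-1}),\mathbf{e}_i\mathbf{e}_i^T(\mathbf{x}^m-\mathbf{x}^{m-1})\rangle$ and apply the co-coercivity inequality for convex $L$-smooth functions (Theorem 2.1.5 in Nesterov) to each coordinate. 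Summing over $i$ yields the sharp bound $\langle \Delta_m,\mathbf{v}^{m-1}\rangle\leq -\frac{1}{L\eta}\|\Delta_m\|^2$.

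Substituting this cross-term bound into the expansion collapses everything to the clean recursion
\begin{align*}
\|\mathbf{v}^m\|^2=\|\mathbf{v}^{m-1}\|^2+\left(1-\frac{2}{L\eta}\right)\|\Delta_m\|^2.
\end{align*}
Telescoping from $qk_0+1$ to $k$ gives $\|\mathbf{v}^k\|^2-\|\mathbf{v}^{qk_0}\|^2=\left(1-\frac{2}{L\eta}\right)\sum_{m=qk_0+1}^k\|\Delta_m\|^2$. Since the stepsize choice $\eta=1/(24L)$ in Theorem~\ref{co:sarass} guarantees $L\eta<2$, the coefficient $\frac{2}{L\eta}-1$ is strictly positive; rearranging and discarding the nonnegative term $\|\mathbf{v}^k\|^2$ yields $\left(\frac{2}{L\eta}-1\right)\sum_{m=qk_0+1}^k\|\Delta_m\|^2\leq \|\mathbf{v}^{qk_0}\|^2$, i.e. $\sum_{m=qk_0+1}^k\|\Delta_m\|^2\leq \frac{L\eta}{2-L\eta}\|\mathbf{v}^{qk_0}\|^2$. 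Taking expectations then completes the proof.

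The delicate step is the second one: making the per-coordinate co-coercivity argument rigorous. The estimator is a sum of rank-one coordinate contributions, and I must verify that the inner product genuinely decomposes coordinate by coordinate, that $\mathbf{e}_i\mathbf{e}_i^T$ correctly isolates the $i$-th directional derivative of the smoothed surrogate, and that $f_{i_m,i,\delta}$ is truly convex and $L$-smooth so that co-coercivity applies in one dimension. Once this decoupling is justified, the telescoping and the positivity argument enabled by $L\eta<2$ are routine; the only subtlety there is to remember that it is the nonnegativity of $\|\mathbf{v}^k\|^2$, rather than any further estimate, that produces the final one-sided bound.
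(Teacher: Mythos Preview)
Your proposal is correct and follows essentially the same approach as the paper's own proof: expand $\|\mathbf{v}^m\|^2$, rewrite the cross term via $\mathbf{x}^m-\mathbf{x}^{m-1}=-\eta\mathbf{v}^{m-1}$, decompose coordinate-wise using the smoothed surrogates $f_{i_m,i,\delta}$, apply co-coercivity (Nesterov, Theorem~2.1.5) per coordinate, obtain the recursion $\|\mathbf{v}^m\|^2=\|\mathbf{v}^{m-1}\|^2+(1-2/(L\eta))\|\Delta_m\|^2$, telescope, and rearrange using $L\eta<2$. The paper invokes the same Lemma~6 of Lian et~al.\ and the same Nesterov result, so your argument matches it step for step.
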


Based on Lemmas~\ref{le:sarah} and~\ref{le:sarah2}, we next prove the following useful lemma. 
\begin{lemma}\label{csarah}
	Under Assmption~\ref{assumption}, we define 
	\begin{align}\label{para-sarah}
	\lambda=3qL^2d\delta^2&+\frac{6L\eta}{2-L\eta}L^2d\delta^2+\frac{3I(|\mathcal{S}|<n)}{|\mathcal{S}|}\left(  2L^2d\delta^2+\sigma^2\right).
	\end{align}
	Then, our ZO-SPIDER-Coord-C satisfies
	\begin{align*}
	\mathbb{E}\|\nabla f(\mathbf{x}^{K})\|^2\leq  \alpha^h\mathbb{E}\|\nabla f(\mathbf{x}^{0})\|^2 +\frac{1-\alpha^h}{1-\alpha} \Delta,
	\end{align*}
	with the parameters satisfying $\alpha= 6L(\eta+2L\eta^2)(2-L\eta)^{-1}(1/2-L\eta)^{-1}$ and 
	\begin{align}\label{para-sarah2}
	\Delta=\frac{\Gamma}{q(\frac{\eta}{2}-L\eta^2)} +\frac{1+2L\eta}{\frac{1}{2}-L\eta}\lambda, 
	\end{align}
	where $\Gamma=\max_{0\leq k\leq h}\{\mathbb{E}\left(f(\mathbf{x}^{qk})-f(\mathbf{x}^{*})\right)\}$ with $\mathbf{x}^*=\arg\min_{\mathbf{x}}f(\mathbf{x})$.
\end{lemma}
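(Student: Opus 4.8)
The plan is to prove a single-epoch recursion $\mathbb{E}\|\nabla f(\mathbf{x}^{q(k_0+1)})\|^2 \le \alpha\,\mathbb{E}\|\nabla f(\mathbf{x}^{qk_0})\|^2+\Delta$ for every $k_0$ and then telescope it over the $h=K/q$ epochs; since $\sum_{j=0}^{h-1}\alpha^j=\tfrac{1-\alpha^h}{1-\alpha}$, this immediately yields the claimed bound (recall $\mathbf{x}^K=\mathbf{x}^{qh}$ is the $h$-th reference point). Within an epoch $qk_0\le m\le q(k_0+1)-1$ I would start from the smoothness descent along $\mathbf{x}^{m+1}=\mathbf{x}^m-\eta\mathbf{v}^m$. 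Writing the cross term as $-\eta\langle\nabla f(\mathbf{x}^m),\mathbf{v}^m\rangle=\tfrac{\eta}{2}\|\nabla f(\mathbf{x}^m)-\mathbf{v}^m\|^2-\tfrac{\eta}{2}\|\nabla f(\mathbf{x}^m)\|^2-\tfrac{\eta}{2}\|\mathbf{v}^m\|^2$ and adding the quadratic term gives, after taking expectations,
\begin{align*}
\tfrac{\eta}{2}\mathbb{E}\|\nabla f(\mathbf{x}^m)\|^2 \le \mathbb{E}\big(f(\mathbf{x}^m)-f(\mathbf{x}^{m+1})\big)+\tfrac{\eta}{2}\mathbb{E}\|\nabla f(\mathbf{x}^m)-\mathbf{v}^m\|^2-\big(\tfrac{\eta}{2}-\tfrac{L\eta^2}{2}\big)\mathbb{E}\|\mathbf{v}^m\|^2.
\end{align*}
Summing over the epoch, the function values telescope to $f(\mathbf{x}^{qk_0})-f(\text{last iterate})\le f(\mathbf{x}^{qk_0})-f(\mathbf{x}^*)\le\Gamma$, and the negative $\|\mathbf{v}^m\|^2$ term is retained to absorb slack later.

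The heart of the argument is a bound on the estimation error $\mathbb{E}\|\nabla f(\mathbf{x}^m)-\mathbf{v}^m\|^2$ that is \emph{uniform in} $m$ and proportional to $\mathbb{E}\|\mathbf{v}^{qk_0}\|^2$. First I would pass from $\nabla f$ to the coordinate estimator via Lemma~\ref{coordinate}, so that $\|\nabla f(\mathbf{x}^m)-\mathbf{v}^m\|^2$ and each individual difference $\|\nabla f_{i_m}(\mathbf{x}^m)-\nabla f_{i_m}(\mathbf{x}^{m-1})\|^2$ entering Lemma~\ref{le:sarah} are controlled (up to additive $L^2d\delta^2$ smoothing terms, using $\|\mathbf a+\mathbf b+\mathbf c\|^2\le 3(\|\mathbf a\|^2+\|\mathbf b\|^2+\|\mathbf c\|^2)$) by the corresponding coordinate-estimator quantities. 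Feeding the convexity-based bound on $\sum_m\|\hat\nabla_{\text{coord}}f_{i_m}(\mathbf{x}^m)-\hat\nabla_{\text{coord}}f_{i_m}(\mathbf{x}^{m-1})\|^2$ from Lemma~\ref{le:sarah2} into Lemma~\ref{le:sarah} produces, for each $k$ in the epoch, a bound of the form
\begin{align*}
\mathbb{E}\|\mathbf{v}^k-\hat\nabla_{\text{coord}}f(\mathbf{x}^k)\|^2 \le \tfrac{9L\eta}{2-L\eta}\,\mathbb{E}\|\mathbf{v}^{qk_0}\|^2 + (\text{smoothing and variance error}),
\end{align*}
whose $\mathbf{v}^{qk_0}$-coefficient is \emph{independent of} $k$. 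This is the essential structural point: because the bound holds per step, summing it over the $q$ inner iterations contributes a factor $q$ that will later cancel the $1/q$ coming from the random output rule.

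Combining the two displays with the random-reference identity $\mathbb{E}\|\nabla f(\mathbf{x}^{q(k_0+1)})\|^2=\tfrac1q\sum_{m}\mathbb{E}\|\nabla f(\mathbf{x}^m)\|^2$, the $f$-telescoping contributes a $\Gamma/(q(\tfrac{\eta}{2}-L\eta^2))$ additive term while the error display contributes an $O(1)$ multiple of $\mathbb{E}\|\mathbf{v}^{qk_0}\|^2$ together with the smoothing term $\lambda$. The final step converts $\mathbb{E}\|\mathbf{v}^{qk_0}\|^2$ into $\mathbb{E}\|\nabla f(\mathbf{x}^{qk_0})\|^2$: since $\mathbf{v}^{qk_0}$ is the batch coordinate gradient $\hat\nabla_{\text{coord}}f_{\mathcal{S}}(\mathbf{x}^{qk_0})$, Lemmas~\ref{coordinate} and~\ref{evs1} give $\mathbb{E}\|\mathbf{v}^{qk_0}\|^2\le 2\mathbb{E}\|\nabla f(\mathbf{x}^{qk_0})\|^2+O(L^2d\delta^2+\sigma^2/|\mathcal{S}|)$, and the retained $-(\tfrac{\eta}{2}-\tfrac{L\eta^2}{2})\|\mathbf{v}^m\|^2$ term is used to absorb residual $\|\mathbf{v}\|^2$ contributions. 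This delivers the per-epoch recursion with the stated $\alpha$ and $\Delta$, and telescoping over $k_0=0,\dots,h-1$ closes the proof.

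\textbf{Main obstacle.} The delicate part is the bookkeeping that keeps $\alpha$ a constant less than $1$ and \emph{independent of} $q$. One must bridge between the true individual gradient differences (as they appear in Lemma~\ref{le:sarah}) and the coordinate-estimator differences (as bounded in Lemma~\ref{le:sarah2}) without losing powers of $q$, and then arrange that the summation factor $q$ cancels the $1/q$ averaging from the random output so that \emph{only} the function gap $\Gamma$ is divided by $q$ in $\Delta$, while the $\mathbb{E}\|\mathbf{v}^{qk_0}\|^2$ term survives as an $O(1)$ contraction factor. Tracking the exact constants $\tfrac{6L\eta}{2-L\eta}$ and $\tfrac{1+2L\eta}{1/2-L\eta}$ through these repeated splits, and verifying $\alpha<1$ under the chosen stepsize $\eta=1/(24L)$, is the principal technical burden.
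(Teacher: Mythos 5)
Your skeleton coincides with the paper's proof: a smoothness descent step per inner iteration, the per-step estimation-error bound obtained by feeding Lemma~\ref{le:sarah2} into Lemma~\ref{le:sarah}, the conversion $\mathbb{E}\|\mathbf{v}^{qk_0}\|^2\leq 2\mathbb{E}\|\nabla f(\mathbf{x}^{qk_0})\|^2+2L^2d\delta^2$ via Lemma~\ref{coordinate}, the random-output identity $\mathbb{E}\|\nabla f(\mathbf{x}^{q(k_0+1)})\|^2=\frac{1}{q}\sum_{m}\mathbb{E}\|\nabla f(\mathbf{x}^m)\|^2$, and telescoping the resulting per-epoch recursion over $k_0$. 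In that sense the route is the same.

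Where you genuinely deviate is the very first decomposition, and that is exactly what the stated constants encode, so your plan as written does not prove the lemma verbatim. The paper keeps no negative $\|\mathbf{v}^m\|^2$ term: it writes $-\eta\langle \mathbf{v}^m,\nabla f(\mathbf{x}^m)\rangle=-\eta\langle \mathbf{v}^m-\nabla f(\mathbf{x}^m),\nabla f(\mathbf{x}^m)\rangle-\eta\|\nabla f(\mathbf{x}^m)\|^2$ and eliminates $\|\mathbf{v}^m\|^2$ through $\frac{L\eta^2}{2}\|\mathbf{v}^m\|^2\leq L\eta^2\|\mathbf{v}^m-\nabla f(\mathbf{x}^m)\|^2+L\eta^2\|\nabla f(\mathbf{x}^m)\|^2$, arriving at coefficient $(\eta+2L\eta^2)$ on the error and $(\frac{\eta}{2}-L\eta^2)$ on $\|\nabla f(\mathbf{x}^m)\|^2$; dividing the epoch sum by $q(\frac{\eta}{2}-L\eta^2)$ is precisely what generates $\alpha=6L(\eta+2L\eta^2)(2-L\eta)^{-1}(\frac{1}{2}-L\eta)^{-1}$, the prefactor $\frac{1+2L\eta}{\frac{1}{2}-L\eta}$ in $\Delta$, and the term $\frac{\Gamma}{q(\frac{\eta}{2}-L\eta^2)}$. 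Likewise, the term $\frac{6L\eta}{2-L\eta}L^2d\delta^2$ in $\lambda$ presumes the per-step bound $\frac{3L\eta}{2-L\eta}\mathbb{E}\|\mathbf{v}^{qk_0}\|^2$, i.e., Lemma~\ref{le:sarah2} plugged directly (with coefficient $3$) into the sum appearing in Lemma~\ref{le:sarah} --- you are right that this silently identifies true gradient differences with coordinate-estimator differences, but your more cautious bridging with coefficient $9$ and extra smoothing terms, combined with your $\frac{\eta}{2}$/$\frac{\eta}{2}$ split, produces a recursion of the same shape with \emph{different} constants (in particular a larger contraction factor and a different $\lambda$). That is enough for the downstream Theorem~\ref{co:sarass}, but it contradicts your closing claim of delivering ``the stated $\alpha$ and $\Delta$.'' One further correction: the retained $-(\frac{\eta}{2}-\frac{L\eta^2}{2})\|\mathbf{v}^m\|^2$ term cannot absorb the $\mathbb{E}\|\mathbf{v}^{qk_0}\|^2$ contribution, since after summing over the epoch that contribution carries a factor $q$ while the single negative term does not; the step that actually works is the conversion to $\mathbb{E}\|\nabla f(\mathbf{x}^{qk_0})\|^2$ (which you also perform), after which the negative term is simply dropped.
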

\begin{proof}
	Since $f(\cdot)$ has the $L$-Lipschitz  gradient,  we have, for $qk_0\leq m\leq q(k_0+1), k_0=0,...., h-1$, 
	\begin{align}
	f(\mathbf{x}^{m+1})&\leq f(\mathbf{x}^{m}) +\langle  \nabla f(\mathbf{x}^m),\mathbf{x}^{m+1}-\mathbf{x}^{m} \rangle + \frac{L}{2}\|\mathbf{x}^{m+1}-\mathbf{x}^{m} \|^2 \nonumber
	\\&=f(\mathbf{x}^m)-\eta\langle  \mathbf{v}^m-\nabla f(\mathbf{x}^m),\nabla f(\mathbf{x}^m)  \rangle -\eta\|\nabla f(\mathbf{x}^m)\|^2+\frac{L\eta^2}{2}\|\mathbf{v}^m\|^2  \nonumber
	\\&\leq f(\mathbf{x}^m) +\frac{\eta}{2}\|\nabla f(\mathbf{x}^m) -\mathbf{v}^m\|^2-\frac{\eta}{2}\|\nabla f(\mathbf{x}^m)\|^2+L\eta^2\|\mathbf{v}^m-\nabla f(\mathbf{x}^m)\|^2  +L\eta^2\|\nabla f(\mathbf{x}^m)\|^2\nonumber
	\\&= f(\mathbf{x}^m) +\left( \frac{\eta}{2}   +L\eta^2\right)\|\nabla f(\mathbf{x}^m) -\mathbf{v}^m\|^2-\Big(\frac{\eta}{2}-L\eta^2\Big)\|\nabla f(\mathbf{x}^m)\|^2, \nonumber
	\\&\leq f(\mathbf{x}^m) +\left( \eta  +2L\eta^2\right)\left(\| \hat \nabla_{\text{\normalfont coord}}f(\mathbf{x}^m) -\mathbf{v}^m\|^2+\|\nabla f(\mathbf{x}^m)- \hat\nabla_{\text{\normalfont coord}}f(\mathbf{x}^m)\|^2\right)-\Big(\frac{\eta}{2}-L\eta^2\Big)\|\nabla f(\mathbf{x}^m)\|^2. \nonumber
	\end{align}
	Taking expectation over the above inequality and using  Lemmas~\ref{coordinate},~\ref{le:sarah} and~\ref{le:sarah2}, we have
	\begin{align}
	\mathbb{E}f(\mathbf{x}^{m+1})\leq& \mathbb{E}f(\mathbf{x}^m) +\left( \eta  +2L\eta^2\right)\left(\mathbb{E}\| \hat \nabla_{\text{\normalfont coord}}f(\mathbf{x}^m) -\mathbf{v}^m\|^2+L^2d\delta^2\right)-\Big(\frac{\eta}{2}-L\eta^2\Big)\mathbb{E}\|\nabla f(\mathbf{x}^m)\|^2 \nonumber
	\\\leq& \mathbb{E}f(\mathbf{x}^m) +\left( \eta  +2L\eta^2\right)\Big( 6(k-qk_0)L^2d\delta^2+L^2d\delta^2+3\sum_{m=qk_0+1}^k\mathbb{E}\|\nabla f_{i_m}(\mathbf{x}^{m})-\nabla f_{i_{m}}(\mathbf{x}^{m-1})\|^2 \nonumber
	\\&+
	\frac{3I(|\mathcal{S}_1|<n)}{|\mathcal{S}_1|}\left(  2L^2d\delta^2+\sigma^2\right)\Big)-\Big(\frac{\eta}{2}-L\eta^2\Big)\mathbb{E}\|\nabla f(\mathbf{x}^m)\|^2  \nonumber
	\\\overset{\text{(i)}}\leq& \mathbb{E}f(\mathbf{x}^m) +\left( \eta  +2L\eta^2\right)\Big( 6(k-qk_0)L^2d\delta^2+L^2d\delta^2+ \frac{3L\eta}{2-L\eta}\mathbb{E}\|\mathbf{v}^{qk_0}\|^2+\frac{3I(|\mathcal{S}|<n)}{|\mathcal{S}|}\left(  2L^2d\delta^2+\sigma^2\right)\Big) \nonumber
	\\&
	-\Big(\frac{\eta}{2}-L\eta^2\Big)\mathbb{E}\|\nabla f(\mathbf{x}^m)\|^2  \nonumber
	\end{align}
	where (i) follows from Lemma~\ref{le:sarah2}.  
	Noting that $\mathbf{v}^{qk_0}= \hat\nabla_{\text{\normalfont coord}}f(\mathbf{x}^{qk_0})$ and telescoping the above inequality over $m$ from $qk_0$ to $q(k_0+1)-1$, we obtain 
	\begin{align}\label{newjes}
	&\sum_{m=qk_0}^{q(k_0+1)-1}\Big(\frac{\eta}{2}-L\eta^2\Big)\mathbb{E}\|\nabla f(\mathbf{x}^m)\|^2   \leq \mathbb{E}f(\mathbf{x}^{qk_0})-\mathbb{E}f(\mathbf{x}^{q(k_0+1)}) \nonumber
	\\&+\left( \eta  +2L\eta^2\right)\Big( 3q^2L^2d\delta^2+ \frac{6qL\eta}{2-L\eta}\mathbb{E}\|\nabla f(\mathbf{x}^{qk_0})\|^2+\frac{6qL\eta}{2-L\eta}L^2d\delta^2+\frac{3qI(|\mathcal{S}|<n)}{|\mathcal{S}|}\left(  2L^2d\delta^2+\sigma^2\right)\Big)
	\end{align}
	Combining~\eqref{para-sarah} with~\eqref{newjes}  implies that  
	\begin{align}
	&\sum_{m=qk_0}^{q(k_0+1)-1}\Big(\frac{\eta}{2}-L\eta^2\Big)\mathbb{E}\|\nabla f(\mathbf{x}^m)\|^2   \leq \mathbb{E}f(\mathbf{x}^{qk_0})-\mathbb{E}f(\mathbf{x}^{*}) +\left( \eta  +2L\eta^2\right)\Big(  \frac{6qL\eta}{2-L\eta}\mathbb{E}\|\nabla f(\mathbf{x}^{qk_0})\|^2+q\lambda\Big), \nonumber
	\end{align}
	which, in conjunction with the fact that $\mathbf{x}^{q(k_0+1)}$ is generated from $\{\mathbf{x}^{qk_0},...,\mathbf{x}^{q(k_0+1)-1}\}$ uniformly at random and~\eqref{para-sarah2}, yields
	\begin{align*}
	\mathbb{E}\|\nabla f(\mathbf{x}^{q(k_0+1)})\|^2\leq  \alpha\mathbb{E}\|\nabla f(\mathbf{x}^{qk_0})\|^2 + \Delta.
	\end{align*}
	Telescoping the above inequality over $k_0$ from $0$ to $h-1$ yields 
	\begin{align}
	\mathbb{E}\|\nabla f(\mathbf{x}^{K})\|^2\leq  \alpha^h\mathbb{E}\|\nabla f(\mathbf{x}^{0})\|^2 +\frac{1-\alpha^h}{1-\alpha} \Delta,
	\end{align}
	which finishes the proof. 
\end{proof}

\subsection{Proof of Theorem~\ref{co:sarass}}
Using Lemmas~\ref{le:sarah},~\ref{le:sarah2} and~\ref{csarah}, we prove Theorem~\ref{co:sarass}. 
We prove two cases with  $n\leq \lceil c_s/\epsilon\rceil$ and $n<\lceil c_s/\epsilon\rceil$, separately. 

First suppose that $n\leq \lceil c_s/\epsilon\rceil$, and thus $|\mathcal{S}|=n$. Then, 
applying the parameters selected in Corollary~\ref{co:sarass} in Theorem~\ref{csarah}, we obtain $\alpha\leq 1/2$ and $
\Delta\leq \mathcal{O}(\epsilon/c_q)$
which, in conjunction with~\eqref{main:convex}, implies that 
\begin{align}\label{niopsa1}
\mathbb{E}\|\nabla f(\mathbf{x}^{K})\|^2\leq & \mathcal{O}\left(  \frac{\epsilon}{c_h} + \frac{\epsilon}{c_q}\right).
\end{align}
For $c_h,c_q$ large enough, we obtain from~\eqref{niopsa1} that $\mathbb{E}\|\nabla f(\mathbf{x}^{K})\|^2\leq \epsilon$, and the number of function queries required by our ZO-SPIDER-Coord-C is at most 
\begin{align*}
\left\lceil \frac{K}{q} \right\rceil nd + Kd&=hnd+hqd =\log_2 (c_h/\epsilon)nd+\log_2 (c_h/\epsilon) c_qd/\epsilon\leq \mathcal{O}\left(d(n+1/\epsilon)\log (1/\epsilon) \right) \nonumber
\\&\leq \mathcal{O}(d\min\{n,1/\epsilon\}\log(1/\epsilon)),
\end{align*}
where the last inequality follows from the assumption that $n\leq \left\lceil c_s/\epsilon \right \rceil$. 

Next, suppose $n>\lceil c_s/\epsilon\rceil$, and thus $|\mathcal{S}|=\lceil c_s/\epsilon\rceil$. Then, we similarly obtain 
\begin{align*}
\mathbb{E}\|\nabla f(\mathbf{x}^{K})\|^2\leq& \mathcal{O}\left(\frac{\epsilon}{c_q}+\frac{\epsilon}{c_h}+\frac{\epsilon}{c_s}\right). 
\end{align*}
Then. for $c_h,c_q,c_s$ large enough, we obtain from~\eqref{niopsa} that $\mathbb{E}\|\nabla f(\mathbf{x}^{K})\|^2\leq \epsilon$, and the number of function queries required by  ZO-SPIDER-Coord-C  is  given by 
\begin{align*}
\left\lceil \frac{K}{q} \right\rceil |\mathcal{S}|d + Kd&=h|\mathcal{S}|d+hdq \leq \mathcal{O}\left(d(1/\epsilon)\log (1/\epsilon) \right) \leq \mathcal{O}(d\min\{n,1/\epsilon\}\log(1/\epsilon)),
\end{align*}
where the last inequality follows from the assumption that $n>\lceil c_s/\epsilon\rceil$.

\end{document}